%% file: main.tex
\documentclass[twoside,11pt]{article}

\usepackage{jmlr2e}

\usepackage{lastpage}

\input{macros}
\numberwithin{equation}{section}

\jmlrheading{0}{2026}{1-\pageref{LastPage}}{--/--; Revised --/--}{--/--}{00-0000}{Eviatar Bach, Ricardo Baptista, Jochen Br\"ocker, Bohan Chen and Andrew Stuart}

\ShortHeadings{Learning Probabilistic Filters with Strictly Proper Scoring Rules}{Bach, Baptista, Br\"ocker, Chen, and Stuart}
\firstpageno{1}

\begin{document}

\title{Learning Probabilistic Filters with Strictly Proper Scoring Rules}

\author{\name Eviatar Bach \email eviatarbach@protonmail.com \\
       \addr Department of Meteorology, University of Reading\\
       Brian Hoskins Building, Reading, RG6 6ET, UK\\
       \addr Department of Mathematics and Statistics, University of Reading\\
       Pepper Lane, Reading, RG6 6AX, UK\\
       \addr National Centre for Earth Observation\\
       Brian Hoskins Building, University of Reading, Reading, RG6 6ET, UK
       \AND
       \name Ricardo Baptista \email r.baptista@utoronto.ca\\
       \addr Department of Statistical Sciences, University of Toronto\\
       700 University Ave., Toronto, ON M5G 1Z5, Canada
       \AND
       \name Jochen Br\"ocker \email j.broecker@reading.ac.uk\\
       \addr Department of Meteorology, University of Reading\\
       Brian Hoskins Building, Reading, RG6 6ET, UK\\
       \addr Department of Mathematics and Statistics, University of Reading\\
       Pepper Lane, Reading, RG6 6AX, UK
       \AND
       \name Bohan Chen \email bhchen@caltech.edu\\
       \addr The Computing + Mathematical Sciences Department, California Institute of Technology\\
       1200 E. California Blvd., Pasadena, CA 91125, USA
       \AND
       \name Andrew Stuart \email astuart@caltech.edu\\
       \addr The Computing + Mathematical Sciences Department, California Institute of Technology\\
       1200 E. California Blvd., Pasadena, CA 91125, USA}

\editor{}

\maketitle

\begin{abstract}%
Bayesian filtering of partially and noisily observed dynamical systems seeks to infer the evolving conditional distribution of the state of a dynamical system, given observations, in an online fashion. This Bayesian filtering distribution is the natural object for uncertainty quantification, but it is rarely available as a supervised learning target. However, one can often use the forecast model to generate synthetic system trajectories, along with corresponding synthetic observations. We introduce the proper scoring ensemble filter (PSEF), an ensemble data assimilation method based on training an analysis map to approximate the filtering distribution using only synthetic state--observation trajectories. The analysis step is represented as a permutation-invariant, transformer-based map that takes as input a forecast ensemble and observations, producing an analysis ensemble. Training is based on strictly proper scoring rules---with the energy score used in our implementation---so that probabilistic accuracy is rewarded over the whole probability distribution rather than only through the ensemble mean. We prove that, under a realizability assumption, the population objective is minimized by the true Bayesian filtering distribution. We also derive the finite-ensemble empirical objective used in training and relate its single state--observation trajectory form to the population objective, using a mean-field consistency argument. Numerical experiments show that the learned filter accurately approximates challenging filtering distributions, including nonlinear, non-Gaussian, and multi-modal posteriors, and achieves stronger performance in data assimilation tasks than classical methods or learning-based methods with mean-squared-error objectives. For close-to-Gaussian problems, learning a correction to the EnKF is the best approach, while for highly non-Gaussian problems an end-to-end approach that discards this inductive bias is superior.
\end{abstract}

\begin{keywords}
data assimilation, Bayesian filtering distribution, ensemble filter, proper scoring rules, machine learning, amortized Bayesian inference
\end{keywords}

\input{sections/introduction}

\input{sections/background}

\input{sections/learning_the_true_filter}

\input{sections/experiments}

\input{sections/conclusions}

\acks{AMS is supported by a Department of Defense (DoD) Vannevar Bush Faculty Fellowship (award N00014-22-1-2790),
which also supported BC and RB; AMS was also supported by NSF award AGS1835860 and by the Resnick Sustainability Institute; RB was also supported by a Von Karman Instructorship at Caltech. The work of EB and JB was partially funded by the AUSPICE project, UKRI2165. EB and JB also received support from the Centre for the Mathematics of Planet Earth at the University of Reading. We thank Chris Oates for helpful discussions.

\medskip\noindent\textbf{Declaration on the use of generative AI.}
During the preparation of this manuscript, the authors used ChatGPT for editorial suggestions and to improve the clarity, readability, and presentation of the writing. The tool was not used to generate research ideas, theoretical results, proofs, experiments, data, or conclusions. The authors reviewed and edited all AI-assisted output and take full responsibility for the content of the manuscript.
}

\bibliography{references}

\appendix
\input{sections/appendix}

\end{document}

%% file: macros.tex
\usepackage{amsmath}  %
\usepackage{amssymb}  %
\usepackage{bm}       %
\usepackage{graphicx} %
\usepackage{siunitx}  %
\usepackage{float}    %
\usepackage{caption}  %
\usepackage{subcaption} %
\usepackage{enumitem} %
\usepackage{booktabs} %
\usepackage{verbatim} %
\usepackage{multimedia}
\usepackage{xcolor}
\usepackage{algorithm}
\usepackage{algorithmic}
\usepackage{multirow}
\usepackage{stmaryrd}
\usepackage[size=tiny]{todonotes}
\usepackage{tikz-cd}
\usepackage{thmtools}
\usepackage{thm-restate}
\usepackage{bbm}
\usepackage{xstring}

\usepackage{pifont} %

\DeclareMathOperator{\Law}{Law}

\DeclareMathOperator*{\trace}{tr}

\DeclareMathOperator*{\argmin}{arg\,min}

\renewcommand{\hat}{\widehat}
\renewcommand{\bar}{\overline}

\newcommand{\normal}{\mathsf{N}}

\newcommand{\sA}{\mathsf{A}}
\newcommand{\E}{\mathbb{E}}

\newcommand{\fF}{\mathfrak{F}}

\newcommand{\fFNN}{\mathfrak{F}^{\textrm {NN}}}
\newcommand{\fFLN}{\mathfrak{F}^{\textrm {LN}}}

\newcommand{\sS}{\mathsf{S}}

\newcommand{\sT}{\mathsf{T}}
\newcommand{\sF}{\mathsf{F}}

\newcommand{\pp}{\mathfrak{p}}

\newcommand{\fB}{\mathfrak{B}}
\newcommand{\fA}{\mathfrak{A}}
\newcommand{\fT}{\mathfrak{T}}

\newcommand{\sP}{\mathsf{P}}
\newcommand{\sQ}{\mathsf{Q}}
\newcommand{\sB}{\mathsf{B}}

\renewcommand{\epsilon}{\varepsilon}

\newcommand{\rd}{\mathrm{d}}

\newcommand{\vd}{v^{\dagger}}
\newcommand{\yd}{y^{\dagger}}

\newcommand{\cP}{\mathcal{P}}

\newcommand{\cB}{\mathcal{B}}

\newcommand{\bbR}{\mathbb{R}}
\newcommand{\bbP}{\mathbb{P}}

\newcommand{\bbM}{\mathbb{M}}

\newcommand{\bbN}{\mathbb{N}}
\newcommand{\bbE}{\mathbb{E}}
\newcommand{\bbZ}{\mathbb{Z}}

\newcommand{\placeholder}{\mathord{\color{black!33}\bullet}}%

\definecolor{light}{rgb}{0.5, 0.5, 0.5}

\definecolor{mypink1}{rgb}{0.858, 0.188, 0.478}

\definecolor{darkred}{rgb}{0.5,0,0}
\definecolor{darkgreen}{rgb}{0,0.75,0}
\definecolor{darkblue}{rgb}{0,0,.5}

\newtheorem{dataassumption}{Data Assumption}
\newtheorem{assumption}{Assumption}

\newcommand{\hv}{\hat{v}}

\newcommand{\hy}{\hat{y}}

\newcommand{\hvn}{\hat{v}^{(n)}}

\newcommand{\hyn}{\hat{y}^{(n)}}

\newcommand{\vn}{v^{(n)}}

\newcommand{\dst}{d_\text{ST}}

\newcommand{\Att}{\mathsf{Attention}}

\newcommand{\DoublingAngleGrid}[2]{%
    \resizebox{\textwidth}{!}{%
        \begin{tabular}{%
            >{\centering\arraybackslash}m{0.08\textwidth}
            >{\centering\arraybackslash}m{0.09\textwidth}
            >{\centering\arraybackslash}m{0.06\textwidth}
            >{\centering\arraybackslash}m{0.09\textwidth}
            >{\centering\arraybackslash}m{0.09\textwidth}
            >{\centering\arraybackslash}m{0.09\textwidth}
            >{\centering\arraybackslash}m{0.09\textwidth}
            >{\centering\arraybackslash}m{#2}
            >{\centering\arraybackslash}m{0.09\textwidth}
            >{\centering\arraybackslash}m{0.09\textwidth}
            >{\centering\arraybackslash}m{0.09\textwidth}}
        \toprule
        \multirow{2}{*}{Stage} & Truth & \multirow{2}{*}{ML $N$} & EtE & CorrTerms & EtE & CorrTerms & Bench. & \multirow{2}{*}{EnKF} & \multirow{2}{*}{ESRF} & \multirow{2}{*}{IEnKF} \\
        & BPF & & + \textbf{ES} & + \textbf{ES} & + NL2 & + NL2 & $N$ & & & \\
        \midrule
        \begin{tabular}{@{}c@{}}\textcolor{blue!55}{Predictive}\\ Density\end{tabular}
        & \includegraphics[width=\linewidth, trim=0.5cm 0.5cm 0.5cm 0.5cm, clip]{#1/truth_n10_prior.png}
        & $N$=10
        & \includegraphics[width=\linewidth, trim=0.5cm 0.5cm 0.5cm 0.5cm, clip]{#1/ml_ete_es_n10_prior.png}
        & \includegraphics[width=\linewidth, trim=0.5cm 0.5cm 0.5cm 0.5cm, clip]{#1/ml_corrterms_es_n10_prior.png}
        & \includegraphics[width=\linewidth, trim=0.5cm 0.5cm 0.5cm 0.5cm, clip]{#1/ml_ete_nl2_n10_prior.png}
        & \includegraphics[width=\linewidth, trim=0.5cm 0.5cm 0.5cm 0.5cm, clip]{#1/ml_corrterms_nl2_n10_prior.png}
        & $N$=300
        & \includegraphics[width=\linewidth, trim=0.5cm 0.5cm 0.5cm 0.5cm, clip]{#1/benchmark_enkf_n300_ml_n10_prior.png}
        & \includegraphics[width=\linewidth, trim=0.5cm 0.5cm 0.5cm 0.5cm, clip]{#1/benchmark_esrf_n300_ml_n10_prior.png}
        & \includegraphics[width=\linewidth, trim=0.5cm 0.5cm 0.5cm 0.5cm, clip]{#1/benchmark_ienkf_n300_ml_n10_prior.png} \\
        \begin{tabular}{@{}c@{}}\textcolor{red!45}{Filtering}\\ Density\end{tabular}
        & \includegraphics[width=\linewidth, trim=0.5cm 0.5cm 0.5cm 0.5cm, clip]{#1/truth_n10_post.png}
        & $N$=10
        & \includegraphics[width=\linewidth, trim=0.5cm 0.5cm 0.5cm 0.5cm, clip]{#1/ml_ete_es_n10_post.png}
        & \includegraphics[width=\linewidth, trim=0.5cm 0.5cm 0.5cm 0.5cm, clip]{#1/ml_corrterms_es_n10_post.png}
        & \includegraphics[width=\linewidth, trim=0.5cm 0.5cm 0.5cm 0.5cm, clip]{#1/ml_ete_nl2_n10_post.png}
        & \includegraphics[width=\linewidth, trim=0.5cm 0.5cm 0.5cm 0.5cm, clip]{#1/ml_corrterms_nl2_n10_post.png}
        & $N$=300
        & \includegraphics[width=\linewidth, trim=0.5cm 0.5cm 0.5cm 0.5cm, clip]{#1/benchmark_enkf_n300_ml_n10_post.png}
        & \includegraphics[width=\linewidth, trim=0.5cm 0.5cm 0.5cm 0.5cm, clip]{#1/benchmark_esrf_n300_ml_n10_post.png}
        & \includegraphics[width=\linewidth, trim=0.5cm 0.5cm 0.5cm 0.5cm, clip]{#1/benchmark_ienkf_n300_ml_n10_post.png} \\
        \addlinespace
        \begin{tabular}{@{}c@{}}\textcolor{blue!55}{Predictive}\\ Density\end{tabular}
        & \includegraphics[width=\linewidth, trim=0.5cm 0.5cm 0.5cm 0.5cm, clip]{#1/truth_n30_prior.png}
        & \begin{tabular}{@{}c@{}}$N$=30\\ Training\end{tabular}
        & \includegraphics[width=\linewidth, trim=0.5cm 0.5cm 0.5cm 0.5cm, clip]{#1/ml_ete_es_n30_prior.png}
        & \includegraphics[width=\linewidth, trim=0.5cm 0.5cm 0.5cm 0.5cm, clip]{#1/ml_corrterms_es_n30_prior.png}
        & \includegraphics[width=\linewidth, trim=0.5cm 0.5cm 0.5cm 0.5cm, clip]{#1/ml_ete_nl2_n30_prior.png}
        & \includegraphics[width=\linewidth, trim=0.5cm 0.5cm 0.5cm 0.5cm, clip]{#1/ml_corrterms_nl2_n30_prior.png}
        & $N$=1000
        & \includegraphics[width=\linewidth, trim=0.5cm 0.5cm 0.5cm 0.5cm, clip]{#1/benchmark_enkf_n1000_ml_n30_prior.png}
        & \includegraphics[width=\linewidth, trim=0.5cm 0.5cm 0.5cm 0.5cm, clip]{#1/benchmark_esrf_n1000_ml_n30_prior.png}
        & \includegraphics[width=\linewidth, trim=0.5cm 0.5cm 0.5cm 0.5cm, clip]{#1/benchmark_ienkf_n1000_ml_n30_prior.png} \\
        \begin{tabular}{@{}c@{}}\textcolor{red!45}{Filtering}\\ Density\end{tabular}
        & \includegraphics[width=\linewidth, trim=0.5cm 0.5cm 0.5cm 0.5cm, clip]{#1/truth_n30_post.png}
        & \begin{tabular}{@{}c@{}}$N$=30\\ Training\end{tabular}
        & \includegraphics[width=\linewidth, trim=0.5cm 0.5cm 0.5cm 0.5cm, clip]{#1/ml_ete_es_n30_post.png}
        & \includegraphics[width=\linewidth, trim=0.5cm 0.5cm 0.5cm 0.5cm, clip]{#1/ml_corrterms_es_n30_post.png}
        & \includegraphics[width=\linewidth, trim=0.5cm 0.5cm 0.5cm 0.5cm, clip]{#1/ml_ete_nl2_n30_post.png}
        & \includegraphics[width=\linewidth, trim=0.5cm 0.5cm 0.5cm 0.5cm, clip]{#1/ml_corrterms_nl2_n30_post.png}
        & $N$=1000
        & \includegraphics[width=\linewidth, trim=0.5cm 0.5cm 0.5cm 0.5cm, clip]{#1/benchmark_enkf_n1000_ml_n30_post.png}
        & \includegraphics[width=\linewidth, trim=0.5cm 0.5cm 0.5cm 0.5cm, clip]{#1/benchmark_esrf_n1000_ml_n30_post.png}
        & \includegraphics[width=\linewidth, trim=0.5cm 0.5cm 0.5cm 0.5cm, clip]{#1/benchmark_ienkf_n1000_ml_n30_post.png} \\
        \addlinespace
        \begin{tabular}{@{}c@{}}\textcolor{blue!55}{Predictive}\\ Density\end{tabular}
        & \includegraphics[width=\linewidth, trim=0.5cm 0.5cm 0.5cm 0.5cm, clip]{#1/truth_n100_prior.png}
        & $N$=100
        & \includegraphics[width=\linewidth, trim=0.5cm 0.5cm 0.5cm 0.5cm, clip]{#1/ml_ete_es_n100_prior.png}
        & \includegraphics[width=\linewidth, trim=0.5cm 0.5cm 0.5cm 0.5cm, clip]{#1/ml_corrterms_es_n100_prior.png}
        & \includegraphics[width=\linewidth, trim=0.5cm 0.5cm 0.5cm 0.5cm, clip]{#1/ml_ete_nl2_n100_prior.png}
        & \includegraphics[width=\linewidth, trim=0.5cm 0.5cm 0.5cm 0.5cm, clip]{#1/ml_corrterms_nl2_n100_prior.png}
        & $N$=3000
        & \includegraphics[width=\linewidth, trim=0.5cm 0.5cm 0.5cm 0.5cm, clip]{#1/benchmark_enkf_n3000_ml_n100_prior.png}
        & \includegraphics[width=\linewidth, trim=0.5cm 0.5cm 0.5cm 0.5cm, clip]{#1/benchmark_esrf_n3000_ml_n100_prior.png}
        & \\
        \begin{tabular}{@{}c@{}}\textcolor{red!45}{Filtering}\\ Density\end{tabular}
        & \includegraphics[width=\linewidth, trim=0.5cm 0.5cm 0.5cm 0.5cm, clip]{#1/truth_n100_post.png}
        & $N$=100
        & \includegraphics[width=\linewidth, trim=0.5cm 0.5cm 0.5cm 0.5cm, clip]{#1/ml_ete_es_n100_post.png}
        & \includegraphics[width=\linewidth, trim=0.5cm 0.5cm 0.5cm 0.5cm, clip]{#1/ml_corrterms_es_n100_post.png}
        & \includegraphics[width=\linewidth, trim=0.5cm 0.5cm 0.5cm 0.5cm, clip]{#1/ml_ete_nl2_n100_post.png}
        & \includegraphics[width=\linewidth, trim=0.5cm 0.5cm 0.5cm 0.5cm, clip]{#1/ml_corrterms_nl2_n100_post.png}
        & $N$=3000
        & \includegraphics[width=\linewidth, trim=0.5cm 0.5cm 0.5cm 0.5cm, clip]{#1/benchmark_enkf_n3000_ml_n100_post.png}
        & \includegraphics[width=\linewidth, trim=0.5cm 0.5cm 0.5cm 0.5cm, clip]{#1/benchmark_esrf_n3000_ml_n100_post.png}
        & \\
        \bottomrule
        \end{tabular}%
    }%
}

\newcommand{\IEnKFCell}[4]{%
\ifnum#1=1
\multicolumn{1}{@{}p{0pt}@{}}{}%
\else
\fbox{\includegraphics[width=\linewidth]{figures/lorenz63_grid/timestep#2_#3_IEnKF_N100_#4.pdf}}%
\fi
}

\newcommand{\LorenzRowLabel}[2]{%
\begin{tabular}{@{}c@{}}\scriptsize Step #1\\[-1pt]\scriptsize dim $#2$\end{tabular}%
}

\newcommand{\LorenzGridRows}[3]{%
\LorenzRowLabel{#1}{1,2}
& \fbox{\includegraphics[width=\linewidth]{figures/lorenz63_grid/timestep#1_#2_Ground_Truth_PFN1000000_12_r00_c00.pdf}}
& \fbox{\includegraphics[width=\linewidth]{figures/lorenz63_grid/timestep#1_#2_EtE_ES_N100_12_r00_c01.pdf}}
& \fbox{\includegraphics[width=\linewidth]{figures/lorenz63_grid/timestep#1_#2_CorrTerms_ES_N100_12_r00_c02.pdf}}
& \fbox{\includegraphics[width=\linewidth]{figures/lorenz63_grid/timestep#1_#2_EtE_NL2_N100_12_r00_c03.pdf}}
& \fbox{\includegraphics[width=\linewidth]{figures/lorenz63_grid/timestep#1_#2_CorrTerms_NL2_N100_12_r00_c04.pdf}}
& \fbox{\includegraphics[width=\linewidth]{figures/lorenz63_grid/timestep#1_#2_ESRF_N100_12_r00_c05.pdf}}
& \fbox{\includegraphics[width=\linewidth]{figures/lorenz63_grid/timestep#1_#2_EnKF_N100_12_r00_c06.pdf}}
& \IEnKFCell{#3}{#1}{#2}{12_r00_c07}
\\
\LorenzRowLabel{#1}{1,3}
& \fbox{\includegraphics[width=\linewidth]{figures/lorenz63_grid/timestep#1_#2_Ground_Truth_PFN1000000_13_r01_c00.pdf}}
& \fbox{\includegraphics[width=\linewidth]{figures/lorenz63_grid/timestep#1_#2_EtE_ES_N100_13_r01_c01.pdf}}
& \fbox{\includegraphics[width=\linewidth]{figures/lorenz63_grid/timestep#1_#2_CorrTerms_ES_N100_13_r01_c02.pdf}}
& \fbox{\includegraphics[width=\linewidth]{figures/lorenz63_grid/timestep#1_#2_EtE_NL2_N100_13_r01_c03.pdf}}
& \fbox{\includegraphics[width=\linewidth]{figures/lorenz63_grid/timestep#1_#2_CorrTerms_NL2_N100_13_r01_c04.pdf}}
& \fbox{\includegraphics[width=\linewidth]{figures/lorenz63_grid/timestep#1_#2_ESRF_N100_13_r01_c05.pdf}}
& \fbox{\includegraphics[width=\linewidth]{figures/lorenz63_grid/timestep#1_#2_EnKF_N100_13_r01_c06.pdf}}
& \IEnKFCell{#3}{#1}{#2}{13_r01_c07}
\\
\LorenzRowLabel{#1}{2,3}
& \fbox{\includegraphics[width=\linewidth]{figures/lorenz63_grid/timestep#1_#2_Ground_Truth_PFN1000000_23_r02_c00.pdf}}
& \fbox{\includegraphics[width=\linewidth]{figures/lorenz63_grid/timestep#1_#2_EtE_ES_N100_23_r02_c01.pdf}}
& \fbox{\includegraphics[width=\linewidth]{figures/lorenz63_grid/timestep#1_#2_CorrTerms_ES_N100_23_r02_c02.pdf}}
& \fbox{\includegraphics[width=\linewidth]{figures/lorenz63_grid/timestep#1_#2_EtE_NL2_N100_23_r02_c03.pdf}}
& \fbox{\includegraphics[width=\linewidth]{figures/lorenz63_grid/timestep#1_#2_CorrTerms_NL2_N100_23_r02_c04.pdf}}
& \fbox{\includegraphics[width=\linewidth]{figures/lorenz63_grid/timestep#1_#2_ESRF_N100_23_r02_c05.pdf}}
& \fbox{\includegraphics[width=\linewidth]{figures/lorenz63_grid/timestep#1_#2_EnKF_N100_23_r02_c06.pdf}}
& \IEnKFCell{#3}{#1}{#2}{23_r02_c07}
}

%% file: sections/introduction.tex
\section{Introduction}
\label{sec:introduction}

The \emph{Bayesian filter} is defined by the sequential updating of the probability distribution of a hidden state of a dynamical system, given partial and noisy observations. \emph{Filtering algorithms,} which approximate the Bayesian filter,
are central to the field of data assimilation (DA).
Classical sequential methods, such as particle filters (PFs) and the ensemble Kalman filter (EnKF), form the backbone of modern applications aimed at approximating the filtering problem, but face significant limitations: 
PFs suffer from weight degeneracy in high dimensions, requiring a number of particles that scales exponentially with the state dimension \citep{snyder2008obstacles,bengtsson2008curse}; on the other hand
the EnKF, which weights all particles equally and hence does not suffer from weight degeneracy, is designed to accurately approximate the true filter only when it is close to Gaussian \citep{le_gland_large_2011,mandel2011convergence,carrillo2024mean,calvello_ensemble_2025}.  In recent years, data-driven approaches have emerged as a promising avenue to overcome these barriers. However, learning the true filtering distribution remains fundamentally difficult. The primary bottleneck lies in the unavailability of ground-truth filtering distributions for training. Consequently, supervised learning of the Bayesian posterior has historically been restricted to low-dimensional model problems where the ground truth is accessible.

In response to these challenges, we propose the proper scoring ensemble filter (PSEF), a machine learning framework designed to approximate the true Bayesian filtering distribution. Unlike traditional approaches that rely on explicit physical assumptions or hand-crafted correction terms, the PSEF learns a parameterized analysis operator on probability measures that directly maps a forecast ensemble to an analysis ensemble. The centerpiece of our approach is a novel loss function derived from \emph{strictly proper scoring rules}. This formulation enables a supervised learning paradigm that is both theoretically grounded and practically realizable: it allows the model to learn solely from ground-truth state--observation trajectories that are available through simulation, avoiding the need to access the (typically intractable) true filtering distribution during training.

We provide a population-level theoretical foundation for the PSEF, based on the mean-field (infinite particle) limit of the proposed filter \citep{calvello_ensemble_2025}. Under a realizability assumption, the resulting proper scoring rule objective is minimized by filters that recover the Bayesian filtering distribution. Under regularity and ergodicity assumptions, we also relate the mean-field loss arising from a single state--observation trajectory, evaluated at a realizable exact-filter parameter, to the limiting population objective.  This clarifies the statistical target of the proposed objective by separating the 
mean-field, population limit and theory from the finite-ensemble, empirical, optimization problem deployed in practice.
Furthermore, we implement a permutation-invariant architecture capable of handling varying ensemble sizes, which facilitates a computationally efficient training strategy: pretraining on small ensembles followed by lightweight fine-tuning on larger target ensembles. The practical viability of this theoretically grounded approach is demonstrated through a series of comprehensive numerical experiments on partially and noisily observed chaotic dynamical systems.

In this section, we review the relevant literature in Subsection~\ref{ssec:literature_review} and establish the mathematical notation used throughout the paper in Subsection~\ref{ssec:NO}. We summarize our primary contributions in Subsection~\ref{ssec:CO}. Finally, we provide an illustrative nonlinear filtering example in Subsection~\ref{ssec:an_illustrative_example} to demonstrate that a strictly proper scoring rule objective is useful for learning non-Gaussian filtering distributions.

\subsection{Literature Review}
\label{ssec:literature_review}

This subsection reviews the literature relevant to our proposed machine-learning-based filter.
Subsection~\ref{sssec:classical_da} summarizes classical DA methods, emphasizing the Bayesian filtering viewpoint and the limitations of Gaussian and particle-based approximations. 
Subsection~\ref{sssec:ml_da_probabilistic} reviews machine-learning approaches to DA, focusing on learned analysis operators, amortized inverse-operator viewpoints, and probabilistic training objectives. 
Subsection~\ref{sssec:attention_measure_operator} briefly discusses attention-based architectures and conditioning operators defined to act on probability measures, motivating the transformer architecture used in this work.

\subsubsection{Classical data assimilation}
\label{sssec:classical_da}

Data assimilation (DA) refers to methodologies that combine dynamical models with partial and noisy observations to estimate the state of a system and often,
in addition, quantify its uncertainty. 
In the Bayesian filtering formulation, the central object is the conditional distribution of the current state given all observations up to the current time. 
The filtering recursion alternates between a prediction step, which propagates the current conditional distribution through the dynamical model, and an analysis step, which incorporates the new observation through Bayes' rule to obtain the conditional distribution at the new observation time.
This perspective underlies both classical stochastic filtering and modern geophysical DA; see, for example, \cite{jazwinski1970stochastic,doucet2001introduction,law2015data,reich2015probabilistic,asch2016data,carrassi2018data,evensen2022data}.

The Kalman filter \citep{Kalman1960new} gives the exact Bayesian filter for linear systems with Gaussian errors and Gaussian initialization \citep{ho_bayesian_1964}. 
For nonlinear and high-dimensional systems, ensemble-based methods such as the EnKF represent uncertainty by an ensemble of model states and approximate forecast covariances empirically \citep{evensen1994sequential,burgers1998analysis,anderson2001ensemble,houtekamer2016review}. 
Deterministic square-root filters, including the ensemble transform Kalman filter and the ensemble adjustment Kalman filter, reduce sampling noise by avoiding perturbed observations \citep{bishop2001adaptive,anderson2001ensemble,tippett_ensemble_2003}. In practice, finite ensembles require regularization through inflation and localization \citep{anderson1999monte,anderson2007adaptive,hunt2007efficient}, while nonlinear regimes have motivated iterative and optimization-based variants such as the iterative EnKF and maximum-likelihood ensemble filter \citep{zupanski2005maximum,sakov2012iterative}. 
These methods are computationally efficient and robust, but their analysis updates remain closely tied to Gaussian or near-Gaussian approximations. 
Although it is provably convergent in the Gaussian and near-Gaussian
setting \citep{mandel2011convergence,calvello_ensemble_2025}, in general the EnKF does not asymptotically recover the true filtering distribution \citep{le_gland_large_2011} in the large particle limit. In numerical weather prediction, highly non-Gaussian settings have become more prominent with the move towards DA at convection-resolving scales \citep{hu_progress_2023}, motivating research in this area.

Particle filters provide a consistent Monte Carlo approximation of the nonlinear Bayesian filter \citep{doucet2001introduction,gordon1993novel}, but their weights degenerate rapidly in high-dimensional systems \citep{snyder2008obstacles,bengtsson2008curse}. 
This difficulty has motivated a broad range of non-Gaussian ensemble methods, including particle flow filters \citep{daum2011particle,van2019particle}, transport map and normalizing flow approaches \citep{spantini2022coupling,chipilski2025exact}, and optimal transport formulations of DA \citep{al2023nonlinear,bocquet2024bridging}. 
These methods provide alternatives to Gaussian updates, but they often require solving a new transport, optimization, or sampling problem at each assimilation time. 
The present work follows a different route: we learn an amortized analysis operator that maps forecast ensembles and observations to analysis ensembles.

\subsubsection{Machine learning for data assimilation and probabilistic training}
\label{sssec:ml_da_probabilistic}

Machine learning has increasingly been used to augment classical DA methodologies \citep{bach2024inverse}.  Examples of this intersection include the learning of model error corrections, observation operators, reduced-dimensional latent state-space representations, variational solvers, or surrogate dynamics that can be combined with classical DA pipelines \citep{fablet2021learning,frerix2021variational,chattopadhyay_towards_2022,chattopadhyay_deep_2023,bach_interface_2026,arcucci_convergence_2026}. 
Another line directly learns filtering or analysis procedures from data. 
Some previous work has considered learning filters with a specified parametric structure, such as a fixed gain \citep{hoang_simple_1994,brocker_probabilistic_2009,mallia-parfitt_assessing_2016,levine_framework_2022,luk_learning_2024}. 
Other work has used more flexible parameterizations based on neural networks in order to learn filtering updates in nonlinear chaotic systems \citep{mccabe_learning_2021,boudier_data_2023,bocquet_accurate_2024,hammoud_data_2024,bach_learning_2026}. 
These approaches demonstrate that learned filters can improve state estimation, especially when classical Gaussian assumptions are violated.

A learned filter can be viewed as an amortized inference method: instead of solving a new Bayesian inverse problem from scratch at each assimilation time, one trains a parameterized map that can be reused across many forecast priors and observation realizations. 
Some methods learn both the prediction and analysis components of the filtering recursion \citep{boudier_data_2023,allen_end--end_2025}. 
When a reliable dynamical model is available, however, it is natural to preserve the model-based prediction step and learn only the analysis map. 
This analysis-only perspective is closely related to works that learn state corrections, gains, inflation, localization, or ensemble-update maps \citep{luk_learning_2024,bocquet_accurate_2024,vishny_high-dimensional_2024,bach_learning_2026}. 
Most of these learned filters are trained with deterministic objectives, such as mean-square error or normalized state-estimation error. 
Such losses are effective for point estimation, but they generally target conditional means or task-specific summaries rather than the full filtering distribution. 
In this paper we target the full filtering distribution. This viewpoint is consistent with recent operator-learning perspectives on inverse problems, where end-to-end inverse maps and measure-centric probabilistic formulations provide a framework for learning observation-to-solution or observation-to-posterior maps~\citep{nelsen2025operator}.

Probabilistic objectives aim to learn distributional information rather than only point estimates. 
Variational inference provides one route to learning analysis maps from a Bayesian perspective \citep{luk_learning_2024}; however, the dependence on the Kullback--Leibler divergence is a limitation in high dimensions. 
Another route is based on proper scoring rules. 
A scoring rule assigns a loss to a predictive distribution after a realization is observed; it is proper if the expected score is minimized by the true data-generating distribution, and strictly proper if this minimizer is unique \citep{gneiting2007strictly}. 
Examples include the logarithmic score, the continuous ranked probability score (CRPS), the energy score, and variogram-based multivariate scores \citep{gneiting2007strictly,scheuerer2015variogram}. 
The fact that scoring rules assign a score to a realization, rather than a probability distribution, is especially useful in DA because simulated training data typically provide true state trajectories, but not the true filtering distributions themselves.

Training with scoring rules has recently become influential in probabilistic forecasting. 
Generative networks for time-series forecasting have been trained by scoring rule minimization \citep{pacchiardi_probabilistic_2024}. 
In machine learning weather prediction, AIFS-CRPS trains a stochastic ensemble model using an almost-fair CRPS objective \citep{lang2026aifs}; related work learns joint probabilistic forecasts from marginal distributional objectives \citep{alet2025skillful}; and FourCastNet 3 develops a probabilistic global weather model with ensemble and spectral scoring components \citep{bonev2025fourcastnet}.
These works do not learn DA filters, since they learn forecast distributions rather than observation-conditioned analysis distributions. 
Nevertheless, they show that scoring rule--based objective functions can be scaled to high-dimensional ensemble-based machine learning models.

The present work uses this probabilistic learning principle for the DA analysis step. 
Given a forecast ensemble and an observation, the learned operator outputs an analysis ensemble. 
Unlike deterministic loss--based approaches, the objective function is designed to recover the conditional distribution itself. Our proposed method does not require access to the true filtering distribution, which is typically not available in practice. 
Instead, strictly proper scoring rules allow the analysis operator to be trained against simulated truth trajectories, yielding a statistically consistent proxy for the Bayesian filtering target.

\subsubsection{Attention-based operators on empirical measures}
\label{sssec:attention_measure_operator}

An ensemble can be thought of as an empirical measure, which is naturally unordered, so a learned analysis operator should respect permutation symmetry. 
That is, reordering the forecast ensemble should not change the represented forecast distribution, and the output ensemble should transform equivariantly under permutations of ensemble members. Equivalently, the ensemble members should be \textit{exchangeable} \citep{brocker_concept_2011}.
General set architectures such as deep sets and set transformers provide standard tools for permutation-invariant and permutation-equivariant learning \citep{zaheer2017deep,lee2019set}. 
In DA and related Bayesian inference problems, such symmetry-aware architectures have been used to learn ensemble update rules, neural filtering operators, and postprocessing of ensembles \citep{hohlein_postprocessing_2024,zhou_bi-eqno_2024,bach_learning_2026}.

Attention mechanisms provide a natural architecture for such operators because finite attention layers compute interactions among particles or tokens through weighted aggregation. 
In the large-ensemble limit, this can be interpreted as an operator acting on probability measures rather than on ordered vectors. 
This viewpoint has been developed in recent mathematical work on transformers and attention dynamics, including the analysis of self-attention as an interacting-particle or measure-valued system \citep{geshkovski2023emergence,geshkovski2025mathematical}, transformer-based measure-to-measure interpolation \citep{geshkovski2025mathematical}, and continuum attention for neural operators \citep{calvello2025continuum}. 
Related notions of operator learning in the space of probability measures also appear in sampling-invariant solution-operator learning for mean-field games and in-context operator learning for optimal transport maps \citep{huang2025unsupervised,cole2026context}. Recent work further formulates probabilistic conditioning as an amortized neural operator mapping joint densities to conditional densities~\citep{tsimpos2026one}.
These operator learning works are not DA methods per se, but they motivate the architectural viewpoint adopted here: our machine learning--based filter implements the analysis step as a permutation-invariant, attention-based operator on empirical probability measures.

\subsection{Notation}\label{ssec:NO}

We write $\bbN=\{1,2,\ldots\}$, $\bbZ_+=\{0,1,2,\ldots\}$, $\bbR=(-\infty,\infty)$, and $\bbR_+=[0,\infty)$ for the sets of positive integers, nonnegative integers, real numbers, and nonnegative reals, respectively. For $N\in\bbN$, let $[N]=\{1,\ldots,N\}$. For a set $A$, define $\mathcal{U}([N];A) := A^{[N]}$, the set of sequences indexed by $[N]$ with values in $A$. We define the set of all nonempty finite sequences in $A$ by $\mathcal{U}_F(A) := \bigcup_{N=1}^{\infty}\mathcal{U}([N];A).$

For vectors $x,y\in\bbR^d$, the Euclidean inner product and norm are $\langle x,y\rangle=x^\top y$ and $\|x\|=\|x\|_2$, respectively, and for an invertible matrix $C\in \bbR^{d\times d}$ the weighted Euclidean norm is denoted as $\|x\|_C = \sqrt{x^\top C^{-1} x}$. For a matrix $A\in\bbR^{d\times d}$, $\trace(A)$ is the trace and $A^\top$ is the transpose. We write $I_d$ for the $d\times d$ identity matrix.

Let $(\Omega, d_\Omega)$ be a metric space equipped with its Borel $\sigma$-algebra $\cB$. We let $\cP(\Omega)$ denote the set of all probability measures on $(\Omega,\cB)$, and we denote by $\cP_k(\Omega)$ the subset of measures $\mu \in \cP(\Omega)$ with finite $k$-th moments ($k \in \bbN$), meaning that $\int_\Omega d_\Omega(x, x_0)^k \,\mathrm{d}\mu(x) < \infty$
for any reference point $x_0 \in \Omega$. A probability space is denoted by $(\Omega,\cB,\bbP)$, and expectations under $\bbP$ are denoted by 
$\bbE^{\bbP}[\cdot]$; we drop explicit mention of $\bbP$ when it is clear
from the context what measure is intended in the expectation. For a random variable $v$, we write $\Law(v)$ for its distribution. If $T:\Omega\to\Omega'$ is a measurable map and $\pi\in\cP(\Omega)$, then the pushforward $T_\#\pi\in\cP(\Omega')$ is defined by $T_\#\pi(B)=\pi(T^{-1}(B))$ for all measurable $B\subseteq\Omega'$. If $z \sim \pi$ then $T(z) \sim T_\#\pi.$ For $\tau\in\Omega$, $\delta_\tau$ denotes the Dirac measure at $\tau$.  We write $\normal(m,C)$ for the Gaussian distribution on $\bbR^d$ with mean $m$ and covariance $C$. 
We use the abbreviation a.s.\ to denote \emph{almost surely}.
For simplicity we assume that all probability measures defined on $\bbR^d$ are absolutely continuous with respect to the Lebesgue measure (so that they have probability density functions), or that they comprise a convex combination of Dirac masses; the former is a useful simplifying assumption for the underlying
stochastic dynamics models, and the latter enables us to describe the
empirical measure defined by ensemble methods. When discussing the filtering
distribution itself, which is a random object, we will need probability measures on infinite dimensional spaces and Lebesgue measure will not be relevant.
We will refer to probability measures and probability density functions interchangeably, depending on the setting.  

For probability measures, we denote the $p$-Wasserstein metric (induced by $d_\Omega$) for $p \ge 1$ by $W_p$, and the total variation (TV) distance by $\|\cdot\|_{\mathrm{TV}}$. For a measurable function $f$ and a measure $\mu$, we write $\|f\|_{L^p(\mu)}=\big(\int |f|^p\,\mathrm{d}\mu\big)^{1/p}$ for $1\le p<\infty$. We use a sans-serif font (e.g., $\mathsf{B}$) for operators acting on the space of probability measures or the space of functions; we also use this convention for operations on $\mathcal{U}([N];\bbR^{d})$. If an operator $\sB$ on probability measures is defined as the pushforward by a map, we denote the associated transport map with a Fraktur font (e.g., $\mathfrak{B}$), i.e., $\sB(\cdot) = \fB_\sharp (\cdot)$.

In the context of DA, we use $v$ to denote the system states and $y$ to denote the observations. The superscript $\dagger$ indicates true values, and the subscript $j\in\bbZ^+$ denotes the time step. For ensemble methods, we use the parenthesized superscript $(n)$, with $n \in \bbN,$ for the ensemble index.
When training DA methods we use $\theta\in\Theta$ to denote trainable parameters, where $\Theta \subseteq \bbR^p$ is the corresponding parameter space.

\subsection{Contributions and Overview}
\label{ssec:CO}

The central contribution of this work is a learning framework for Bayesian filtering that targets the full filtering distribution rather than only a point estimate of the hidden state. This perspective is important because the filtering distribution is the natural object for uncertainty quantification, yet it is rarely available as a supervised learning target in realistic data assimilation problems. Our approach shows that distributional learning can nevertheless be made practical by combining ensemble filtering, strictly proper scoring rules, and permutation-invariant neural operators.

Our main contributions are summarized as follows.

\begin{enumerate}
    \item \textbf{An amortized distributional learning framework for ensemble approximation of the Bayesian filter.}
    We introduce the proper scoring ensemble filter (PSEF), a learned ensemble data assimilation method that amortizes the analysis step over forecast priors and observation realizations. 
    
    \item \textbf{An objective that does not require explicit access to the Bayesian filter.}
    We formulate the training loss using strictly proper scoring rules, with the energy score used in our implementation. This objective can be evaluated using only simulated trajectories of the state--observation process, avoiding the need to know or store the true filtering distribution during training.

    \item \textbf{Theoretical recovery of the Bayesian filtering distribution.}
    We prove that, under a realizability assumption, the proper scoring rule objective is minimized by the true Bayesian filtering distribution. We further derive the finite-ensemble empirical objective used in computation and relate its single-trajectory form to the population objective through a mean-field consistency argument. 

    \item \textbf{A permutation-invariant analysis operator on empirical measures.}
    We implement the learned analysis step as a transformer-based map acting on empirical forecast measures and observations. This construction respects the exchangeability of ensemble members and allows the same parameterization to be evaluated across different ensemble sizes. 

    \item \textbf{Empirical evidence across different settings.}
    We evaluate the proposed framework on diagnostic and chaotic data assimilation problems. The results show that energy-score-trained filters approximate complex filtering distributions in settings where Kalman-type methods and mean-based learning objectives struggle, and can outperform classical methods even in settings that are not highly non-Gaussian. 
\end{enumerate}

The remainder of the paper is organized as follows.
Section~\ref{sec:background} introduces the mathematical background, including the Bayesian filtering formulation, mean-field filters, and proper scoring rules.
Section~\ref{sec:learning_true_filter} develops the proposed PSEF framework, establishes the population-level recovery of the true filtering distribution, derives the finite-ensemble training objective, and presents the neural analysis map and training strategy.
Section~\ref{sec:numerical_exp} evaluates the method on linear--Gaussian, doubling-angle, Lorenz~'63, and Lorenz~'96 filtering problems, comparing against classical ensemble filters and learning-based baselines under both distributional and calibration metrics.
Finally, Section~\ref{sec:conclusion} summarizes the findings, discusses limitations, and outlines future directions.

\subsection{An Illustrative Doubling-Angle Example}
\label{ssec:an_illustrative_example}

\begin{figure}[t]
    \centering
    \resizebox{\textwidth}{!}{
        \begin{tabular}{
            >{\centering\arraybackslash}m{0.12\textwidth}
            >{\centering\arraybackslash}m{0.10\textwidth}
            >{\centering\arraybackslash}m{0.12\textwidth}
            >{\centering\arraybackslash}m{0.12\textwidth}
            >{\centering\arraybackslash}m{0.12\textwidth}
            >{\centering\arraybackslash}m{0.12\textwidth}
            >{\centering\arraybackslash}m{0.12\textwidth}
            >{\centering\arraybackslash}m{0.12\textwidth}
        }
        \toprule
        Truth & ML $N$ & ES & NL2 & Classical $N$ & EnKF & ESRF & IEnKF \\
        \midrule

        \includegraphics[width=\linewidth, trim=0.5cm 0.5cm 0.5cm 0.5cm, clip]{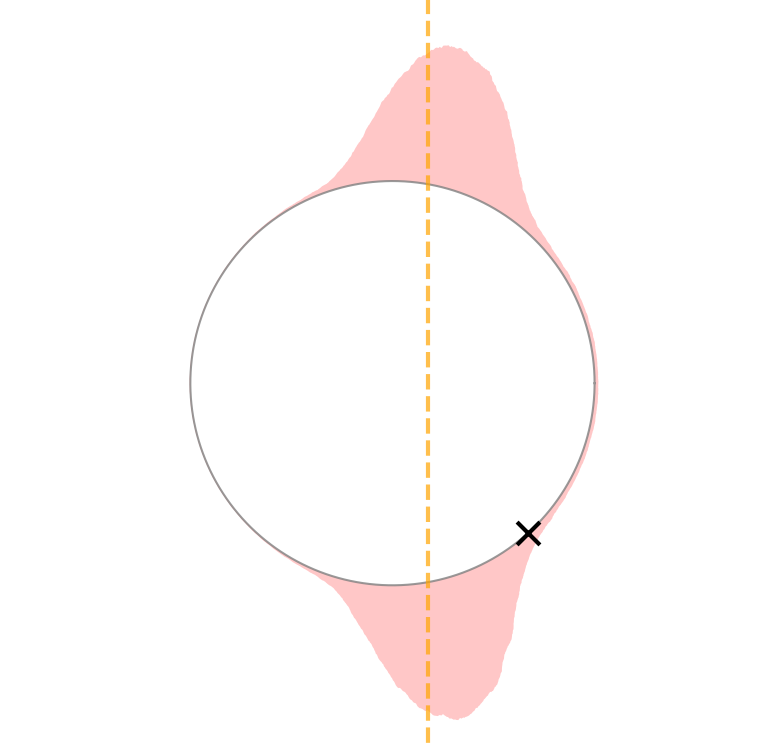} &
        $N=10$ &
        \includegraphics[width=\linewidth, trim=0.5cm 0.5cm 0.5cm 0.5cm, clip]{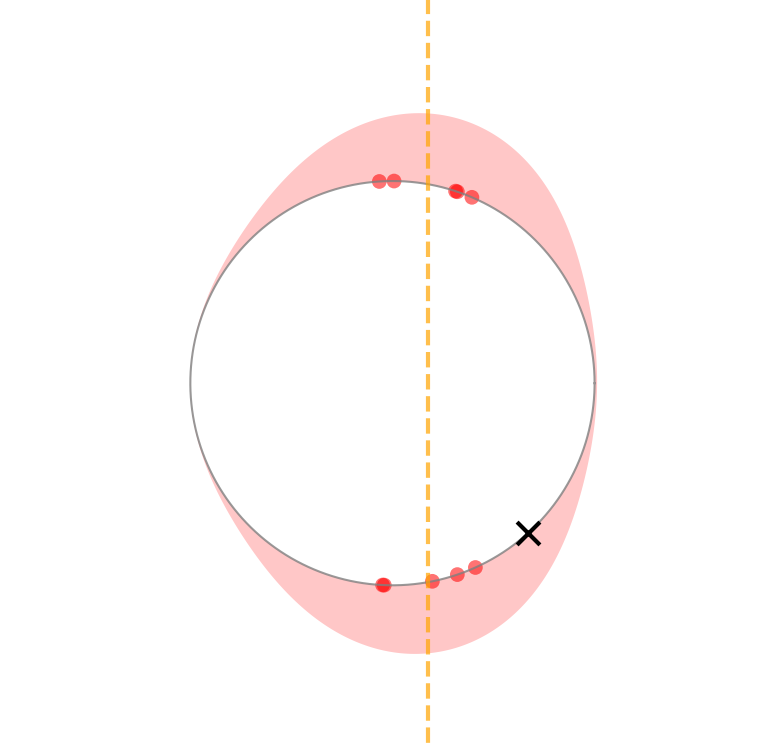} &
        \includegraphics[width=\linewidth, trim=0.5cm 0.5cm 0.5cm 0.5cm, clip]{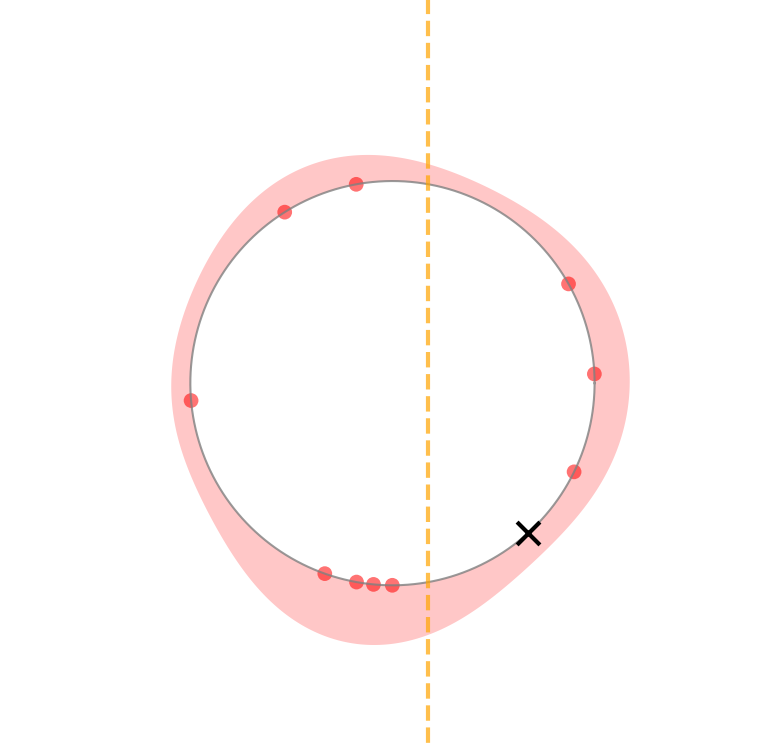} &
        $N=300$ &
        \includegraphics[width=\linewidth, trim=0.5cm 0.5cm 0.5cm 0.5cm, clip]{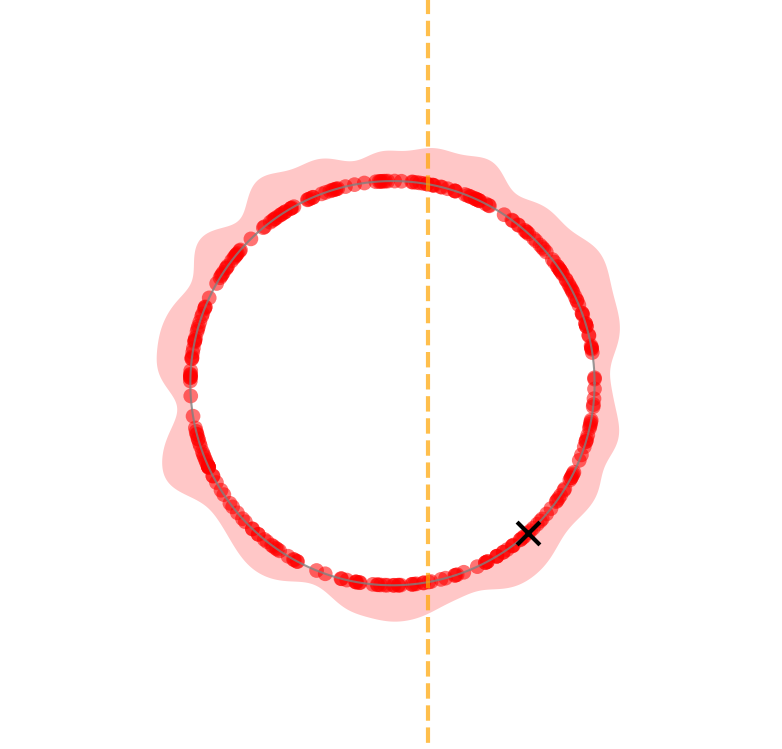} &
        \includegraphics[width=\linewidth, trim=0.5cm 0.5cm 0.5cm 0.5cm, clip]{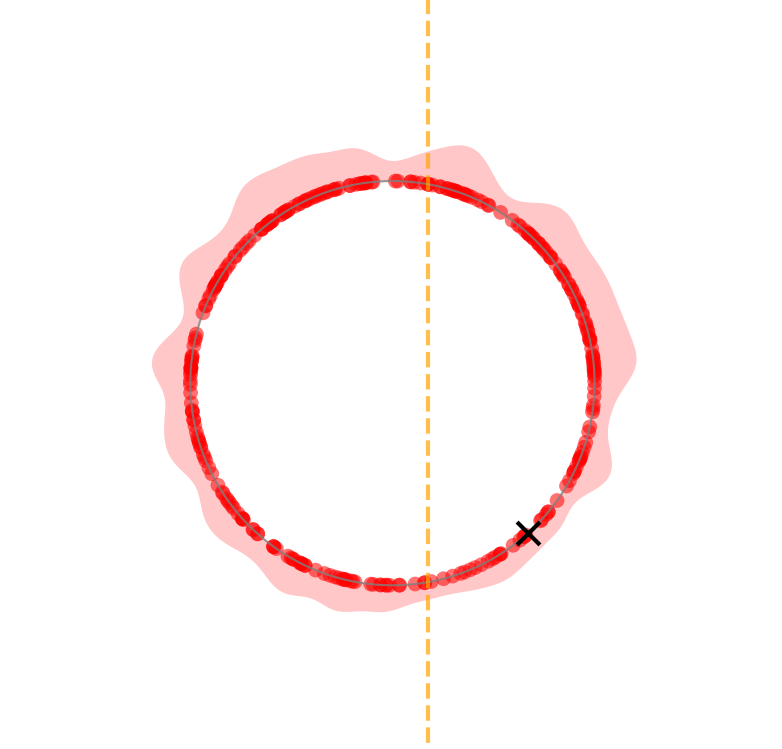} &
        \includegraphics[width=\linewidth, trim=0.5cm 0.5cm 0.5cm 0.5cm, clip]{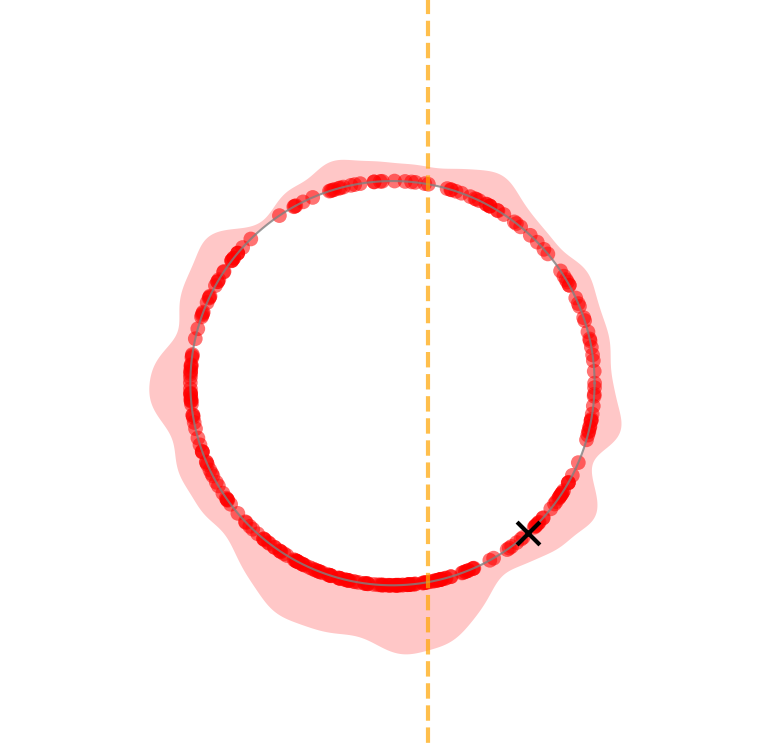} \\
        \addlinespace

        \includegraphics[width=\linewidth, trim=0.5cm 0.5cm 0.5cm 0.5cm, clip]{figures/doubling1d_step200/truth_n10_post.png} &
        $N=30$ \newline Training &
        \includegraphics[width=\linewidth, trim=0.5cm 0.5cm 0.5cm 0.5cm, clip]{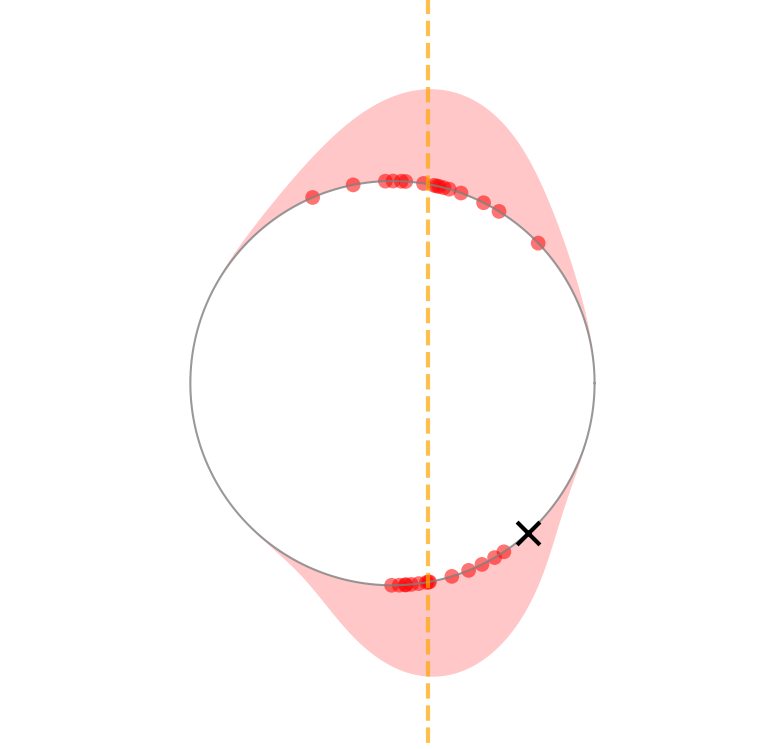} &
        \includegraphics[width=\linewidth, trim=0.5cm 0.5cm 0.5cm 0.5cm, clip]{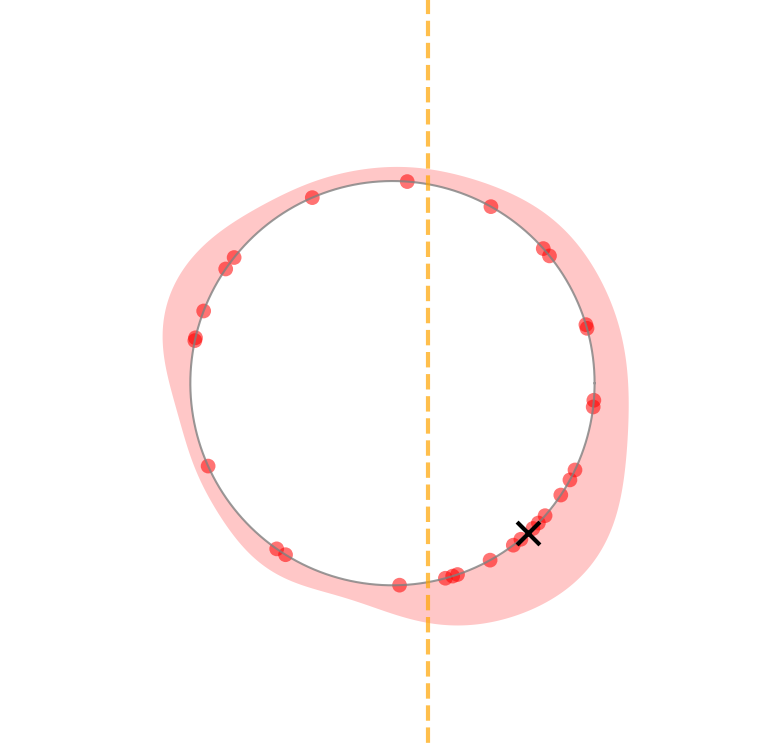} &
        $N=1000$ &
        \includegraphics[width=\linewidth, trim=0.5cm 0.5cm 0.5cm 0.5cm, clip]{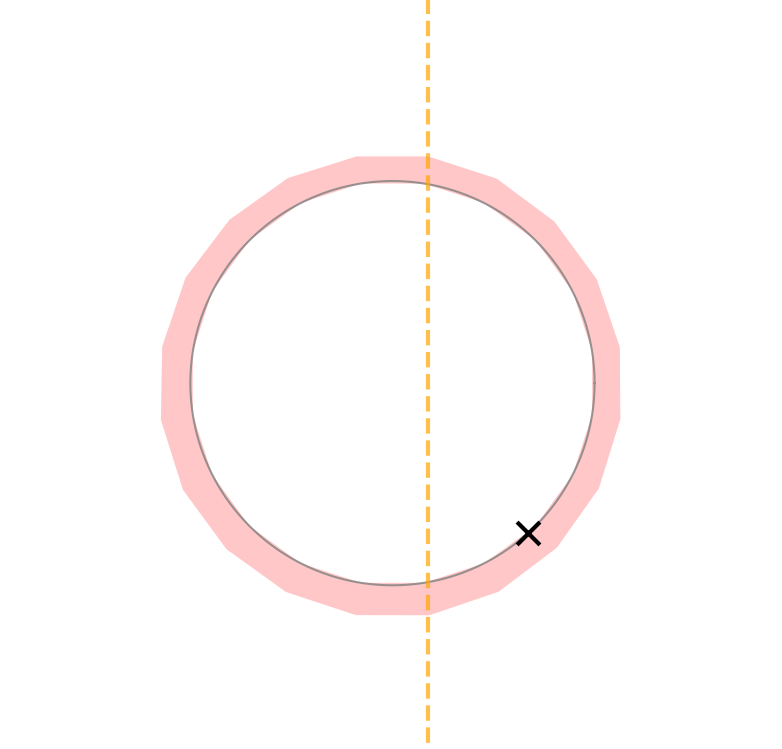} &
        \includegraphics[width=\linewidth, trim=0.5cm 0.5cm 0.5cm 0.5cm, clip]{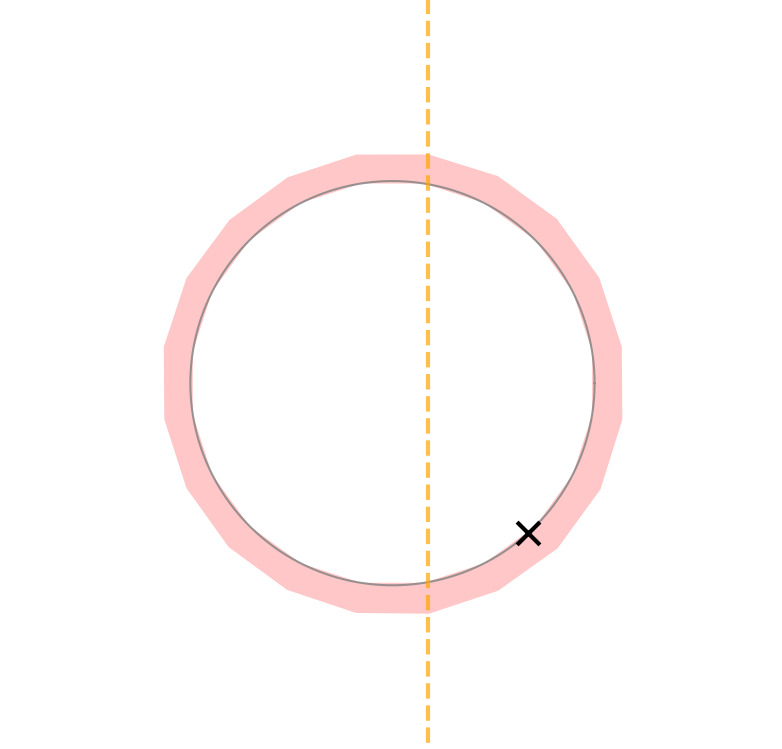} &
        \includegraphics[width=\linewidth, trim=0.5cm 0.5cm 0.5cm 0.5cm, clip]{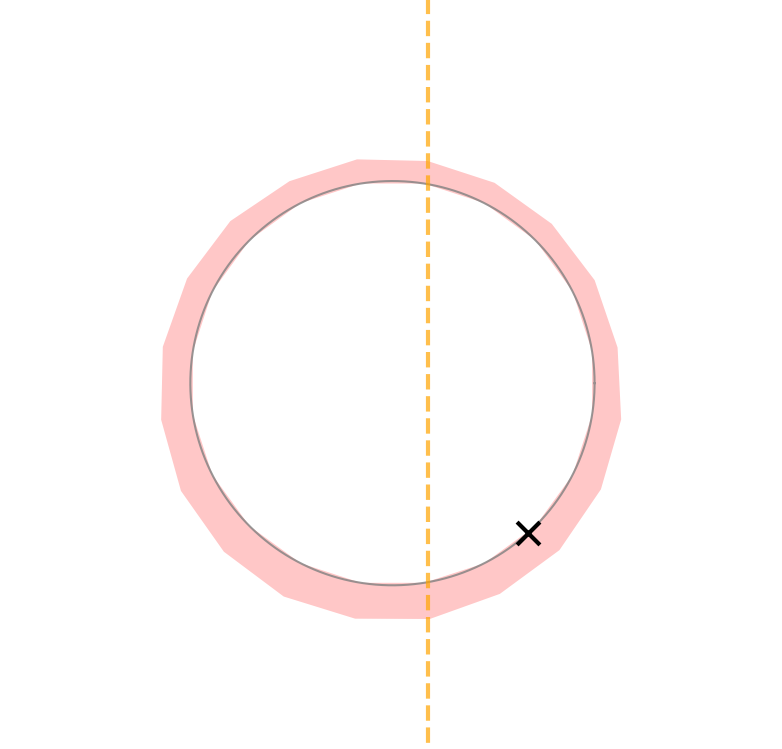} \\
        \addlinespace

        \includegraphics[width=\linewidth, trim=0.5cm 0.5cm 0.5cm 0.5cm, clip]{figures/doubling1d_step200/truth_n10_post.png} &
        $N=100$ &
        \includegraphics[width=\linewidth, trim=0.5cm 0.5cm 0.5cm 0.5cm, clip]{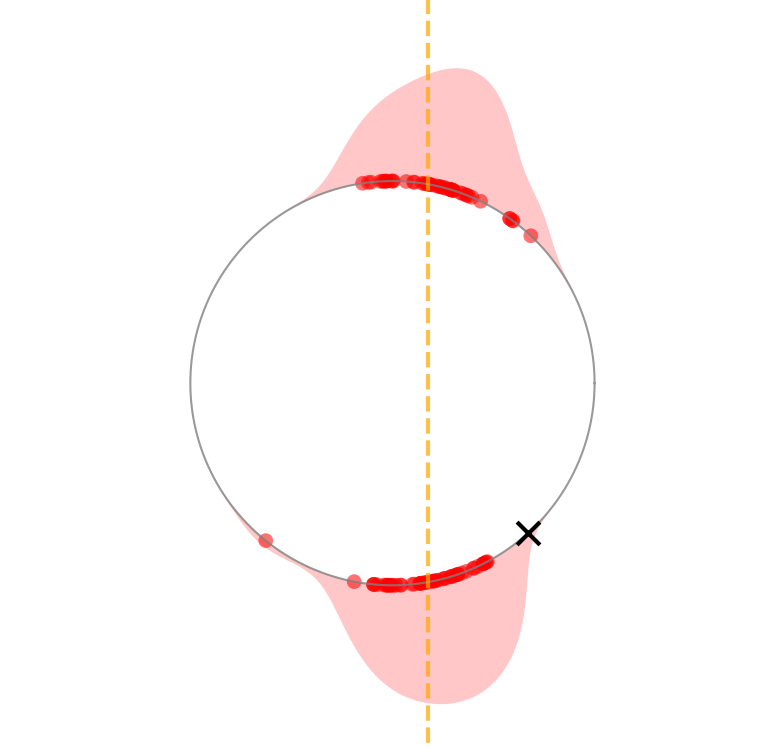} &
        \includegraphics[width=\linewidth, trim=0.5cm 0.5cm 0.5cm 0.5cm, clip]{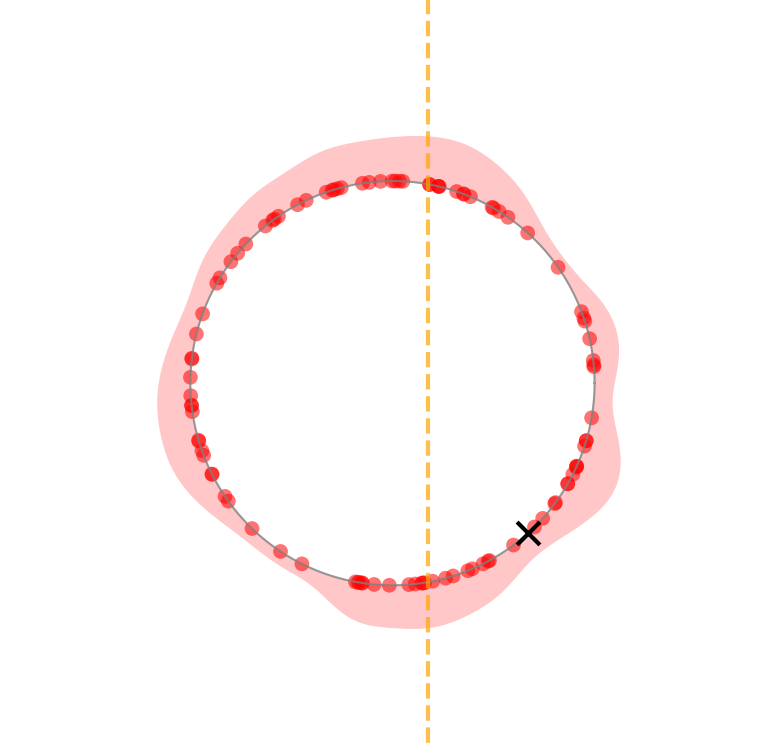} &
        $N=3000$ &
        \includegraphics[width=\linewidth, trim=0.5cm 0.5cm 0.5cm 0.5cm, clip]{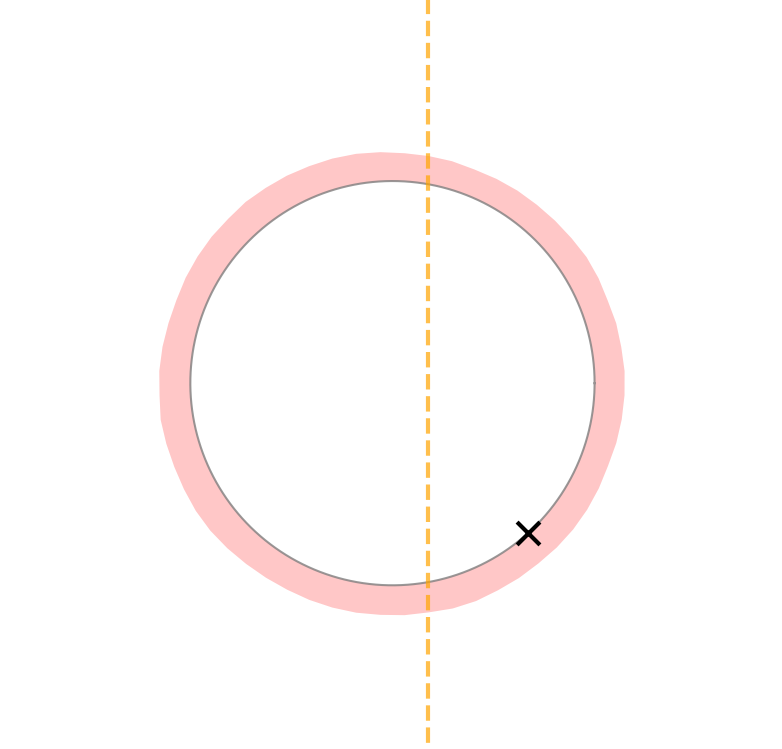} &
        \includegraphics[width=\linewidth, trim=0.5cm 0.5cm 0.5cm 0.5cm, clip]{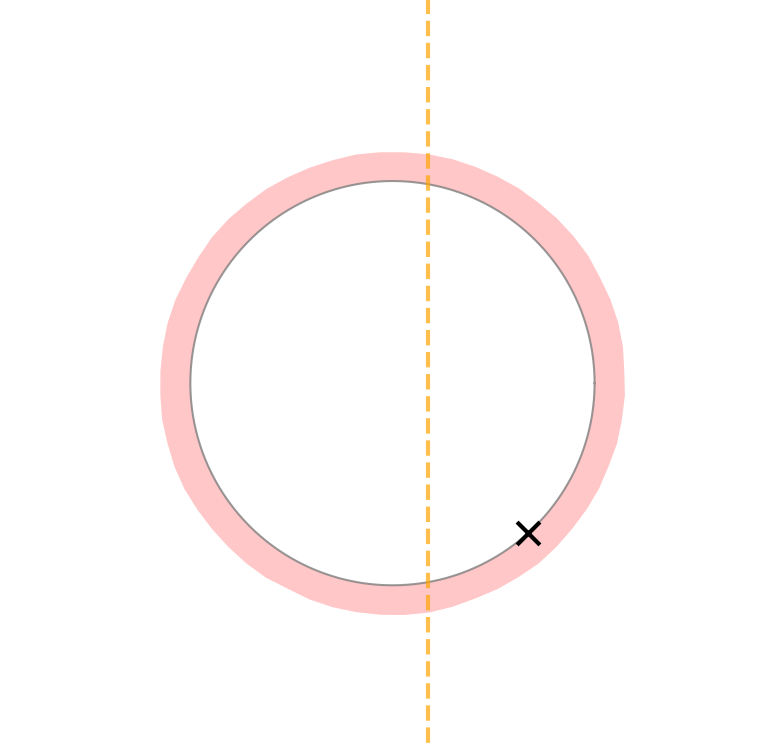} &
        \multicolumn{1}{c}{\textemdash} \\
        \bottomrule
        \end{tabular}
    }

    \medskip

    \includegraphics[width=0.7\textwidth, trim=0.6cm 1.5cm 0.6cm 1.5cm, clip]{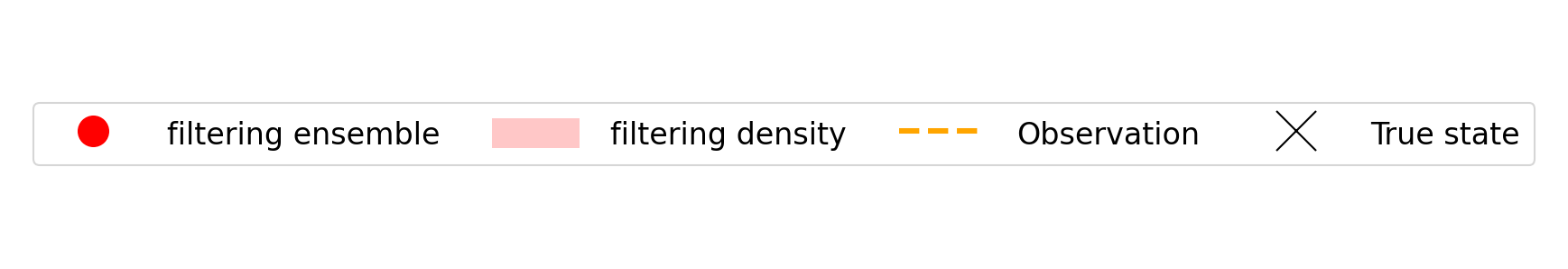}

    \caption{Filtering distributions at assimilation step 200 for the doubling-angle model \eqref{eq:doubling}. States on the modulo-one geometry are multiplied by $2\pi$ and projected onto the unit circle; the vertical line indicates the observed $x$-coordinate. The BPF reference uses $10^6$ particles. Learning-based methods use $N \in \{10,30,100\}$, with $N=30$ used for training, while EnKF, ESRF, and IEnKF use $N \in \{300,1000,3000\}$. The IEnKF result for $N=3000$ is omitted due to computational cost. The ES-trained model captures the bimodal filtering structure with small ensembles, whereas the NL2-trained and Kalman-type baselines miss this structure in this example.}
    \label{fig:doubling1d_filtering}
\end{figure}

To provide concrete intuition for the proposed framework, we consider a low-dimensional doubling-angle dynamic. This is a one-dimensional model given by:
\begin{subequations}\label{eq:doubling}
\begin{align}
    &\vd_{j+1} = 2 \vd_j + \xi_j ~~(\bmod~1),\quad &&\xi_j\sim \normal(0,\sigma_v^2),\quad j\in\mathbb{Z}_+,\\
    &\yd_{j+1} = \cos (2\pi \vd_{j+1}) + \eta_{j+1},\quad &&\eta_{j+1}\sim \normal(0,\sigma_y^2),\quad j\in\mathbb{Z}_+.
\end{align}
\end{subequations}
The state dynamics for $\{v_j\}_{j \in \bbZ^+}$ consists of the dyadic, or doubling, map (also known as Bernoulli map), a classic example of a chaotic and ergodic discrete-time dynamical system \citep{halmos_lectures_1956}. The idea of this model is to double the angle on a circle at each step, with the observations $\{y_j\}_{j \in \bbN}$ being the cosine of this angle.  Small deviations in the initial conditions are amplified exponentially by a factor of $2^j$ over $j$ steps, eventually saturating due to the fact that iterates are constrained to be in $[0, 1)$. Furthermore, the nonlinear observation model induces two modes in the resulting filtering distribution, since the states $\vd$ and $1-\vd$ map to identical observations. Specific experimental parameters and implementation details are provided in Subsection~\ref{ssec:doublin_angle_model}. 

Due to the low dimension, we can use a bootstrap particle filter (BPF) \citep{gordon1993novel} with $10^6$ particles to provide an accurate
approximation of the true filtering distribution; we use the BPF results as the ground truth to compare the performance of different methods. We aim to demonstrate that utilizing a \emph{strictly proper scoring rule} allows the learned PSEF filter to accurately approximate the full filtering distribution, whereas standard losses often fail to rigorously capture statistical properties beyond the mean.

We compare the classical EnKF~\citep{burgers1998analysis, anderson2001ensemble}, ensemble square-root filter (ESRF, a deterministic square-root Kalman update)~\citep{tippett_ensemble_2003}, and iterative EnKF (IEnKF)~\citep{sakov2012iterative} against two learning-based filters. For these baseline methods, we employ near-optimal inflation parameters identified via grid search, using the energy score as the optimization target. We compare these classical methods with tuned inflation against two learning-based models, which are based on the same neural network architecture but with two different loss functions: \textbf{ES}: Trained using the energy score loss with the power $1$, this underlies the PSEF methodology introduced in this paper; and \textbf{NL2}: Trained using a normalized $L^2$ loss, where the error is scaled by the magnitude of the true state, the state
estimation loss deployed in the paper \cite{bach_learning_2026}.
As discussed in Subsection~\ref{subsec:scoring_rules}, these loss functions differ fundamentally in their theoretical properties. Neither requires access to the Bayesian filter. However, whilst the ES loss is a \emph{strictly proper} scoring rule,
the NL2 loss is \emph{not proper}. NL2, since it only considers the mean of the learned filter, is limited to state estimation, meaning that it cannot perform uncertainty quantification. ES, on the other hand, can achieve both.

The filter trained with the strictly proper scoring loss (ES) achieves the best performance, tracking closely the theoretical optimal sampling limit for $N=10$ samples. To isolate the effect of the training objective, we ensure that two learning-based methods are identical in every respect, including network architecture, optimizer, and training data, differing solely in their loss functions

Figure~\ref{fig:doubling1d_filtering} presents the filtering distributions at step 200. The true posterior exhibits a highly non-Gaussian, bimodal structure induced by the nonlinear cosine observation operator. 
The Kalman-type ensemble filters (EnKF, ESRF, and IEnKF) do not represent the bimodal posterior geometry in this example, even when run with substantially larger ensembles.

The empirical results further expose the limitations of non--strictly proper scoring rules in learning-based filters. The model trained with the NL2 loss produces distorted posterior approximations that fail to align with the true bimodal density. In contrast, the model trained with the ES loss recovers the intricate filtering density. The ES model generalizes robustly across different ensemble sizes, capturing the precise modes and probability mass of the ground truth, whether operating with a compressed ensemble of $N=10$ or scaling up to $N=100$.

%% file: sections/background.tex
\section{Background}\label{sec:background}

In this section, we review the concepts essential for our work. We begin by formulating the filtering problem within sequential data assimilation (DA) in Subsection~\ref{subsec:filtering_problem}. Next, we discuss mean-field filters, a relevant class of methods for this problem, in Subsection~\ref{subsec:mean_field_filters}. Finally, we introduce the theory of proper scoring rules, which underpins our proposed loss function, in Subsection~\ref{subsec:scoring_rules}. 

\subsection{The Filtering Problem}\label{subsec:filtering_problem}

The DA problem setting is based on the state--observation model defined by
\begin{subequations}\label{eq:DA_setting}
\begin{align}
\text{Dynamics model:}~~&v_{j+1}^\dag = \Psi(v_j^\dag) + \xi_j^\dag, \quad \xi_j^\dag \sim \normal(0, \Sigma) \ \text{i.i.d.},\quad v_0^\dag \sim \normal(m_0, C_0),\quad  j \in \bbZ_+, \label{eq:dynamic_model}\\
\text{Observation model:}~~&y_{j+1}^\dag = h(v_{j+1}^\dag) + \eta_{j+1}^\dag, \quad \eta_j^\dag \sim \normal(0, \Gamma) \ \text{i.i.d.},\quad j \in \bbZ_+. \label{eq:data_model}
\end{align}
\end{subequations}
The mapping $\Psi:\bbR^{d_v}\to\bbR^{d_v}$ is the deterministic forecast model, and $h:\bbR^{d_v}\to\bbR^{d_y}$ is the observation operator. The state variables $v_j^\dag\in\bbR^{d_v}$ and the process noises $\xi_j^\dag\in\bbR^{d_v}$ share the state dimension $d_v$, while the observations $y_j^\dag\in\bbR^{d_y}$ and the observation noises $\eta_j^\dag\in\bbR^{d_y}$ share the observation dimension $d_y$. We assume mutual independence between the initial condition and the two noise sequences, and between the noises themselves:
\begin{equation}
    \{\,\xi_j^\dag\,\}_{j\in\bbZ_+}\perp v_0^\dag,\qquad
\{\,\eta_j^\dag\,\}_{j\in\bbZ_+}\perp v_0^\dag,\qquad
\{\,\xi_j^\dag\,\}_{j\in\bbZ_+}\perp \{\,\eta_j^\dag\,\}_{j\in\bbZ_+}.
\end{equation}

Throughout this paper, a superscript $\dagger$ marks the true state or observation and their driving noises, distinguishing them from algorithmic variables introduced later, which appear without $\dagger$. While we work with autonomous dynamics and additive Gaussian errors in~\eqref{eq:DA_setting}, extensions are common in applications, such as time-varying $\Psi,h,\Sigma,\Gamma$, multiplicative or non-Gaussian noises, and infinite-dimensional state spaces for systems governed by partial differential equations. These extensions are not pursued in this paper to keep the presentation focused. Map $\Psi$ can be interpreted as the discrete-time flow obtained by integrating an underlying continuous-time model over a fixed interval of length $\Delta t$ between
observations; generalization to unequally spaced observations requires the generalization of our work to time-varying flow maps $\Psi_j.$

Let $Y_j^\dag := \{y_1^\dag,\dots,y_j^\dag\}$ be the observations accumulated up to time $j$. The \emph{filtering distribution} at time $j$ is the conditional law of the state given all data up to time $j$; we denote it by $\pi_j\in\cP(\bbR^{d_v})$. 
\begin{definition}[Filtering distribution]\label{def:filtering_distribution}
For each $j\in\bbZ_+$, the \emph{filtering distribution} is
\[
\pi_j := \Law\bigl(v_j^\dag \mid Y_j^\dag\bigr)\in\cP(\bbR^{d_v}),
\]
that is, the conditional probability measure of the state $v_j^\dag$ given the observations $Y_j^\dag$. 
For more background on filtering, see \cite{law2015data,reich2015probabilistic,sanz-alonso_inverse_2023}.
\end{definition}

\begin{definition}[Filtering problem]\label{def:filtering_problem}
Given the state--observation model \eqref{eq:DA_setting} and the data stream $\{y_j^\dag\}_{j\ge1}$, the \emph{filtering problem} is to approximate, and update sequentially in $j$, the filtering distributions $\{\pi_j\}_{j\ge1}$.
\end{definition}

To focus our presentation we make assumptions concerning the long-term statistical behavior of the stochastic dynamics model \eqref{eq:dynamic_model}. This behavior is determined by the interplay between the deterministic dynamics $\Psi$ and the stochastic forcing quantified by $\Law(\xi)$. Together, the pair $(\Psi, \Law(\xi))$ induces a discrete-time Markov chain on the state space $\bbR^{d_v}$, characterized by the transition kernel $\mathcal{K}: \bbR^{d_v} \times \cB(\bbR^{d_v}) \to [0,1]$, defined as
\begin{equation}
    \mathcal{K}(v, A) := \bbP(\Psi(v) + \xi \in A), \quad \xi \sim \normal(0, \Sigma).
\end{equation}
We assume the existence of an \emph{invariant measure} $\mu_\infty \in \cP(\bbR^{d_v})$ for this kernel.

\begin{definition}[Invariant measure]\label{def:stationarity}
A probability measure $\mu_\infty$ is \emph{invariant} (or stationary) with respect to the transition kernel $\mathcal{K}$ (and thus the pair $(\Psi, \Sigma)$) if
\begin{equation}
    \mu_\infty(A) = \int_{\bbR^{d_v}} \mathcal{K}(v, A) \, \mu_\infty(dv) \quad \text{for all } A \in \cB(\bbR^{d_v}).
\end{equation}
If $v_0^\dag \sim \mu_\infty$, the process $\{v_j^\dag\}_{j\in\bbZ_+}$ is stationary.
\end{definition}

\begin{definition}[Ergodicity]\label{def:ergodicity}
The stochastic process $\{v_j^\dag\}_{j\ge0}$ generated by the dynamics model \eqref{eq:dynamic_model} is \emph{ergodic} with respect to the invariant measure $\mu_\infty$ if, for any $\mu_\infty$-integrable test function $\phi: \bbR^{d_v} \to \bbR$, the Birkhoff average converges almost surely with respect to the law of the full stochastic trajectory:
\begin{equation}\label{eq:birkhoff}
    \lim_{N \to \infty} \frac{1}{N} \sum_{j=0}^{N-1} \phi(v_j^\dag) = \bbE^{v \sim \mu_\infty}[\phi(v)], \quad \text{a.s.}
\end{equation}
\end{definition}
In this work, we assume the dynamics model is ergodic. Crucially for our learning-based approach, this assumption guarantees that temporal averages converge to ensemble expectations, thereby justifying the use of time-averaged losses to approximate the expected risk over the invariant distribution.

\subsection{Mean-Field Filters}
\label{subsec:mean_field_filters}

Sequential DA proceeds by evolving the filtering distribution through a prediction–analysis cycle driven by the state--observation model \eqref{eq:DA_setting}. 
Based on the filtering distribution $\pi_j$, we introduce the one-step predictive law for the state and its joint extension to include the next observation:
\begin{subequations}
\begin{align}
    \hat{\pi}_{j+1}
    &:= \Law\bigl(v_{j+1}^\dagger \mid Y_j^\dagger\bigr)
    \in \cP(\bbR^{d_v}), \label{eq:measure_hpi}\\
    \rho_{j+1}
    &:= \Law\bigl((v_{j+1}^\dagger,y_{j+1}^\dagger) \mid Y_j^\dagger\bigr)
    \in \cP(\bbR^{d_v+d_y}). \label{eq:measure_rho}
\end{align}
\end{subequations}
When these laws admit densities, we use the same symbols for the corresponding density functions.
The goal is to transport $\pi_j$ to $\pi_{j+1}$ by first evolving from $\pi_j$ to the predictive distribution $\hat{\pi}_{j+1}$ under the dynamics model \eqref{eq:dynamic_model}, then extending to the joint state–observation distribution $\rho_{j+1}$ via the observation model \eqref{eq:data_model}, and finally conditioning at the observation $y_{j+1}^\dagger$ to obtain $\pi_{j+1}$.

\paragraph{Probability-map evolution}
The update $\pi_j \to \pi_{j+1}$ can be expressed with two linear operators, $\sP$ (forecast) and $\sQ$ (likelihood extension), followed by a nonlinear conditioning operator $\sB$:
\begin{subequations}\label{eq:prob_map_evo}
\begin{align}
    &\text{Prediction (linear):} & \hat{\pi}_{j+1} &= \sP \pi_j, \label{eq:pred}\\
    &\text{Extend to state--observation space (linear):} & \rho_{j+1} &= \sQ \hat{\pi}_{j+1}, \label{eq:extend}\\
    &\text{Analysis (nonlinear):} & \pi_{j+1} &= \sB(\rho_{j+1}; y_{j+1}^\dagger). \label{eq:analysis}
\end{align}
\end{subequations}
The initial filtering distribution is denoted by $\pi_0$; in our experiments, unless otherwise specified, we take $\pi_0 = \normal(m_0, C_0)$.
For the stochastic dynamics $v_{j+1}=\Psi(v_j)+\xi_j$ with $\xi_j\sim\normal(0,\Sigma)$,
\begin{equation}\label{eq:operator_P}
    \sP \pi(v) = \frac{1}{\sqrt{(2\pi)^{d_v} \det \Sigma}} 
    \int \exp\!\left(-\tfrac{1}{2} \|v - \Psi(u)\|_{\Sigma}^2 \right) \pi(u) \, \mathrm{d}u.
\end{equation}
For the observation model $y=h(v)+\eta$ with $\eta\sim\normal(0,\Gamma)$,
\begin{equation}\label{eq:operator_Q}
    \sQ \pi(v,y) = \frac{1}{\sqrt{(2\pi)^{d_y} \det \Gamma}} 
    \exp\!\left(-\tfrac{1}{2} \|y - h(v)\|_\Gamma^2 \right) \pi(v).
\end{equation}
The conditioning operator $\sB$ maps a joint state--observation law and an observed value to the conditional state law. When $\rho_{j+1}$ admits a density, denoted by the same symbol, $\sB(\rho_{j+1};y_{j+1}^\dagger)$ is the probability measure whose density is
\begin{equation}\label{eq:operator_B}
    \sB(\rho_{j+1};y_{j+1}^\dagger)(v) = 
    \frac{\rho_{j+1}(v,y_{j+1}^\dagger)}
    {\int_{\bbR^{d_v}} \rho_{j+1}(u,y_{j+1}^\dagger)\,\mathrm{d}u}.
\end{equation}
We may then summarize~\eqref{eq:prob_map_evo} as
\begin{equation}\label{eq:q_update0}
    \pi_j
    =
    \sB\bigl(\sQ\sP \pi_{j-1};y_j^\dag\bigr),
    \qquad
    j\in[J].
\end{equation}
This is a nonlinear Markov process in the sense that the probability distribution
$\pi_{j}$ is found from a nonlinear operator applied to $\pi_{j-1}.$
\paragraph{Mean-field state-space evolution}
The nonlinear operator $\sB$ \eqref{eq:operator_B} cannot typically be computed in closed form, and numerical computations are often unfeasible for large $d_v$ due to the high-dimensional integral involved. To implement the analysis step indirectly, a mean-field state-space evolution is introduced whose law effects the desired conditioning: 
\begin{subequations}
\label{eq:meta_state}
\begin{align}
    &\text{Prediction:} &\quad \hat v_{j+1} &= \Psi(v_j) + \xi_j, \quad \xi_j\sim\normal(0,\Sigma), \label{eq:state_pred}\\
    &\text{Extend to observation:} &\quad \hat y_{j+1} &= h(\,\hat v_{j+1}) + \eta_{j+1}, \quad \eta_{j+1}\sim\normal(0,\Gamma), \label{eq:state_extend}\\
    &\text{Analysis:} &\quad v_{j+1} &= \fB(\hat v_{j+1},\hat y_{j+1};y_{j+1}^\dagger,\rho_{j+1}). \label{eq:state_analysis}
\end{align}
\end{subequations}
Here, the initial state is drawn according to $v_0\sim \normal(m_0, C_0)$. By construction of \eqref{eq:state_pred}–\eqref{eq:state_extend} and the definitions \eqref{eq:operator_P}–\eqref{eq:operator_Q}, we have
$\rho_{j+1}=\Law\bigl((\hat v_{j+1},\hat y_{j+1})\bigr)$. 

In \eqref{eq:state_analysis}, $\fB$ is a vector-valued map acting on the joint state–observation space for the analysis step, parameterized by the observation $y^\dag_{j+1}$ and by the joint measure $\rho_{j+1}$:
\begin{equation}
    \fB(\,\placeholder\,,\,\placeholder\,;\,y_{j+1}^\dagger,\rho_{j+1}): (v,y) \mapsto \fB(v,y;\,y_{j+1}^\dagger,\rho_{j+1})\in\bbR^{d_v}.
\end{equation}
The analysis operator $\sB$ acting on the probability measures is induced by $\fB$ via pushforward:
\begin{equation}
    \sB(\rho_{j+1};y_{j+1}^\dagger)=
\fB(\,\placeholder\,,\,\placeholder\,;\,y_{j+1}^\dagger,\rho_{j+1})_\sharp \,\rho_{j+1}.
\end{equation}
With this construction, if $\pi_j=\Law(v_j)$, then $\pi_{j+1}=\Law(v_{j+1})$. $\fB$ is a transport map, transforming samples from the joint state--observation distribution $\rho_{j+1}$ to ones from the filtering distribution $\pi_{j+1}$. Since it depends on $\rho_{j+1}$, it is a mean field map.

\paragraph{An example: the mean-field ensemble Kalman filter}
A concrete instance of the measurable mapping $\fB$ is the analysis step of the mean-field ensemble Kalman filter \citep{calvello_ensemble_2025}, obtained by restricting $\fB$ to the class of affine maps. Keeping \eqref{eq:state_pred}–\eqref{eq:state_extend} unchanged, define
\begin{equation}\label{eq:kf_affine_map}
v_{j+1} \;=\; \fB_{\mathrm{KF}}(\hat v_{j+1},\hat y_{j+1};y_{j+1}^\dagger,\rho_{j+1})
\;=\; \hat v_{j+1} + K_{j+1}\bigl(y_{j+1}^\dagger - \hat y_{j+1}\bigr),
\end{equation}
with gain
\begin{equation}\label{eq:kf_gain_general}
K_{j+1} \;=\; \hat C^{vh}_{j+1}\bigl(\hat C^{hh}_{j+1}+\Gamma\bigr)^{-1},
\end{equation}
where the predictive moments are taken under $\hat\pi_{j+1}$:
\begin{subequations}\label{eq:kf_predictive_moments}
\begin{align}
\bar v_{j+1} &= \bbE^{\hat v\sim\hat\pi_{j+1}}\hat v, 
\qquad
\bar h_{j+1} = \bbE^{\hat v\sim\hat\pi_{j+1}} h(\,\hat v\,),\\
\hat C^{vh}_{j+1} &= \bbE^{\hat v\sim\hat\pi_{j+1}}\!\left[(\hat v-\bar v_{j+1})\otimes\bigl(h(\,\hat v\,)-\bar h_{j+1}\bigr)\right],\\
\hat C^{hh}_{j+1} &= \bbE^{\hat v\sim\hat\pi_{j+1}}\!\left[\bigl(h(\,\hat v\,)-\bar h_{j+1}\bigr)\otimes\bigl(h(\,\hat v\,)-\bar h_{j+1}\bigr)\right].
\end{align}
\end{subequations}
Equivalently, one may write $K_{j+1}=\hat C^{vh}_{j+1}\bigl(\hat C^{yy}_{j+1}\bigr)^{-1}$ with $\hat C^{yy}_{j+1}=\hat C^{hh}_{j+1}+\Gamma$, making the dependence on $\rho_{j+1}$ explicit through the synthetic observation $\hat y_{j+1}$. In the linear–Gaussian setting (if both $\Psi$ and $h$ are linear), this affine choice yields the Bayesian posterior, and the law of $v_j$ is governed by the classical Kalman filter \citep{calvello_ensemble_2025}.

\subsection{Proper Scoring Rules}\label{subsec:scoring_rules}

Probabilistic scoring rules assess how well a forecast distribution
approximates a verification distribution that is only observed through 
a single sample. Let $\mu\in\cP(\bbR^d)$ denote the forecast distribution and $\pp\in\cP(\bbR^d)$ denote the verification distribution. A scoring rule is a function $\sS:\cP(\bbR^d)\times \bbR^d \to \bbR$,
which assigns a real-valued score to a realized sample $y\sim \pp$ given the forecast $\mu$. We adopt the convention that scores are negatively oriented: smaller values indicate better forecasts. To compare $\mu$ and $\pp$ in expectation, we define the expected score
\begin{equation}
\bar{\sS}(\mu,\pp)
:= \mathbb{E}^{y\sim \pp}\bigl[\sS(\mu,y)\bigr]
= \int_{\mathbb{R}^d} \sS(\mu,y)\pp(\rd y) .
\end{equation}

\begin{definition}[Proper / Strictly proper scoring rule]\label{def:strictly_proper_scoring_rule}
A scoring rule $\sS:\cP(\bbR^d)\times\bbR^d\to\bbR$ is called proper if for every verification distribution $\pp\in\cP(\bbR^d)$,
\begin{equation}\label{eq:proper_scoring_rule}
    \bar{\sS}(\pp,\pp)\;\le\;\bar{\sS}(\mu,\pp)\quad\text{for all } \mu\in\cP(\bbR^d).
\end{equation}
Furthermore, $\sS$ is called strictly proper if it is proper and, moreover, for every $\pp\in\cP(\bbR^d)$,
\[
\bar{\sS}(\mu,\pp) \;=\; \bar{\sS}(\pp,\pp)
\quad\Longrightarrow\quad
\mu=\pp .
\]
Equivalently, for each fixed $\pp$, the unique global minimizer of $\bar{\sS}(\cdot,\pp)$ over $\cP(\bbR^d)$ is attained at $\mu=\pp$.
\end{definition}

A strictly proper scoring rule makes the quantity
\begin{equation} \label{eq:sdiv}
\bar{\sS}(\mu,\pp)-\bar{\sS}(\pp,\pp)
\end{equation}
a statistical divergence between $\mu$ and $\pp$.
The abstract framework above admits many concrete choices of $\sS$. In what follows, we focus on the energy score (Definition~\ref{def:energy_score}), which is built from pairwise Euclidean distances.

\begin{restatable}[Energy score]{definition}{definitionEnergyScore}\label{def:energy_score}
Fix $\beta\in(0,2)$. To ensure the score is finite, the domain is restricted to $\cP_\beta(\bbR^d)\times \bbR^d$. The \emph{energy score} (ES) $\sS^\mathrm{ES}_{\beta}: \cP_\beta(\bbR^d)\times \bbR^d\to\bbR_+$ associated with a forecast $\mu\in\cP_\beta(\bbR^d)$ and a realization $y\in\bbR^d$ is
\begin{equation}\label{eq:ES}
\sS^\mathrm{ES}_{\beta}(\mu,y)
:= \mathbb{E}^{x\sim\mu}\bigl[\|x-y\|^{\beta}\bigr]
-\frac{1}{2}\,\mathbb{E}^{x,x'\sim\mu}\bigl[\|x-x'\|^{\beta}\bigr].
\end{equation}
For $\beta\in(0,2)$, $\sS^\mathrm{ES}_{\beta}$ is strictly proper \citep{gneiting2007strictly}. %
\end{restatable}

In the one-dimensional case $d=1$ with $\beta=1$, the energy score reduces to the commonly used continuous ranked probability score (CRPS) \citep{bach2024inverse}. Beyond the energy score and CRPS, the family of strictly proper scoring rules is extensive. We refer to \cite{gneiting2014probabilistic, jordan2019evaluating} for comprehensive reviews. A classical example is the logarithmic score, defined as $\sS^{\mathrm{LS}}(\mu, y) = -\log p_\mu(y)$, which is local, depending solely on the probability assigned to the realized event $y$, ignoring the distance to other probable outcomes \citep{du_beyond_2021}. In contrast, the energy score belongs to the class of non-local or distance-sensitive scoring rules \citep{szekely2013energy}. This distance-based approach is generalized by the class of kernel scores, obtained by replacing Euclidean distances with a negative definite kernel \citep{gneiting2007strictly,muandet2017kernel}. Kernel scores quantify the discrepancy between the forecast and observation via embeddings in a reproducing kernel Hilbert space. Their expectation is mathematically equivalent to the squared maximum mean discrepancy \citep{gretton2012kernel}.

Critically, kernel scores, which include the energy score and CRPS, since they can be written using expectations over the forecast distribution, can be evaluated directly on empirical measures \citep{bach2024inverse}, as we will later see in Subsection~\ref{ssec:methods}. This makes such scores easily computable for evaluating ensemble forecasts, which motivate their use herein. Other scores, such as the logarithmic score, may require density estimation to be evaluated on empirical measures \citep{brocker_ensemble_2008}.

%% file: sections/learning_the_true_filter.tex
\section{Learning to Approximate the True Filter with Strictly Proper Scoring Rules}\label{sec:learning_true_filter}
In this section, we develop a framework for learning the data assimilation analysis operator using a probabilistic loss with synthetic true trajectories but without access to the true filtering distribution. Subsection~\ref{subsec:fd_and_psr} shows that strictly proper scoring rules allow the filtering distributions to be recovered using only the true states as training targets. We then introduce a unified family of scoring-rule objectives indexed by the time horizon, the number of independent training trajectories, and the ensemble size. When the number of training trajectories is infinite we refer to the population limit; when the number of ensemble members is infinite we refer to the mean-field limit. Subsection~\ref{ssec:bridge_pop_empirical} connects,
under regularity and ergodicity assumptions, the population and finite-trajectory mean-field objectives through a consistency result. Finally, Subsection~\ref{subsec:architecture_training} describes the neural particle analysis map and its practical training procedure. Auxiliary results used in the proofs are collected in \ref{app:lemmas}.

\subsection{Filtering Distribution and Strictly Proper Scoring Rules}\label{subsec:fd_and_psr}
For a strictly proper scoring rule $\sS$, we show that the true filtering distributions are minimizers of a loss based on $\sS$ while using only the true states as the training data (Propositions~\ref{prop:filter_score} and \ref{prop:opt_score_to_filter}). 

\begin{proposition}\label{prop:filter_score}
Consider a strictly proper scoring rule
$\sS:\cP(\bbR^{d_v})\times\bbR^{d_v}\to\bbR$
(Definition~\ref{def:strictly_proper_scoring_rule}), and assume that all expected scores below are finite. Let $\pi_j := \Law\bigl(v_j^\dag \mid Y_j^\dag\bigr)$ denote the filtering distribution, viewed as a random probability measure determined by the observation history $Y_j^\dag$. Let $q(Y_j^\dag)$ be any $Y_j^\dag$-measurable random probability measure on $\bbR^{d_v}$. Then, with $\bbE^{(v_j^\dag,Y_j^\dag)}$ denoting expectation under the joint dynamics--observation model \eqref{eq:DA_setting}, we have
\begin{equation}\label{eq:prop1_ineq}
\bbE^{(v_j^\dag,Y_j^\dag)}
\left[
    \sS\bigl(q(Y_j^\dag),v_j^\dag\bigr)
\right]
\geq
\bbE^{(v_j^\dag,Y_j^\dag)}
\left[
    \sS(\pi_j,v_j^\dag)
\right].
\end{equation}
Equality holds, almost surely with respect to the distribution of $Y_j^\dag$,
if and only if $q(Y_j^\dag)=\pi_j$.
\end{proposition}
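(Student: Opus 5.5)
The plan is to condition on the observation history and reduce the statement to the defining property of a strictly proper scoring rule, applied pointwise in $Y_j^\dag$. By the tower property,
\begin{equation*}
\bbE^{(v_j^\dag,Y_j^\dag)}\bigl[\sS(q(Y_j^\dag),v_j^\dag)\bigr]
=\bbE^{Y_j^\dag}\Bigl[\bbE\bigl[\sS(q(Y_j^\dag),v_j^\dag)\,\big|\,Y_j^\dag\bigr]\Bigr].
\end{equation*}
Since $q(Y_j^\dag)$ is $Y_j^\dag$-measurable and $\Law(v_j^\dag\mid Y_j^\dag)=\pi_j$ (Definition~\ref{def:filtering_distribution}), the inner conditional expectation equals $\bar\sS(q(Y_j^\dag),\pi_j)=\int\sS(q(Y_j^\dag),v)\,\pi_j(\rd v)$ almost surely. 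The same computation with $q$ replaced by $\pi_j$, which is itself $Y_j^\dag$-measurable, gives inner value $\bar\sS(\pi_j,\pi_j)$.

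To justify the identity of the inner conditional expectation with the integral against $\pi_j$, I would fix a regular conditional distribution of $v_j^\dag$ given $Y_j^\dag$; this exists because $\bbR^{d_v}$ is Polish. I would then apply the disintegration or ``freezing'' lemma to the jointly measurable map $(\mu,v)\mapsto\sS(\mu,v)$, evaluated at the $Y_j^\dag$-measurable argument $\mu=q(Y_j^\dag)$. This is the main technical point. It requires measurability of $\sS$ on $\cP(\bbR^{d_v})\times\bbR^{d_v}$, with the weak topology on measures, and of $q$ as a random measure. I would state this as a standing measurability convention, in line with the assumption that all expected scores are finite, which also makes the conditional expectations well defined. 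For the energy score, measurability follows because $\mu\mapsto\int\|x-y\|^\beta\,\mu(\rd x)$ is a monotone limit of integrals of bounded continuous functions.

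With this identity in hand, propriety (Definition~\ref{def:strictly_proper_scoring_rule}) applied with $\pp=\pi_j(\omega)$ and $\mu=q(Y_j^\dag)(\omega)$ gives $\bar\sS(q(Y_j^\dag),\pi_j)\ge\bar\sS(\pi_j,\pi_j)$ pointwise. Taking the outer expectation yields \eqref{eq:prop1_ineq}.

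For the equality case, note that the difference $D:=\bar\sS(q(Y_j^\dag),\pi_j)-\bar\sS(\pi_j,\pi_j)$ is nonnegative almost surely and integrable by the finiteness assumption. Equality of the two expectations therefore forces $\bbE[D]=0$, hence $D=0$ almost surely. Strict propriety, applied pointwise on this full-measure event, then gives $q(Y_j^\dag)=\pi_j$ almost surely. Conversely, if $q(Y_j^\dag)=\pi_j$ almost surely, the two integrands agree almost surely, so equality holds trivially.
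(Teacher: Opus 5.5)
Your proposal is correct and follows essentially the same route as the paper: it conditions on $Y_j^\dag$ via the tower property, so that the two sides become $\bbE^{Y_j^\dag}$ of $\bar\sS(q(Y_j^\dag),\pi_j)$ and $\bar\sS(\pi_j,\pi_j)$, and then invokes (strict) propriety pointwise. Your measurability remarks (regular conditional distribution, freezing lemma) and your explicit argument that $D\ge 0$ with $\bbE[D]=0$ forces $D=0$ almost surely make rigorous steps that the paper's proof leaves implicit.
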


\begin{proof}
Let $\bar{\sS}(\mu, \pp) := \bbE^{y \sim \pp}[\sS(\mu, y)]$ denote the expected score.
Since $(v_j^\dag,Y_j^\dag)\sim \bbP(v_j^\dag,Y_j^\dag) = \bbP(Y_j^\dag)\bbP(v_j^\dag\mid Y_j^\dag)$, and $\pi_j =\Law\bigl(v_j^\dag \mid Y_j^\dag\bigr)$,
we have
\begin{subequations}
\begin{align*}
&\bbE^{(v_j^\dag,Y_j^\dag)} \bigl[ \sS\bigl(q(\cdot; Y_j^\dagger), v_j^\dagger\bigr) \bigr]
= \bbE^{Y_j^\dag} \left[ \bbE^{v_j^\dag | Y_j^\dag} \bigl[ \sS\bigl(q(\cdot; Y_j^\dagger), v_j^\dagger\bigr) \bigr] \right] 
= \bbE^{Y_j^\dag} \left[ \bar{\sS}\bigl(q(\cdot; Y_j^\dagger), \pi_j\bigr) \right],\\
&\bbE^{(v_j^\dag,Y_j^\dag)} \bigl[ \sS(\pi_j, v_j^\dagger) \bigr]
= \bbE^{Y_j^\dag} \left[ \bbE^{v_j^\dag | Y_j^\dag} \bigl[ \sS\bigl(\pi_j, v_j^\dagger\bigr) \bigr] \right] = \bbE^{Y_j^\dag} \left[ \bar{\sS}(\pi_j, \pi_j) \right].
\end{align*}
\end{subequations}
The inequality \eqref{eq:prop1_ineq} and the equality condition follow the Definition~\ref{def:strictly_proper_scoring_rule} of the strictly proper scoring rule $\sS$.
\end{proof}

\paragraph{A unified notation for scoring-rule objectives}
Consider a parameterized analysis operator $\sB_\theta:\cP(\bbR^{d_v+d_y})\times\bbR^{d_y}\to\cP(\bbR^{d_v})$, $\theta \in \Theta$, approximating the conditioning operator $\sB$ in \eqref{eq:operator_B}. For an observation history $Y_j^\dag$, we write $q_j^\theta(Y_j^\dag)$ for the random probability measure produced by the parameterized mean-field filter at time $j$ with parameters $\theta$. The recursion is initialized by
\begin{equation}\label{eq:q_initialization}
    q_0^\theta(Y_0^\dag)
    =
    \pi_0
    =
    \Law(v_0^\dag),
    \qquad
    Y_0^\dag=\varnothing.
\end{equation}
Then, imposing the prediction--analysis structure of the true filter \eqref{eq:q_update0}, the parameterized distribution at time $j$ is
\begin{equation}\label{eq:q_update}
    q_j^\theta(Y_j^\dag)
    =
    \sB_\theta\bigl(\sQ\sP q_{j-1}^\theta(Y_{j-1}^\dag);y_j^\dag\bigr),
    \qquad
    j\in[J].
\end{equation}
Hence, the parameterized analysis operator $\sB_\theta$ gives a sequence of approximations of the filtering distribution $\{q_j^\theta(Y_j^\dag)\}_{j=1}^J$ by evolving \eqref{eq:q_update}. The same structure is also used in \cite{luk_learning_2024,bach_learning_2026}. 

The Proposition~\ref{prop:filter_score} and the preceding discussion about the mean-field
ensemble filter, suggests minimization of the following objective function,
in the population limit:
\begin{equation}
    \label{eq:basic_obj}
    \mathcal{L}_{J,\infty}(\theta)=\frac{1}{J}
\sum_{j=1}^J
\bbE^{(v_j^\dag,Y_j^\dag)}
\left[
\sS\bigl(q_j^{\theta}(Y_j^\dag),v_j^\dag\bigr)
\right].
\end{equation}
In practice we cannot work in the population limit.
Our training setting is based on the following Data Assumption~\ref{da:training_data}.

\begin{dataassumption}\label{da:training_data}
For a training length $J\in\bbN$, we consider $M\in\bbN$ independent training trajectories $\{(V_{J,m}^\dag,Y_{J,m}^\dag)\}_{m=1}^M$ generated according to \eqref{eq:DA_setting}, where
\begin{equation}\label{eq:training_trajectory_m}
    V_{J,m}^\dag=\{v_{0,m}^\dag,\ldots,v_{J,m}^\dag\},
    \qquad
    Y_{J,m}^\dag=\{y_{1,m}^\dag,\ldots,y_{J,m}^\dag\}.
\end{equation}
Here $v_{0,m}^\dag\sim q_\infty$, $y_0^\dag$ is not needed, and $q_\infty$ is the invariant measure of the forward dynamical model; see \ref{app:ergodicity_state} for the existence of $q_\infty$.
\end{dataassumption}

Nor, in practice, can we work in the mean-field limit. We use a unified notation $\mathcal{L}_{J,M,N}(\theta)$ for the scoring-rule objective, where $J$ is the time horizon, $M$ is the number of independent training trajectories used to approximate the data expectation, and $N$ is the ensemble size. Throughout the finite-horizon definitions, $J\in\bbN$, while $M,N\in\bbN\cup\{\infty\}$.

For $N\in\bbN$, let $q_{j,m}^{\theta,N}(Y_{j,m}^\dag)$ denote the empirical filtering measure generated by the finite-particle recursion introduced below in \eqref{eq:learn_mf}. For $N=\infty$, we set $q_{j,m}^{\theta,\infty}(Y_{j,m}^\dag)=q_j^\theta(Y_{j,m}^\dag).$
Then objective function $\mathcal{L}_{J,M,N}(\theta)$ is defined by
\begin{equation}\label{eq:unified_loss_JMN}
\mathcal{L}_{J,M,N}(\theta)
=
\begin{cases}
\displaystyle
\frac{1}{JM}
\sum_{m=1}^M
\sum_{j=1}^J
\sS\bigl(q_{j,m}^{\theta,N}(Y_{j,m}^\dag),v_{j,m}^\dag\bigr),
& M\in\bbN,\ N\in\bbN,
\\
\displaystyle
\frac{1}{J}
\sum_{j=1}^J
\bbE^{(v_j^\dag,Y_j^\dag)}
\left[
\sS\bigl(q_j^{\theta,N}(Y_j^\dag),v_j^\dag\bigr)
\right],
& M=\infty,\ N\in\bbN,
\\
\displaystyle
\frac{1}{JM}
\sum_{m=1}^M
\sum_{j=1}^J
\sS\bigl(q_{j,m}^{\theta}(Y_{j,m}^\dag),v_{j,m}^\dag\bigr),
& M\in\bbN,\ N=\infty,
\\
\displaystyle
\frac{1}{J}
\sum_{j=1}^J
\bbE^{(v_j^\dag,Y_j^\dag)}
\left[
\sS\bigl(q_j^{\theta}(Y_j^\dag),v_j^\dag\bigr)
\right],
& M=\infty,\ N=\infty.
\end{cases}
\end{equation}
When $M=\infty$, the expectation is taken over the data-generating law of $(v_j^\dag,Y_j^\dag)$. When $M\in\bbN$, the objective averages over finitely many independent training trajectories. When $N = \infty$ the objective uses the mean-field filter; when $N$ is finite this filter is approximated by an interacting particle system. For the mean-field objectives, we use the shorthand
\begin{equation}\label{eq:mean_field_loss_shorthand}
\mathcal{L}_{J,M}(\theta)
:=
\mathcal{L}_{J,M,\infty}(\theta),
\end{equation}
linking the objective function $\mathcal{L}_{J,\infty}$ defined in \eqref{eq:basic_obj} to the final item in \eqref{eq:unified_loss_JMN}.
The following recovery result concerns the objective $\mathcal{L}_{J,\infty}$.

\begin{proposition}\label{prop:opt_score_to_filter}
Consider minimizing the full-expectation mean-field objective $\mathcal{L}_{J,\infty}(\theta)$ defined in \eqref{eq:basic_obj}, subject to the recursive update \eqref{eq:q_update} initialized at $q_0^\theta(Y_0^\dag)=\pi_0$. Under the realizability assumption that there exists $\theta_B\in\Theta$ such that $\sB_{\theta_B}=\sB$ for $\sB$ defined in \eqref{eq:operator_B}, any minimizer
\begin{equation}\label{eq:theta_J_star_full_expectation}
\theta_J^\ast\in\argmin_{\theta\in\Theta}\mathcal{L}_{J,\infty}(\theta)
\end{equation}
recovers the filtering distributions for every $j\in [J]$:
\begin{equation}\label{eq:theta_J_star_filter_recovery}
q_j^{\theta_J^\ast}(Y_j^\dag)=\pi_j \quad Y_j^\dag\text{-a.s.}
\end{equation}
Consequently, $\sB_{\theta_J^\ast}$ generates the same analysis distributions as the true Bayesian analysis operator $\sB$.
\end{proposition}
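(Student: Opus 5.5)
The argument has two parts. First, the realizing parameter $\theta_B$ attains a lower bound on $\mathcal{L}_{J,\infty}$, which identifies the minimum value. Second, that lower bound splits into $J$ separate inequalities, so a minimizer must be tight in each one, and Proposition~\ref{prop:filter_score} then forces recovery at every time.

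\textbf{Step 1 (lower bound).} Fix any $\theta\in\Theta$ and any $j\in[J]$. By induction on \eqref{eq:q_update} from $q_0^\theta=\pi_0$, the measure $q_j^\theta(Y_j^\dag)$ is obtained from a deterministic map applied to $(y_1^\dag,\dots,y_j^\dag)$. I will take this map to be measurable, which is implicit in the finiteness assumptions. Hence $q_j^\theta(Y_j^\dag)$ is a $Y_j^\dag$-measurable random probability measure, and Proposition~\ref{prop:filter_score} gives
\[
\bbE\bigl[\sS(q_j^\theta(Y_j^\dag),v_j^\dag)\bigr]\ \ge\ \bbE\bigl[\sS(\pi_j,v_j^\dag)\bigr].
\]
Averaging over $j$ yields $\mathcal{L}_{J,\infty}(\theta)\ge \frac1J\sum_{j=1}^J\bbE[\sS(\pi_j,v_j^\dag)]=:L^\star$ for every $\theta$.

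\textbf{Step 2 (attainment).} Under realizability, $\sB_{\theta_B}=\sB$. Comparing \eqref{eq:q_update} with the exact recursion \eqref{eq:q_update0}, which has the same initialization $q_0^{\theta_B}=\pi_0$, induction on $j$ gives $q_j^{\theta_B}(Y_j^\dag)=\pi_j$ for every realization of the data, for all $j\in[J]$. Therefore $\mathcal{L}_{J,\infty}(\theta_B)=L^\star$, and $\min_\theta\mathcal{L}_{J,\infty}=L^\star$. In particular, any minimizer $\theta_J^\ast$ satisfies $\mathcal{L}_{J,\infty}(\theta_J^\ast)=L^\star$.

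\textbf{Step 3 (termwise equality and recovery).} We have $\mathcal{L}_{J,\infty}(\theta_J^\ast)=L^\star$, and each summand of $\mathcal{L}_{J,\infty}(\theta_J^\ast)$ is at least the corresponding summand of $L^\star$ by Step~1. Since all quantities are finite, the sum of the nonnegative differences is zero, so each difference vanishes:
\[
\bbE\bigl[\sS(q_j^{\theta_J^\ast}(Y_j^\dag),v_j^\dag)\bigr]=\bbE\bigl[\sS(\pi_j,v_j^\dag)\bigr]\quad\text{for every } j\in[J].
\]
To pass from equality in expectation to almost sure recovery, I will use the conditional form from the proof of Proposition~\ref{prop:filter_score}. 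Write each side as $\bbE^{Y_j^\dag}\bigl[\bar\sS(q_j^{\theta_J^\ast},\pi_j)\bigr]$ and $\bbE^{Y_j^\dag}\bigl[\bar\sS(\pi_j,\pi_j)\bigr]$ respectively. By propriety, the integrand difference $\bar\sS(q_j^{\theta_J^\ast},\pi_j)-\bar\sS(\pi_j,\pi_j)$ is pointwise nonnegative, and its expectation is zero, so it vanishes $Y_j^\dag$-a.s. Strict propriety then gives $q_j^{\theta_J^\ast}(Y_j^\dag)=\pi_j$ almost surely, which is \eqref{eq:theta_J_star_filter_recovery}.

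The final sentence of the statement follows because, for each $j$, $\sB_{\theta_J^\ast}$ applied to $\sQ\sP\pi_{j-1}$ and $y_j^\dag$ yields $\pi_j$, which is the same output as $\sB$. This equality holds only almost surely along the realized predictive laws; I do not claim identity of the operators.

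\textbf{Main obstacle.} The delicate step is Step~3: upgrading equality of expected scores to a.s.\ equality. This needs finiteness, so that differences of expectations are meaningful, and it needs the step from a nonnegative integrand with zero mean to a.s.\ vanishing. Both are supplied by the hypotheses of Proposition~\ref{prop:filter_score}. A secondary point is the measurability of $Y_j^\dag\mapsto q_j^\theta(Y_j^\dag)$, which I will assume as part of the setup.
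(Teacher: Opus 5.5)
Your proof is correct and follows the paper's route: a per-time lower bound from Proposition~\ref{prop:filter_score}, attainment at $\theta_B$ by induction on the recursion, and the equality condition of Proposition~\ref{prop:filter_score} applied to any minimizer. The only difference is that you make explicit two steps the paper leaves implicit. First, equality of the time-averaged objective forces equality in each of the $J$ summands. Second, a pointwise nonnegative conditional excess score with zero mean vanishes almost surely, which yields recovery. You also read the final ``same analysis distributions'' claim, correctly, as an almost-sure statement along the realized predictive laws rather than an identity of operators.
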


\begin{proof}
By Proposition~\ref{prop:filter_score}, we have $\bbE^{(v_j^\dag,Y_j^\dag)}\left[
\sS\bigl(q_j^\theta(Y_j^\dag),v_j^\dag\bigr)\right]\geq\bbE^{(v_j^\dag,Y_j^\dag)}
\left[\sS(\pi_j,v_j^\dag)\right],$
with equality if and only if $q_j^\theta(Y_j^\dag)=\pi_j$ $Y_j^\dag$-a.s.
By the realizability assumption, $\exists\theta_B\in\Theta$ such that $\sB_{\theta_B}=\sB$. By induction, we have $q_j^{\theta_B}(Y_j^\dag)=\pi_j$ a.s. for all $j\in[J]$. Thus, $\theta_B$ attains the lower bound in $\mathcal{L}_{J,\infty}$. 

For any minimizer $\theta_J^\ast$ of $\mathcal{L}_{J,\infty}$, we have $\mathcal{L}_{J,\infty}(\theta_J^\ast)=\mathcal{L}_{J,\infty}(\theta_B).$ By the equality condition in Proposition~\ref{prop:filter_score}, this implies
\begin{equation}\label{eq:prop2_minimizer_filter_recovery}
    q_j^{\theta_J^\ast}(Y_j^\dag)=\pi_j\quad
    Y_j^\dag\text{-a.s. for every }j\in[J].
\end{equation}
\end{proof}

In practice we use a finite-particle approximation of the
 mean-field limit of the algorithm, a finite-sample approximation of the population limit, and long state--observation trajectories (or a single trajectory). Nonetheless, Proposition~\ref{prop:opt_score_to_filter}, and Theorem~\ref{thm:consistency}, which follow below, provide insight into
 the algorithms used in practice. They shed light on the use of long state--observation trajectories (or trajectory). Indeed, Theorem~\ref{thm:consistency} 
 identifies settings in which the choice  $M=1$ is reasonable; we will make this choice in our numerical experiments.
 
 For each training trajectory indexed by $m\in[M]$, we use a set of particles $\{\vn_j\}_{n=1}^N$, called the ensemble, to approximate the filtering distribution $\pi_j$. Here the subscript $m$ is omitted.
To enable ensemble-based learning, we distinguish between the measure-valued analysis operator and its particle-wise implementation. 
We define the analysis transport map $\fB_\theta: \bbR^{d_v}\times\bbR^{d_y} \times \bbR^{d_y} \times \cP(\bbR^{d_v + d_y}) \to \bbR^{d_v}$ such that $\sB_\theta: 
\cP(\bbR^{d_v+d_y})\times\bbR^{d_y}\to
\cP(\bbR^{d_v})$ is induced by applying $\fB_\theta$ to each particle:
\begin{equation}
    \sB_\theta(\rho; y^\dag) \;=\; \fB_\theta(\placeholder, \placeholder; y^\dag, \rho)_\sharp \,\rho.
\end{equation}
Keeping the prediction and extension steps the same as in \eqref{eq:meta_state}, the ensemble update is explicitly defined as:
\begin{subequations}
\label{eq:learn_mf}
\begin{align}
    &\text{Predict:} &\quad \hvn_{j+1} &= \Psi(\vn_j) + \xi^{(n)}_j, \quad \xi^{(n)}_j\sim\normal(0,\Sigma), \label{eq:state_pred_sec3}\\
    &\text{Extend to observation:} &\quad \hyn_{j+1} &= h(\,\hvn_{j+1}) + \eta^{(n)}_{j+1}, \quad \eta^{(n)}_{j+1}\sim\normal(0,\Gamma), \label{eq:state_extend_sec3}\\
    &\text{Analysis:} &\quad \vn_{j+1} &= \fB_\theta(\hvn_{j+1},\hyn_{j+1};y_{j+1}^\dagger,\rho^N_{j+1}),\label{eq:learned_state_analysis}
\end{align}
\end{subequations}
where $\rho^N_{j+1}$ is the empirical joint measure $\rho^N_{j+1} = \frac1N \sum_{n=1}^N\delta_{(\hvn_{j+1},\hyn_{j+1})}.$ This recursion is initialized using $\{\vn_0\}_{n=1}^N$ drawn from $\normal(v_0^\dag,C_0)$. In this paper, unless otherwise specified, we choose $C_0=I_{d_v}$.

\begin{remark}[Permutation invariance and varying ensemble size]\label{rmk:permutation_inv}
    The formulation of $\fB_\theta$ in \eqref{eq:learned_state_analysis} highlights a critical structural advantage. A naive approach treating the ensemble as a matrix in $\bbR^{N \times (d_v+d_y)}$ would result in an input dimension dependent on the ensemble size $N$, restricting the model to a fixed $N$ and requiring manual enforcement of permutation invariance.
    In contrast, by treating the ensemble as an empirical measure $\rho^N \in \cP(\bbR^{d_v+d_y})$, the domain of $\fB_\theta$ becomes the fixed space of probability measures, independent of $N$. This abstraction naturally accommodates varying ensemble sizes during training and inference. Furthermore, since the measure $\rho^N$ is inherently invariant to the order of particles, the resulting map satisfies permutation invariance by design. In our implementation, we parameterize $\fB_\theta$ using a transformer architecture (\ref{app:attn_measures}), which naturally operates on set-valued inputs and respects this ensemble structure.
\end{remark}

Based on Data Assumption~\ref{da:training_data}, the practical training objective is $\mathcal{L}_{J,M,N}(\theta)$ with $M,N\in\bbN$. Compared with $\mathcal{L}_{J,\infty}$, this objective uses finitely many realized training trajectories and a finite particle ensemble. The parameter $M$ controls the Monte Carlo approximation of the data expectation, while $N$ controls the particle approximation of the mean-field filter.

\subsection{Bridging the Full-Expectation and Finite-Trajectory Mean-Field Objectives}\label{ssec:bridge_pop_empirical}

In this subsection, we work in the mean-field regime $N=\infty$. We fix a realizable exact-filter parameter and show that, under ergodicity, the finite-trajectory mean-field objectives $\mathcal{L}_{J,M}$ for all $M\in\bbN$ and the full-expectation mean-field objective $\mathcal{L}_{J,\infty}$ share the same long-time limit. The relationship between these objectives is summarized in Figure~\ref{fig:commutative_loss_monte_carlo}. 

\begin{remark}[Scope of finite-trajectory and finite-ensemble approximations]\label{rmk:finite_M_N_scope}
For each fixed $J\in\bbN$, $N\in\bbN\cup\{\infty\}$, and $\theta\in\Theta$, the finite-trajectory finite-ensemble objective $\mathcal{L}_{J,M,N}(\theta)$ is a Monte Carlo approximation of $\mathcal{L}_{J,\infty,N}(\theta)$. Under the usual integrability assumptions, one expects
\begin{equation}\label{eq:fixed_J_MC_limit_finite_N}
\lim_{M\to\infty}
\mathcal{L}_{J,M,N}(\theta)
=
\mathcal{L}_{J,\infty,N}(\theta)
\end{equation}
for fixed $J$, $N$, and $\theta$. These fixed-horizon Monte Carlo statements are different from the long-time consistency result proved below. Theorem~\ref{thm:consistency} works at the mean-field level $N=\infty$ and uses temporal averaging as $J\to\infty$. It shows that, at a realizable exact-filter parameter, $\mathcal{L}_{J,M}$ converges to the same limit as $\mathcal{L}_{J,\infty}$ for every $M\in\bbN$ on a common probability-one event. It does not assert consistency of minimizers of the finite-ensemble empirical objective $\mathcal{L}_{J,M,N}$ with $N\in\bbN$, nor does it address convergence of neural-network optimization. Establishing such stronger statements would require additional assumptions, such as uniform-in-time propagation of chaos for the parameterized filter class and appropriate uniform laws of large numbers over $\Theta$.
\end{remark}

\begin{figure}[ht]
    \centering
    \begin{equation}\label{eq:loss_bridge_diagram}
    \begin{tikzcd}[row sep=huge, column sep=large, cells={nodes={scale=1.1}}]
        \mathcal{L}_{1,M}
        \arrow[d, "{\substack{\text{Monte Carlo}\\ M\to\infty}}"{left}]
        &
        \mathcal{L}_{2,M}
        \arrow[d, "{\substack{\text{Monte Carlo}\\ M\to\infty}}"{left}]
        &
        \cdots
        &
        \mathcal{L}_{J,M}
        \arrow[r, "{\substack{J\to\infty,\text{ a.s.}}}"{above}, "{\text{Thm~\ref{thm:consistency}}}"{below}]
        \arrow[d, "{\substack{\text{Monte Carlo}\\ M\to\infty}}"{left}]
        &
        \mathcal{L}_{\star}
        \arrow[d, equal, "{\text{Thm~\ref{thm:consistency}}}"{right}]
        \\
        \mathcal{L}_{1,\infty}
        &
        \mathcal{L}_{2,\infty}
        &
        \cdots
        &
        \mathcal{L}_{J,\infty}
        \arrow[r, "{J\to\infty}"{above}, "{\text{Thm~\ref{thm:consistency}}}"{below}]
        &
        \mathcal{L}_{\star}
    \end{tikzcd}
    \end{equation}
    \caption{
        Commutative diagram relating the finite-trajectory mean-field objectives $\mathcal{L}_{J,M}$ and the full-expectation mean-field objectives $\mathcal{L}_{J,\infty}$. For each fixed $J$, the vertical arrows represent the Monte Carlo limit $M\to\infty$. The upper horizontal arrow denotes the long-time limit $J\to\infty$ almost surely in the finite trajectory count $M$. The lower horizontal arrow is the corresponding full-expectation mean-field limit. Both horizontal limits and the identification of $\mathcal{L}_{\star}$ are justified by Theorem~\ref{thm:consistency}.
    }
    \label{fig:commutative_loss_monte_carlo}
\end{figure}

Let $\theta_B\in\Theta$ be a realizable parameter such that the recursion \eqref{eq:q_update}, initialized at $\pi_0$, satisfies
\begin{equation}\label{eq:theta_B_exact_filter}
q_j^{\theta_B}(Y_j^\dag)
=
\pi_j
\quad
Y_j^\dag\text{-a.s. for every }j\in\bbN.
\end{equation}
Such a parameter exists under the realizability assumption in Proposition~\ref{prop:opt_score_to_filter}. We study the limits of $\mathcal{L}_{J,M}(\theta_B)$ for $M\in\bbN$ and $\mathcal{L}_{J,\infty}(\theta_B)$ as $J\to\infty$.

To establish the convergence of the time-averaged losses, we first formulate the state space. Recall that $\cP_1(\bbR^{d_v})$ denotes the space of probability measures on $\bbR^{d_v}$ with a finite first moment. Let $\Omega := \cP_1(\bbR^{d_v}) \times \bbR^{d_v}$ denote the product space of the filter distribution and the state. We define an unbounded metric on $\Omega$ between $Z = (q,v)$ and $Z' = (q', v')$ by
\begin{equation}\label{eq:d_omega}
    \mathrm{D}_\Omega(Z,Z') := W_1(q,q') + \|v-v'\|.
\end{equation}
We define the augmented filter--state process $\{Z_j\}_{j\in\bbZ_+}$ on $\Omega$ evaluated at the optimum $\theta_B$ as:
\begin{equation}\label{eq:augmented_state}
    Z_j := \bigl(\pi_j, \, v_j^\dag\bigr) = \bigl(q_j^{\theta_B},\vd_j\bigr),
\end{equation}
where $v_j^\dag$ is the true state at time $j$, and the filtering distribution $\pi_j = \Law\bigl(v_j^\dag\mid Y_j^\dag\bigr)$ is viewed as a random probability measure driven by the history of observations $Y_j^\dag$. The sequence $\{Z_j\}_{j\in\bbZ_+}$ forms a time-homogeneous  Markov chain on $\Omega$ since the observation $y_{j}^\dag$ is generated internally using $v_{j}^\dag$ and noise $\eta_{j}^\dag$: for $j\in[J]$ we have, from \eqref{eq:q_update0} and \eqref{eq:DA_setting},
\begin{subequations}\label{eq:q_update1}
\begin{align}
    \pi_j
    &=
    \sB\bigl(\sQ\sP \pi_{j-1};h(v_{j}^\dag) + \eta_{j}^\dag\bigr),\\
    v_{j}^\dag &= \Psi(v_{j-1}^\dag) + \xi_{j-1}^\dag.
\end{align}
\end{subequations}
We denote the marginal law of $Z_j$ by $\mu_j := \Law(Z_j)\in\cP_1(\Omega)$ for $j\in\bbZ_+$.

We formulate a unified assumption encapsulating the required regularity of the scoring rule and the ergodicity of the augmented filtering sequence.

\begin{restatable}[Regularity and ergodicity]{assumption}{dataassumptionStability}\label{da:ergodicity_on_state_filter}
We assume the following properties for the scoring rule and the Markov process on $\Omega$:
\begin{enumerate}[label=(\roman*)]
    \item \textbf{Regularity of the scoring rule:} The scoring rule $\sS:\Omega\to\bbR_+$ is Lipschitz continuous on the metric space $(\Omega, \mathrm{D}_\Omega)$ with constant $C_\sS$. That is, for any $Z, Z' \in \Omega$, $|\sS(Z) - \sS(Z')| \leq C_\sS \mathrm{D}_\Omega(Z,Z')$.
    \item \textbf{Ergodicity of the augmented process:} The process $\{Z_j\}_{j\in\bbZ_+}$ is a time-homogeneous ergodic Markov chain on $\Omega$. It possesses a unique invariant measure $\mu_\infty \in \cP_1(\Omega)$ such that convergence holds in the Wasserstein metric induced by $\mathrm{D}_\Omega$:
    \begin{equation}\label{eq:def_mu_inf}
        \lim_{j \to \infty} W_{\mathrm{D}_\Omega}(\mu_j, \mu_\infty) = 0.
    \end{equation}
\end{enumerate}
\end{restatable}

\begin{remark}[Verification of the assumptions]\label{rmk:assumption_verification}
Assumption~\ref{da:ergodicity_on_state_filter} specifies the theoretical prerequisites for our consistency result, which are satisfied in our framework:
\begin{itemize}
    \item \textbf{Scoring rule regularity:} Our choice of the energy score with $\beta=1$ satisfies condition (i): as proven in Lemma~\ref{lem:ES1_lipschitz} in \ref{app:lemmas}, the energy score $\sS^{\mathrm{ES}}_1$ is Lipschitz continuous on $(\Omega, \mathrm{D}_\Omega)$ with a constant $C_\sS = 2$.
    \item \textbf{Ergodicity of the dynamics:} Condition (ii) holds for the dynamical models considered in this work under the additional assumptions stated in \ref{app:ergodicity}. As established in Theorem~\ref{thm:exact_ergodicity_unbounded} and Remark~\ref{rmk:models_verification} in \ref{app:ergodicity}, the required convergence in the unbounded Wasserstein distance holds. For the doubling-angle model (Subsection~\ref{ssec:doublin_angle_model}), this holds due to its compact state space. 
    For the Lorenz ’63 and Lorenz ’96 settings (Subsections~\ref{ssec:L63} and~\ref{ssec:L96}), the argument relies not only on dissipativity and moment control, but also on non-degenerate system noise and a $W_1$-stability assumption for the exact filter.
\end{itemize}
\end{remark}

Relying on Assumption~\ref{da:ergodicity_on_state_filter}, we present our main consistency theorem. This result establishes the horizontal convergence in Figure~\ref{fig:commutative_loss_monte_carlo}: for $\theta_B$ such that $\sB_{\theta_B}=\sB$, the finite-trajectory mean-field objectives $\mathcal{L}_{J,M}$ converge to the same limiting value as the population mean-field objective $\mathcal{L}_{J,\infty}$.

\begin{restatable}[]{theorem}{theoremConsistency}\label{thm:consistency}
Assume the realizability condition that there exists $\theta_B$ such that $\sB_{\theta_B}=\sB$. Under Assumption~\ref{da:ergodicity_on_state_filter}, for the invariant measure $\mu_\infty$ of the exact augmented process $\{Z_j\}_{j\in\bbZ_+}$ defined in \eqref{eq:def_mu_inf}, define
\begin{equation}\label{eq:L_star_definition}
    \mathcal{L}_\star
    :=
    \int_{\Omega}
    \sS(q,v)
    \mu_\infty(\mathrm{d}q,\mathrm{d}v).
\end{equation}
Then $\mathcal{L}_\star<\infty$, and for $M\in\bbN$, we have
\begin{align}\label{eq:thm_combined_limit}
    \lim_{J\to\infty}
    \mathcal{L}_{J,\infty}(\theta_B)
    &=
    \mathcal{L}_\star,\\
    \lim_{J\to\infty}
    \mathcal{L}_{J,M}(\theta_B)
    &=
    \mathcal{L}_\star
    \quad
    \text{a.s.},
\end{align}
where the finite-sample limit is almost sure with respect to the product law of these $M$ trajectories.
\end{restatable}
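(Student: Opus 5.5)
Since $\theta_B$ reproduces the exact filter, \eqref{eq:theta_B_exact_filter} gives $\sS(q_j^{\theta_B}(Y_j^\dag),v_j^\dag)=\sS(Z_j)$ for the augmented process \eqref{eq:augmented_state}. Hence
\[
\mathcal{L}_{J,\infty}(\theta_B)=\frac1J\sum_{j=1}^J \int_\Omega \sS\,\mathrm{d}\mu_j,
\qquad
\mathcal{L}_{J,M}(\theta_B)=\frac1M\sum_{m=1}^M\frac1J\sum_{j=1}^J \sS(Z_{j,m}),
\]
where $\{Z_{j,m}\}_j$ are $M$ independent copies of the chain. Both limits then reduce to statements about a Lipschitz observable of the ergodic Markov chain $\{Z_j\}$ on $(\Omega,\mathrm{D}_\Omega)$.

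\textbf{Step 1 (finiteness of $\mathcal{L}_\star$).} By Assumption~\ref{da:ergodicity_on_state_filter}(i), for a fixed reference point $Z_0'\in\Omega$ we have $0\le \sS(Z)\le \sS(Z_0')+C_\sS\,\mathrm{D}_\Omega(Z,Z_0')$. Since $\mu_\infty\in\cP_1(\Omega)$, integrating this bound gives $\mathcal{L}_\star<\infty$. The same bound shows that $\sS$ is $\mu_j$-integrable, because each $\mu_j\in\cP_1(\Omega)$.

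\textbf{Step 2 (population limit).} By Kantorovich--Rubinstein duality, $\bigl|\int \sS\,\mathrm{d}\mu_j-\mathcal{L}_\star\bigr|\le C_\sS\,W_{\mathrm{D}_\Omega}(\mu_j,\mu_\infty)$. This tends to $0$ by \eqref{eq:def_mu_inf}. Ces\`aro averaging then gives $\mathcal{L}_{J,\infty}(\theta_B)\to\mathcal{L}_\star$.

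\textbf{Step 3 (single-trajectory limit).} For one trajectory we need $\frac1J\sum_{j=1}^J \sS(Z_j)\to\mathcal{L}_\star$ almost surely. The case of finite $M$ follows, because a finite intersection of probability-one events under the product law still has probability one, and an average of $M$ convergent sequences converges to the same limit. The difficulty is that the chain is not started from $\mu_\infty$: it starts from $Z_0=(\pi_0,v_0^\dag)$, with $v_0^\dag\sim q_\infty$.

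I would proceed in two stages.
\begin{enumerate}[label=(\alph*)]
\item For a stationary chain started at $\mu_\infty$, apply Birkhoff's theorem. Uniqueness of the invariant measure makes the stationary chain ergodic, so its time averages converge to $\mathcal{L}_\star$ for $\mu_\infty$-a.e.\ initial condition.
\item Transfer this convergence to the actual initial law. My first attempt would be a coupling: construct the actual chain and a stationary copy driven by the same noises $(\xi_j^\dag,\eta_j^\dag)$, and bound
\[
\Bigl|\frac1J\sum_{j=1}^J\bigl(\sS(Z_j)-\sS(\tilde Z_j)\bigr)\Bigr|\le \frac{C_\sS}{J}\sum_{j=1}^J \mathrm{D}_\Omega(Z_j,\tilde Z_j).
\]
This requires pathwise contraction, which Assumption~\ref{da:ergodicity_on_state_filter}(ii) does not directly give. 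As a fallback, I would use the standard fact that for a Markov chain with a unique invariant measure, the set of initial points from which time averages of a given integrable function converge to the invariant mean is absorbing. One then shows that the initial law is charged by this set, using the convergence $\mu_j\to\mu_\infty$ together with a Harris or $L^1$ law-of-large-numbers argument.
\end{enumerate}

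I expect this last passage, from convergence of marginal laws in $W_1$ to almost-sure convergence of time averages along non-stationary trajectories, to be the main obstacle. I would likely rely on the strengthened ergodicity results from the appendix (Theorem~\ref{thm:exact_ergodicity_unbounded}), reading ``ergodic'' in the assumption as including almost-sure validity of Birkhoff averages from the given initial law, in the sense of Definition~\ref{def:ergodicity} lifted to $\Omega$.
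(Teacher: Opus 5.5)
This is essentially the paper's proof: your Steps 1--2 reproduce its Step 1 (a Lipschitz bound plus Kantorovich--Rubinstein duality, as in Lemma~\ref{lem:invariant_integrability_and_integral_convergence}, followed by Ces\`aro averaging), and your Step 3 ends where its Step 2 does, by invoking the ergodic theorem for Markov chains under Assumption~\ref{da:ergodicity_on_state_filter}(ii) and intersecting the $M$ probability-one events. The paper does not separately treat the non-stationary initial law you flag, so your reading of ``ergodic'' as giving almost-sure Birkhoff averages from the actual initial law is exactly what its argument uses implicitly; you are also right that $W_1$ convergence of $\mu_j$ alone would not close your absorbing-set fallback, which would need total-variation or Harris-type convergence.
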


\begin{proof}
The proof characterizes the asymptotic limits of the full-expectation and multi-trajectory mean-field objectives, demonstrating that they converge to the same spatial expectation.

\paragraph{Step 1: Deterministic limit of the full-expectation objective}
Let 
\begin{equation}
    L_j(\theta):=\bbE^{(v_j^\dag,Y_j^\dag)}\left[\sS\bigl(q_j^{\theta}(Y_j^\dag),v_j^\dag\bigr)\right].
\end{equation}
The sequence $\{L_j(\theta_B)\}_{j\geq1}$ is deterministic. Since $q_j^{\theta_B}(Y_j^\dag)=\pi_j$, Lemma~\ref{lem:variable_change_augmented} gives
\begin{equation}\label{eq:expected_step_score_augmented}
    L_j(\theta_B)
    =
    \bbE^{(q,v)\sim\mu_j}
    \left[
        \sS(q,v)
    \right]
    =
    \int_{\Omega}
    \sS(q,v)
    \mu_j(\mathrm{d}q,\mathrm{d}v).
\end{equation}
By Assumption~\ref{da:ergodicity_on_state_filter}, the score $\sS$ is Lipschitz continuous and $\lim_{j\to\infty}W_{\mathrm{D}_\Omega}(\mu_j,\mu_\infty)=0.$
Applying Lemma~\ref{lem:invariant_integrability_and_integral_convergence}, we deduce that $\mathcal{L}_\star<\infty$ and that the expected scores converge:
$\lim_{j\to\infty}L_j(\theta_B)=\int_{\Omega}\sS(q,v)\mu_\infty(\mathrm{d}q,\mathrm{d}v)=\mathcal{L}_\star.$ Since the sequence $\{L_j(\theta_B)\}_{j\in\bbZ_+}$ converges to the constant limit $\mathcal{L}_\star$, its time average (the Ces\`{a}ro mean) converges to the same limit: 
\begin{equation}\label{eq:proof_full_expectation_limit}
    \lim_{J\to\infty}
    \mathcal{L}_{J,\infty}(\theta_B)
    =
    \lim_{J\to\infty}
    \frac{1}{J}
    \sum_{j=1}^J
    L_j(\theta_B)
    =
    \mathcal{L}_\star.
\end{equation}

\paragraph{Step 2: Ergodic limit of the multi-trajectory objective}
We start from the single-trajectory mean-field objective evaluated at the realizable exact-filter parameter, defined by (here the $M=1$ index is omitted)
\begin{equation}\label{eq:single_trajectory_objective_exact_filter}
    \mathcal{L}_{J,1}(\theta_B)
    =
    \frac{1}{J}
    \sum_{j=1}^J
    \sS\bigl(q_j^{\theta_B}(Y_j^\dag),v_j^\dag\bigr)
    =
    \frac{1}{J}
    \sum_{j=1}^J
    \sS(Z_j).
\end{equation}
Assumption~\ref{da:ergodicity_on_state_filter}(ii) guarantees that $\{Z_j\}_{j\in\bbZ_+}$ is an ergodic Markov chain with invariant measure $\mu_\infty$. As established in Step 1, the score function $\sS$ is integrable with respect to $\mu_\infty$. By the ergodic theorem for Markov chains, the time average converges almost surely to the spatial expectation:
\begin{equation}\label{eq:proof_single_trajectory_limit}
    \lim_{J\to\infty}
    \mathcal{L}_{J,1}(\theta_B)
    =
    \lim_{J\to\infty}
    \frac{1}{J}
    \sum_{j=1}^J
    \sS(Z_j)
    =
    \int_{\Omega}
    \sS(q,v)
    \mu_\infty(\mathrm{d}q,\mathrm{d}v)
    =
    \mathcal{L}_\star
    \quad
    \text{a.s.}
\end{equation}
Since $M$ is finite, the intersection of these $M$ probability-one events still has probability one. Hence,
\begin{equation}
\lim_{J\to\infty}\mathcal{L}_{J,M}(\theta_B) = \mathcal{L}_\star\quad\text{a.s.}
\end{equation}
\end{proof}

The finite-intersection argument in the proof also gives a weak finite-range uniform consequence. For any fixed $M_{\max}\in\bbN$, applying the theorem to the finitely many trajectory averages indexed by $m\in[M_{\max}]$ yields a probability-one event on which
\begin{equation}\label{eq:finite_range_uniform_consequence}
    \lim_{J\to\infty}
    \sup_{1\leq M\leq M_{\max}}
    \left|
    \mathcal{L}_{J,M}(\theta_B)
    -
    \mathcal{L}_\star
    \right|
    =
    0.
\end{equation}
This is only a finite-range consequence; the theorem does not claim convergence uniformly over all $M\in\bbN$.

The theorem is stated at $\theta_B$ because $\theta_B$ is fixed across horizons and directly defines the exact augmented Markov process used in Assumption~\ref{da:ergodicity_on_state_filter}. Nevertheless, Proposition~\ref{prop:opt_score_to_filter} implies a finite-horizon equivalence for any full-expectation minimizer. Specifically, for each fixed $J$, if $\theta_J^\ast\in\argmin_{\theta\in\Theta}\mathcal{L}_{J,\infty}(\theta)$, then for every $j\in[J]$, $q_j^{\theta_J^\ast}(Y_j^\dag)=q_j^{\theta_B}(Y_j^\dag)=\pi_j$, $Y_j^\dag$ almost surely. Consequently, for $M\in\bbN\cup\{\infty\}$,
\begin{equation}\label{eq:thetaJ_thetaB_M_loss_equiv_after_thm}
    \mathcal{L}_{J,M}(\theta_J^\ast)
    =
    \mathcal{L}_{J,M}(\theta_B)
\end{equation}
almost surely with respect to the $M$ sampled training trajectories, when $M\in\bbN$. This is an equivalence of induced filtering distributions and loss values over the finite horizon $[J]$, not an identification of the parameters themselves. In particular, $\theta_J^\ast$ need not be unique, need not equal $\theta_B$, may depend on $J$, and may behave differently from $\theta_B$ outside the histories and time indices covered by the horizon $[J]$.

\subsection{Neural Architecture and Practical Training}\label{subsec:architecture_training}

We now describe the finite-sample finite-ensemble implementation used for the practical objective $\mathcal{L}_{J,M,N}$ with $M,N\in\bbN$. In practical implementations, the ensemble is represented as a finite sequence of vectors rather than a probability measure.
\begin{equation}\label{eq:seq_space}
    \mathcal{U}_F(\bbR^d) = \bigcup_{N=1}^\infty\mathcal{U}([N];\bbR^{d}),
\end{equation}
where $\mathcal{U}([N];\bbR^{d})$ denotes the space of sequences mapping $[N]=\{1,\dots,N\}$ to $\bbR^d$, isomorphic to $\bbR^{N\times d}$. Accordingly, within our framework, we parameterize the analysis map $\fB_\theta$ as a mapping dependent on the finite ensemble:
\begin{equation}
    \fB_\theta: \bbR^{d_v} \times \bbR^{d_y} \times \bbR^{d_y} \times \mathcal{U}_F(\bbR^{d_v+d_y}) \to \bbR^{d_v}.
\end{equation}

While Remark~\ref{rmk:permutation_inv} establishes the desired properties abstractly for probability measures, any neural implementation acting on the sequence space must formally satisfy two structural constraints to serve as a valid particle-level analysis operator:
\begin{enumerate}
    \item \textbf{Permutation invariance:} Since the sequence computationally represents an unordered set of samples, the analysis update must be invariant to the ordering of the ensemble members $\{(\hvn,\hyn)\}_{n=1}^N \in \mathcal{U}_F$. For any permutation $\sigma: [N]\to [N]$, the map must satisfy:
    \begin{equation}
        \fB_\theta\left(\hat{v}, \hat{y}~; y^\dagger, \{(\hvn,\hyn)\}_{n=1}^N\right) = \fB_\theta\left(\hat{v}, \hat{y}~; y^\dagger, \bigl\{\bigl(\hv^{(\sigma(n))},\hy^{(\sigma(n))}\bigr)\bigr\}_{n=1}^N\right).
    \end{equation}
    \item \textbf{Varying ensemble size:} The architecture must handle input sequences of any finite length $N$ using a fixed set of parameters $\theta$, mapping them to a fixed output dimension in $\bbR^{d_v}$.
\end{enumerate}
To bridge the finite sequence representation with these structural requirements, we lift the input sequence $\{(\hvn,\hyn)\}_{n=1}^N$ to its empirical measure $\rho^N = \frac{1}{N}\sum_{n=1}^N \delta_{(\hvn,\hyn)}$ and employ a neural network architecture designed for measures.

\paragraph{Ensemble encoder}
We use the transformer measure neural mapping (TMNM) introduced in \ref{app:attn_measures} to construct a trainable feature extractor, based on the set transformer architecture $\sF^\mathrm{ST}: \mathcal{U}_F(\bbR^{d_v+d_y}) \to \bbR^{\dst}$ \citep{lee2019set}. This component maps the finite sequence to a fixed-dimensional global feature vector $f_v \in \bbR^{\dst}$:
\begin{equation}
    f_v = \sF^\mathrm{ST}\left(\{(\hvn,\hyn)\}_{n=1}^N\right).
\end{equation}
The vector $f_v$ serves as a feature vector, encapsulating the statistical properties of the forecast ensemble necessary for the analysis step. Since $\sF^\mathrm{ST}$ acts on the empirical measure, it enforces permutation invariance and handles varying ensemble sizes $N$ by design. 

\paragraph{Residual end-to-end (EtE) analysis map}
The ensemble analysis map $\fB_\theta$ is parameterized as a residual correction to the forecast state. We employ a multi-layer perceptron (MLP), $\fF^\mathrm{MLP}: \bbR^{d_v + 2d_y + \dst} \to \bbR^{d_v}$, to compute this correction by combining the local particle information with the global ensemble context $f_v$. Given a forecast particle $(\hat{v}, \hat{y})$, the true observation $y^\dagger$, and the full ensemble sequence, the updated state is defined by:
\begin{equation}\label{eq:residual_arch}
    \fB_\theta\left(\hat{v}, \hat{y}~; y^\dagger, \{(\hvn,\hyn)\}_{n=1}^N\right) = \hat{v} + \fF^\mathrm{MLP}\Bigl(\hat{v}, \hat{y}~; y^\dagger, f_v\Bigr), 
\end{equation}
where $\theta$ represents the complete set of trainable parameters in both $\sF^\mathrm{ST}$ and $\fF^\mathrm{MLP}$. This residual architecture generalizes the EnKF update rule $v = \hat{v} + K(y^\dagger - \hat{y})$. Instead of deriving a linear correction from the empirical covariance matrix, $\sF^\mathrm{ST}$ nonlinearly extracts statistical features into $f_v$, and $\fF^\mathrm{MLP}$ applies a particle-specific nonlinear state correction.

We call this architecture \emph{end-to-end} (EtE) to distinguish it from other approaches, such as that taken in \cite{bach_learning_2026}, which instead learn a correction to an existing filter, thus starting with a strong inductive bias.

\paragraph{Empirical training objective}
We train the finite-particle realization of the filter directly using the empirical energy score. For each training trajectory $m\in[M]$ and time step $j\in[J]$, let $\{v_{j,m}^{(n)}(\theta)\}_{n=1}^N\subset\bbR^{d_v}$ denote the output ensemble. The empirical energy score objective is obtained by evaluating the energy score in Definition~\ref{def:energy_score}, equivalently the formula in \eqref{eq:ES}, on the empirical filtering distribution $q_{j,m}^{\theta,N}(Y_{j,m}^\dag)$:
\begin{equation}\label{eq:empirical_energy_score}
    \sS^{\mathrm{ES}}\bigl(q_{j,m}^{\theta,N}(Y_{j,m}^\dag),v_{j,m}^\dag\bigr)
    =
    \frac{1}{N}
    \sum_{n=1}^N
    \left\|v_{j,m}^{(n)}(\theta)-v_{j,m}^\dag\right\|_2
    -
    \frac{1}{2N^2}
    \sum_{n=1}^N
    \sum_{k=1}^N
    \left\|v_{j,m}^{(n)}(\theta)-v_{j,m}^{(k)}(\theta)\right\|_2.
\end{equation}
In the finite-ensemble implementation, normalization constants other than $1/(2N^2)$ for the pairwise term are possible; in particular, $1/(2N(N-1))$ results in an asymptotically unbiased estimator when the ensemble members are i.i.d. \citep{fricker_three_2013}, but this form can lead to degeneracy \citep{lang2026aifs}. Empirically, changing this normalizing constant did not materially affect the reported results, and we use the normalization above throughout. The overall empirical training objective is
\begin{equation}\label{eq:empirical_ES_loss}
    \mathcal{L}_{\mathrm{ES}}(\theta)
    :=
    \mathcal{L}_{J,M,N}(\theta)
    =
    \frac{1}{JM}
    \sum_{m=1}^M
    \sum_{j=1}^J
    \sS^{\mathrm{ES}}\bigl(q_{j,m}^{\theta,N}(Y_{j,m}^\dag),v_{j,m}^\dag\bigr).
\end{equation}
Since $\mathcal{L}_{J,M,N}(\theta)$ is differentiable with respect to the output ensembles, the optimization is carried out in finite-dimensional Euclidean space using automatic differentiation through the sequential forecast--analysis updates. Note that this requires access to the gradients of the forecast model, as well as the ensemble filtering algorithm; these can also be obtained through automatic differentiation provided they are implemented in a differentiable programming framework.

\begin{remark}[Truncated gradient propagation and filter stability]
\label{rmk:implementation_details}
For long training trajectories, accumulating gradients through the full recursive forecast--analysis loop can be memory-intensive and numerically unstable. In practice, we employ truncated backpropagation through time (TBPTT) by periodically detaching the computational graph while continuously preserving the forward propagation of the physical ensemble. 

This engineering heuristic is motivated by filter stability considerations. If the exact filter forgets its initial condition, then perturbations from distant past times are expected to have decreasing influence on the current filtering state. TBPTT exploits this intuition by discarding long-range gradient dependencies while preserving the forward propagation of the ensemble. We do not claim that the resulting truncated gradient is unbiased, nor that this argument proves convergence of the finite-ensemble neural optimizer for $\mathcal{L}_{J,M,N}$; we
simply point to the empirical success of the resulting trained models that we detail
later in the paper. See also Remark~\ref{rmk:finite_M_N_scope}.
\end{remark}

\paragraph{Transferability and fine-tuning}
Training the proposed model can be computationally expensive due to the $O(N'^2)$ complexity of the loss evaluation (although its gradients can be computed more efficiently; see \cite{hertrich_generative_2024}), and attention mechanism in particular, for (large) ensembles of size $N'.$ Because the set transformer processes the ensemble as a measure, a model trained with (small) ensemble size $N$ can be deployed directly on a different (large) ensemble size $N'$ without architectural modifications. This is a potentially
attractive feature of our methodology which, however, may need 
some modifications to be made practical. We now describe these modifications. 

The appropriate analysis correction often depends on the sampling noise inherent to the chosen ensemble size \citep{vishny_high-dimensional_2024};  for example, covariance localization and inflation are deployed in standard ensemble Kalman methods for this reason \citep{asch2016data}. 
To address this issue within the confines of our
proposed architecture, we adopt a two-stage strategy. Let $\theta = \{\theta_1,\theta_2\}$, where $\theta_1$ denotes the parameters of the set transformer $\sF^\mathrm{ST}$ and $\theta_2$ denotes the parameters of the residual MLP $\fF^\mathrm{MLP}$. First, we pre-train the full model on a computationally tractable ensemble size $N$:
\begin{equation}\label{eq:pretraining_objective}
    \theta^{\mathrm{pre}}\in\argmin_{\theta}\mathcal{L}_{J,M,N}(\theta).
\end{equation}
Write $\theta^{\mathrm{pre}} = \{\theta_1^{\mathrm{pre}},\theta_2^{\mathrm{pre}}\}.$
Next, when deploying on a target ensemble size $N'$, we freeze the set transformer parameters $\theta_1$ and fine-tune only the lightweight residual block $\theta_2$:
\begin{equation}\label{eq:fine_tuning_objective}
    \theta_2^\ast
    \in
    \argmin_{\theta_2}
    \mathcal{L}_{J,M,N'}(\theta_1^{\mathrm{pre}},\theta_2).
\end{equation}
This fine-tuning stage uses the same $M$ synthetic trajectories of length $J$ generated for pre-training, changing only the ensemble size used in the forward assimilation passes and the empirical loss evaluation. No new data is required.

As we will see later, we find that empirically fine-tuning has little effect on the performance of the end-to-end architecture, but can be significant for the architecture that uses the EnKF structure as an inductive bias.

%% file: sections/experiments.tex
\section{Numerical Experiments}
\label{sec:numerical_exp}
In the previous subsection we have defined an empirical loss function,
an architecture to approximate the analysis map, and a training strategy.
These together define the proposed proper scoring ensemble filter (PSEF).
This section evaluates the PSEF across classical diagnostic settings and nonlinear data assimilation benchmarks. Subsection~\ref{ssec:methods} introduces the classical and learning-based benchmark filters, together with the ES, L2, and NL2 training objectives used to separate the effects of architecture and loss function. Subsection~\ref{ssec:evaluation_metrics} defines the evaluation metrics used to assess distributional accuracy and ensemble calibration. The numerical studies then examine a linear--Gaussian diagnostic problem in Subsection~\ref{ssec:linear_gaussian_model}, the nonlinear and multimodal doubling-angle model in Subsection~\ref{ssec:doublin_angle_model}, the Lorenz~'63 system under several nonlinear observation maps in Subsection~\ref{ssec:L63}, and the high-dimensional Lorenz~'96 system in Subsection~\ref{ssec:L96}. Finally, Subsection~\ref{ssec:runtime} compares the average per-step computational cost of different methods. Additional implementation details, including neural network architectures, training hyperparameters, grid search protocols, and signal-to-noise-ratio calibration, are provided in \ref{app:implementation_details}, while supplementary visualizations and additional experimental results are reported in \ref{app:additional_experimental_results}. We use PyTorch for our ML framework. The code, experiment scripts, and reproducibility materials are available in the GitHub repository\footnote{\url{https://github.com/wispcarey/Proper-Scoring-Ensemble-Filter}}.

\subsection{Methods}\label{ssec:methods}
We benchmark the proposed machine learning--based PSEF against four established classical DA schemes:
\begin{enumerate}
    \item \textbf{EnKF with perturbed observations (EnKF)} \citep{burgers1998analysis, anderson2001ensemble}: This is the standard stochastic implementation of the ensemble Kalman filter. Each ensemble member assimilates measurements corrupted by additive noise sampled from the observation error distribution. 

    \item \textbf{Ensemble square root filter (ESRF)} \citep{tippett_ensemble_2003, sakov2008deterministic}: This is a deterministic alternative of the EnKF. This filter circumvents sampling errors associated with perturbed observations by applying a matrix square root transformation to the ensemble anomalies, reproducing the requisite analysis error covariance.
    
    \item \textbf{Local ensemble transform Kalman filter (LETKF)} \citep{hunt2007efficient}: This approach incorporates spatial localization into the deterministic ensemble transform Kalman filter (ETKF) \citep{bishop2001adaptive}. While the ETKF and ESRF are theoretically isomorphic in the mean-field, linear--Gaussian limit, their numerical architectures are slightly different. 

    \item \textbf{Iterative ensemble Kalman filter (IEnKF)} \citep{sakov2012iterative}: To address strong system nonlinearities, this method embeds a variational minimization step within the stochastic EnKF framework. The optimization bridging adjacent observation times is executed with a maximum of $10$ iterations and a convergence tolerance of $10^{-5}$.  This approach is computationally costly for high-dimensional systems, because it requires repeatedly integrating the forward dynamic model to iteratively solve the optimization problem within a single assimilation step.
\end{enumerate}

All classical benchmark filters use post-analysis multiplicative inflation. For the low-dimensional nonlinear experiments, no localization is used. For the higher-dimensional Lorenz~'96 experiment in Subsection~\ref{ssec:L96}, we compare EnKF, LETKF, and IEnKF with spatial localization based on the Gaspari--Cohn distance-to-weight function \citep{gaspari1999construction}. The ESRF is not reported separately for Lorenz~'96, because applying localization to the deterministic square-root update yields a benchmark that is functionally represented by LETKF in our comparison. For each classical method, observation map, and ensemble size, the inflation factor and, when applicable, the localization radius are calibrated by grid search. The grid-search protocol and evaluation objectives are detailed in \ref{app_subsec:grid_search}.

We next introduce the notation and training objectives used for the learning-based filters. For a learning-based filter with trainable parameters $\theta$, let
$\{v_j^{(n)}(\theta)\}_{n=1}^{N}$ denote the output ensemble generated by the learned filter, and $v_j^\dagger$ the corresponding true state. Then define
\begin{equation}\label{eq:learned_filter_empirical_distribution}
    \pi_{j}^{\theta,N}
    =
    \frac{1}{N}\sum_{n=1}^{N}\delta_{v_j^{(n)}(\theta)},
    \qquad
    \bar v_j(\theta)
    =
    \frac{1}{N}\sum_{n=1}^{N}v_j^{(n)}(\theta),
\end{equation}
the empirical analysis distribution and the ensemble mean at time step $j$, respectively; note that $\bar v_j(\theta)=\bbE^{v \sim \pi_{j}^{\theta,N}}[v].$ The losses below are written for a single trajectory of length $J$ and are averaged over all training trajectories in implementation.

The ES objective is $\mathcal{L}_\text{ES}(\theta)$ as defined in \eqref{eq:empirical_energy_score}.
Because the energy score is a strictly proper scoring rule, minimizing \eqref{eq:empirical_ES_loss} trains the ensemble as a probabilistic approximation of the filtering distribution rather than only as a point estimator.
For comparison with deterministic state-estimation objectives, we also define the unnormalized and normalized squared-error losses on the ensemble mean:
\begin{align}\label{eq:methods_L2_NL2_loss}
    \mathcal{L}_{\mathrm{L2}}(\theta)
    =
    \frac{1}{J}\sum_{j=1}^{J}
    \left\|\bar v_j(\theta)-v_j^\dagger\right\|_2^2,\quad
    \mathcal{L}_{\mathrm{NL2}}(\theta)
    =
    \frac{1}{J}\sum_{j=1}^{J}
    \frac{
    \left\|\bar v_j(\theta)-v_j^\dagger\right\|_2^2
    }{
    \left\|v_j^\dagger\right\|_2^2
    }.
\end{align}
Unlike ES, both L2 and NL2 depend only on the ensemble mean and therefore do not directly reward the learned filter for representing posterior spread or higher-order distributional structure. We include L2 in the linear--Gaussian diagnostic experiment in Subsection~\ref{ssec:linear_gaussian_model}, where L2 and NL2 exhibit broadly similar behavior, and both are less robust than ES across learning-rate choices. However, it is established in \cite{bach_learning_2026} that L2 is less effective and less stable than NL2. Therefore, in the nonlinear experiments, we use NL2 as the point-estimation baseline and do not include an additional L2 comparison. In this section, unless othervise specified, we use $M=8192$ training trajectories with a length $J=60$.

In addition to the classical filters, we benchmark against other machine learning--based DA approaches, focusing on MNMEF \citep{bach_learning_2026}. This scheme learns parameterized correction terms applied to the EnKF for nonlinear and non-Gaussian problems. MNMEF is originally designed for state estimation and optimizes the normalized $L^2$ loss in \eqref{eq:methods_L2_NL2_loss}. In contrast, our proposed PSEF is an end-to-end (EtE) filtering framework, detailed in Section~\ref{sec:learning_true_filter}, which optimizes the updated ensemble using the proper scoring objective in \eqref{eq:empirical_ES_loss}. 
The MNMEF, since it is based on correcting the EnKF and retains the latter's gain structure, imposes an inductive bias on the learned filter. This inductive bias may be beneficial in reducing sample complexity in settings where the EnKF performs relatively well, namely those that are close to Gaussian. However, it may be a limitation for settings where the EnKF performs poorly, motivating the EtE approach.

To systematically isolate the contributions of the filtering architectures and the objective functions, we construct a comparison pairing the two architectural schemes, the EnKF correction-terms approach (CorrTerms) \citep{bach_learning_2026} and the end-to-end framework (EtE), with the two loss functions retained in the main experiments, NL2 and ES. This yields four distinct machine learning configurations:
\begin{enumerate}
    \item \textbf{CorrTerms + NL2 (MNMEF)}: This is the original MNMEF configuration proposed in \cite{bach_learning_2026}. It retains the EnKF correction-terms architecture and trains the correction model using the normalized $L^2$ loss in Equation~\eqref{eq:methods_L2_NL2_loss}. This configuration serves as the main machine learning baseline and represents a state estimation--oriented approach built upon an existing EnKF update.

    \item \textbf{CorrTerms + ES}: This configuration keeps the same EnKF correction-terms architecture as MNMEF but replaces the normalized $L^2$ objective with the energy score loss in Equation~\eqref{eq:empirical_ES_loss}. Comparing this model with CorrTerms + NL2 isolates the effect of changing the training objective from a mean-based state estimation loss to a strictly proper probabilistic scoring rule.

    \item \textbf{EtE + NL2}: This configuration uses our proposed end-to-end filtering architecture but trains it with the normalized $L^2$ loss in Equation~\eqref{eq:methods_L2_NL2_loss}. It therefore tests whether the architectural change alone improves filtering performance when the learning objective still focuses only on the ensemble mean.

    \item \textbf{EtE + ES (PSEF)}: This is the proposed PSEF method in this paper. It combines the end-to-end filtering architecture with the energy score loss in Equation~\eqref{eq:empirical_ES_loss}, thereby training the updated ensemble directly as a probabilistic approximation of the filtering distribution rather than merely optimizing its mean estimate.
\end{enumerate}
The machine learning models are initially trained using a fixed ensemble size. To evaluate inference performance across varying ensemble sizes, we apply lightweight fine-tuning. For the CorrTerms scheme, we adopt the fine-tuning procedure introduced in \cite{bach_learning_2026}. For the EtE configurations, we utilize the fine-tuning strategy detailed in Subsection~\ref{subsec:architecture_training}.

\subsection{Evaluation Metrics}\label{ssec:evaluation_metrics}

We use different evaluation metrics depending on whether an accurate numerical reference filtering distribution is available. For the linear--Gaussian diagnostic experiment in Subsection~\ref{ssec:linear_gaussian_model}, the exact analytical filtering distribution is available from the Kalman filter. We therefore evaluate an approximate ensemble filtering distribution by fitting a Gaussian distribution to its sample mean and sample covariance and computing its Wasserstein-$2$ distance to the exact Kalman filtering distribution, as defined in \eqref{eq:W_2_between_Gaussian}. For the remaining nonlinear experiments, Table~\ref{tab:evaluation_metric_summary} summarizes the abbreviations, formula references, and roles of the evaluation metrics.

\begin{table}[h]
\centering
\caption{Summary of evaluation metrics used in the nonlinear experiments.}
\label{tab:evaluation_metric_summary}
\renewcommand{\arraystretch}{1.25}
\begin{tabular}{llll}
    \toprule
    \textbf{Metric}
    & \textbf{Interpretation}
    & \textbf{Reference}
    & \textbf{Dataset} \\
    \midrule
    \multirow{2}{*}{SED}
    & Distance to the BPF reference
    & \ref{app_subsec:sed_metric}
    & Doubling-angle \\
    & filtering distribution
    & \eqref{eq:projected_energy_distance}, \eqref{eq:sed_score}
    & Lorenz~'63 \\
    \midrule
    \multirow{2}{*}{ES}
    & Energy score against
    & Definition~\ref{def:energy_score}, \eqref{eq:ES}
    & \multirow{2}{*}{Lorenz~'96} \\
    & the verifying truth state
    & Subsection~\ref{ssec:methods}, \eqref{eq:empirical_ES_loss}
    & \\
    \midrule
    \multirow{2}{*}{RHVar}
    & Scaled rank histogram
    & Subsection~\ref{ssec:evaluation_metrics}
    & Lorenz~'63 \\
    & nonuniformity
    & \eqref{eq:project_directions}--\eqref{eq:rank_hist_variance}
    & Lorenz~'96\\
    \bottomrule
\end{tabular}
\end{table}

For the low-dimensional doubling-angle model in Subsection~\ref{ssec:doublin_angle_model} and Lorenz~'63 model in Subsection~\ref{ssec:L63}, we use the bootstrap particle filter (BPF) reference described in \ref{app:bpf_reference} as a numerical proxy for the true filtering distribution. The BPF is run with $10^6$ particles, and its validity diagnostics are reported in \ref{app_subsec:bpf_validity}. Since storing and comparing to all the particles is computationally costly, we compare each ensemble-based method with the BPF reference through the sliced energy distance (SED). The projected one-dimensional energy distance is defined in \eqref{eq:projected_energy_distance}, and the reported time- and projection-averaged SED is defined in \eqref{eq:sed_score}. This metric compares the tested ensemble with the BPF reference after projection onto coordinate directions and BPF principal component directions. A smaller SED indicates closer agreement with the BPF reference filtering distribution. 

For the higher-dimensional Lorenz~'96 experiment in Subsection~\ref{ssec:L96}, constructing an accurate BPF reference is not computationally feasible because of the state dimension. We therefore evaluate the ensemble forecast distribution directly against the truth state using the energy score (ES). Specifically, for each test trajectory we compute the finite-ensemble score in \eqref{eq:empirical_ES_loss} and then average it over test trajectories. This is the same finite-ensemble energy score used as the ES training objective, but here it is evaluated on the test set as an evaluation metric. Lower values indicate better probabilistic agreement between the ensemble distribution and the verifying truth state.

We also consider a relative improvement (RI) panel. For an error metric $\mathcal{E}$ for which smaller values are better, the relative improvement of a reference method with error $\mathcal{E}_\mathrm{ref}$ over a method $e$ is defined as
\begin{equation}\label{eq:relative_improvement}
    \operatorname{RI}_{\mathcal{E}}(e)
    =
    \frac{\mathcal{E}(e)-\mathcal{E}_{\mathrm{ref}}}{\mathcal{E}(e)}.
\end{equation}
For doubling-angle model and Lorenz~'63, $\mathcal{E}$ is SED and the reference value is the SED of EtE+ES. For Lorenz~'96, $\mathcal{E}$ is ES and the reference value is the smaller ES value between EtE+ES and CorrTerms+ES for the same observation map and ensemble size.

In addition to these distributional scores, for the Lorenz~'63 and Lorenz~'96 experiments we use ensemble rank histograms as a complementary diagnostic of ensemble spread and marginal calibration. Rank histograms, also known as Talagrand diagrams, are standard tools for evaluating scalar-valued ensemble forecasts: they compare the rank of the verifying truth relative to the sorted ensemble members and are useful for diagnosing reliability, bias, and spread errors \citep{candille2005evaluation,hamill2001interpretation}. For a vector-valued state, we first project both the truth and the ensemble members onto $L$ random one-dimensional directions. Let $u_\ell$ denote a random projection direction (i.e., $\|u_\ell\|=1$) for $\ell\in[L]$. For each verification case $i$ and direction $u_\ell$, define
\begin{equation}\label{eq:project_directions}
z_{i,\ell}^\dagger = \langle u_\ell, v_i^\dagger\rangle,
\qquad
z_{i,\ell}^{(n)} = \langle u_\ell, v_i^{(n)}\rangle,
\qquad n=1,\ldots,N,
\end{equation}
where $i$ indexes test trajectories, assimilation times, and, for vector-valued states, state components.
The rank of the projected truth among the projected ensemble members is then
\begin{equation}\label{eq:ranksum}
R_{i,\ell}=\sum_{n=1}^{N}\mathbf{1}_{z_{i,\ell}^{(n)}<z_{i,\ell}^\dagger},
\end{equation}
with random tie-breaking used if ties occur. Thus $R_{i,\ell}\in\{0,1,\ldots,N\}$. If the ensemble is statistically consistent with the truth, then the truth should be exchangeable with the ensemble members. Equivalently, under the uniform-rank null hypothesis, the ranks satisfy
\begin{equation}\label{eq:rank_hist_uniform_null}
R_{i,\ell}\sim\operatorname{Unif}\{0,1,\ldots,N\}.
\end{equation}
Therefore the ranks should be approximately uniformly distributed over the $N+1$ rank-histogram bins. We show in \ref{appendix:rank_hist_uniform} that the true filtering distribution has a flat rank histogram, which further motivates this metric. A U-shaped rank histogram indicates underdispersion, because the truth falls outside the ensemble range too often; a dome-shaped histogram indicates overdispersion; and a skewed histogram indicates bias.

To summarize the flatness of the rank histogram by a single scalar, we first define the normalized bin frequencies. Let
\begin{equation}\label{eq:rank_hist_frequency}
p_{r,\ell}=\frac{1}{N_s}\sum_{i\in\mathcal{I}}\mathbf{1}\{R_{i,\ell}=r\},\qquad r=0,1,\ldots,N,
\end{equation}
where $\mathcal{I}$ is the collection of all scalar verification cases and $N_s=|\mathcal{I}|$. Here $N_s$ is the total number of ranks, or equivalently the total number of scalar samples, entering the rank histogram. $N_s$ is obtained by pooling over the relevant test trajectories and assimilation times for a fixed projection direction.

For clarity, the unscaled variance of the normalized bin frequencies is
\begin{equation}\label{eq:rank_hist_variance_raw}
\operatorname{RHVar}_{\mathrm{raw}}(\ell)=\frac{1}{N+1}\sum_{r=0}^{N}\left(p_{r,\ell}-\frac{1}{N+1}\right)^2.
\end{equation}
Under the uniform-rank null hypothesis in \eqref{eq:rank_hist_uniform_null}, for any $\ell\in [L]$, we have
\begin{equation}\label{eq:rank_hist_variance_raw_expectation}
\bbE\left[\operatorname{RHVar}_{\mathrm{raw}}(\ell)\right]=\frac{N}{N_s(N+1)^2}.
\end{equation}
Thus $\operatorname{RHVar}_{\mathrm{raw}}(\ell)$ depends on both the ensemble size $N$ and the number of scalar rank samples $N_s$.

We therefore define the scale rank histogram variance (RHVar) as 
\begin{equation}\label{eq:rank_hist_variance}
\operatorname{RHVar}(\ell)=\frac{N_s(N+1)^2}{N}\operatorname{RHVar}_{\mathrm{raw}}(\ell)=\frac{N_s(N+1)}{N}\sum_{r=0}^{N}\left(p_{r,\ell}-\frac{1}{N+1}\right)^2.
\end{equation}
This scaling satisfies $\bbE\left[\operatorname{RHVar}(\ell)\right]=1$ under the uniform-rank null hypothesis, making $\operatorname{RHVar}$ comparable across rank histograms with different ensemble sizes and different numbers of rank samples.
The reported RHVar is obtained by averaging $\operatorname{RHVar}(\ell)$ over the sampled projection directions $u_\ell,\ell\in[L]$; since each direction has expectation one under the uniform-rank null hypothesis, this directional average also has expectation one. Under this normalization, values near one indicate rank histogram fluctuations consistent with a calibrated ensemble, whereas values substantially larger than one indicate nonuniform ranks and therefore poorer marginal calibration. 

Our analysis using the RHVar statistic can only be considered exploratory, pending a more detailed and quantitative analysis of rank histograms in the context of ensemble-based filters. The expression~\eqref{eq:rank_hist_variance_raw_expectation} for the variance, for instance, has been derived under the condition that the terms in the sum~\eqref{eq:ranksum} are independent, which is however not the case. In fact, if they were, then $N \cdot \operatorname{RHVar}$ would be Pearson's familiar goodness-of-fit statistic and have a $\chi^2$ distribution with $N$ degrees of freedom. Yet again this cannot be guaranteed in the present context. In~\cite{brocker2020stratified}, asymptotically exact tests for rank histograms under serial dependence are developed. As it stands, however, this methodology applies to ensembles representing predictive distributions of future observations, rather than filtering distributions as considered here.
Although we could generate such predictive distributions from our filtering distributions and evaluate those with the methodology of~\cite{brocker2020stratified}, it would only be an indirect evaluation of our filtering distributions, and we do not pursue it here.
We therefore decided to use RHVar as a qualitative and exploratory error-like reliability diagnostic in the experiments below: smaller values are preferred, but values already close to one should be interpreted as well calibrated rather than needing to approach zero. We use RHVar only as a complementary reliability diagnostic; the primary metrics are SED for the Lorenz~'63 experiments and ES for the high-dimensional Lorenz~'96 experiment.

\subsection{Linear--Gaussian Model}\label{ssec:linear_gaussian_model}

In this subsection, we consider a linear time-invariant system with state dimension $d_v=20$ and observation dimension $d_y=10$. The primary objectives of this experiment are twofold: first, to demonstrate that our proposed training methodology can achieve performance approaching that of the exact true filter, namely the Kalman filter, in a linear--Gaussian setting; and second, to highlight the advantages of the energy score (ES) loss in training stability and convergence compared to the L2 and NL2 losses.

The dynamics and observation models are formulated as
\begin{subequations}
\begin{align}
    v_{j+1} &= A v_j+\xi_j,
    &
    \xi_j &\sim \normal(0,\sigma_v^2 I_{d_v}),
    \\
    y_{j+1} &= H v_{j+1}+\eta_{j+1},
    &
    \eta_{j+1} &\sim \normal(0,\sigma_y^2 I_{d_y}).
\end{align}
\end{subequations}
The detailed parameter settings for the linear--Gaussian system are shown in Table~\ref{tab:linear_gaussian_settings}.

\begin{table}[h]
\centering
\caption{Linear--Gaussian settings.}
\label{tab:linear_gaussian_settings}
\begin{tabular}{ll}
    \toprule
    \textbf{Category} & \textbf{Values} \\
    \midrule
    Parameters & $A\in\bbR^{20\times 20}$ is a fixed random orthogonal matrix \\
    \midrule
    States & $v=(x_1,x_2,\ldots,x_{20})\in\mathbb{R}^{20}$, $\sigma_v=0.01$ \\
    \midrule
    Observations & $Hv=(x_1,x_3,\ldots,x_{19})\in\bbR^{10}$, $\sigma_y=1.0$ \\
    \midrule
    Initialization & $v_0\sim\normal(m_0,C_0)$, where $m_0\sim\normal(0,I_{20})$, $C_0=MM^T+10^{-6}I_{20}$ \\
    & $M\in\bbR^{20\times 20}$ with entries independently sampled from $\normal(0,1)$ \\
    \bottomrule
\end{tabular}
\end{table}

To isolate the impact of the training objective, the experimental configuration here diverges from the general setup outlined in Subsection~\ref{ssec:methods}. We focus exclusively on the end-to-end (EtE) filtering architecture. The ES-trained, L2-trained, and NL2-trained models share identical architectures, ensemble sizes, and training hyperparameters, differing only in the objective function. The three objectives are defined in Subsection~\ref{ssec:methods}: ES in \eqref{eq:empirical_ES_loss}, L2 and NL2 in \eqref{eq:methods_L2_NL2_loss}. This diagnostic comparison is used to assess optimization stability under controlled linear--Gaussian dynamics.

The evaluation is conducted on a separate test set of $64$ trajectories of length $100$. All learning-based methods operate with an ensemble size of $N=10$. We benchmark these configurations against the standard EnKF and LETKF with $N=10$ ensemble members, calibrating the inflation and localization parameters via grid search to minimize the Wasserstein-2 distance.

Because the system is linear--Gaussian, the exact filtering distribution $\pi_j=\normal(\mu_j^\dagger,\Sigma_j^\dagger)$ is computed via the Kalman filter. We assess the approximated filtering distribution $\pi_j^N$, characterized by the sample mean $\hat{\mu}_j$ and sample covariance $\hat{\Sigma}_j$, using the Wasserstein-2 distance between Gaussians:
\begin{equation}\label{eq:W_2_between_Gaussian}
    W_2(\pi_j^N,\pi_j)^2
    =
    \|\hat{\mu}_j-\mu_j^\dagger\|^2
    +
    \trace\left(
    \hat{\Sigma}_j+\Sigma_j^\dagger
    -
    2\left(
    (\Sigma_j^\dagger)^{1/2}
    \hat{\Sigma}_j
    (\Sigma_j^\dagger)^{1/2}
    \right)^{1/2}
    \right).
\end{equation}
To quantify the optimal sampling error associated with the finite ensemble size, we draw $N=10$ independent samples directly from the true posterior $\pi_j$ at each time step $j$ to form an empirical measure $\tilde{\pi}_j^N$. We then compute the $W_2$ distance between the Gaussian fitted to $\tilde{\pi}_j^N$ and the exact filtering distribution $\pi_j$. Averaging this procedure over $100$ independent trials yields the expected approximation error $\bbE[W_2(\tilde{\pi}_j^N,\pi_j)]$, which serves as the optimal sampling error.

\begin{figure}[htbp]
    \centering
    \includegraphics[width=\textwidth]{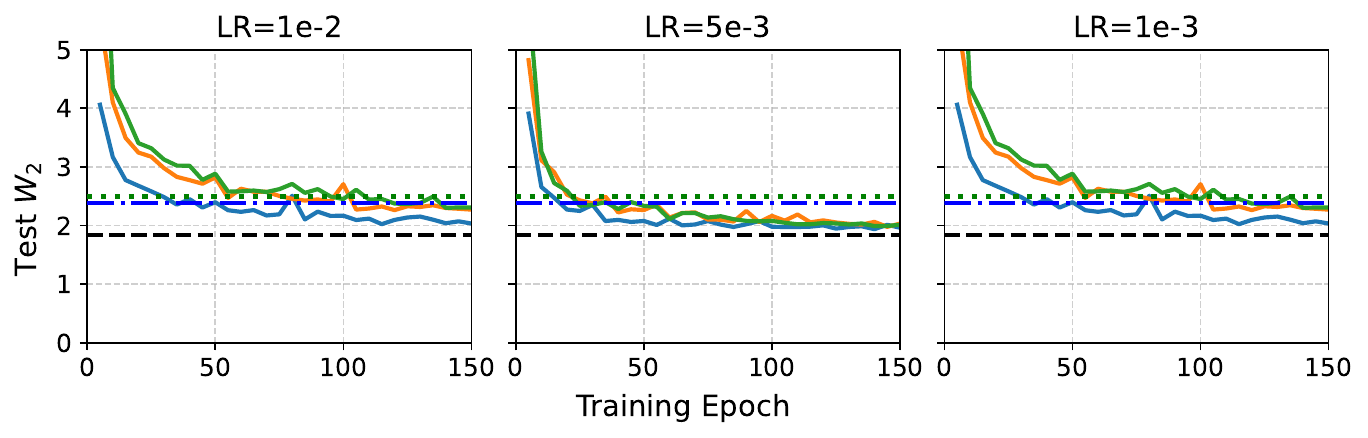}
    \includegraphics[width=\textwidth, trim={60 20 60 20}, clip]{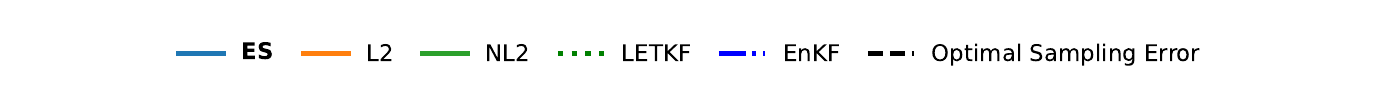}
    \caption{Convergence of the test $W_2$ distance over 150 training epochs for EtE models trained with ES, L2, and NL2 losses across three learning rates, $\mathrm{LR}=0.01$, $\mathrm{LR}=0.005$, and $\mathrm{LR}=0.001$. The benchmarks include LETKF and EnKF with $N=10$ ensemble members calibrated by grid search, together with the optimal sampling error.}
    \label{fig:linear_gaussian_convergence}
\end{figure}

Figure~\ref{fig:linear_gaussian_convergence} illustrates the convergence behavior of the evaluated models over 150 training epochs under the three learning rates $\mathrm{LR}=0.01$, $\mathrm{LR}=0.005$, and $\mathrm{LR}=0.001$. The ES-trained model exhibits consistently stable behavior across all three learning rates. Its test $W_2$ distance decreases rapidly and remains close to the optimal sampling error, showing stability that is better than the EnKF and LETKF baselines.

The deterministic mean-based losses are more sensitive to the learning rate than ES. Across the three learning rates, L2 and NL2 display broadly similar behavior in this linear--Gaussian experiment: both can achieve reasonable performance under suitable tuning, but both are less robust than ES when the learning rate is varied. In particular, ES remains stable across all tested learning rates and stays close to the optimal sampling error level, whereas the deterministic objectives show larger sensitivity to the optimization setup. 

\subsection{Doubling-Angle Model}\label{ssec:doublin_angle_model}

We consider the doubling-angle model, which expands the illustrative example introduced in Subsection~\ref{ssec:an_illustrative_example}. This experiment is designed as a controlled comparison of different filtering architectures, training objectives, and classical DA baselines on a simple but strongly nonlinear and non-Gaussian problem. The hidden state is one-dimensional and is interpreted as an angle on the unit circle after multiplication by $2\pi$. The state evolves according to an expanding modulo-$1$ map, and the observation is the cosine of the resulting angle; see \eqref{eq:doubling}.
Here $v_j^\dagger\in[0,1)$ and $y_j\in\bbR$. Although the state dimension is only one, the model is challenging for filtering. The map $v\mapsto 2v \bmod 1$ amplifies small perturbations exponentially in time, while the nonlinear observation map $v\mapsto \cos(2\pi v)$ is not one-to-one. In particular, the states $v$ and $1-v$ generate the same noiseless observation, which naturally induces multimodal filtering distributions.

The detailed model and evaluation settings are summarized in Table~\ref{tab:doubling_angle_settings}. Since the state dimension is one, we can construct a highly accurate reference posterior using a bootstrap particle filter (BPF) with $10^6$ particles. This reference filter is used only for evaluation and visualization of the filtering distribution; the learning-based filters are trained using simulated truth trajectories rather than samples from the reference posterior.

\begin{table}[h]
\centering
\caption{Doubling-angle model settings.}
\label{tab:doubling_angle_settings}
\begin{tabular}{p{0.28\textwidth}p{0.64\textwidth}}
    \toprule
    \textbf{Category} & \textbf{Values} \\
    \midrule
    State
    & $v_j^\dagger\in[0,1)$ with modulo-$1$ geometry. $d_v=1$, $\sigma_v=0.01$\\
    \midrule
    Observation
    & $y_{j+1}=\cos\bigl(2\pi v_{j+1}^\dagger\bigr)+\eta_{j+1}$. $d_y=1$, $\sigma_y=0.2$\\
    \bottomrule
\end{tabular}
\end{table}

To evaluate the generalization capabilities of the learning-based methods, all machine learning (ML) configurations are trained using a fixed ensemble size of $N=30$. During testing, they are directly evaluated across varying ensemble sizes $N \in \{10, 30, 100, 300\}$ without any fine-tuning. In contrast, to provide the classical benchmark methods (EnKF, ESRF, and IEnKF) with a significant advantage in representing the filtering distribution, we evaluate them using substantially larger ensemble sizes of $N \in \{300, 1000, 3000\}$.

\begin{figure}[htbp]
    \centering
    \includegraphics[width=0.8\textwidth]{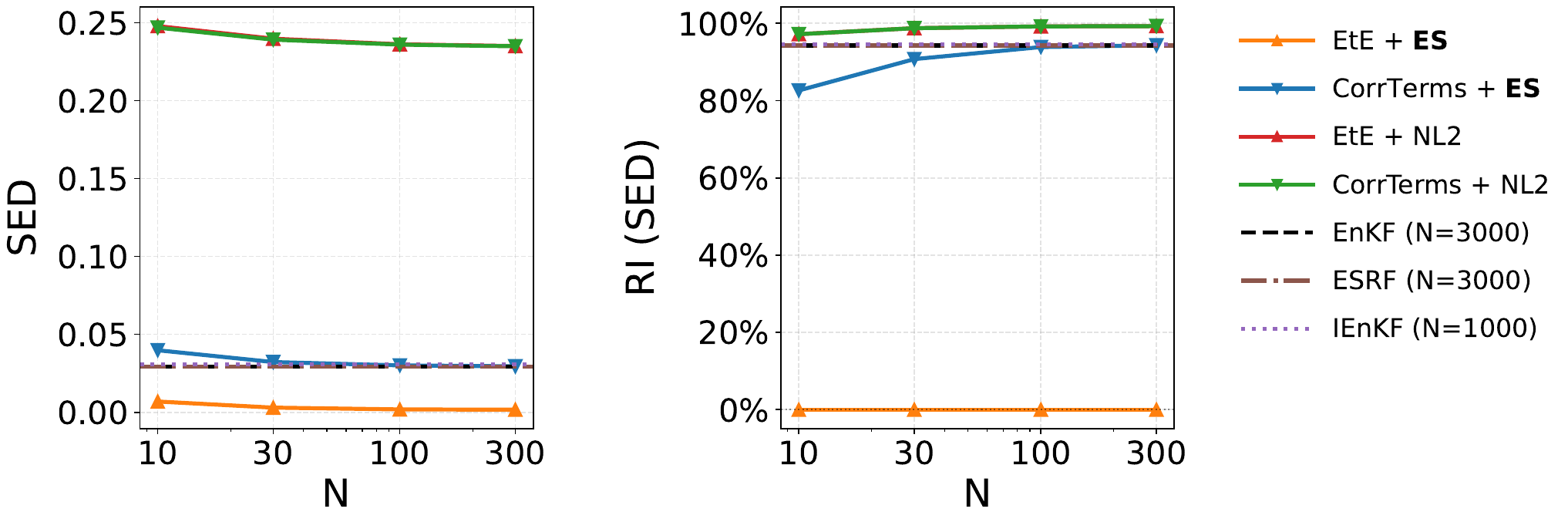}
    \caption{Performance evaluation on the doubling-angle model. The left panel shows the SED to the BPF reference filtering distribution, and the right panel shows the RI in SED of EtE+ES over each comparison method. The ML-based methods are trained at $N=30$ and evaluated directly at test ensemble sizes $N \in {10,30,100,300}$ without fine-tuning. The classical benchmark methods are evaluated at large ensemble sizes up to $N=3000$ (with IEnKF at $N=1000$) and are shown as dashed horizontal lines. The EtE+NL2 and CorrTerms+NL2 curves almost overlap in both panels. In the SED panel, EnKF, ESRF, and IEnKF also nearly overlap.}
    \label{fig:doubling_angle_metrics}
\end{figure}

Figure~\ref{fig:doubling_angle_metrics} quantitatively compares the performance of the tested methods. The SED is computed using the modulo-one distance on $[0,1)$ similarly to \cite{grimit2006continuous}:
\begin{equation}
d_{\mathbb{T}}(x_1,x_2)=\min\left\{\lvert x_1-x_2\rvert,1-\lvert x_1-x_2\rvert\right\}.
\label{eq:mod_one_distance}
\end{equation}
This torus distance is used only for the SED-based evaluation against the BPF reference. During training, the ES objective is still computed using the standard Euclidean distance between values represented in $[0,1)$.
As shown in the left panel, our proposed PSEF method (EtE + ES) achieves an SED that is remarkably close to zero across all tested ensemble sizes. In the right panel, the large RI values show that EtE + ES substantially improves over the comparison methods in approximating the BPF reference distribution, with the improvement being especially pronounced relative to the NL2-trained models. This demonstrates that its analysis ensembles successfully and consistently approximate the BPF reference filtering distribution. In contrast, the NL2-trained models have much larger SED values, showing that a mean-based objective is insufficient for learning the strongly non-Gaussian filtering distribution. The classical benchmark methods and CorrTerms + ES also remain noticeably above EtE + ES in SED, despite the much larger ensemble sizes used by the classical filters.

These quantitative findings are illustrated by the density distributions for the timestep $200$ visualized in Figure~\ref{fig:doubling1d_filtering} and Figure~\ref{fig:doubling1d_prior_filtering_step200}. Additional results for timesteps $100$ and $150$ in the same length-$200$ trajectory are provided in \ref{app:doubling_angle_additional_results}. The true filtering density has a pronounced bimodal structure, as accurately captured by the BPF reference. The visualization demonstrates that only the proposed EtE + ES (PSEF) architecture successfully resolves and preserves this complex bimodal filtering distribution. The other approaches fail to reproduce this structure. The classical EnKF, ESRF, and IEnKF are fundamentally incapable of representing multimodality and instead collapse to an erroneous unimodal spread, even with thousands of ensemble members and a wide-range grid search of the inflation hyperparameters. The alternative ML configurations similarly fail, either because the NL2 objective forces them to collapse towards a point estimate, or because the CorrTerms architecture lacks the requisite structural expressivity to properly reshape the distribution into a complex non-Gaussian posterior. These results suggest that, in this experiment, both the expressive end-to-end architecture and the probabilistic ES objective are important for approximating the bimodal filtering distribution.

\subsection{Lorenz '63}\label{ssec:L63}

We consider the Lorenz '63 simplified model for atmospheric convection \citep{lorenz63}, denoting the state vector by $v = (u_1, u_2, u_3)^\top$. The system dynamics are described by the equations
\begin{subequations}\label{eq:L63_dynamic}
\begin{align} 
\frac{du_1}{dt} &= \sigma (u_2 - u_1), \\ 
\frac{du_2}{dt} &= u_1 (\rho - u_3) - u_2, \\ 
\frac{du_3}{dt} &= u_1 u_2 - \beta u_3. 
\end{align} 
\end{subequations}

Map $\Psi$ is defined by integrating this equation over time $\Delta t.$ The experimental settings for the Lorenz '63 model are summarized in Table~\ref{tab:lorenz63_settings}. We consider three observation operators applied to the first state dimension: an identity map, a square map, and an arctan map. To ensure similar observation noise levels across these mappings, the observation noise standard deviation $\sigma_y$ is calibrated to match the empirical signal-to-noise ratio (SNR) of the default partial identity observation. The details of this SNR calibration procedure are provided in \ref{app_subsec:snr_calibration}.

\begin{table}[h]
\centering
\caption{Lorenz '63 model settings.}
\label{tab:lorenz63_settings}
\renewcommand{\arraystretch}{1.2}
\begin{tabular}{ll}
    \toprule
    \textbf{Category} & \textbf{Values} \\
    \midrule
    Parameters & $\sigma = 10, \quad \rho = 28, \quad \beta = \frac{8}{3}$ \\
    \midrule
    States & $v = (u_1, u_2, u_3)^\top \in \mathbb{R}^d, \quad d_v = 3, \quad \sigma_v=0.01$ \\
    \midrule
    Observations & Partial + Identity: $h(v) = u_1, \quad d_y=1, \quad \sigma_y = 2.00$ \\
    & Partial + Square: $h(v) = u_1^2, \quad d_y=1, \quad \sigma_y = 16.98$ \\
    & Partial + Arctan: $h(v) = \arctan(u_1), \quad d_y=1, \quad \sigma_y = 0.33$ \\
    \midrule
    Time step & Observation time step: $\Delta t=0.15$; $5$ RK4 integration steps: $dt = 0.03$ \\
    \bottomrule
\end{tabular}
\end{table}

\begin{figure}[t]
    \centering
    \includegraphics[width=\textwidth]{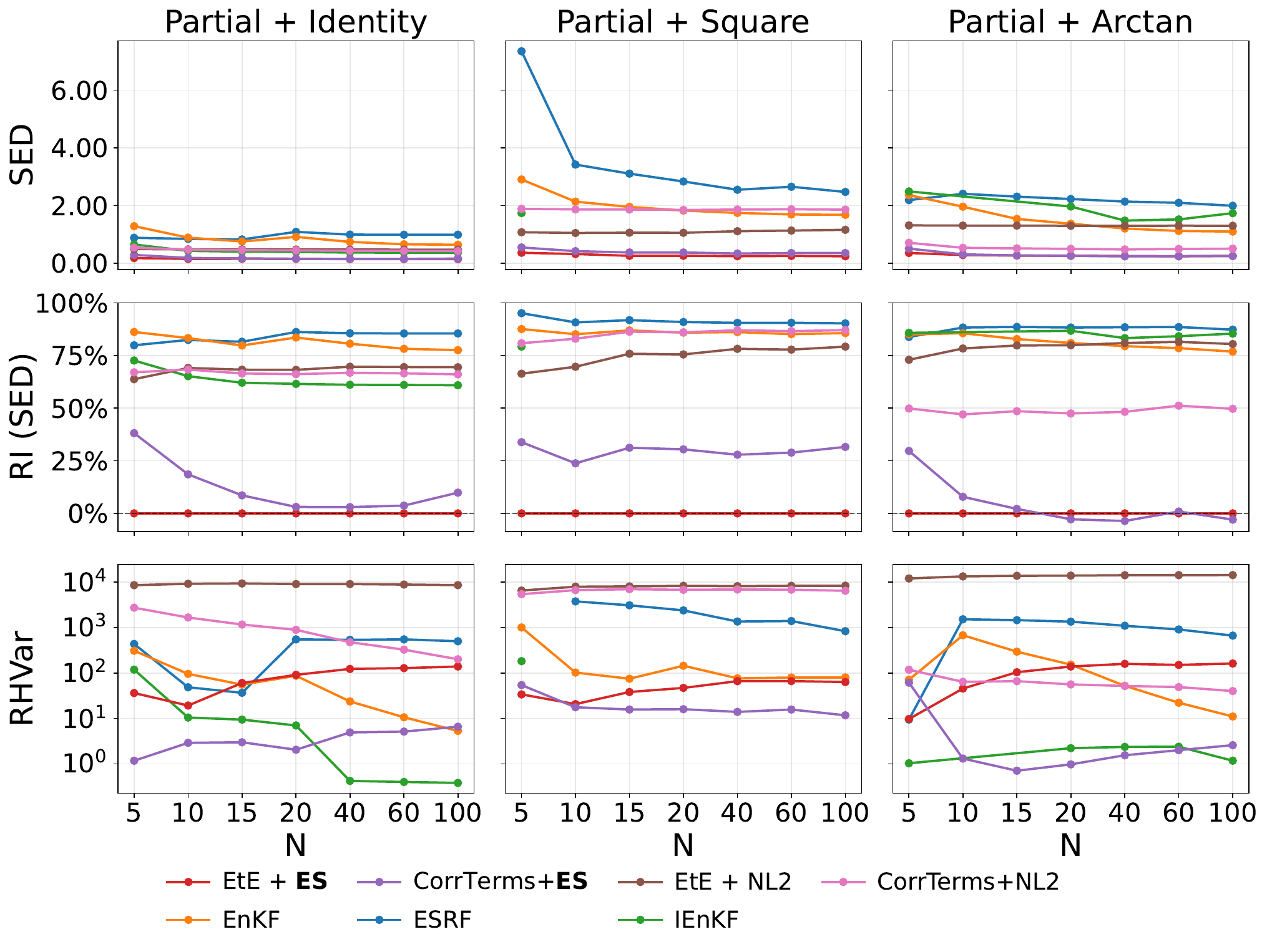}

    \vspace{-0.8em}
    \caption{Comparison of different filtering methods for the Lorenz '63 problem under different observation settings. For each method and observation setting, we consider ensemble sizes $N \in \{5,10,15,20,40,60,100\}$. The proposed PSEF method corresponds to EtE+ES. Missing results for a method at a given ensemble size indicate filter divergence. We use a BPF with $10^6$ particles as the ground truth. The first and second rows jointly evaluate distributional accuracy, reporting the SED to the ground truth filtering distribution and the RI in SED defined in \eqref{eq:relative_improvement}, using EtE+ES as the reference method. The third row reports RHVar. The SED and RI rows show that EtE+ES achieves the best overall distributional accuracy. }
    \label{fig:L63_metric_curves}
\end{figure}

\begin{figure*}[t!]
    \centering
    \setlength{\tabcolsep}{1.5pt}
    \renewcommand{\arraystretch}{1.05}
    \setlength{\fboxsep}{0pt}
    \setlength{\fboxrule}{0.1pt}

    \resizebox{\textwidth}{!}{%
        \begin{tabular}{>{\centering\arraybackslash}m{0.080\textwidth} *{8}{>{\centering\arraybackslash}m{0.105\textwidth}}}
        \toprule
        & Truth & EtE & CorrTerms & EtE & CorrTerms & \multirow{2}{*}{ESRF} & \multirow{2}{*}{EnKF} \\
        & BPF & + \textbf{ES} & + \textbf{ES} & + NL2 & + NL2 & &\\
        \midrule
        \LorenzGridRows{200}{square}{1} \\
        \midrule
        \LorenzGridRows{300}{square}{1}
        \end{tabular}%
    }

    \caption{Visualization of filtering distributions for a test trajectory of $500$ timesteps of the Lorenz '63 problem under the partial square observation. We compare the BPF ground truth (generated by $10^6$ particles), four machine learning filters (trained with $N=10$ and evaluated with $N=100$), and three classical benchmarks (with the inflation parameter selected by grid search). The ensembles are visualized through their two-dimensional projections. At timestep $200$, the ground truth filtering distribution exhibits a cross-like structure, while at timestep $300$ it exhibits a bimodal structure. The ES-based methods capture these non-Gaussian structures better than the NL2-based methods and the classical benchmarks. Our proposed PSEF (EtE+ES) provides the closest agreement with the BPF ground truth. The IEnKF is not shown because it diverges.}
    \label{fig:lorenz63_grid}
\end{figure*}

Figure~\ref{fig:L63_metric_curves} compares the performance of the tested methods across the three observation operators. We first examine the overall distributional accuracy using the SED to the BPF reference filtering distribution (first row) and the relative SED improvement (RI) defined in \eqref{eq:relative_improvement} with EtE+ES as the reference method (second row). The two methods trained with the ES loss demonstrate the best performance among all methods. Specifically, our proposed end-to-end framework, EtE+ES, almost always achieves the lowest overall SED, followed by CorrTerms+ES as the second best. The RI plots further highlight this advantage: under the partial identity and partial square observation settings, EtE+ES yields a substantial performance gain over CorrTerms+ES, with the advantage being particularly pronounced in the square setting. Under the partial arctan observation, the two ES-based architectures perform comparably. Overall, both architectures trained with the ES loss exhibit a clear and significant performance advantage over the two machine learning models trained with the NL2 objective and the three classical benchmarks.

The third row reports RHVar as a complementary diagnostic of marginal calibration; under the uniform-rank null hypothesis its calibrated baseline expectation is $1$, and values far above $1$ indicate rank nonuniformity. Under the partial identity and partial arctan observations, CorrTerms+ES and IEnKF have the smallest RHVar values, close to the calibrated baseline of 1 across ensemble sizes. Under the partial square observation, IEnKF diverges, while CorrTerms+ES attains the smallest RHVar among the stable methods, although its values remain above $1$. Thus, EtE+ES gives the best agreement with the full BPF filtering distribution in terms of SED, whereas CorrTerms+ES shows better rank histogram calibration in this low-dimensional experiment. We interpret this as a setting-dependent regularization effect of the correction-term architecture: its EnKF-type structure can help preserve ensemble calibration and spread when compatible with the posterior geometry, but it also limits distributional expressivity, as reflected by its larger SED relative to EtE+ES under the identity and square observations.

These quantitative findings are visually corroborated by the two-dimensional projected filtering distributions in Figure~\ref{fig:lorenz63_grid}, which illustrates the filtering ensembles for a test trajectory under the partial square observation setting at timesteps $200$ and $300$. Because the square observation introduces sign ambiguities, the BPF posterior exhibits complex non-Gaussian structures, such as a cross-like geometry at timestep $200$ and a bimodal shape at timesteps $300$ and $400$. The EtE+ES architecture accurately resolves these multimodal geometries, providing the closest visual agreement with the BPF ground truth. While the other energy score--based method (CorrTerms+ES) captures part of the non-Gaussian features, it lacks the full structural expressivity of the end-to-end framework. The generally large RHVar values of the NL2-trained methods are consistent with their poor calibration and their failure to capture the multimodality. 
The classical EnKF and ESRF are inherently constrained to generating unimodal Gaussian approximations and completely miss the posterior structure, while the IEnKF diverges. Additional visualization results comparing the BPF ground truth and all tested methods under the partial identity and partial arctan observation functions across various timesteps are provided in \ref{app:lorenz63_additional_results}.

\subsection{Lorenz '96}\label{ssec:L96}

The Lorenz '96 model is a standard chaotic system used as a simplified testbed for atmospheric data assimilation \citep{lorenz96}. It consists of cyclically coupled state variables with linear damping, constant forcing, and a quadratic advection term. The dynamics are
\begin{equation}\label{eq:L96_equations}
    \frac{du_i}{dt}
    =
    (u_{i+1}-u_{i-2})u_{i-1}-u_i+F,
\end{equation}
for $i=1,\ldots,d_v$, with periodic boundary conditions $u_{i+d_v}=u_i$. We use $d_v=40$ and $F=8$, which gives a chaotic dynamical regime. As for Lorenz '63, $\Psi$ here is obtained by integrating \eqref{eq:L96_equations} over $\Delta t$. This subsection first reports the main Lorenz~'96 comparison across ensemble sizes at the default observation time step $\Delta t=0.15$, and then examines two additional tests: the effect of fine-tuning across target ensemble sizes and direct generalization across different observation time steps.

As in the Lorenz~'63 experiment, we consider three partial observation operators: identity, square, and arctan. The observed coordinates are
\begin{equation}\label{eq:l96_observed_indices}
    \mathcal{O}
    =
    \{4i+1\}_{i=0}^{9}.
\end{equation}
Thus $d_y=10$ for all three observation settings. To make the three observation maps comparable, the observation noise standard deviation $\sigma_y$ is calibrated by matching the empirical SNR of the default partial identity observation, as described in \ref{app_subsec:snr_calibration}. The resulting settings are summarized in Table~\ref{tab:lorenz96_settings}.

\begin{table}[h]
\centering
\caption{Lorenz '96 model settings.}
\label{tab:lorenz96_settings}
\renewcommand{\arraystretch}{1.2}
\begin{tabular}{p{0.24\textwidth}p{0.68\textwidth}}
    \toprule
    \textbf{Category} & \textbf{Values} \\
    \midrule
    Parameters
    & $F=8$ \\
    \midrule
    States
    & $v=(u_1,u_2,\ldots,u_{40})^\top\in\bbR^{d_v}$, $d_v=40$ \\
    \midrule
    Observations
    & Partial + Identity: $h(v)=(u_i)_{i\in\mathcal{O}}$, $d_y=10$, $\sigma_y=1.00$ \\
    & Partial + Square: $h(v)=(u_i^2)_{i\in\mathcal{O}}$, $d_y=10$, $\sigma_y=6.69$ \\
    & Partial + Arctan: $h(v)=(\arctan(u_i))_{i\in\mathcal{O}}$, $d_y=10$, $\sigma_y=0.28$ \\
    \midrule
    Time step
    & Default observation time step: $\Delta t=0.15$ with $5$ RK4 integration steps with $dt=0.03$. 
    \\
    &
    In the $\Delta t$-generalization experiment, the RK4 step size remains $0.03$ and the number of RK4 steps between consecutive observations varies from $5$ to $15$. \\
    \bottomrule
\end{tabular}
\end{table}

Because the state dimension is $d_v=40$, constructing an accurate BPF reference is no longer computationally feasible. We therefore evaluate Lorenz~'96 using ES as the primary distributional metric, with RHVar used as a complementary diagnostic for marginal spread calibration.

The absolute value of ES should be interpreted differently from SED. SED is a distance to a numerical reference filtering distribution, whereas the ES reported here is evaluated against the realized truth state. Even if an ensemble were sampled from the exact filtering distribution, its averaged ES would not be zero. The reason is that the expected ES contains an irreducible self-score of the true filtering distribution: for a non-degenerate filtering distribution, two independent draws from that distribution differ with positive probability, so the corresponding expected distance is positive. Propriety of ES means that this expected score is minimized by the true filtering distribution, but it does not mean that the minimum value is zero. After subtracting this unknown self-score, as in \eqref{eq:sdiv}, the resulting quantity defines the energy score divergence. This is proportional to the energy distance, which is in fact a metric~\citep{bach2024inverse}. Since the unknown self-score cannot be subtracted in practice, we compute RI directly from the raw ES values using \eqref{eq:relative_improvement}, with the reference ES chosen as the smaller value between EtE+ES and CorrTerms+ES for the same observation map and ensemble size.

We compare three classical DA baselines: EnKF, LETKF, and IEnKF. In contrast to the previous low-dimensional experiments, all three classical methods use both post-analysis inflation and Gaspari--Cohn localization. Replacing the ESRF baseline used earlier, LETKF plays the role of a localized deterministic square-root filter. For each observation map, ensemble size, and classical method, the inflation factor and localization radius are selected by grid search, as detailed in \ref{app_subsec:grid_search}. Missing classical method results indicate filter divergence.

All ML-based filters are first trained at ensemble size $N=10$. Unlike the Lorenz~'63 experiment, where the ML-based filters are evaluated across ensemble sizes without fine-tuning, the Lorenz~'96 results use fine-tuning for each target ensemble size. The CorrTerms architecture follows the fine-tuning procedure of MNMEF \citep{bach_learning_2026}. For the EtE architecture, we use the fine-tuning strategy in Subsection~\ref{subsec:architecture_training}: the set transformer encoder is frozen, and only the residual MLP is fine-tuned at the target ensemble size.

\begin{figure}[t]
    \centering
    \includegraphics[width=\textwidth]{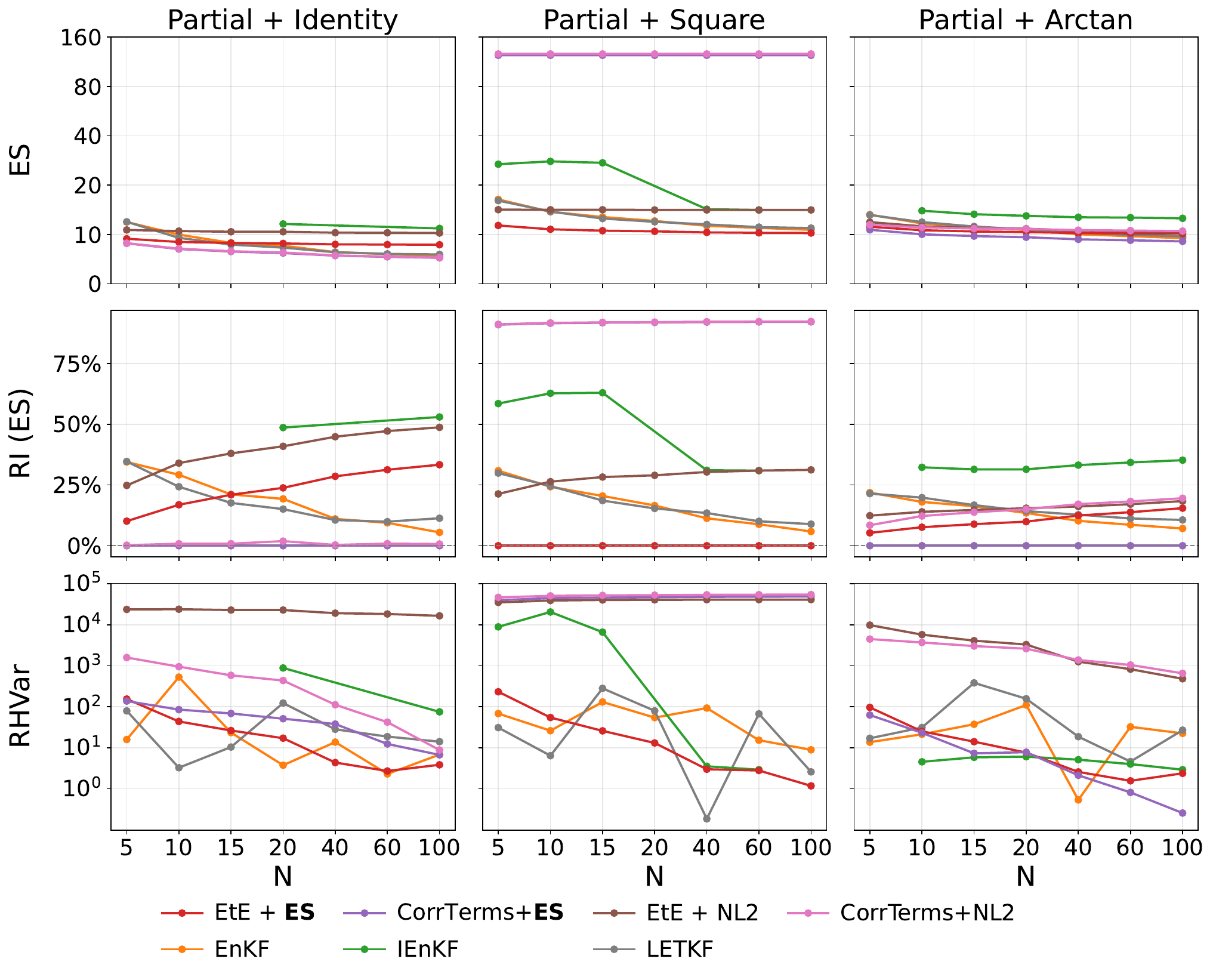}
    \caption{Comparison of filtering methods for the Lorenz '96 problem under the partial identity, partial square, and partial arctan observation settings. For each method and observation setting, we consider ensemble sizes $N\in\{5,10,15,20,40,60,100\}$; the rows report ES, RI in ES, and RHVar. Missing points indicate filter divergence. The ES curves of EnKF and LETKF overlap across all three observation settings, while the RI(ES) row reveals visible differences between them. In the partial identity and partial square settings, the ES curves of CorrTerms+ES and CorrTerms+NL2 also overlap. Overall, an ES-trained filter is among the lowest-ES methods in each observation setting.}
    \label{fig:L96_metric_curves}
\end{figure}

Figure~\ref{fig:L96_metric_curves} compares the tested methods under the three observation settings. For the partial identity observation, CorrTerms+ES and CorrTerms+NL2 achieve the lowest or near-lowest ES values, and their ES curves overlap. The ES curves of EnKF and LETKF also overlap, while their relative differences are more visible in the RI(ES) row. The EtE methods are less competitive in this setting. This is consistent with the fact that the observation map is linear, so the filtering distribution is closer to the regime where an EnKF-style correction is an effective inductive bias. 

The partial square observation leads to a different behavior. Since the square map loses sign information, the filtering distribution can become bimodal. In this case, CorrTerms+ES and CorrTerms+NL2 still behave similarly in ES, but they now have the largest ES values among the non-divergent methods and can perform worse than EnKF. This suggests that the EnKF correction-terms architecture is too restrictive when the observation map has multiple plausible preimages. The learned update remains tied to an EnKF-type correction, which limits its ability to represent the bimodal analysis structure. EtE+ES gives the lowest ES in this setting, showing that the more flexible end-to-end architecture is useful when the filtering distribution is strongly non-Gaussian. 

Under the partial arctan observation, the observation map is nonlinear but remains one-to-one on each observed coordinate. Therefore, unlike the square observation, it does not introduce a sign ambiguity in the observed variables. In this setting, CorrTerms+ES again achieves the lowest ES, as in the partial identity case. This supports the interpretation that CorrTerms is effective when the filtering distribution remains essentially unimodal, while the EtE architecture becomes important when the observation map induces a more complex posterior geometry. 

For the third-row RHVar results, EtE+ES is consistently among the methods with the smallest RHVar across all three observation maps. Its RHVar decreases with ensemble size and approaches the calibrated baseline expectation of $1$ for larger $N$. CorrTerms+ES is more observation-dependent: it is only moderately calibrated under the partial identity observation, has large RHVar under the partial square observation, and becomes well calibrated under the partial arctan observation for larger ensembles. Thus, in Lorenz~'96, EtE+ES provides more robust rank histogram calibration across observation maps, while CorrTerms+ES can be highly calibrated only when its EnKF-type inductive bias is compatible with the filtering geometry. Compared with Lorenz~'63, this shows that the calibration of CorrTerms+ES is not universal and can fail in higher-dimensional or strongly sign-ambiguous regimes.

Overall, the best ES curve in each Lorenz~'96 setting is obtained by an ES-trained filter, but the preferred architecture depends on the observation-induced filtering distribution. CorrTerms+ES is strongest when the posterior remains closer to a nearly Gaussian regime, as in the partial identity and partial arctan settings. In contrast, under the partial square observation, the sign ambiguity creates a more complex high-dimensional non-Gaussian posterior, and EtE+ES gives both the lowest ES and the most reliable RHVar behavior among the ML-based methods. From the calibration perspective, EtE+ES is the more consistent architecture across the three observation maps, while CorrTerms+ES is highly observation-dependent.

\subsubsection{Effect of fine-tuning}

\begin{figure}[t]
    \centering
    \includegraphics[width=\textwidth]{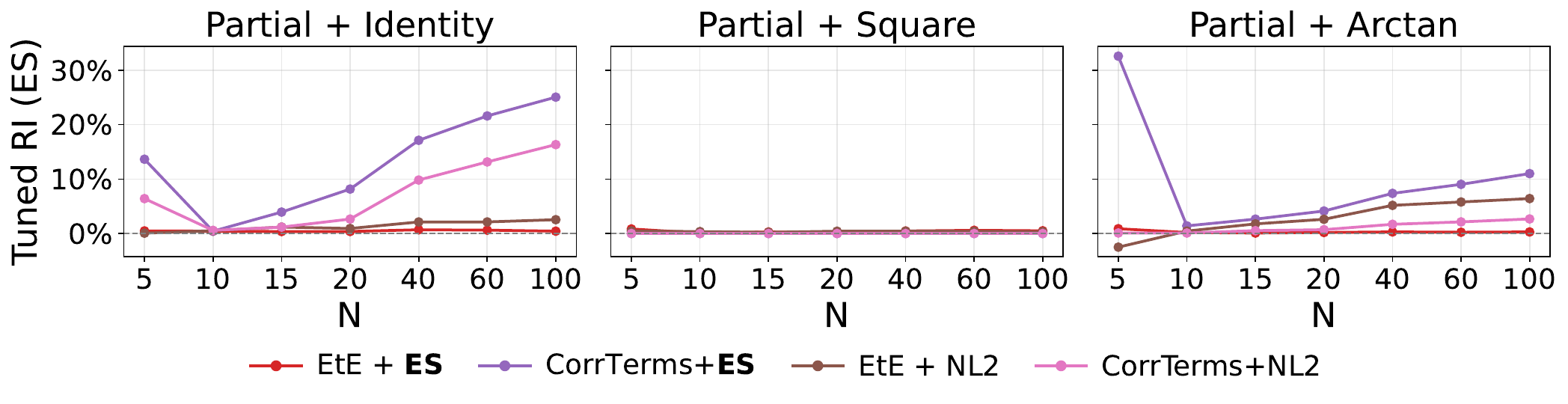}
    \caption{Effect of fine-tuning for the Lorenz~'96 ML filters. All models are pretrained at ensemble size $N=10$ and evaluated at target ensemble sizes $N\in\{5,10,15,20,40,60,100\}$ after fine-tuning. The figure reports the tuned relative improvement (RI) in ES compared with the corresponding pretrained model under the partial identity, partial square, and partial arctan observation settings. Overall, fine-tuning yields a more visible improvement for the CorrTerms architecture, especially under the identity and arctan observations, whereas the gain for the EtE architecture is comparatively small.}
    \label{fig:L96_finetuning_effect}
\end{figure}

Figure~\ref{fig:L96_finetuning_effect} shows that fine-tuning mainly affects the CorrTerms variants. Under the partial identity and partial arctan observations, CorrTerms+ES and CorrTerms+NL2 obtain clear ES reductions after fine-tuning, especially when the test ensemble size differs substantially from the pretraining size $N=10$. This suggests that the correction-terms architecture is relatively sensitive to the ensemble size used during training, so adapting the model to the target ensemble size can noticeably improve its performance. In contrast, the EtE variants change much less after fine-tuning: EtE+ES is already close to its tuned performance in most settings, and EtE+NL2 shows only minor or inconsistent improvement. The partial square observation is the least affected by fine-tuning for all four ML configurations, which is consistent with the main Lorenz~'96 results: in this case, the dominant difficulty is the observation-induced non-Gaussian filtering structure rather than only the mismatch between the pretraining and testing ensemble sizes.

\subsubsection{Generalization across observation time steps}

The purpose of this experiment is to evaluate whether the filters can be transferred to observation intervals different from the one used during training or hyperparameter selection. Throughout the experiment, the RK4 step size is kept fixed at $0.03$, and the observation interval $\Delta t$ is varied by changing the number of RK4 steps between two consecutive observations. All ML-based filters are trained and fine-tuned only at the default interval $\Delta t=0.15$. Similarly, for the classical benchmarks, the inflation and localization hyperparameters are selected only at $\Delta t=0.15$. For all other observation intervals, we directly reuse the same learned models and the same selected hyperparameters without any additional training, fine-tuning, or grid search.

\begin{figure}[t]
    \centering
    \includegraphics[width=0.75\textwidth]{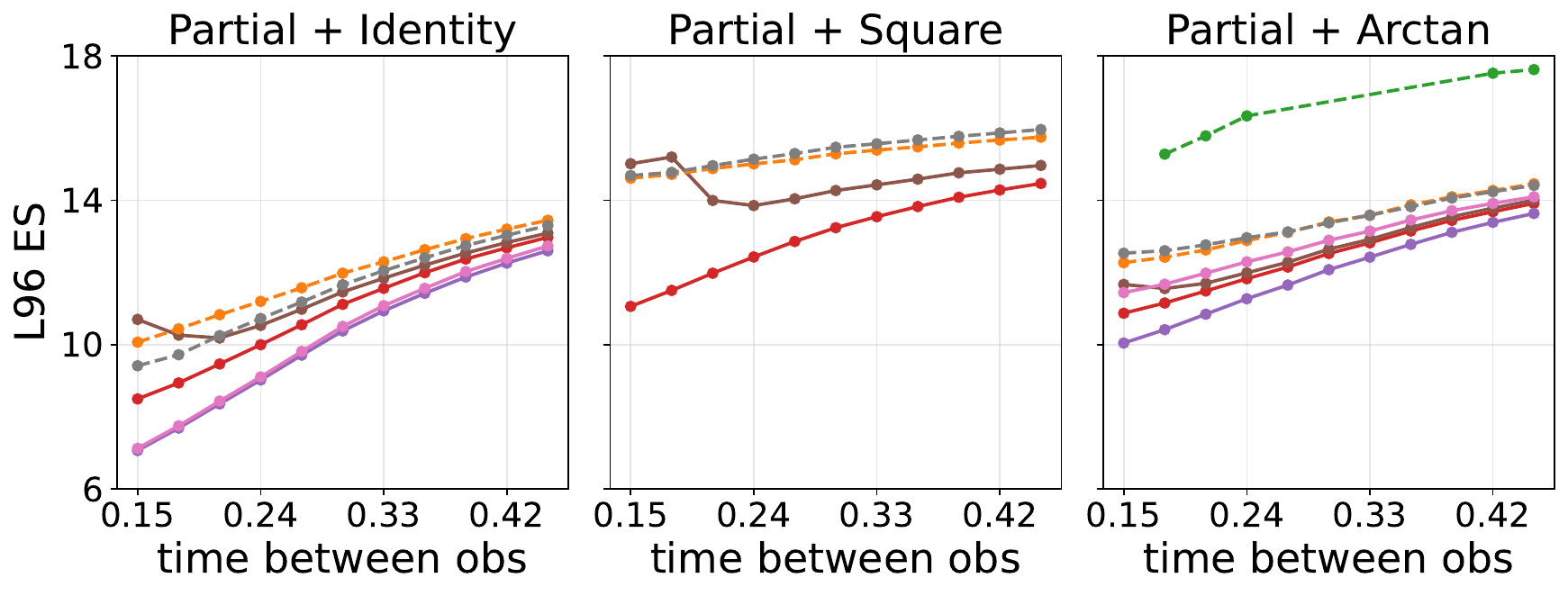}

    \vspace{-0.5em}

    \includegraphics[width=0.65\textwidth]{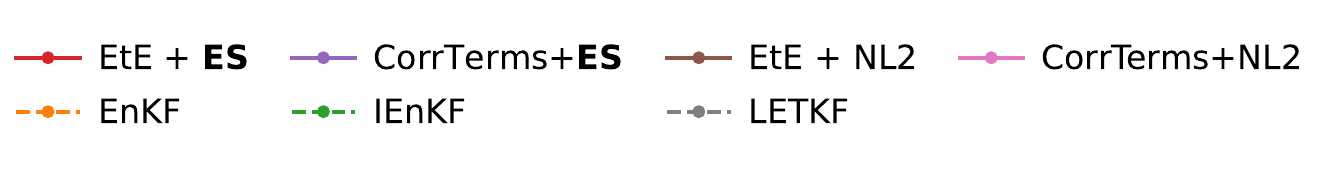}

    \caption{ES versus observation interval $\Delta t$ for the Lorenz~'96 problem with $N=10$ under the partial identity, partial square, and partial arctan observation settings. The RK4 step size is fixed at $0.03$, so $\Delta t$ is varied by using $5$ to $15$ RK4 steps between consecutive observations. All ML-based filters are trained and fine-tuned at $\Delta t=0.15$, and all classical benchmark hyperparameters are selected at $\Delta t=0.15$ and transferred directly to the other $\Delta t$ values; the ES axis is zoomed to $[6,18]$ to highlight visible method differences. Overall, ES-trained filters remain the most accurate across observation intervals, with CorrTerms+ES best-performing for identity and arctan observations and EtE+ES best-performing for the square observation.}
    \label{fig:L96_metric_vs_dt}
\end{figure}

Figure~\ref{fig:L96_metric_vs_dt} shows that the strongest ES-trained filters transfer well across the tested observation intervals, although the filtering problem becomes harder as $\Delta t$ increases. In all three observation settings, the ES values of the methods generally increase with $\Delta t$, reflecting the longer error growth interval between two consecutive observations. Nevertheless, the relative ranking of the strongest methods remains stable. Under the partial identity and partial arctan observations, CorrTerms+ES achieves the lowest ES across the tested range, while EtE+ES remains close but is not uniformly the best. Under the partial square observation, EtE+ES consistently gives the best ES. Thus, the main conclusion from the default $\Delta t=0.15$ experiment continues to hold when the observation interval changes: ES-trained methods provide the most reliable performance, but the preferred architecture depends on the observation map.

The zoomed ES range further clarifies this distinction. The displayed range $(6,18)$ is used only to make the competitive methods easier to compare: no method has ES below the lower limit, while some poorly performing methods exceed the upper limit. For identity and arctan observations, the filtering distribution is closer to a unimodal regime, so the EnKF correction-terms inductive bias remains effective, and CorrTerms+ES performs best. For the square observation, the sign ambiguity induced by the observation map creates a more complex non-Gaussian filtering distribution, where the end-to-end architecture becomes more important. In this case, both CorrTerms variants exceed the displayed upper limit for part or all of the tested observation intervals, indicating poor transfer under this observation-induced posterior geometry, whereas EtE+ES remains within the displayed range and retains the best ES performance. Among the competitive methods that remain visible in the zoomed range, the gaps become smaller for larger $\Delta t$. Therefore, increasing the observation interval makes the task harder for all methods and reduces the separation between the strongest visible methods, but it does not change the overall conclusion that ES training gives the most robust transfer across observation intervals.

\subsection{Computational Time}
\label{ssec:runtime}

We conclude the numerical experiments with a runtime comparison for the Lorenz~'96 benchmark. The purpose of this experiment is to quantify the practical cost of the analysis update, rather than the cost of forecasting between observations. Therefore, for each method and ensemble size $N$, we measure the average wall-clock time of a single analysis step, and exclude the forward-model integration time. The reported values are averaged over test trajectories and assimilation times. Because forward-model calls are excluded, this comparison isolates the analysis routine and should not be interpreted as the total forecast--analysis cycle cost of full implementation. We also exclude the training cost of the ML-based filters.

For the ML filters, the inference-time cost is determined by the filtering architecture, not by the training loss used to obtain the model parameters; thus, the ES and NL2 variants of a fixed architecture have the same computational graph at test time. We therefore collapse the four ML configurations in Subsection~\ref{ssec:methods} to the two ML architectures, CorrTerms and EtE. We report both CPU and GPU wall-clock timings for CorrTerms and EtE, while EnKF, LETKF, and IEnKF are reported in CPU-only wall-clock form. All CPU timings are measured on an Intel(R) Core(TM) i9-14900KF, and the ML GPU timings are measured on a single NVIDIA RTX 4080 Super. In all cases, the reported time is measured wall-clock time, rather than an operation count or a hardware-independent complexity measure. Our LETKF implementation is not parallelized over local analysis domains, so the LETKF timing should be interpreted as the wall-clock cost of this particular serial CPU implementation, rather than as an optimized parallel LETKF benchmark. As in the Lorenz~'96 comparison above, LETKF is used as the localized deterministic square-root baseline, so ESRF is not reported separately.

\begin{figure}[t]
    \centering
    \includegraphics[width=0.70\textwidth]{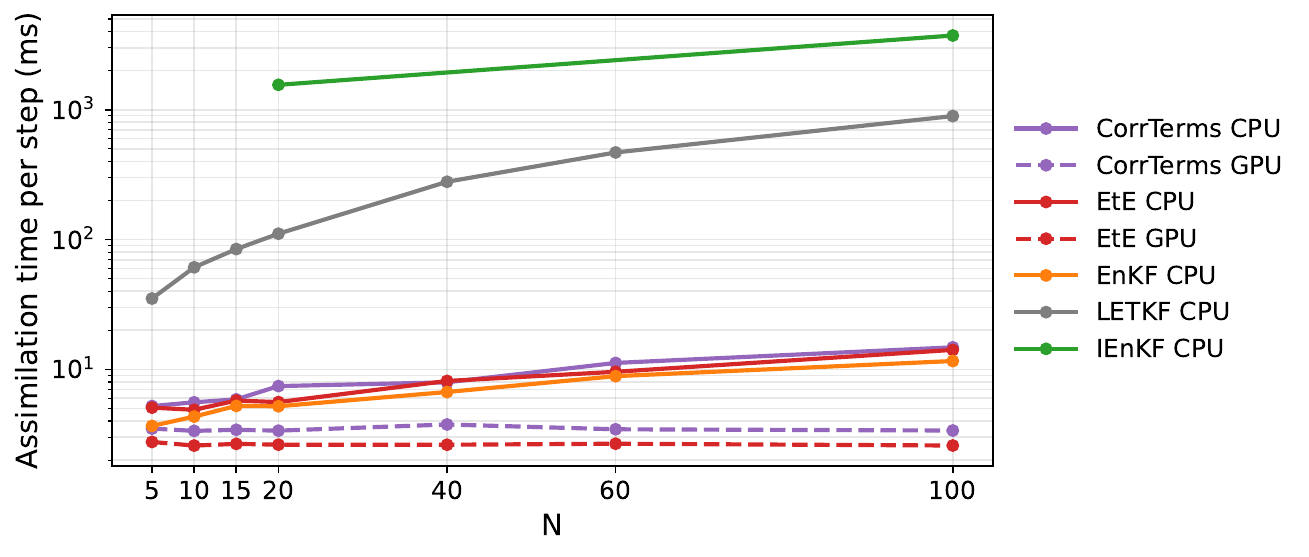}
    \caption{Average wall-clock assimilation time per analysis step for the Lorenz~'96 benchmark as a function of the ensemble size $N$. The reported time excludes forward-model integration and measures only the analysis/assimilation update. CorrTerms and EtE are timed on both CPU and GPU, while EnKF, LETKF, and IEnKF are timed on CPU. CPU timings are measured on an Intel(R) Core(TM) i9-14900KF, and the ML GPU timings are measured on a single NVIDIA RTX 4080 Super. The LETKF timing corresponds to a non-parallelized CPU implementation.}
    \label{fig:L96_assim_time_vs_N}
\end{figure}

Figure~\ref{fig:L96_assim_time_vs_N} shows that, on CPU, both ML architectures have a similar but slightly slower wall-clock analysis time compared to EnKF across the tested ensemble sizes, with a mild increase as $N$ grows. LETKF and IEnKF are substantially more expensive in our implementation. For LETKF, this cost should be interpreted with the caveat that the local analyses are not parallelized; parallelization over localization domains could reduce the observed wall-clock time. The GPU timings for CorrTerms and EtE are lower and more stable across $N$, reflecting the fact that these ML architectures can exploit parallel operations. The CPU curves provide the direct hardware-matched comparison with the classical methods, whereas the GPU curves show the accelerator performance available to the ML architectures.

%% file: sections/conclusions.tex
\section{Conclusion and Future Work}
\label{sec:conclusion}

In this paper we introduce the proper scoring ensemble filter (PSEF), a learning-based ensemble data assimilation method for approximating Bayesian filtering distributions. The key idea is to train an ensemble analysis map with strictly proper scoring rules, so that the resulting ensembles are optimized to represent the full filtering distribution. This allows the filter to be trained from simulated trajectories of state--observation pairs, without requiring samples from, or direct evaluations of, the true filtering distribution.

Our theoretical results show that this learning principle is statistically consistent at the population level. Under a realizability assumption, minimizing the proposed strictly proper scoring rule objective recovers the true filtering distribution. We also related the practical single-trajectory training objective to the population objective through a mean-field time-averaging argument, thereby providing a justification for learning Bayesian filtering distributions from long simulated trajectories.

The numerical experiments support both the theoretical and methodological claims. In the linear--Gaussian setting, PSEF approaches the true posterior given by the Kalman filter up to finite-ensemble sampling error. In nonlinear and non-Gaussian examples, training with the energy score enables the learned ensemble to approximate challenging filtering distributions, including multimodal posteriors that are not captured by Gaussian-based ensemble filters or by learned filters trained with mean-based losses. In the higher-dimensional setting, PSEF also demonstrates competitive performance.

The present method nevertheless has limitations. In the current form, PSEF is not intended for direct application to extremely high-dimensional systems. The analysis map, ensemble interactions, and scoring rule loss all become more demanding as the state dimension and ensemble size increase. A natural next step is therefore to combine the proposed strictly proper scoring rule training approach with learned local structure or latent low-dimensional representations. This direction is particularly relevant for numerical weather prediction, where the full state dimension is extremely large, but the effective degrees of freedom may be much smaller \citep{patil_local_2001}. Learned multiscale local structure \citep{lam_learning_2023} and latent representations \citep{melinc_3d-var_2024} have proven successful in this atmospheric setting. For large ensembles, specialized algorithms for computing the gradients of the energy score exist \citep{hertrich_generative_2024}.

Another limitation of the current approach is that, in learning filters based on synthetic state--observation trajectories, one makes the assumption that the model that produced these trajectories also produced the real observations. In practice, the model is often imperfect. If the model error is small then learned filters could first be trained on synthetic trajectories and then fine-tuned on real observations. However, if it is significant then ideally it too can be modelled and learned \citep{levine_framework_2022}, in tandem with the data assimilation problem, providing an interesting challenge for machine learning.

More broadly, this work points toward a meta-amortized approach to Bayesian filtering. The analysis step can be viewed as a Bayesian inverse operator that conditions a forecast law on a realized observation. At this level of abstraction, the same conditioning principle applies across different dynamical systems, observation operators, observation patterns, and noise covariances; these problem-specific ingredients enter through the joint forecast distribution and the observed data. This suggests the possibility of training a single ``foundation'' meta-amortized filter that learns the Bayesian analysis operator across a family of data assimilation problems, rather than training a separate filter for each fixed setting.

Overall, PSEF provides a step toward learned data assimilation methods that retain the distributional target of Bayesian filtering while avoiding the need for inaccessible posterior supervision. Extending this framework to incorporate local and latent representations, to imperfect models, and ultimately to meta-amortized Bayesian inverse operators is a promising direction for scalable probabilistic data assimilation in realistic scientific forecasting systems.

%% file: sections/appendix.tex
\appendix

\section{Attention Mechanism on Probability Measures}
\label{app:attn_measures}
The attention mechanism \citep{vaswani2017attention} can be viewed as an operator on probability measures. Consider $d_u,d_w,d_k,d_V\in\bbN$ and fixed matrices $Q\in\bbR^{d_k\times d_u}$, $K\in\bbR^{d_k\times d_w}$, and $V\in\bbR^{d_V\times d_w}$. The single-head attention on probability measure is defined below:

\begin{definition}[Single-head attention on measures]\label{def:attn_measures}
The attention operator is defined by 
\begin{equation}\label{eq:att_pushforward}
    \sA:\cP(\bbR^{d_u})\times \cP_+(\bbR^{d_w})\to \cP(\bbR^{d_V}), \qquad\sA(\mu,\pp) = \fA(\placeholder;\pp)_\sharp\mu,
\end{equation}
where the pushforward map $\fA:\bbR^{d_u}\times \cP_+(\bbR^{d_w})\to \bbR^{d_V}$ is given, for $u\in\bbR^{d_u}$, by
\begin{align}
    \fA(u;\pp) &= \E_{w\sim p(\cdot;u,\pp)}\,Vw,\label{eq:pushforward_fA}\\
    p(dw;u,\pp) &= \frac{\exp\bigl(\langle Qu,Kw\rangle\bigr)\pp(dw)}{\int_{\bbR^{d_w}}\exp\bigl(\langle Qu,Kz\rangle\bigr)\pp(dz)}.\label{eq:tilted_measure_p}
\end{align}
The domain requirement $\pp\in\cP_+(\bbR^{d_w})$ (Definition~\ref{def:tail_decay}) guarantees the normalizing denominator in~\eqref{eq:tilted_measure_p} is finite for every $u\in\bbR^{d_u}$ \citep{bach_learning_2026}, hence $p(\cdot;u,\pp)$ is well-defined and $\sA(\mu,\pp)\in\cP(\bbR^{d_V})$ for all $\mu\in\cP(\bbR^{d_u})$.
\end{definition}

\begin{definition}[Set of Measures with Fast Tail Decay]
\label{def:tail_decay}
Define $\cP_+(\bbR^d)$ as the set of probability measures on $\bbR^d$ whose tails decay faster than exponentially. More precisely,
\begin{equation}
\begin{aligned}
    \cP_+(\bbR^d)
    =
    \Bigl\{
        \pp\in\cP(\bbR^d):
        &\exists c_\pp>0,\ R_\pp\geq0,\ \delta_\pp>0 \text{ such that} \\
        &\pp\bigl(\{x\in\bbR^d:\|x\|\geq t\}\bigr)
        \leq
        \exp\bigl(-c_\pp t^{1+\delta_\pp}\bigr)
        \quad\forall t\geq R_\pp
    \Bigr\}.
\end{aligned}
\end{equation}
\end{definition}

The following Proposition~\ref{prop:cross_sequences} shows that with empirical-measure inputs, the attention operator $\sA$ coincides with standard sequence-based attention \citep{vaswani2017attention}. 

\begin{proposition}[Attention on Empirical Measures]
\label{prop:cross_sequences}
  Let $\mu^N=\tfrac1N\sum_{j=1}^N\delta_{u(j)}$, $\pp^M=\tfrac1M\sum_{k=1}^M\delta_{w(k)}$, $Q\in\mathbb{R}^{d_k\times d_u}$, $K\in\mathbb{R}^{d_k\times d_w}$, and $V\in\mathbb{R}^{d_V\times d_w}$. We have $\sA(\mu^N,\pp^M)=\tfrac1N\sum_{j=1}^N\delta_{\fA(u(j);\pp^M)}$, where
\[
    \fA\bigl(u(j);\pp^M\bigr)=\frac{\sum_{k=1}^M \exp\langle Qu(j),Kw(k)\rangle\,Vw(k)}
{\sum_{\ell=1}^M \exp\langle Qu(j),Kw(\ell)\rangle}.
\]
On the other hand, stacking $\{u(j)\}_{j=1}^N$ into $U\in\mathbb{R}^{N\times d_u}$ and $\{w(k)\}_{k=1}^M$ into $W\in\mathbb{R}^{M\times d_w}$, the classic attention is
\[
\Att(U,W)=\mathrm{Softmax}\bigl((UQ^\top)(WK^\top)^\top\bigr)\,(WV^\top)\in\bbR^{N\times d_V}
\]
For each $j\in\{1,\dots,N\}$, $\fA\bigl(u(j);\pp^M\bigr)$ is the row $j$ of $\Att(U,W)$.
\end{proposition}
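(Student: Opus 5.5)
The plan is to compute both objects explicitly and compare them entry by entry, starting from the measure-theoretic side. Since $\sA(\mu^N,\pp^M)=\fA(\placeholder;\pp^M)_\sharp\mu^N$ by Definition~\ref{def:attn_measures}, and the pushforward of an empirical measure under any measurable map $T$ satisfies $T_\sharp\bigl(\tfrac1N\sum_j\delta_{u(j)}\bigr)=\tfrac1N\sum_j\delta_{T(u(j))}$, the first claim reduces to computing $\fA(u;\pp^M)$ for a single $u$. The pushforward identity follows directly from the definition $T_\sharp\pi(B)=\pi(T^{-1}(B))$, because $\delta_{u}(T^{-1}(B))=\delta_{T(u)}(B)$.

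Before doing so I will record that $\pp^M\in\cP_+(\bbR^{d_w})$: it has bounded support, so $\pp^M(\{\|x\|\ge t\})=0$ for $t>\max_k\|w(k)\|$, and the domain condition of Definition~\ref{def:attn_measures} holds with any $c,\delta>0$. More directly, the normalizing integral in \eqref{eq:tilted_measure_p} is a finite sum of exponentials, hence finite and strictly positive.

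The next step is to evaluate the tilted measure. Integration against $\pp^M$ is averaging over atoms, so the denominator in \eqref{eq:tilted_measure_p} equals $\tfrac1M\sum_{\ell}\exp\langle Qu,Kw(\ell)\rangle$. It follows that $p(\cdot;u,\pp^M)=\sum_k \alpha_k(u)\delta_{w(k)}$, with weights
\[
\alpha_k(u)=\frac{\exp\langle Qu,Kw(k)\rangle}{\sum_\ell\exp\langle Qu,Kw(\ell)\rangle},
\]
where the factors $1/M$ cancel. Taking the expectation of $Vw$ under this discrete measure in \eqref{eq:pushforward_fA} gives the stated formula for $\fA(u(j);\pp^M)$.

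For the sequence side, I will compute row $j$ of $(UQ^\top)(WK^\top)^\top=UQ^\top KW^\top$. Its $(j,k)$ entry is $u(j)^\top Q^\top K w(k)=\langle Qu(j),Kw(k)\rangle$. A row-wise softmax then produces exactly the weights $\alpha_k(u(j))$. Multiplying by $WV^\top$, whose $k$-th row is $(Vw(k))^\top$, yields the row $\sum_k\alpha_k(u(j))(Vw(k))^\top$, which matches $\fA(u(j);\pp^M)$.

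There is no genuine obstacle in this argument. The only points needing care are the cancellation of the $1/M$ normalization and the convention that Softmax acts row-wise. I would state both explicitly.
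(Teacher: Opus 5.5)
Your proposal is correct and follows the same route as the paper, whose proof is simply to substitute the empirical measures into the definitions of the pushforward and the tilted measure. You spell out the details the paper leaves implicit: how the pushforward acts on atoms, the cancellation of the $1/M$ factors, why the empirical measure lies in the admissible domain, and the row-wise softmax identification of the entries of $UQ^\top K W^\top$ with $\langle Qu(j),Kw(k)\rangle$.
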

\begin{proof}
Directly substitute the empirical measures $\mu^N=\tfrac1N\sum_{j=1}^N\delta_{u(j)}, nu^M=\tfrac1M\sum_{k=1}^M\delta_{w(k)}$ into \eqref{eq:att_pushforward}--\eqref{eq:tilted_measure_p}.
\end{proof}

While Definition~\ref{def:attn_measures} introduces the single-head attention operator, modern transformer architectures typically employ Multi-Head Attention (MHA) \citep{vaswani2017attention} to jointly attend to information from different representation subspaces. We can naturally extend the attention on measures to the multi-head setting.

\begin{definition}[Multi-Head Attention on Measures]\label{def:mha_measures}
Let $H\in\bbN$ be the number of attention heads. For each head $h\in\{1,\dots,H\}$, let $Q_h\in\bbR^{d_k\times d_u}$, $K_h\in\bbR^{d_k\times d_w}$, and $V_h\in\bbR^{d_V\times d_w}$ be the specific weight matrices. Each head induces a single-head pushforward map $\fA_h:\bbR^{d_u}\times\cP_+(\bbR^{d_w})\to\bbR^{d_V}$ given strictly by \eqref{eq:pushforward_fA}--\eqref{eq:tilted_measure_p}. Given an output projection matrix $W_O \in \bbR^{d_u \times (H d_V)}$, the multi-head attention map $\fA_{\mathrm{MHA}}:\bbR^{d_u}\times \cP_+(\bbR^{d_w})\to \bbR^{d_u}$ is defined by concatenating the head outputs and applying the linear projection:
\begin{equation}
    \fA_{\mathrm{MHA}}(u;\pp) = W_O \begin{bmatrix} \fA_1(u;\pp) \\ \vdots \\ \fA_H(u;\pp) \end{bmatrix}.
\end{equation}
The multi-head attention operator $\sA_{\mathrm{MHA}}:\cP(\bbR^{d_u})\times \cP_+(\bbR^{d_w})\to \cP(\bbR^{d_u})$ is then defined via the pushforward operation:
\begin{equation}
    \sA_{\mathrm{MHA}}(\mu,\pp) = \fA_{\mathrm{MHA}}(\placeholder;\pp)_\sharp\mu.
\end{equation}
\end{definition}

In practice, the sequence-based multi-head attention is combined with other layers to form a transformer block. We define the transformer block $\sT$ on measures as Definition~\ref{def:trans_block_measure}. Note that the output projection $W_O$ in MHA perfectly maps the attention output back to $\bbR^{d_u}$, which dimensionally aligns the subsequent residual connection.

\begin{definition}[Transformer Block on Measures]\label{def:trans_block_measure}
Based on the multi-head attention on measures $\sA_{\mathrm{MHA}}$ (Definition~\ref{def:mha_measures}), the transformer block $\sT: \cP(\bbR^{d_u})\times \cP_+(\bbR^{d_w})\to \cP(\bbR^{d_u})$ is defined via a pushforward operation under a transport map $\fT$: we define
\begin{equation}
    \sT(\mu,\pp) = \fT(\placeholder; \pp)_\sharp\mu.
\end{equation}
The map $\fT:\bbR^{d_u}\times\cP_+(\bbR^{d_w})\to \bbR^{d_u}$ acting on the inputs $u\in\bbR^{d_u}$ and $\pp\in\cP_+(\bbR^{d_w})$ is defined as the composition of the maps:
\begin{subequations}
\begin{align}
    u_1 &= \fFLN\bigl(u +\fA_{\mathrm{MHA}}(u;\pp)\bigr),\\
    u_2 &= \fFLN\bigl(u_1 + \fFNN(u_1)\bigr),\\
    \fT(u,\pp) &= u_2.
\end{align}
\end{subequations}
The map $\fFLN:\bbR^{d_u}\to \bbR^{d_u}$ defines layer normalization and is such that
\begin{equation}
\label{eq:LN}
\Bigl(\fFLN({u};\gamma,\beta)\Bigr)_k = \gamma_k\cdot\frac{u_k - m(u)}{\sqrt{\sigma^2(u)+\epsilon}} +\beta_k,
\end{equation}
for $k=1,\dots,d_u$, any $u \in \bbR^{d_u}$, where the subscript notation $(\placeholder)_k$ is used to denote the $k$'th entry of the vector. In equation \eqref{eq:LN}, $\epsilon\in\bbR^+$ is a fixed parameter, $\gamma_k,\beta_k\in\bbR$ are learnable parameters and $m,\sigma$ are defined as
\begin{equation}
\label{eq:LN_meanvar}
\begin{aligned}
    m(u) = \frac1{d_u}\sum_{k=1}^{d_u}u_k,\qquad
    \sigma^2(u) = \frac1{d_u}\sum_{k=1}^{d_u}\bigl(u_k-m(u) \bigr)^2 ,
\end{aligned}
\end{equation}
for any $u\in \bbR^{d_u}.$ The operator $\fFNN: \bbR^{d_u} \to \bbR^{d_u}$ is defined such that
\begin{equation}
\label{eq:NN}
    \fFNN(u;W_1,W_2,b_1,b_2) = W_2f\bigl(W_1u+b_1\bigr) + b_2,
\end{equation}
for any $u \in \bbR^{d_u}$, where $W_1,W_2\in\bbR^{d_u\times d_u}$ and $b_1,b_2 \in \bbR^{d_u}$ are learnable parameters and where $f$ is a nonlinear activation function. 
\end{definition}

In our notation we distinguish a self-attention block $\sT^{\mathrm{s}}:\cP(\bbR^{d_u})\to \cP(\bbR^{d_u})$, $\sT^\mathrm{s}(\mu) =\sT(\mu,\mu)$, and a cross-attention block $\sT^{\mathrm{c}}: \cP(\bbR^{d_u})\times \cP_+(\bbR^{d_w})\to \cP(\bbR^{d_u})$, $\sT^{\mathrm{c}}(\mu, \pp) = \sT(\mu,\pp)$, built from the same transformer map $\fT$ in Definition~\ref{def:trans_block_measure}. The transformer measure neural mapping (TMNM) \citep{bach_learning_2026}, which is an extension of the set transformer \citep{lee2019set}, maps a probability measure into a fixed-dimension feature vector (or a matrix). The TMNM $\sF^\mathrm{TMNM}:\cP(\bbR^{d_w})\to\bbR^{N_s\times d_u}$ is given by:
\begin{equation}\label{eq:TMNM}
    \sF^\mathrm{TMNM}(\mu) = \sF^\mathrm{Cat}(\placeholder)\circ\sT^{\mathrm{s}}_{d,N_d+N_e}(\placeholder)\circ\cdots\circ\sT^{\mathrm{s}}_{d,1+N_e}(\placeholder)\circ \sT^\mathrm{c}(\pp^{N_s},\placeholder) \circ \sT^{\mathrm{s}}_{e,N_e}(\placeholder)\circ\cdots \circ\sT^{\mathrm{s}}_{e,2}(\placeholder)\circ\sT^{\mathrm{s}}_{e,1}(\mu),
\end{equation}
where 
\begin{enumerate}
    \item $\sF^\mathrm{Cat}$ maps an empirical measure supported on $N$ points in $\bbR^{d}$ to the matrix in $\bbR^{N\times d}$ obtained by stacking the support points as rows.
    \item In the subscript $(k,\ell)$ of the self-attention blocks $\sT^{\mathrm{s}}_{k,\ell}$ in \eqref{eq:TMNM}, the index $k\in\{e,d\}$ indicates whether the block belongs to the encoder $(k=e)$ or the decoder $(k=d)$. For encoder blocks we use $\ell=1,\ldots,N_e$, and for decoder blocks we use $\ell=1+N_e,\ldots,N_e+N_d$.
    \item The cross-attention block $\sT^\mathrm{c}$ links encoder and decoder. Its first argument $\pp^{N_s}\in\cP(\bbR^{d_u})$ is an empirical measure supported on $N_s$ learnable points in $\bbR^{d_u}$, where $N_s$ is a fixed hyperparameter that determines the size of the output feature matrix.
    \item The trainable parameters consist of all parameters in the transformer blocks together with the $N_s$ learnable support points defining $\pp^{N_s}$.
\end{enumerate}

\section{Auxiliary Results}\label{app:lemmas}

To contextualize the lemmas presented in this section, we briefly recall the augmented state space formulation introduced in the main text. Note that $\cP_k(\bbR^d)$ denotes the space of probability measures on $\bbR^d$ with finite $k$-th moments Consider $\Omega := \cP_1(\bbR^{d_v}) \times \bbR^{d_v}$, equipped with the metric
\begin{equation}
    \mathrm{D}_\Omega(Z,Z') = W_1(q,q') + \|v-v'\|,
\end{equation}
for any $Z = (q,v)$ and $Z' = (q',v')$ in $\Omega$. Within this metric space, we investigate the augmented process, defined as
\begin{equation}
    Z_j := \bigl(\pi_j, \, v_j^\dag\bigr),
\end{equation}
where $\pi_j = \Law\bigl(v_j^\dag\mid Y_j^\dag\bigr)$ is the true filtering distribution. We denote its law as $\mu_j := \Law(Z_j) \in \cP_1(\Omega)$.

Based on this formulation, we introduce auxiliary lemmas regarding the properties of the energy score and the convergence of expected sequence scores.

\subsection{Lipschitz Continuity of the Energy Score}

We first provide a result on the Lipschitz continuity of the energy score. We recall the definition of the energy score (Definition~\ref{def:energy_score}).

\begin{lemma}[Lipschitz Continuity of the Energy Score with $\beta=1$]\label{lem:ES1_lipschitz}
The energy score (Definition~\ref{def:energy_score}) with $\beta=1$ is Lipschitz continuous on $\Omega$ with a constant $C_\sS = 2$. That is, for $\sS = \sS^\mathrm{ES}_1$, 
\begin{equation}
    |\sS(Z)-\sS(Z')|\leq 2 \mathrm{D}_\Omega(Z,Z').
\end{equation}
\end{lemma}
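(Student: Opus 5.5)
We split the difference $\sS(Z)-\sS(Z')$ with the triangle inequality through the intermediate point $(q',v)$:
\[
|\sS(q,v)-\sS(q',v')|\le |\sS(q,v)-\sS(q',v)|+|\sS(q',v)-\sS(q',v')|.
\]
The second term changes only the realization. Since $x\mapsto\|x-v\|$ is $1$-Lipschitz in $v$ for each $x$, we have $\bigl|\bbE^{x\sim q'}\|x-v\|-\bbE^{x\sim q'}\|x-v'\|\bigr|\le\|v-v'\|$. The self-interaction term $\tfrac12\bbE\|x-x'\|$ does not depend on $v$ and cancels. So the second term is at most $\|v-v'\|$.

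The first term changes only the forecast measure, and we treat it with Kantorovich--Rubinstein duality for $W_1$. Write
\[
\sS(q,v)=\bbE^{x\sim q}\|x-v\|-\tfrac12\bbE^{x,x'\sim q}\|x-x'\|.
\]
For the linear part, $f_v(x)=\|x-v\|$ is $1$-Lipschitz, so $\bigl|\int f_v\,\rd(q-q')\bigr|\le W_1(q,q')$.

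For the quadratic part, we telescope:
\[
\bbE_{q\otimes q}\|x-x'\|-\bbE_{q'\otimes q'}\|x-x'\|=\int g_q\,\rd(q-q')+\int g_{q'}\,\rd(q-q'),
\]
where $g_\nu(x):=\int\|x-x'\|\,\nu(\rd x')$. This identity uses the symmetry of $\|x-x'\|$ to rewrite the cross term $\bbE_{q\otimes q'}$. Each $g_\nu$ is $1$-Lipschitz, so the bracket is bounded by $2W_1(q,q')$. After the factor $\tfrac12$, the quadratic part contributes at most $W_1(q,q')$.

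Collecting the three contributions gives
\[
|\sS(Z)-\sS(Z')|\le 2W_1(q,q')+\|v-v'\|\le 2\,\mathrm{D}_\Omega(Z,Z').
\]

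Finiteness of all integrals follows from $q,q'\in\cP_1$. This is exactly why $\beta=1$ and $\Omega=\cP_1\times\bbR^{d_v}$ were chosen, since it ensures $f_v$ and $g_\nu$ are integrable and the duality pairing is legitimate.

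The only mildly delicate step is the symmetric telescoping of the double integral. One must check that the cross term $\bbE_{q\otimes q'}$ appears with matching signs and that Fubini applies. Both follow from first-moment finiteness, since $\|x-x'\|\le\|x\|+\|x'\|$.
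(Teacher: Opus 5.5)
Your proof is correct and follows the paper's argument: the same split through $(q',v)$, Kantorovich--Rubinstein duality for $\bbE^{x\sim q}\|x-v\|$, and cancellation of the interaction term to get $\|v-v'\|$ when only the realization changes. The one difference is the interaction term: the paper takes two independent draws from an optimal coupling of $q$ and $q'$ and uses $\bigl|\|X-X'\|-\|Y-Y'\|\bigr|\le\|X-Y\|+\|X'-Y'\|$, whereas you telescope through the cross term $\bbE_{q\otimes q'}$ and apply duality to the $1$-Lipschitz potentials $g_q,g_{q'}$; both give $2W_1(q,q')$, and your version relies on duality alone rather than on the existence of an optimal coupling.
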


\begin{proof}
    Let $Z=(q,v)$ and $Z'=(q',v')$ be elements in $\Omega$. By the triangle inequality, the difference in the scoring rule evaluates to
    \begin{equation}
        |\sS^{\mathrm{ES}}_1(Z) - \sS^{\mathrm{ES}}_1(Z')| \le |\sS^{\mathrm{ES}}_1(q,v) - \sS^{\mathrm{ES}}_1(q',v)| + |\sS^{\mathrm{ES}}_1(q',v) - \sS^{\mathrm{ES}}_1(q',v')|.
    \end{equation}
    For the first term, expanding the definition of the energy score yields
    \begin{multline}
        |\sS^{\mathrm{ES}}_1(q,v) - \sS^{\mathrm{ES}}_1(q',v)| \le \left| \mathbb{E}^{x\sim q}[\|x-v\|] - \mathbb{E}^{y\sim q'}[\|y-v\|] \right| \\+ \frac{1}{2} \left| \mathbb{E}^{x,x'\sim q}[\|x-x'\|] - \mathbb{E}^{y,y'\sim q'}[\|y-y'\|] \right|.
    \end{multline}
    The function $f(x) = \|x-v\|$ is $1$-Lipschitz with respect to the Euclidean norm. By the Kantorovich--Rubinstein duality theorem, it follows that
    \begin{equation}
        \left| \mathbb{E}^{x\sim q}[\|x-v\|] - \mathbb{E}^{y\sim q'}[\|y-v\|] \right| \le W_1(q, q').
    \end{equation}
    To bound the interaction term, let $\pp \in \Pi(q, q')$ be an optimal coupling such that $\mathbb{E}^{(x,y)\sim\pp}[\|x-y\|] = W_1(q, q')$. Let $\pp \otimes \pp$ denote the joint distribution of two independent draws from $\pp$, where $\otimes$ denotes the product of measures. Let $(X, Y)$ and $(X', Y')$ be two such independent pairs of random variables distributed according to $\pp$. Under this construction, the marginal distributions satisfy $(X, X') \sim q \otimes q$ and $(Y, Y') \sim q' \otimes q'$. The triangle inequality guarantees $\bigl| \|X-X'\| - \|Y-Y'\| \bigr| \le \|X-Y\| + \|X'-Y'\|$. It follows that the difference of the interaction expectations is bounded by
    \begin{equation}
    \begin{aligned}
        \left| \mathbb{E}^{x,x'\sim q}[\|x-x'\|] - \mathbb{E}^{y,y'\sim q'}[\|y-y'\|] \right|
        &= \left| \mathbb{E}^{(X,Y),(X',Y')\sim \pp\otimes\pp} \bigl[ \|X-X'\| - \|Y-Y'\| \bigr] \right| \\
        &\le \mathbb{E}^{(X,Y),(X',Y')\sim \pp\otimes\pp} \bigl[ \|X-Y\| + \|X'-Y'\| \bigr] \\
        &= 2\mathbb{E}^{(X,Y)\sim \pp} \bigl[ \|X-Y\| \bigr]= 2W_1(q,q').
    \end{aligned}
    \end{equation}
    Combining these two bounds, we obtain
    \begin{equation}
        |\sS^{\mathrm{ES}}_1(q,v) - \sS^{\mathrm{ES}}_1(q',v)| \le W_1(q, q') + \frac{1}{2}\bigl(2 W_1(q, q')\bigr) = 2 W_1(q, q').
    \end{equation}
    For the second term, the pairwise distance expectations under $q'$ cancel out, leaving
    \begin{equation}
        |\sS^{\mathrm{ES}}_1(q',v) - \sS^{\mathrm{ES}}_1(q',v')| = \left| \mathbb{E}^{x\sim q'}[\|x-v\|] - \mathbb{E}^{x\sim q'}[\|x-v'\|] \right| \le \mathbb{E}^{x\sim q'}\bigl[ \|v-v'\| \bigr] = \|v-v'\|.
    \end{equation}
    Therefore,
    \begin{equation}
        |\sS^{\mathrm{ES}}_1(Z) - \sS^{\mathrm{ES}}_1(Z')| \le 2 W_1(q,q') + \|v-v'\| \leq 2 \mathrm{D}_\Omega(Z,Z').
    \end{equation}
\end{proof}

\subsection{Convergence Properties of Expected Scores}

Based on the augmented space formulation, we introduce two lemmas below to analytically support the convergence propositions established in the main text.

\begin{lemma}[Change of Variables for Expected Score]\label{lem:variable_change_augmented}
Let $\mu_j = \Law(Z_j)\in\cP_1(\Omega)$ for $j\in\bbZ_+$. Then for any measurable scoring rule $\sS:\Omega\to\bbR_+$, we have
\begin{equation}\label{eq:variable_change}
    \bbE^{(v_j^\dag,Y_j^\dag)}
    \left[
        \sS(\pi_j,v_j^\dag)
    \right]
    =
    \bbE^{(q,v)\sim\mu_j}
    \left[
        \sS(q,v)
    \right].
\end{equation}
\end{lemma}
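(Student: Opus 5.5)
This is the abstract change-of-variables (law of the unconscious statistician) formula applied to the random element $Z_j=(\pi_j,v_j^\dag)$ of $\Omega$. The plan is to show that $Z_j$ is a measurable map from the underlying probability space into $(\Omega,\mathcal{B}(\Omega))$, identify $\mu_j$ as its pushforward, and then apply the pushforward integral identity to the nonnegative measurable function $\sS$.

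First I would fix the underlying probability space. This is the space carrying $v_0^\dag$, $\{\xi_j^\dag\}$ and $\{\eta_j^\dag\}$, or equivalently the space on which $(v_j^\dag,Y_j^\dag)$ has its joint law. The quantity $\bbE^{(v_j^\dag,Y_j^\dag)}$ is expectation under that joint law.

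By Definition~\ref{def:filtering_distribution}, $\pi_j$ is a regular conditional distribution of $v_j^\dag$ given $Y_j^\dag$. It therefore has the form $\pi_j=\kappa_j(Y_j^\dag)$ for a probability kernel $\kappa_j$ from $\bbR^{jd_y}$ to $\bbR^{d_v}$. One could write $\kappa_j$ explicitly through the recursion \eqref{eq:q_update0} and the densities \eqref{eq:operator_P}--\eqref{eq:operator_B}.

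Measurability is the step I expect to need the most care. Since $\kappa_j$ is a kernel, the map $y\mapsto \int f\,\rd\kappa_j(y)$ is measurable for every bounded measurable $f$. Hence $y\mapsto\kappa_j(y)$ is measurable into $\cP(\bbR^{d_v})$ with the weak (Borel) $\sigma$-algebra.

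To land in $\cP_1$, I would use that $\bbE\|v_j^\dag\|<\infty$. This gives $\int\|x\|\,\pi_j(\rd x)<\infty$ almost surely, so $\pi_j\in\cP_1(\bbR^{d_v})$ after modifying it on a null set. Finite moments hold because of the Gaussian initialization and additive noise, given moment control on $\Psi$, which I take as implicit in the assumption $\mu_j\in\cP_1(\Omega)$.

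On $\cP_1(\bbR^{d_v})$ the Borel $\sigma$-algebra of $W_1$ coincides with the trace of the weak-topology Borel $\sigma$-algebra. Both are generated by the evaluation maps $q\mapsto\int f\,\rd q$ for $f$ bounded continuous, because $\cP_1$ with $W_1$ is Polish. It follows that $Y_j^\dag\mapsto\pi_j$ is measurable into $(\cP_1,W_1)$. Then $Z_j=(\kappa_j(Y_j^\dag),v_j^\dag)$ is a measurable function of $(v_j^\dag,Y_j^\dag)$ into the product space $\Omega$. Because both factors are separable, the product Borel $\sigma$-algebra equals $\mathcal{B}(\Omega)$ for the metric $\mathrm{D}_\Omega$.

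With measurability established, $\mu_j=\Law(Z_j)=(Z_j)_\sharp\Law(v_j^\dag,Y_j^\dag)$. The pushforward identity $\int g\circ T\,\rd P=\int g\,\rd(T_\sharp P)$ applies to the nonnegative measurable $g=\sS$. It is valid in $[0,\infty]$ by the usual argument: check it on indicators, extend to simple functions, then pass to the limit by monotone convergence. This yields \eqref{eq:variable_change} with no integrability hypothesis, since $\sS\ge0$.
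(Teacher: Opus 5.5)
Your proposal is correct and follows the same route as the paper: both write $\pi_j$ as a measurable function of $Y_j^\dagger$, so that $Z_j$ is a function of $(v_j^\dagger,Y_j^\dagger)$ with law $\mu_j$, and then apply the law of the unconscious statistician to the nonnegative measurable $\sS$. The only difference is where that function comes from: the paper takes it to be the explicit composed recursion $\mathsf{F}_j$ built from $\sB$, $\sQ$ and $\sP$ and simply asserts that it is measurable, whereas you take it from the regular-conditional-distribution kernel and explicitly verify measurability into $(\cP_1,W_1)$ and into the product Borel $\sigma$-algebra, details the paper does not check.
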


\begin{proof}
It suffices to note that there exists a measurable map
\begin{equation}
\mathsf{F}_j:\prod_{k=1}^j\bbR^{d_y}\to\cP_1(\bbR^{d_v}),\qquad \text{s.t. }\pi_j=\mathsf{F}_j(Y_j^\dag).
\end{equation}
Indeed, since the exact filter is initialized at $\pi_0$ and updated recursively by
\begin{equation}
    \pi_{\ell+1}=\sB(\sQ\sP\pi_\ell;y_{\ell+1}^\dag),
    \qquad \ell\in\bbZ_+,
\end{equation}
the filtering distribution $\pi_j$ is deterministically given by the observation history sequence $Y_j^\dag = (y_1^\dag, \ldots, y_j^\dag)$. Therefore, we can write $\mathsf{F}_j$ explicitly as the composition
\begin{equation}
\mathsf{F}_j(y_1^\dag,\ldots,y_j^\dag)
=
\sB\Bigl(
    \sQ\sP\,\sB\bigl(
        \cdots \sB(\sQ\sP\pi_0;y_1^\dag)\cdots
    ;y_{j-1}^\dag\bigr)
;y_j^\dag\Bigr).
\end{equation}
Because the augmented state $Z_j = (\pi_j, v_j^\dag) = \bigl(\mathsf{F}_j(Y_j^\dag), v_j^\dag\bigr)$ is completely determined by the random variables $(v_j^\dag, Y_j^\dag)$, taking the expectation of $\sS(\pi_j, v_j^\dag)$ with respect to the joint law of the physical state and observations is strictly equivalent to taking the expectation of $\sS(q, v)$ with respect to the induced push-forward measure $\mu_j = \Law(Z_j)$ by the law of the unconscious statistician.
\end{proof}

\begin{lemma}[Invariant Integrability and Integral Convergence]\label{lem:invariant_integrability_and_integral_convergence}
Let $\{\pp_j\}_{j\in\bbZ_+}$ be a sequence of probability measures in $\cP_1(\Omega)$ that converges to a limiting measure $\pp_\infty \in \cP(\Omega)$ in the Wasserstein-1 distance, i.e., $\lim_{j\to\infty} W_1(\pp_j, \pp_\infty) = 0$. If a scoring rule $\sS:\Omega\to\bbR_+$ is non-negative and Lipschitz continuous with constant $C_\sS$ with respect to $\mathrm{D}_\Omega$, then $\pp_\infty \in \cP_1(\Omega)$, the scoring rule $\sS$ is integrable with respect to both $\pp_j$ for $j\in\bbZ_+$ and $\pp_\infty$, and we have the limit:
\begin{equation}
    \lim_{j\to\infty}
    \int_{\Omega} \sS(Z)\,\pp_j(\mathrm{d}Z)
    =
    \int_{\Omega} \sS(Z)\,\pp_\infty(\mathrm{d}Z).
\end{equation}
\end{lemma}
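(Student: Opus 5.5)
The argument has three steps: transfer first moments along the $W_1$-convergent sequence, then show integrability of $\sS$ via a linear growth bound, and finally pass to the limit using Kantorovich--Rubinstein duality.

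\textbf{Step 1: $\pp_\infty\in\cP_1(\Omega)$.} Fix a reference point $Z_0=(\delta_0,0)\in\Omega$. The map $Z\mapsto \mathrm{D}_\Omega(Z,Z_0)$ is $1$-Lipschitz on $(\Omega,\mathrm{D}_\Omega)$. The quantity $W_1(\pp_j,\pp_\infty)$ is defined through couplings, and it is finite for $j$ large. Choose a coupling $\gamma$ of $(\pp_j,\pp_\infty)$ with finite cost. By the triangle inequality,
\[
\int \mathrm{D}_\Omega(Z',Z_0)\,\pp_\infty(\rd Z')
\le \int \mathrm{D}_\Omega(Z,Z_0)\,\pp_j(\rd Z) + \int \mathrm{D}_\Omega(Z,Z')\,\gamma(\rd Z,\rd Z') .
\]
The first term on the right is finite because $\pp_j\in\cP_1(\Omega)$, and the second is finite by the choice of $\gamma$. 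Hence $\pp_\infty$ has a finite first moment. If the paper intends $W_1$ only on $\cP_1$, this step is immediate instead. Either way, I will record the convention explicitly.

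\textbf{Step 2: integrability of $\sS$.} Since $\sS$ is non-negative and $C_\sS$-Lipschitz, it satisfies the linear growth bound
\[
0\le \sS(Z)\le \sS(Z_0)+C_\sS\,\mathrm{D}_\Omega(Z,Z_0).
\]
Here $\sS(Z_0)$ is a finite real number. Integrating this bound against $\pp_j$ and against $\pp_\infty$, and using the first moments from Step 1, shows that $\sS$ is integrable with respect to every $\pp_j$ and with respect to $\pp_\infty$.

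\textbf{Step 3: convergence of the integrals.} Let $\gamma_j$ be an (optimal or near-optimal) coupling of $\pp_j$ and $\pp_\infty$. Then
\[
\Bigl|\int \sS\,\rd\pp_j-\int \sS\,\rd\pp_\infty\Bigr|
= \Bigl|\int \bigl(\sS(Z)-\sS(Z')\bigr)\,\gamma_j(\rd Z,\rd Z')\Bigr|
\le C_\sS \int \mathrm{D}_\Omega(Z,Z')\,\gamma_j(\rd Z,\rd Z')
\le C_\sS\bigl(W_1(\pp_j,\pp_\infty)+\epsilon_j\bigr),
\]
with $\epsilon_j\to0$. Equivalently, one can apply Kantorovich--Rubinstein duality directly to $\sS/C_\sS$, which is $1$-Lipschitz. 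Letting $j\to\infty$ gives the claimed limit.

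\textbf{Main obstacle.} The only delicate point is that $(\Omega,\mathrm{D}_\Omega)$ is not Polish in the obvious sense: $\cP_1(\bbR^{d_v})$ with $W_1$ is complete and separable, so the product with $\bbR^{d_v}$ is Polish after all. If needed, I would check this to justify the existence of optimal couplings. However, the near-optimal coupling argument above avoids needing existence at all, and it also avoids needing duality on a general metric space. I therefore plan to use near-optimal couplings throughout so that the argument does not depend on these measure-theoretic subtleties.
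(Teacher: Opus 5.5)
Your proposal is correct and follows essentially the same three steps as the paper: a finite first moment for $\pp_\infty$ via the triangle inequality, integrability from the linear growth bound $\sS(Z)\le\sS(Z_0)+C_\sS\,\mathrm{D}_\Omega(Z,Z_0)$, and convergence from $\bigl|\int\sS\,\rd\pp_j-\int\sS\,\rd\pp_\infty\bigr|\le C_\sS W_1(\pp_j,\pp_\infty)$. Where the paper invokes the $W_1$ triangle inequality and Kantorovich--Rubinstein duality, you use (near-optimal) couplings directly; this is the elementary direction of the same estimates and changes nothing substantive.
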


\begin{proof}

First, we show that $\pp_\infty \in \cP_1(\Omega)$. Let $\delta_{Z_0}$ be the Dirac measure at an arbitrary reference point $Z_0 \in \Omega$. By the triangle inequality of the Wasserstein-1 metric, we have
\begin{equation}
    W_1(\pp_\infty, \delta_{Z_0}) \le W_1(\pp_\infty, \pp_j) + W_1(\pp_j, \delta_{Z_0}).
\end{equation}
Since $\lim_{j\to\infty} W_1(\pp_j, \pp_\infty) = 0$, the term $W_1(\pp_\infty, \pp_j)$ is finite for sufficiently large $j$. Additionally, the assumption $\pp_j \in \cP_1(\Omega)$ implies $W_1(\pp_j, \delta_{Z_0}) = \int_{\Omega} \mathrm{D}_\Omega(Z, Z_0) \,\pp_j(\mathrm{d}Z) < \infty$. Thus, $W_1(\pp_\infty, \delta_{Z_0}) < \infty$, establishing that $\pp_\infty \in \cP_1(\Omega)$.

Next, we establish the integrability of the scoring rule $\sS$. Since $\sS$ is $C_\sS$-Lipschitz continuous, for any $Z \in \Omega$, we have $\sS(Z) \le \sS(Z_0) + C_\sS \mathrm{D}_\Omega(Z, Z_0)$. Integrating both sides with respect to $\pp_j$ yields
\begin{equation}
    \int_{\Omega} \sS(Z) \,\pp_j(\mathrm{d}Z) \le \sS(Z_0) + C_\sS \int_{\Omega} \mathrm{D}_\Omega(Z, Z_0) \,\pp_j(\mathrm{d}Z).
\end{equation}
Because $\pp_j \in \cP_1(\Omega)$, its first moment is finite, meaning the integral on the right-hand side is bounded. Thus, $\sS$ is integrable with respect to $\pp_j$. By the exact same logic, since we just established $\pp_\infty \in \cP_1(\Omega)$, $\sS$ is also integrable with respect to $\pp_\infty$.

Finally, we prove the integral convergence. Since $\sS$ is absolutely integrable and $C_\sS$-Lipschitz, we directly apply the Kantorovich--Rubinstein duality to the scaled function $\sS / C_\sS$:
\begin{equation}
    \left| \int_{\Omega} \sS(Z) \,\pp_j(\mathrm{d}Z) - \int_{\Omega} \sS(Z) \,\pp_\infty(\mathrm{d}Z) \right| \le C_\sS W_1(\pp_j, \pp_\infty).
\end{equation}
Taking the limit as $j \to \infty$, the assumption $\lim_{j\to\infty} W_1(\pp_j, \pp_\infty) = 0$ ensures that the right-hand side vanishes. Expressing $Z \in \Omega$ as $(q,v)$, the desired convergence result follows immediately.
\end{proof}

\subsection{Ergodicity of the Dynamical Models and the Augmented Process}\label{app:ergodicity}

In Assumption~\ref{da:ergodicity_on_state_filter}(ii), we assume that the exact augmented state--filter process $\{Z_j\}_{j\in\bbZ_+}$ is an ergodic Markov chain admitting a unique invariant measure. In this subsection, we outline theoretical foundations that justify this assumption. The first part of this analysis, 
in~\ref{app:ergodicity_state}, establishes that the underlying physical state process $\{v_j^\dag\}_{j \in \bbZ_+}$ is ergodic, which serves as a building block in our approach to proving the ergodicity of the joint state--filter process
in~\ref{app:ergodicity_statef}.

\subsubsection{Ergodicity of the Physical State Process}\label{app:ergodicity_state}
Recall that our generic dynamical model for the unobserved physical state is given by
\begin{equation}\label{eq:app_state_dyn}
    v_{j+1}^\dag = \Psi(v_j^\dag) + \xi_j^\dag, \qquad j \in \bbZ_+,
\end{equation}
where $\xi_j^\dag \sim \normal(0, \Sigma)$ are independent and identically distributed (i.i.d.) system noise vectors and $\Sigma$ is an invertible covariance matrix. This defines a time-homogeneous Markov process on $\bbR^{d_v}$ for every initial distribution of $v_0^\dag$. Here we sketch a methodology for establishing
ergodicity of the resulting process, drawing on the approach described in \cite{meyn2012markov,hairer2016convergence}. This approach combines an assumed Lyapunov structure induced by the deterministic part of the dynamics, with a probabilistic analysis
confined to a compact subset of phase space, using the controllability implied by the noise having invertible covariance matrix.

To formalize the probabilistic setup, let $q \in \cP(\bbR^{d_v})$ be an initial distribution for the state $v_0^\dag$.
We denote by $\bbP_{q}$ the probability law of the entire state trajectory (the probability measure on the path) initialized with $v_0^\dag \sim q$, i.e.
\begin{equation}
\bbP_q := \Law\left(\{\vd_j\}_{j\in\bbZ_+}\mid \vd_0\sim q\right) \in\cP\left((\bbR^{d_v})^{\bbZ_+}\right).    
\end{equation}
If the initial state is a deterministic point, i.e., $q = \delta_{v_0^\dag}$, we write $\bbP_{v_0^\dag}$. The corresponding expectation operator over these paths is denoted by $\bbE^{\bbP_q}$. Recall $\sP$ from \eqref{eq:operator_P}, the transition kernel of the state process; hence the marginal distribution of $v_j^\dag$ is $\sP^j q$.

For the dynamical models considered in this paper, one can prove (see below) that a, possibly weighted, total variation (TV) distance between any two paths decays exponentially fast. Since the weighted TV distances bound the standard unweighted TV distance $\|\placeholder\|_{\mathrm{TV}}$ we obtain that $\|\sP^n q - \sP^n q'\|_{\mathrm{TV}} \to 0$ with exponential rate as $n \to \infty$, for any initial probability measures $q, q'$. Furthermore, there exists a unique invariant measure $q_\infty$, implying $\|\sP^n q - q_\infty\|_{\mathrm{TV}} \to 0$. Under our strict positivity
assumption on covariance $\Sigma$, $q_\infty$ is absolutely continuous with respect to Lebesgue measure. Indeed, due to the specific form of the kernel (and in particular the fact that $\Sigma > 0$), $\sP q$ is absolutely continuous with respect to Lebesgue measure for any probability measure $q \in \cP(\bbR^{d_v})$, so this is true in particular the invariant measure $q_{\infty} = \sP q_{\infty}$.

Existence and uniqueness, along with exponential convergence, 
can be established by using the  methodology pioneered by \cite{nummelin2004general}.  In particular it rests on an
application of Harris's theorem \cite[Theorem 3.6]{hairer2016convergence}; 
we sketch the proof in our context. Broadly, it needs to be shown that the dynamics admit a Lyapunov function and that the level sets of this Lyapunov function are so-called ``small sets'' \citep{meyn2012markov}.
A Lyapunov function is a nonnegative function $V$ so that
 \begin{equation}\label{eq:app_lyapunov}
    \E(V(v_{j+1}^\dag) | v_j^\dag=x ) \le \alpha V(x) + r,
\qquad \text{with $0 < \alpha < 1$ and $r > 0$}.
\end{equation}
Taking $V(v^\dag) = \|v^\dag\|^2$ as a Lyapunov function for all the models considered in this paper except the doubling-angle map, which we put aside for the moment, this estimate follows from \emph{a priori} estimates shown, for instance, in \citet[Lemma~9]{oljaca_almost_sure_error_2017} for a class of models which includes the Lorenz~'63 (Subsection~\ref{ssec:L63}) and Lorenz~'96 (Subsection~\ref{ssec:L96});
see, for example,~\cite{mattingly2002ergodicity,law_accuracy_lorenz96_2015}.
For the doubling-angle model (Subsection~\ref{ssec:doublin_angle_model}), a constant Lyapunov function suffices as the entire state space (the compact 1D~torus) turns out to be small.
A set $A$ of the state space is small if there is $0 \leq \alpha < 1$ so that 
\begin{equation}\label{eq:smallset}
\|\sP(\placeholder,x) - \sP(\placeholder,y)\|_{\mathrm{TV}} 
\leq 2 ( 1 - \alpha)
\qquad \text{for all $x, y \in A$}.
\end{equation}
To prove that the level sets of our Lyapunov function are small sets, we use the Bretagnolle--Huber inequality:
\begin{equation}\label{eq:bretagnollehuber}
\|\sP(\placeholder,x) - \sP(\placeholder,y)\|_{\mathrm{TV}} 
\leq 2 \big[ 1 - \frac{1}{2}\exp\big(-\mathcal{D}_\text{KL}(\sP(\placeholder,x), \sP(\placeholder,y)) \big) \big].
\end{equation}
For a set to be small it is thus sufficient that the function $(x, y) \mapsto \mathcal{D}_\text{KL}(\sP(\placeholder,x), \sP(\placeholder,y))$ is bounded on that set.
Specialising to our setup we obtain, using the formula for the KL divergence
between Gaussians \cite[Chapter 11]{bach2024inverse},
\begin{equation}\label{eq:kulback}
\mathcal{D}_\text{KL}(\sP(\placeholder,x), \sP(\placeholder,y))
\leq \frac{1}{2}(\Psi(x) - \Psi(y))^\top \Sigma^{-1}(\Psi(x) - \Psi(y)), 
\end{equation}
which is clearly bounded on level sets of $V$.
We stress however that here we use the non-degeneracy of the noise in our models, i.e., the fact that the covariance matrix $\Sigma$ is full-rank. If the noise is degenerate, the statement of Harris's theorem is no longer true in general without compensating assumptions, and the analysis becomes more complicated -- see, for example, the analysis of the Lorenz '63 model with degenerate noise in \cite{mattingly2002ergodicity}.

A consequence of the exponential ergodicity \cite[Ch. 6, Theorem 1.8]{revuz_markov_1984} is that the algebra of tail events $\mathcal{A}$, and therefore also the algebra of invariant events, is $\bbP_q$-trivial for every initial probability $q$. This means that for any event $A \in \mathcal{A}$, we have either $\bbP_q(A) = 0$ for all $q$, or $\bbP_q(A) = 1$ for all $q$. 
This entails the following generalisation of the ergodic theorem: Let $\phi$ be a function integrable with respect to the invariant measure $q_\infty$. Then the time average satisfies
\begin{equation}\label{eq:app_ergodic_sum}
    \frac{1}{N} \sum_{j=1}^{N} \phi(v_j^\dag) \to \int \phi \,\mathrm{d}q_\infty,\qquad \bbP_q\text{-almost surely for any initial }q\in\cP(\bbR^{d_v}).
\end{equation}
To see this, note that the event where the limit superior equals the limit inferior ($\limsup = \liminf$) of the left-hand side of \eqref{eq:app_ergodic_sum} is an invariant event. By the ergodic theorem, this event occurs with $\bbP_{q_\infty} = 1$. Due to the triviality property, it also occurs with $\bbP_q = 1$ for any initial distribution $q$. 

With regards to the convergence of these averages in $L^1$ with respect to $\bbP_q$ for an arbitrary $q\in\cP(\bbR^{d_v})$, we require the condition that the random variables $\{\phi(v_j^\dag)\}_{j \ge 1}$ are uniformly integrable. This uniform integrability is satisfied if initialized at $q = q_\infty$, since the sequence $\{v_j^\dag\}$ then consists of identically distributed random variables.

\subsubsection{Ergodicity of the Joint State--Filter Process} \label{app:ergodicity_statef}
We now use the results from \ref{app:ergodicity_state}, concerning
ergodicity of the physical state process $\{v_j^\dag\}_{j \in \bbZ_+}$,
to establish ergodicity of the joint state--filter augmented process. We
consider the settings where the filter is either the true filtering distribution,
or some parameterized approximation. We employ the notation established in Subsection \ref{ssec:bridge_pop_empirical}. To this end, recall the definition of the augmented process $Z_j = (q_j, v_j^\dag) \in \Omega = \cP(\bbR^{d_v}) \times \bbR^{d_v}$ defined
by \eqref{eq:DA_setting} and \eqref{eq:q_update}. The transition from $j$ to $j+1$ of the process $\{Z_j\}_{j \in \bbZ_+}$ is determined by the autonomous evolution of the signal and the recursive update of the filter:
\begin{subequations} \label{eq:cite}
\begin{align}
    v_{j+1}^\dag &= \Psi(v_j^\dag) + \xi_j^\dag,\quad \xi_j^\dag\sim\normal(0,\Sigma), \label{eq:j10} \\
    y_{j+1}^\dag &= h(v_{j+1}^\dag) + \eta_{j+1}^\dag,\quad \eta_{j+1}^\dag\sim\normal(0,\Gamma), \label{eq:j20} \\
    q_{j+1} &= \sB_\theta\bigl(\sQ\sP q_j; y_{j+1}^\dag\bigr), \label{eq:j30}
\end{align}
\end{subequations}
If $\sB$ is realizable for some choice $\theta=\theta_B$ then this formulation
also includes the true filter. We make this realizability assumption simply to streamline the analysis that follows to apply to both the true filter and the approximate filter; it is not otherwise needed.
We fix the parameter $\theta$ and write $\mathsf{F}$ for the map:
\begin{equation}\label{eq:F-map}
    \mathsf{F} : \cP(\bbR^{d_v}) \times \bbR^{d_y} \to \cP(\bbR^{d_v}); \quad (q, y) \mapsto \sB_\theta\bigl(\sQ\sP q; y\bigr),
\end{equation}
which iterates the approximate filter. Using \eqref{eq:cite},
we could include the observation $y_{j+1}^\dag$ into the augmented state $Z_j$ as a third component, but doing so is unnecessary -- see \eqref{eq:q_update1}. To this end
we write \eqref{eq:cite} as
\begin{subequations} \label{eq:cite2}
\begin{align}
    v_{j+1}^\dag &= \Psi(v_j^\dag) + \xi_j^\dag,\quad \xi_j^\dag\sim\normal(0,\Sigma),\\
    q_{j+1} &= \mathsf{F}\bigl(q_j; h(v_{j+1}^\dag) + \eta_{j+1}^\dag\bigr).
\end{align}
\end{subequations}

As in the preceding subsection we assume that $\Sigma$ is an invertible covariance matrix. We establish an ergodicity result for the joint process $Z_j$ defined
by \eqref{eq:cite2}, relying on the state ergodicity (\ref{app:ergodicity_state}) and a stability assumption on the filter dynamics. As discussed, given an initial probability measure $q\in\cP(\bbR^{d_v})$, equations \eqref{eq:j10} and \eqref{eq:j20} define the path distribution $\bbP_{q}$ for the state--observation trajectory. Likewise, given $\mu\in\cP(\Omega)$, the full model \eqref{eq:j10}--\eqref{eq:j30} defines a trajectory distribution $\bbP_{\mu,\Omega}$ for $\{Z_j\}_{j\in\bbZ_+}$, i.e.,
\begin{equation}
\bbP_{\mu,\Omega} := \Law\left(\{Z_j\}_{j\in\bbZ_+}\mid Z_0\sim \mu\right) \in\cP\left((\Omega)^{\bbZ_+}\right).    
\end{equation}
The marginal distribution of the physical state process under $\bbP_{\mu,\Omega}$ is $\bbP_{q}$, where $q$ is the projection of $\mu$ onto the state space.

\begin{assumption}[Filter Stability]\label{ass:stability}
Let $\{y_j^\dag\}_{j \ge 1}$ be a sequence of observations generated under the path measure $\bbP_{q_\star}$. Given $q_0, q'_0 \in \cP(\bbR^{d_v})$ define
two sequences of filtering distributions by setting, for $j \in \mathbb{Z}^+$,
\begin{align*}
q_{j+1} &= \mathsf{F}(q_j, y_{j+1}^\dag),\\
q'_{j+1} &= \mathsf{F}(q'_j, y_{j+1}^\dag).
\end{align*}
Then there exists a metric $d$ on $\cP(\bbR^{d_v})$, and an initial physical state distribution $q_\star \in \cP(\bbR^{d_v})$, such that for any two initial filter conditions $q_0, q'_0 \in \cP(\bbR^{d_v})$, we have
\begin{equation}
    d(q_j, q'_j) \to 0 \qquad \text{$\bbP_{q_\star}$-almost surely as } j \to \infty.
\end{equation}
\end{assumption}
Stability of nonlinear filters with respect to misspecified initial distribution is a vast field, see~\cite{chigansky_intrinsic_stability_2011} for an overview and further references. Establishing stability, in the sense of Assumption~\ref{ass:stability}, for the approximate filters considered here, remains to be investigated further.

To measure convergence on the augmented space $\Omega$, we use the bounded metric
\begin{equation}\label{eq:app_D_metric}
    \hat{\mathrm{D}}_{\Omega}\bigl((q, v), (q', v')\bigr)
    :=
    \mathbf{1}_{v \neq v'} + \min\{d(q, q'),1\}.
\end{equation}

\begin{remark}[Regarding the metric $\hat{\mathrm{D}}_{\Omega}$ and physical marginals]\label{rmk:total_metric}
The term $\mathbf{1}_{v \neq v'}$ in Equation~\eqref{eq:app_D_metric} is the trivial metric, defined as $0$ if $v = v'$ and $1$ otherwise. Moreover, the Wasserstein metric evaluated with respect to the trivial metric coincides with the total variation distance. Therefore, if $\mu$ and $\mu'$ are joint distributions on $\Omega$, and if $q$ and $q'$ denote their respective marginals on the physical state space, then convergence of the joint process in $W_{\hat{\mathrm{D}}_{\Omega}}$ implies convergence of the physical marginals in total variation. More precisely,
\begin{equation}
    \|\bbP_{q}(v_j^\dag \in \placeholder) - \bbP_{q'}(v_j^\dag \in \placeholder)\|_{\mathrm{TV}}
    \le
    W_{\hat{\mathrm{D}}_{\Omega}}
    \Bigl(
        \bbP_{\mu,\Omega}(Z_j \in \placeholder),
        \bbP_{\mu',\Omega}(Z_j \in \placeholder)
    \Bigr).
\end{equation}
\end{remark}

\begin{theorem}\label{thm:ergodic_X}
Suppose that Assumption~\ref{ass:stability} holds, and that there exists an invariant probability measure $q_\infty$ for the state process $\{v_j^\dag\}_{j\in\bbZ_+}$ such that $\|\sP^j q - q_\infty\|_{\mathrm{TV}} \to 0$ as $j\to\infty$ for any probability $q\in\cP(\bbR^{d_v})$. Then,
for any two initial joint probability measures $\mu, \mu'$ on $\Omega$, we have
\begin{equation}\label{eq:j90}
    W_{\hat{\mathrm{D}}_{\Omega}}
    \Bigl(
        \bbP_{\mu,\Omega}(Z_j \in \placeholder),
        \bbP_{\mu',\Omega}(Z_j \in \placeholder)
    \Bigr)
    \to 0
    \qquad
    \text{as } j \to \infty,
\end{equation}
where $W_{\hat{\mathrm{D}}_{\Omega}}$ is the Wasserstein metric induced by the distance $\hat{\mathrm{D}}_{\Omega}$.
Furthermore, there exists a unique invariant probability measure $\mu_\infty$ for the augmented state process $\{Z_j\}_{j\in\bbZ_+}$.
\end{theorem}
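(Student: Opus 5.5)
The plan is a coupling argument. We first couple the physical states so that they coincide after a random time, and then drive both filters with the same observations so that Assumption~\ref{ass:stability} applies. Fix $\mu,\mu'\in\cP(\Omega)$ and write $q,q'$ for their state marginals. By hypothesis $\|\sP^j q-q_\infty\|_{\mathrm{TV}}\to0$ and $\|\sP^j q'-q_\infty\|_{\mathrm{TV}}\to0$, hence $\|\sP^j q-\sP^j q'\|_{\mathrm{TV}}\to0$. By the standard (Griffeath/Pitman) coupling construction for Markov chains whose marginals converge in total variation, we will build $(v_j^\dag,v_j'^\dag)$ with the correct marginal path laws $\bbP_q,\bbP_{q'}$ and an a.s.\ finite coupling time $\tau$. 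After $\tau$ both chains use the same noise $\xi_j^\dag$, so $v_j^\dag=v_j'^\dag$ for all $j\ge\tau$. We also use identical observation noises $\eta_j^\dag$ for $j>\tau$, so $y_j^\dag=y_j'^\dag$ from time $\tau+1$ on. The filter components are attached by running $q_{j+1}=\mathsf{F}(q_j,y_{j+1}^\dag)$ and $q'_{j+1}=\mathsf{F}(q'_j,y'^\dag_{j+1})$ from $(q_0,q'_0)$ drawn from $\mu,\mu'$. This yields a coupling of $\bbP_{\mu,\Omega}(Z_j\in\placeholder)$ and $\bbP_{\mu',\Omega}(Z_j\in\placeholder)$ for each $j$.

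On the event $\{\tau\le k\}$, for $j\ge k$ both filters are driven, from time $k$ on, by one common observation sequence generated by the state chain. Their respective starting points are $q_k$ and $q'_k$. We want to conclude $d(q_j,q'_j)\to0$ a.s.\ from Assumption~\ref{ass:stability}, applied to the shifted sequences with initial conditions $(q_k,q'_k)$. This step needs care, because the assumption is stated under $\bbP_{q_\star}$ for a particular $q_\star$ and for deterministic initial filters. Our observation path is instead generated under $\bbP_{\sP^k q}$, and the initial filters are random and correlated with the past. Two transfer steps are required.
\begin{itemize}
\item[(a)] To handle the random initial filters, condition on $\mathcal F_k$. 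The future observations depend on the past only through $v_k^\dag$, so conditionally the initial filters are fixed.
\item[(b)] To handle the change of path measure, use the tail-triviality statement of \ref{app:ergodicity_state}. The event $\{d(q_j,q'_j)\to0\}$ for fixed initial conditions is a tail-type event of the observation path. Since $\bbP_{q_\star}$ gives it probability one, it has probability one under every $\bbP_{q}$.
\end{itemize}
If tail triviality does not apply directly, we will instead use absolute continuity. For $k\ge1$, $\sP^{k}q$ has a Lebesgue density, and because $\Sigma>0$ so does $\sP q_\star$; we would compare path laws after one step.

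Given this, $\hat{\mathrm{D}}_\Omega(Z_j,Z'_j)\le \mathbf 1_{\{\tau>j\}}+\min\{d(q_j,q'_j),1\}$. The right-hand side is bounded and tends to $0$ a.s., so dominated convergence gives \eqref{eq:j90}.

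For the invariant measure, uniqueness is immediate from \eqref{eq:j90}. If $\mu_\infty,\mu'_\infty$ are both invariant, the left side of \eqref{eq:j90} is constant in $j$ and hence zero. Since $W_{\hat{\mathrm D}_\Omega}$ is a metric, $\mu_\infty=\mu'_\infty$.

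Existence is the step I expect to be the main obstacle. My first attempt is to show that $j\mapsto\bbP_{\mu,\Omega}(Z_j\in\placeholder)$ is Cauchy in $W_{\hat{\mathrm D}_\Omega}$. Applying \eqref{eq:j90} with $\mu'=\mu\,\mathsf{T}^k$, where $\mathsf T$ is the augmented transition, gives $W(\mu\mathsf T^{j+k},\mu\mathsf T^j)\to0$ for each fixed $k$, but not uniformly in $k$. To get uniformity I would start the state from $q_\infty$ and run the coupling above with the same stationary state path, backdating the filter initialization: start a fixed $q_0$ at time $-n$ and compare $n$ with $n+k$. Stationarity of the state path then makes the law of the resulting pair depend only on the backward horizon. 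Filter stability, transferred as above, should make $d$ between the two time-$0$ filters small uniformly in $k$ once $n$ is large. This gives a Cauchy sequence in $W_{\hat{\mathrm D}_\Omega}$.

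We then need the limit to exist. One route is completeness of $\cP(\Omega)$ under $W_{\hat{\mathrm D}_\Omega}$, which requires checking completeness of $(\cP(\bbR^{d_v}),d)$ or passing to a completion. Failing that, we would use a Krylov--Bogoliubov argument on Ces\`aro averages, with tightness supplied by the stationary state marginal. Invariance of the limit then follows from continuity of $\mathsf T$, which I would derive from the coupling estimate itself.
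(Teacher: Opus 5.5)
Your route to \eqref{eq:j90} and to uniqueness is the paper's. You obtain a successful coupling of the state paths from total-variation convergence, drive both chains with a common observation noise, apply Assumption~\ref{ass:stability} after the coupling time, and use dominated convergence for the bounded metric. Uniqueness then follows by applying \eqref{eq:j90} to two invariant laws. The paper reduces to $\mu$ with state marginal $q_\star$ and then applies the Assumption ``from time $T$ onward'' without further comment; you are right that a transfer argument is needed there.

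Your item (b) does not supply it. The event $\{d(q_j,q'_j)\to0\}$ depends on $y_1^\dag,y_2^\dag,\dots$ through the recursion, so it is not a tail event. Also, the tail triviality in \ref{app:ergodicity_state} concerns the state chain, not the observation path.

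Your absolute-continuity fallback is correct, and it even makes the $q_\star$ reduction unnecessary. Given $v_k^\dag=x$, the sequence $(y_{k+1}^\dag,y_{k+2}^\dag,\dots)$ is produced from $v_{k+1}^\dag\sim\normal(\Psi(x),\Sigma)$ by the same kernel that produces $(y_1^\dag,y_2^\dag,\dots)$ from $v_1^\dag\sim\sP q_\star$ under $\bbP_{q_\star}$. Since $\sP q_\star$ has an everywhere positive density, for each fixed pair of initial filters merging holds $\bbP_{\delta_x}$-a.s. for every $x$.

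The genuine gap is item (a) combined with the coupling you chose; the paper's proof does not address this point either. A Griffeath-type maximal coupling is in general not adapted. Its coupling time $\tau$ is not a stopping time, and conditionally on the joint past $(\hat U_{0:k},\hat U'_{0:k})$ the future of the reference chain need not follow the Markov kernel from $\hat U_k$. Because $q'_k$ is a function of $q'_0$, $\hat U'_{0:k}$ and the observation noise up to time $k$, you cannot freeze $(q_k,q'_k)$. The Assumption gives, for each fixed pair of initial filters, an exceptional null set of observation paths that depends on the pair. Nothing prevents a look-ahead coupling from correlating the pair with the future path so as to land on that set. You need one of two repairs:
\begin{enumerate}
\item An adapted successful coupling, after which (a) goes through by a freezing lemma. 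One option is a Markovian coupling that uses, at each step, a maximal coupling of $\mathcal K(x,\cdot)$ and $\mathcal K(x',\cdot)$; these overlap since $\Sigma>0$. Its success must be shown either via the Lyapunov/small-set structure (an extra hypothesis) or via Nummelin splitting.
\item A stability assumption that holds uniformly over initial filters.
\end{enumerate}

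Existence is also not closed.
\begin{enumerate}
\item The backward-horizon Cauchy argument needs $d$ to be small uniformly in $k$ for random initial filters propagated from time $-n-k$. Fixed-initial-condition stability does not provide this.
\item Completeness of $(\cP(\bbR^{d_v}),d)$ is unknown for an abstract $d$.
\item Krylov--Bogoliubov requires tightness of the filter component. This is automatic only for the exact filter, where $\bbE\pi_j=\Law(v_j^\dag)$ and tightness of random measures follows from tightness of their intensity measures.
\item It also requires Feller continuity of $\mathsf F$, which cannot be extracted from the purely asymptotic estimate \eqref{eq:j90}.
\end{enumerate}
The paper only invokes Krylov--Bogoliubov in one line, so these hypotheses are implicit there as well. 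Still, they have to be assumed or verified, not derived from the coupling.
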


Ergodicity of the joint state–filter augmented process has been demonstrated for instance in~\cite{BUDHIRAJA2003919}, albeit only for the exact filter (and under different conditions which ensure stability of the exact but not necessarily any approximating filter). Furthermore, the method of proof differs from the
one we in introduce here. We prove Theorem~\ref{thm:ergodic_X} using the theory of coupling, the basics of which we now outline. Let $\{U_j\}_{j\in\bbZ_+}$ and $\{U'_j\}_{j\in\bbZ_+}$ be two stochastic processes on the same state space defined on probability spaces $(\Omega_U, \mathcal{A}_U, \bbP_U)$ and $(\Omega'_U, \mathcal{A}'_U, \bbP'_U)$. A coupling of $U, U'$ is a pair process $\{(\hat{U}_j, \hat{U}'_j)\}_{j\in\bbZ_+}$ on a joint probability space $(\hat{\Omega}, \hat{\mathcal{A}}, \bbM)$ such that $\Law(\hat{U}) = \Law(U)$ and $\Law(\hat{U}') = \Law(U')$. For such a coupling, consider the coupling time
\begin{equation}\label{eq:j110}
    T := \min\bigl\{j \in \bbZ_+ \mid \hat{U}_k = \hat{U}'_k \text{ for all } k \ge j\bigr\}.
\end{equation}
Here $T \in \bbZ_+$ is a random variable; we call the coupling successful if $T < \infty$ almost surely. A fundamental fact of coupling theory \citep{lindvall2002lectures} states that for any coupling,
\begin{equation}\label{eq:j120}
    \|\bbP_U(U_j \in \placeholder) - \bbP'_U(U'_j \in \placeholder)\|_{\mathrm{TV}} \le 2\bbM(T > j).
\end{equation}
Furthermore, there exists a coupling that gives equality in this relation. As a corollary, the total variation distance goes to zero if and only if there exists a successful coupling between $U$ and $U'$.

\begin{proof}[of Theorem~\ref{thm:ergodic_X}]
The existence of $\mu_\infty$ follows from a Krylov--Bogoliubov argument. The uniqueness is a consequence of the stability result \eqref{eq:j90} which we now prove.

Let $\mu,\mu'\in\cP(\Omega)$. It is sufficient to prove the stability result \eqref{eq:j90} when $\mu$ has the marginal state distribution $q_\star$, the probability that appears in Assumption~\ref{ass:stability}, and $\mu'$ is arbitrary. The case of two arbitrary initial distributions follows via the triangle inequality. 

Write $q = q_\star$ and $q'$ for the marginals of $\mu$ and $\mu'$ on the physical state space $\bbR^{d_v}$. The distributions of the state trajectories under $ \bbP_{\mu,\Omega}$ and $ \bbP_{\mu',\Omega}$ are given by the path measures $\bbP_{q}$ and $\bbP_{q'}$. Since the state process is ergodic, we have
\begin{equation}\label{eq:j130}
    \|\bbP_{q}(v_j^\dag \in \placeholder) - \bbP_{q'}(v_j^\dag \in \placeholder)\|_{\mathrm{TV}} = \|\sP^j q - \sP^j q'\|_{\mathrm{TV}} \to 0,\qquad \text{ as }j\to\infty.
\end{equation}
We conclude from \eqref{eq:j130} and the coupling result that there exists a coupling $\{(\hat{U}_j, \hat{U}'_j)\}$ on some space $(\hat{\Omega}, \hat{\mathcal{A}}, \bbM)$ such that 
\begin{equation}
    \Law\bigl((\hat U_j)_{j\in\bbZ_+}\bigr)=\bbP_q,
\qquad
\Law\bigl((\hat U'_j)_{j\in\bbZ_+}\bigr)=\bbP_{q'},
\end{equation}
and
\begin{equation}\label{eq:j140}
    \bbM\left(\hat{U}_k = \hat{U}'_k \text{ for all } k \ge j\right) = \bbM(T \le j) \to 1 \qquad \text{as } j \to \infty.
\end{equation}

We now extend this state coupling to a coupling of the augmented processes. Let $\{\eta_j^\dag\}_{j\in\bbZ_+}$ be an i.i.d. sequence with distribution $\normal(0,\Gamma)$, independent of the coupled state trajectories $(\hat U,\hat U')$. We define
\begin{equation}
    \hat{y}_j := h(\hat{U}_j)+\eta_j^\dag,
    \qquad
    \hat{y}'_j := h(\hat{U}'_j)+\eta_j^\dag.
\end{equation}
Since $\hat U$ has path law $\bbP_{q_\star}$ and the noise sequence $\{\eta_j^\dag\}$ is independent with the correct distribution, the pair $(\hat U,\hat y)$ has the same state--observation law as the model initialized from $q_\star$. Similarly, $(\hat U',\hat y')$ has the correct state--observation law initialized from $q'$.

Choose $\hat q_0$ and $\hat q'_0$ such that
\begin{equation}
    \Law(\hat q_0,\hat U_0)=\mu,
    \qquad
    \Law(\hat q'_0,\hat U'_0)=\mu'.
\end{equation}
Then define
\begin{equation}
    \hat q_{j+1}=\mathsf{F}(\hat q_j,\hat y_{j+1}),
    \qquad
    \hat q'_{j+1}=\mathsf{F}(\hat q'_j,\hat y'_{j+1}).
\end{equation}
where $\mathsf{F}$ is the map defined in Equation~\eqref{eq:F-map} which iterates the approximate filter. 
The resulting processes
\begin{equation}
    \hat Z_j=(\hat q_j,\hat U_j),
    \qquad
    \hat Z'_j=(\hat q'_j,\hat U'_j)
\end{equation}
form a valid coupling of the augmented processes with laws $\bbP_{\mu,\Omega}$ and $\bbP_{\mu',\Omega}$. Evaluating the Wasserstein metric $W_{\hat{\mathrm{D}}_{\Omega}}$ using this coupling, and letting $\bbE^\bbM$ denote the expectation with respect to $\bbM$, we obtain
\begin{equation}\label{eq:j150}
\begin{aligned}
    &W_{\hat{\mathrm{D}}_{\Omega}}
    \Bigl(
        \bbP_{\mu,\Omega}(Z_j \in \placeholder),
        \bbP_{\mu',\Omega}(Z_j \in \placeholder)
    \Bigr) \\
    &\le
    \bbE^\bbM
    \bigl[
        \hat{\mathrm{D}}_{\Omega}(\hat{Z}_j, \hat{Z}'_j)
    \bigr] \\
    &=
    \bbM(\hat{U}_j \neq \hat{U}'_j)
    +
    \bbE^\bbM
    \bigl[
        \min\{d(\hat{q}_j, \hat{q}'_j),1\}
    \bigr].
\end{aligned}
\end{equation}
The first term goes to zero since $\bbM(\hat{U}_j \neq \hat{U}'_j) \le \bbM(T > j) \to 0$. Because the state coupling is successful, there exists an almost surely finite time $T$ such that $\hat U_k=\hat U'_k$ for all $k\ge T$. By the construction with the same observational noise, this implies
\begin{equation}
    \hat y_k=\hat y'_k
    \qquad
    \text{for all } k\ge T.
\end{equation}
Hence, from time $T$ onward, the two filter recursions are driven by the same sequence of observations. This tail is generated by the reference trajectory $(\hat U,\hat y)$, whose law is the state--observation law under $\bbP_{q_\star}$. Therefore, applying Assumption~\ref{ass:stability} from the finite time $T$ onward gives
\begin{equation}
    d(\hat q_j,\hat q'_j)\to0
    \qquad
    \bbM\text{-almost surely}.
\end{equation}
Since $\min\{d(\hat{q}_j, \hat{q}'_j),1\}$ is bounded by $1$, the dominated convergence theorem guarantees that the second term in \eqref{eq:j150} converges to zero. 
\end{proof}

\begin{corollary}[Convergence of marginal laws]\label{cor:marginal_convergence_hatD}
Let the assumptions of Theorem~\ref{thm:ergodic_X} hold, and let $\mu_\infty \in \cP(\Omega)$ denote the unique invariant measure of the augmented process. For any initial law $Z_0\sim\mu \in \cP(\Omega)$, consider
\begin{equation}
    \mu_j = \Law(Z_j) = \bbP_{\mu,\Omega}(Z_j\in\cdot), \qquad j \in \bbZ_+,
\end{equation}
then we have
\begin{equation}
    W_{\hat{\mathrm{D}}_\Omega}(\mu_j,\mu_\infty)
    \to 0
    \qquad
    \text{as } j \to \infty.
\end{equation}
\end{corollary}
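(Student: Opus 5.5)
The corollary should follow from Theorem~\ref{thm:ergodic_X} by choosing one of the two initial laws to be the invariant measure. Apply \eqref{eq:j90} with $\mu' := \mu_\infty$. Since $\mu_\infty$ is invariant for the augmented chain, the law of $Z_j$ under $\bbP_{\mu_\infty,\Omega}$ is $\mu_\infty$ for every $j\in\bbZ_+$. We check this by induction on $j$, using that the one-step transition of $\{Z_j\}$ given by \eqref{eq:cite2} is a time-homogeneous Markov kernel on $\Omega$ and $\mu_\infty$ is a fixed point of its action. Substituting into \eqref{eq:j90} then gives
\[
W_{\hat{\mathrm{D}}_\Omega}(\mu_j,\mu_\infty)
= W_{\hat{\mathrm{D}}_\Omega}\bigl(\bbP_{\mu,\Omega}(Z_j\in\placeholder),\bbP_{\mu_\infty,\Omega}(Z_j\in\placeholder)\bigr)\to 0 .
\]

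Two small points need checking. First, Theorem~\ref{thm:ergodic_X} is stated for arbitrary initial joint laws on $\Omega$, with no moment condition. Because $\hat{\mathrm{D}}_\Omega$ is bounded by $2$, the Wasserstein distance $W_{\hat{\mathrm{D}}_\Omega}$ is finite between any two probability measures, so both $\mu$ and $\mu_\infty$ are admissible. Second, $\mu_\infty$ must actually be invariant in the sense just used, namely $\int \mathcal{K}_\Omega(z,\placeholder)\,\mu_\infty(\rd z)=\mu_\infty$, where $\mathcal{K}_\Omega$ is the transition kernel of $\{Z_j\}$. This is exactly the object whose existence and uniqueness the theorem provides. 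Measurability of $\mathsf{F}$ is implicitly assumed there, so the kernel is well defined.

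The only step that could be an obstacle is the reduction in the proof of the theorem to the case where one initial law has state marginal $q_\star$, followed by the triangle inequality. Since we simply invoke \eqref{eq:j90} as stated, for any pair $\mu,\mu'$, that reduction does not need to be revisited here. The corollary is therefore essentially immediate from the theorem together with stationarity of $\mu_\infty$.
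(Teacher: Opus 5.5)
Your proposal is correct and follows the paper's proof: apply Theorem~\ref{thm:ergodic_X} with $\mu'=\mu_\infty$, and use invariance to identify $\bbP_{\mu_\infty,\Omega}(Z_j\in\placeholder)=\mu_\infty$ for every $j$. Your extra checks are correct details that the paper leaves implicit, namely that $W_{\hat{\mathrm{D}}_\Omega}$ is finite because $\hat{\mathrm{D}}_\Omega\le 2$, and that invariance means $\mu_\infty$ is a fixed point of the transition kernel.
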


\begin{proof}
Apply Theorem~\ref{thm:ergodic_X} with $\mu' = \mu_\infty$. Since $\mu_\infty$ is invariant, we have
\begin{equation}
    \Law_{\bbP_{\mu_\infty,\Omega}}(Z_j) = \mu_\infty
    \qquad \text{for all } j \in \bbZ_+.
\end{equation}
Therefore,
\begin{equation}
    W_{\hat{\mathrm{D}}_\Omega}(\mu_j,\mu_\infty)
    =
    W_{\hat{\mathrm{D}}_\Omega}
    \Bigl(
        \bbP_{\mu,\Omega}(Z_j \in \placeholder),
        \bbP_{\mu_\infty,\Omega}(Z_j \in \placeholder)
    \Bigr)
    \to 0,
\end{equation}
which proves the claim.
\end{proof}

Theorem~\ref{thm:ergodic_X} provides the existence of the invariant measure and convergence to it in the Wasserstein distance induced by the bounded metric $\hat{\mathrm{D}}_\Omega$. However, the convergence statement in Assumption~\ref{da:ergodicity_on_state_filter}(ii) is formulated using the Wasserstein distance induced by the unbounded metric
\begin{equation}\label{eq:app_DOmega_unbounded_metric}
    \mathrm{D}_\Omega\bigl((q,v),(q',v')\bigr)
    =
    W_1(q,q')+\|v-v'\|.
\end{equation}
We now explain how the convergence in $W_{\hat{\mathrm{D}}_\Omega}$ can be upgraded to convergence in $W_{\mathrm{D}_\Omega}$ once the augmented process has the required first-moment control. The key point is that $\hat{\mathrm{D}}_\Omega$ controls the bounded truncation $\min\{\mathrm{D}_\Omega, 1\}$, while uniform integrability controls the tails of the unbounded cost $\mathrm{D}_\Omega$. 

Crucially, while Theorem~\ref{thm:ergodic_X} and Corollary~\ref{cor:marginal_convergence_hatD} apply to any generic filter recursion satisfying the stability Assumption~\ref{ass:stability}, establishing the necessary uniform integrability for the unbounded metric requires specific structural properties. Therefore, for the remainder of this subsection, we restrict our focus to the \emph{exact augmented process}, where the filter is the true Bayesian posterior $q_j = \pi_j :=\Law\bigl(v_j^\dag\mid Y_j^\dag\bigr)$.

\begin{lemma}[Moment control for the exact augmented process]\label{lem:augmented_moment_control}
Let \(Z_j=(\pi_j,v_j^\dag)\) be the exact augmented process. Let \(Z_0\sim\mu\in\cP(\Omega)\), and let \(q\in\cP(\bbR^{d_v})\) be the projection of \(\mu\) onto the physical state space. Assume that the exact filter is initialized consistently with the state--observation path law under \(\bbP_{\mu,\Omega}\), so that \(\pi_j=\bbP_{\mu,\Omega}(v_j^\dag\in\cdot\mid Y_j^\dag)\) for every \(j\in\bbZ_+\). Suppose that the physical state process satisfies the quadratic Lyapunov estimate \eqref{eq:app_lyapunov}, that is, \eqref{eq:app_lyapunov} holds with \(V(v^\dag)=\|v^\dag\|^2\). If \(\bbE^{\bbP_q}\|v_0^\dag\|^2<\infty\), then \(\sup_{j\ge0}\bbE^{\bbP_q}\|v_j^\dag\|^2<\infty\). Consequently, the family \(\{\|v_j^\dag\|\}_{j\ge0}\) is uniformly integrable under \(\bbP_q\):
\begin{equation}
    \lim_{R\to\infty}\sup_{j\ge0}\bbE^{\bbP_q}\left[\|v_j^\dag\|\mathbf{1}_{\{\|v_j^\dag\|>R\}}\right]=0.
\end{equation}
Moreover, for \(Z_\star=(\delta_0,0)\), the family \(\{\mathrm{D}_\Omega(Z_j,Z_\star)\}_{j\ge0}\) is uniformly integrable under \(\bbP_{\mu,\Omega}\):
\begin{equation}
    \lim_{R\to\infty}\sup_{j\ge0}\bbE^{\bbP_{\mu,\Omega}}\left[\mathrm{D}_\Omega(Z_j,Z_\star)\mathbf{1}_{\{\mathrm{D}_\Omega(Z_j,Z_\star)>R\}}\right]=0.
\end{equation}
\end{lemma}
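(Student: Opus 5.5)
Three steps will do it. First, iterate the Lyapunov bound to get a uniform second moment for the state. Second, use the conditional-expectation structure of the exact filter to control its first moment by that of the state. Third, combine the two bounds through the triangle inequality for $W_1$.

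\textbf{Step 1 (uniform second moments).} Write $a_j:=\bbE^{\bbP_q}\|v_j^\dag\|^2$. Take expectations in \eqref{eq:app_lyapunov} with $V=\|\cdot\|^2$ and use the tower property. This gives $a_{j+1}\le\alpha a_j+r$. By induction,
\[
a_j\le \alpha^j a_0+\frac{r}{1-\alpha}\le a_0+\frac{r}{1-\alpha}=:K<\infty,
\]
and $a_0<\infty$ by hypothesis. Uniform integrability of $\{\|v_j^\dag\|\}$ then follows from the elementary bound
\[
\bbE\bigl[\|v_j^\dag\|\mathbf{1}_{\{\|v_j^\dag\|>R\}}\bigr]\le R^{-1}\bbE\|v_j^\dag\|^2\le K/R ,
\]
which is uniform in $j$.

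\textbf{Step 2 (moments of the filter).} With $Z_\star=(\delta_0,0)$ we have
\[
\mathrm{D}_\Omega(Z_j,Z_\star)=W_1(\pi_j,\delta_0)+\|v_j^\dag\|=\int\|x\|\,\pi_j(\mathrm{d}x)+\|v_j^\dag\|.
\]
By the consistency hypothesis $\pi_j=\bbP_{\mu,\Omega}(v_j^\dag\in\cdot\mid Y_j^\dag)$, so
\[
m_j:=\int\|x\|\,\pi_j(\mathrm{d}x)=\bbE\bigl[\|v_j^\dag\|\,\big|\,Y_j^\dag\bigr].
\]
The conditional Jensen inequality then gives $\bbE m_j^2\le\bbE\|v_j^\dag\|^2\le K$. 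Here the expectations under $\bbP_{\mu,\Omega}$ and $\bbP_q$ agree, because the state marginal of $\bbP_{\mu,\Omega}$ is $\bbP_q$.

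\textbf{Step 3 (combination).} Minkowski's inequality gives
\[
\bbE\,\mathrm{D}_\Omega(Z_j,Z_\star)^2\le\bigl(\sqrt{\bbE m_j^2}+\sqrt{\bbE\|v_j^\dag\|^2}\bigr)^2\le 4K .
\]
As in Step 1, this yields $\sup_j\bbE[\mathrm{D}_\Omega\mathbf{1}_{\{\mathrm{D}_\Omega>R\}}]\le 4K/R\to0$. This also shows $\pi_j\in\cP_1$ almost surely, so $Z_j\in\Omega$ makes sense.

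The only delicate point is Step 2. It hinges on $\pi_j$ being the genuine conditional law under the same path measure that generates the data, which is exactly the consistent-initialization hypothesis. For a misspecified or approximate filter this identity fails and no moment bound follows. I would also check measurability of $Y_j^\dag\mapsto m_j$; this comes from Lemma~\ref{lem:variable_change_augmented}.
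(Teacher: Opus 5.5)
Your proposal is correct and follows essentially the same route as the paper. You iterate the Lyapunov bound via the tower property to get $\sup_j\bbE\|v_j^\dag\|^2<\infty$, use the consistency hypothesis to write $W_1(\pi_j,\delta_0)=\bbE[\|v_j^\dag\|\mid Y_j^\dag]$, apply conditional Jensen to obtain $\bbE\,\mathrm{D}_\Omega(Z_j,Z_\star)^2\le 4\sup_k\bbE\|v_k^\dag\|^2$, and deduce uniform integrability from the uniform second-moment bound; the only cosmetic difference is that you combine the two terms with Minkowski's inequality where the paper uses $(a+b)^2\le 2a^2+2b^2$, and both give the same constant.
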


\begin{proof}
The physical state marginal of \(\bbP_{\mu,\Omega}\) is \(\bbP_q\). By the quadratic Lyapunov estimate and the tower property,
\begin{equation}
    \bbE^{\bbP_q}\|v_{j+1}^\dag\|^2
    =\bbE^{\bbP_q}\left[\bbE^{\bbP_q}\left(\|v_{j+1}^\dag\|^2\mid v_j^\dag\right)\right] \le \alpha\bbE^{\bbP_q}\|v_j^\dag\|^2+r.
\end{equation}
Iterating this inequality yields
\begin{equation}
    \bbE^{\bbP_q}\|v_j^\dag\|^2
    \le \alpha^j\bbE^{\bbP_q}\|v_0^\dag\|^2+\frac{r(1-\alpha^j)}{1-\alpha}
    \le \bbE^{\bbP_q}\|v_0^\dag\|^2+\frac{r}{1-\alpha}.
\end{equation}
Hence \(\sup_{j\ge0}\bbE^{\bbP_q}\|v_j^\dag\|^2<\infty\). This implies uniform integrability of \(\{\|v_j^\dag\|\}_{j\ge0}\), since
\begin{equation}
    \sup_{j\ge0}\bbE^{\bbP_q}\left[\|v_j^\dag\|\mathbf{1}_{\{\|v_j^\dag\|>R\}}\right]
    \le \frac{1}{R}\sup_{j\ge0}\bbE^{\bbP_q}\|v_j^\dag\|^2\to0, \text{ as } R\to\infty.
\end{equation}

It remains to pass from the physical state to the exact augmented state. Since \(\pi_j=\bbP_{\mu,\Omega}(v_j^\dag\in\cdot\mid Y_j^\dag)\), we have
\begin{equation}
    \mathrm{D}_\Omega(Z_j,Z_\star)
    =W_1(\pi_j,\delta_0)+\|v_j^\dag\|
    =\bbE^{\bbP_{\mu,\Omega}}\left[\|v_j^\dag\|\mid Y_j^\dag\right]+\|v_j^\dag\|.
\end{equation}
By Jensen's inequality and the identity of the physical marginals,
\begin{equation}
\bbE^{\bbP_{\mu,\Omega}}\left[\mathrm{D}_\Omega(Z_j,Z_\star)^2\right]
\le 2\bbE^{\bbP_{\mu,\Omega}}\left[\left(\bbE^{\bbP_{\mu,\Omega}}\left[\|v_j^\dag\|\mid Y_j^\dag\right]\right)^2\right]
+2\bbE^{\bbP_{\mu,\Omega}}\|v_j^\dag\|^2 \le 4\bbE^{\bbP_q}\|v_j^\dag\|^2.
\end{equation}
Therefore, \(\sup_{j\ge0}\bbE^{\bbP_{\mu,\Omega}}\left[\mathrm{D}_\Omega(Z_j,Z_\star)^2\right]<\infty.\)
This second-moment bound implies uniform integrability of \(\{\mathrm{D}_\Omega(Z_j,Z_\star)\}_{j\ge0}\), since
\begin{equation}
    \sup_{j\ge0}\bbE^{\bbP_{\mu,\Omega}}\left[\mathrm{D}_\Omega(Z_j,Z_\star)\mathbf{1}_{\{\mathrm{D}_\Omega(Z_j,Z_\star)>R\}}\right]
    \le \frac{1}{R}\sup_{j\ge0}\bbE^{\bbP_{\mu,\Omega}}\left[\mathrm{D}_\Omega(Z_j,Z_\star)^2\right]\to0, \text{ as }R\to\infty.
\end{equation}
\end{proof}

\begin{lemma}[From $\hat{\mathrm{D}}_\Omega$ to $\mathrm{D}_\Omega$]\label{prop:hatD_to_DOmega}
Consider the filter metric in $\hat{\mathrm{D}}_\Omega$ with $d=W_1$. Let $Z_0\sim\mu$ be any initial law and $\mu_j=\Law(Z_j)$, for $j\in\bbZ_+$. Assume that, for some $Z_\star\in\Omega$, the family $\left\{\mathrm{D}_\Omega(Z_j,Z_\star)\right\}_{j\ge0}$ is uniformly integrable. Then for $\mu_\infty\in\cP(\Omega)$,
\begin{equation}
    \lim_{j\to\infty}W_{\hat{\mathrm{D}}_\Omega}(\mu_j,\mu_\infty)=0
    \Longrightarrow \lim_{j\to\infty}W_{\mathrm{D}_\Omega}(\mu_j,\mu_\infty)=0.
\end{equation}
\end{lemma}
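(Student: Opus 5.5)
The plan is to compare the two costs pointwise, get convergence in the truncated metric for free, and then use uniform integrability to control the tails of the unbounded cost $\mathrm{D}_\Omega$.

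\textbf{Step 1: a pointwise domination.} First I check that $\min\{\mathrm{D}_\Omega,1\}\le \hat{\mathrm{D}}_\Omega$ on $\Omega\times\Omega$ when $d=W_1$. If $v\neq v'$, then $\hat{\mathrm{D}}_\Omega\ge 1\ge \min\{\mathrm{D}_\Omega,1\}$. If $v=v'$, then $\hat{\mathrm{D}}_\Omega=\min\{W_1(q,q'),1\}=\min\{\mathrm{D}_\Omega,1\}$. Consequently every coupling $\gamma$ of $(\mu_j,\mu_\infty)$ satisfies $\int\min\{\mathrm{D}_\Omega,1\}\,\mathrm{d}\gamma\le\int \hat{\mathrm{D}}_\Omega\,\mathrm{d}\gamma$. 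Hence the hypothesis gives couplings $\gamma_j$ with
\[
\varepsilon_j:=\int\min\{\mathrm{D}_\Omega,1\}\,\mathrm{d}\gamma_j\to 0 .
\]
In particular $\mu_j\to\mu_\infty$ weakly on $(\Omega,\mathrm{D}_\Omega)$.

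\textbf{Step 2: $\mu_\infty\in\cP_1(\Omega)$.} The map $Z\mapsto\min\{\mathrm{D}_\Omega(Z,Z_\star),R\}$ is bounded and Lipschitz for $\mathrm{D}_\Omega$, so weak convergence gives $\int\min\{\mathrm{D}_\Omega(\cdot,Z_\star),R\}\,\mathrm{d}\mu_\infty=\lim_j\int\min\{\mathrm{D}_\Omega(\cdot,Z_\star),R\}\,\mathrm{d}\mu_j\le \sup_j\int\mathrm{D}_\Omega(\cdot,Z_\star)\,\mathrm{d}\mu_j$. The right-hand side is finite by uniform integrability. Letting $R\to\infty$ and using monotone convergence shows that $\mu_\infty$ has a finite first moment. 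Since $\mu_\infty$ is a single fixed measure, $\{\mathrm{D}_\Omega(\cdot,Z_\star)\}$ is then also uniformly integrable under $\mu_\infty$.

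\textbf{Step 3: tail splitting (the main step).} Under $\gamma_j$, write $Z\sim\mu_j$, $Z'\sim\mu_\infty$, $a=\mathrm{D}_\Omega(Z,Z_\star)$ and $b=\mathrm{D}_\Omega(Z',Z_\star)$, so that $\mathrm{D}_\Omega(Z,Z')\le a+b$. Split
\[
\mathrm{D}_\Omega=\mathrm{D}_\Omega\mathbf 1_{\{\mathrm{D}_\Omega\le1\}}+\mathrm{D}_\Omega\mathbf 1_{\{\mathrm{D}_\Omega>1\}}.
\]
The first part is at most $\min\{\mathrm{D}_\Omega,1\}$, so its integral is at most $\varepsilon_j$. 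For the second part, fix $R>0$ and use
\[
(a+b)\mathbf 1_{\{\mathrm{D}_\Omega>1\}}\le 2a\mathbf 1_{\{a>R\}}+2b\mathbf 1_{\{b>R\}}+2R\,\mathbf 1_{\{\mathrm{D}_\Omega>1\}},
\]
which holds by considering whether $a$ or $b$ exceeds $R$. Markov's inequality gives $\gamma_j(\mathrm{D}_\Omega>1)\le\varepsilon_j$. Combining the pieces,
\[
W_{\mathrm{D}_\Omega}(\mu_j,\mu_\infty)\le(1+2R)\,\varepsilon_j+2\sup_k\bbE\bigl[a_k\mathbf 1_{\{a_k>R\}}\bigr]+2\,\bbE^{\mu_\infty}\bigl[b\,\mathbf 1_{\{b>R\}}\bigr],
\]
where $a_k=\mathrm{D}_\Omega(Z_k,Z_\star)$. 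I take $\limsup_{j\to\infty}$ first, which removes the $\varepsilon_j$ term, and then let $R\to\infty$. The remaining two terms vanish by the uniform integrability hypothesis and by Step 2, which proves the claim.

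The only genuinely delicate point is Step 2, namely the integrability of the limit $\mu_\infty$, since uniform integrability is assumed only along the sequence $\mu_j$. I would handle it via the truncation and weak-limit argument above, or alternatively by Fatou's lemma after a Skorokhod representation. Steps 1 and 3 are routine once the domination $\min\{\mathrm{D}_\Omega,1\}\le\hat{\mathrm{D}}_\Omega$ is noticed.
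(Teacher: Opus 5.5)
Your proposal is correct and follows the paper's proof essentially step for step: the same pointwise domination $\min\{\mathrm{D}_\Omega,1\}\le\hat{\mathrm{D}}_\Omega$, the same truncation-plus-monotone-convergence argument for the finite first moment of $\mu_\infty$, and the same near-optimal couplings $\gamma_j$ combined with $\mathrm{D}_\Omega(Z,Z')\le \mathrm{D}_\Omega(Z,Z_\star)+\mathrm{D}_\Omega(Z',Z_\star)$ and uniform integrability. The only difference is that your Step 3 writes out the explicit tail split that the paper summarizes as ``convergence in $\gamma_j$-probability plus uniform integrability implies convergence in expectation,'' which is, if anything, slightly more transparent because the coupling measures $\gamma_j$ change with $j$.
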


\begin{proof}
Let $r(Z)=\mathrm{D}_\Omega(Z,Z_\star)$. Since the filter metric in $\hat{\mathrm{D}}_\Omega$ is $d=W_1$, we have, for all
$Z=(q,v)$ and $Z'=(q',v')$ in $\Omega$,
\begin{equation}\label{eq:app_truncated_D_controlled_by_hatD}
    \min\{\mathrm{D}_\Omega(Z,Z'),1\}
    \le
    \hat{\mathrm{D}}_\Omega(Z,Z').
\end{equation}
Indeed, if $v=v'$, then
$\min\{\mathrm{D}_\Omega(Z,Z'), 1\}=\{W_1(q,q'), 1\}=\hat{\mathrm{D}}_\Omega(Z,Z')$,
while if $v\neq v'$, $\min\{\mathrm{D}_\Omega(Z,Z'),1\}$ is at most $1$ and $\hat{\mathrm{D}}_\Omega(Z,Z')$ is at least $1$. Then
\begin{equation}\label{eq:app_truncated_wasserstein_convergence}
    0=\lim_{j\to\infty}W_{\hat{\mathrm{D}}_\Omega}(\mu_j,\mu_\infty)\geq \lim_{j\to\infty}W_{\min\{\mathrm{D}_\Omega,1\}}(\mu_j,\mu_\infty)\geq 0 \Longrightarrow \lim_{j\to\infty}W_{\min\{\mathrm{D}_\Omega,1\}}(\mu_j,\mu_\infty)= 0.
\end{equation}

We first show that $\mu_\infty$ has finite first moment with respect to
$\mathrm{D}_\Omega$. For every $M>0$, the truncated function $\min\{r(\cdot), M\}$ is bounded and Lipschitz with respect to the bounded metric $\min\{\mathrm{D}_\Omega,1\}$.
Therefore, by \eqref{eq:app_truncated_wasserstein_convergence},
\begin{equation}
    \int_\Omega \min\{r(Z), M\}\mu_\infty(\mathrm{d}Z)
    =
    \lim_{j\to\infty}
    \int_\Omega \min\{r(Z), M\}\mu_j(\mathrm{d}Z).
\end{equation}
The uniform integrability assumption on $\left\{\mathrm{D}_\Omega(Z_j,Z_\star)\right\}_{j\ge0}$ implies $\sup_{j\ge0}\int_\Omega r(Z)\mu_j(\mathrm{d}Z)<\infty.$
Hence, letting $M\to\infty$ and applying the monotone convergence theorem gives
\begin{equation}\label{eq:app_mu_infty_first_moment}
    \int_\Omega r(Z)\mu_\infty(\mathrm{d}Z) = \lim_{M\to\infty}\int_\Omega (r(Z)\wedge M)\mu_\infty(\mathrm{d}Z)
    \le
    \sup_{j\ge0}
    \int_\Omega r(Z)\mu_j(\mathrm{d}Z)<\infty.
\end{equation}

For each $j$, choose a coupling $\gamma_j\in\Pi(\mu_j,\mu_\infty)$ such that
\begin{equation}\label{eq:app_near_optimal_hatD_coupling}
    \int_{\Omega\times\Omega}
        \hat{\mathrm{D}}_\Omega(Z,Z')
        \gamma_j(\mathrm{d}Z,\mathrm{d}Z')
    \le
    W_{\hat{\mathrm{D}}_\Omega}(\mu_j,\mu_\infty)
    +
    \frac{1}{j+1}.
\end{equation}
By \eqref{eq:app_truncated_D_controlled_by_hatD} and
\eqref{eq:app_near_optimal_hatD_coupling},
\begin{equation}
    \int_{\Omega\times\Omega}
        \min\{\mathrm{D}_\Omega(Z,Z'),1\}
        \gamma_j(\mathrm{d}Z,\mathrm{d}Z')
    \to0.
\end{equation}
Thus $\mathrm{D}_\Omega(Z,Z')\to0$ in $\gamma_j$-probability. It remains to upgrade this convergence in probability to convergence in
expectation. By the triangle inequality, $\mathrm{D}_\Omega(Z,Z')\le r(Z)+r(Z')$ Therefore, for every $R>0$,
\begin{equation}  
    \mathrm{D}_\Omega(Z,Z')
    \mathbf{1}_{\{\mathrm{D}_\Omega(Z,Z')>R\}}
    \le
    2r(Z)\mathbf{1}_{\{r(Z)>R/2\}}
    +
    2r(Z')\mathbf{1}_{\{r(Z')>R/2\}}.
\end{equation}
Integrating this inequality with respect to $\gamma_j$ gives
\begin{multline}
    \int_{\Omega\times\Omega}
        \mathrm{D}_\Omega(Z,Z')
        \mathbf{1}_{\{\mathrm{D}_\Omega(Z,Z')>R\}}
        \gamma_j(\mathrm{d}Z,\mathrm{d}Z')
    \le
    2\int_\Omega
        r(Z)\mathbf{1}_{\{r(Z)>R/2\}}
        \mu_j(\mathrm{d}Z) 
    \\+ 
    2\int_\Omega
        r(Z)\mathbf{1}_{\{r(Z)>R/2\}}
        \mu_\infty(\mathrm{d}Z).
\end{multline}
The first term on the right-hand side vanishes uniformly in $j$ as
$R\to\infty$ by the assumed uniform integrability. The second term vanishes
as $R\to\infty$ by \eqref{eq:app_mu_infty_first_moment}. Hence the family
$\{\mathrm{D}_\Omega(Z,Z')\}_{j\ge0}$ under the couplings $\gamma_j$ is
uniformly integrable. Since $\mathrm{D}_\Omega(Z,Z')\to0$ in $\gamma_j$-probability and the family
is uniformly integrable, we obtain
\begin{equation}
    \int_{\Omega\times\Omega}
        \mathrm{D}_\Omega(Z,Z')
        \gamma_j(\mathrm{d}Z,\mathrm{d}Z')
    \to0.
\end{equation}
Finally, since $\gamma_j$ is a coupling of $\mu_j$ and $\mu_\infty$,
\begin{equation}
    W_{\mathrm{D}_\Omega}(\mu_j,\mu_\infty)
    \le
    \int_{\Omega\times\Omega}
        \mathrm{D}_\Omega(Z,Z')
        \gamma_j(\mathrm{d}Z,\mathrm{d}Z')
    \to0.
\end{equation}
This proves the implication.
\end{proof}

By synthesizing the bounded metric convergence with the uniform integrability established above, we can now state the main ergodicity result for the exact augmented process, which serves as the rigorous theoretical foundation forAssumption~\ref{da:ergodicity_on_state_filter}(ii).

\begin{theorem}[Ergodicity of the exact augmented process in unbounded metric]\label{thm:exact_ergodicity_unbounded}
Suppose that the physical state $\{v_j^\dag\}_{j \in \bbZ_+}$ admits a unique invariant measure $q_\infty$ (as in Theorem~\ref{thm:ergodic_X}) and satisfies the Lyapunov estimate \eqref{eq:app_lyapunov}. Furthermore, suppose the filter sequence $\pi_j = \bbP(v_j^\dag \mid Y_j^\dag)$ satisfies the stability condition in Assumption~\ref{ass:stability} with $d=W_1$.

 Let $\mu \in \cP(\Omega)$ be an initial distribution for the augmented state $Z_0 = (\pi_0, v_0^\dag)$ such that the physical state marginal has a finite second moment (i.e., $\bbE\|v_0^\dag\|^2 < \infty$). Then the augmented process $Z_j = (\pi_j, v_j^\dag)$ admits a unique invariant probability measure $\mu_\infty \in \cP(\Omega)$, which has a finite first moment with respect to $\mathrm{D}_\Omega$. Furthermore, the marginal laws $\mu_j = \Law(Z_j)$ converge to $\mu_\infty$ in the unbounded Wasserstein metric:
\begin{equation}
    \lim_{j \to \infty} W_{\mathrm{D}_\Omega}(\mu_j, \mu_\infty) = 0.
\end{equation}
\end{theorem}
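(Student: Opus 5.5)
The plan is to assemble the theorem from the pieces proved just before it: Theorem~\ref{thm:ergodic_X} and Corollary~\ref{cor:marginal_convergence_hatD} give existence, uniqueness and convergence in the bounded metric $\hat{\mathrm{D}}_\Omega$; Lemma~\ref{lem:augmented_moment_control} gives uniform integrability; and Lemma~\ref{prop:hatD_to_DOmega} upgrades convergence to the unbounded metric $\mathrm{D}_\Omega$.

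\textbf{Step 1: bounded-metric ergodicity.} We specialize the recursion \eqref{eq:cite2} to the exact filter $\mathsf{F}(q,y)=\sB(\sQ\sP q;y)$, which is the realizable case $\theta=\theta_B$. The hypotheses of Theorem~\ref{thm:ergodic_X} are then met. Assumption~\ref{ass:stability} holds with $d=W_1$ by assumption. The state process has a unique invariant $q_\infty$ with $\|\sP^j q-q_\infty\|_{\mathrm{TV}}\to0$ for every $q$, which follows from the Harris-theorem discussion in \ref{app:ergodicity_state}. Theorem~\ref{thm:ergodic_X} therefore gives a unique invariant measure $\mu_\infty\in\cP(\Omega)$, and Corollary~\ref{cor:marginal_convergence_hatD} gives $W_{\hat{\mathrm{D}}_\Omega}(\mu_j,\mu_\infty)\to0$.

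\textbf{Step 2: uniform integrability.} The quadratic Lyapunov estimate \eqref{eq:app_lyapunov} holds, and $\bbE\|v_0^\dag\|^2<\infty$. Lemma~\ref{lem:augmented_moment_control} with $Z_\star=(\delta_0,0)$ then shows that $\{\mathrm{D}_\Omega(Z_j,Z_\star)\}_{j\ge0}$ is uniformly integrable under $\bbP_{\mu,\Omega}$. This step requires the filter to be initialized consistently, so that $\pi_j$ is the true conditional law under the path measure. For the exact augmented process this is part of the definition of $Z_j$. I would state it explicitly as the interpretation of the initial law $\mu$.

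\textbf{Step 3: upgrade to the unbounded metric.} Apply Lemma~\ref{prop:hatD_to_DOmega} with $d=W_1$ and the $Z_\star$ from Step 2. Its proof, specifically \eqref{eq:app_mu_infty_first_moment}, shows $\int\mathrm{D}_\Omega(Z,Z_\star)\,\mu_\infty(\mathrm{d}Z)<\infty$, so $\mu_\infty\in\cP_1(\Omega)$. It also gives $W_{\mathrm{D}_\Omega}(\mu_j,\mu_\infty)\to0$.

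\textbf{Main obstacle.} The delicate point is that $\Omega=\cP_1(\bbR^{d_v})\times\bbR^{d_v}$, while Theorem~\ref{thm:ergodic_X} is phrased on $\cP(\bbR^{d_v})\times\bbR^{d_v}$. One must check that $\mu_\infty$ is concentrated on $\cP_1$. The finite first moment of $\mu_\infty$ from Step 3 resolves this, because $W_1(q,\delta_0)<\infty$ holds $\mu_\infty$-a.s.

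A second concern is the Krylov--Bogoliubov existence argument behind Theorem~\ref{thm:ergodic_X}, which needs tightness. I would obtain tightness from the uniform second-moment bound in Lemma~\ref{lem:augmented_moment_control}. The $v$-marginals are tight by Chebyshev. The $\pi$-marginals are tight in $(\cP_1,W_1)$ because $\bbE\int\|x\|^2\,\pi_j(\mathrm{d}x)=\bbE\|v_j^\dag\|^2$ is uniformly bounded. A uniform second moment bounds uniform first-moment tails, and Prokhorov's theorem on the space of measures then gives the relatively compact set needed.
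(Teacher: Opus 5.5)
Your proposal is correct and follows the same route as the paper's proof: Theorem~\ref{thm:ergodic_X} and Corollary~\ref{cor:marginal_convergence_hatD} give existence, uniqueness and $\hat{\mathrm{D}}_\Omega$-convergence, Lemma~\ref{lem:augmented_moment_control} gives uniform integrability, and Lemma~\ref{prop:hatD_to_DOmega} upgrades this to convergence in $W_{\mathrm{D}_\Omega}$. Your two extra observations are correct refinements that the paper leaves implicit: $\mu$ must be consistently initialized so that $\pi_j$ really is the conditional law, as Lemma~\ref{lem:augmented_moment_control} requires, and the finite first moment of $\mu_\infty$ places it on $\cP_1(\bbR^{d_v})\times\bbR^{d_v}$; the Krylov--Bogoliubov aside is unnecessary since you already cite Theorem~\ref{thm:ergodic_X} for existence, and on its own it would also need Feller continuity of the filter map, not just tightness.
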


\begin{proof}
By Theorem~\ref{thm:ergodic_X} and Corollary~\ref{cor:marginal_convergence_hatD}, there exists a unique invariant measure $\mu_\infty \in \cP(\Omega)$, and the marginal laws satisfy $\lim_{j\to\infty} W_{\hat{\mathrm{D}}_\Omega}(\mu_j, \mu_\infty) = 0$. 

For the exact augmented process, Lemma~\ref{lem:augmented_moment_control} guarantees that the Lyapunov condition on the physical state translates to the uniform integrability of the sequence $\{\mathrm{D}_\Omega(Z_j, Z_\star)\}_{j\ge0}$. Finally, applying Lemma~\ref{prop:hatD_to_DOmega}, this uniform integrability rigorously upgrades the convergence in the bounded metric $\hat{\mathrm{D}}_\Omega$ to convergence in the unbounded metric $\mathrm{D}_\Omega$, completing the proof.
\end{proof}

\begin{remark}[Verification for specific dynamical models]\label{rmk:models_verification}
Theorem~\ref{thm:exact_ergodicity_unbounded} verifies Assumption~\ref{da:ergodicity_on_state_filter}(ii) for the models considered in this paper under the stated non-degeneracy and filter-stability assumptions. Assuming that the underlying system noise is non-degenerate and the exact filter is stable with respect to $W_1$:
\begin{itemize}
    \item \textbf{Doubling-angle model (Subsection~\ref{ssec:doublin_angle_model}:} The physical state space is the 1D torus, which is intrinsically compact. Consequently, the exact filter components $\pi_j$ are probability measures supported on this compact space. Both $\|v_j^\dag\|$ and $W_1(\pi_j, \delta_0)$ are uniformly bounded, meaning the uniform integrability requirement is trivially satisfied without any moment assumptions. Therefore, Theorem~\ref{thm:exact_ergodicity_unbounded} applies immediately.
    
    \item \textbf{Lorenz~'63 and Lorenz~'96 models (Subsections~\ref{ssec:L63} and \ref{ssec:L96}:} While their state spaces are unbounded Euclidean spaces, the underlying continuous-time ordinary differential equations for both models are strictly dissipative; see, for example, \cite{mattingly2002ergodicity,law_accuracy_lorenz96_2015}. This inherent macroscopic dissipativity naturally translates to the discrete-time Lyapunov estimate \eqref{eq:app_lyapunov}. Thus, as long as the initial state has a finite second moment, uniform second-moment control holds, and Theorem~\ref{thm:exact_ergodicity_unbounded} applies.
\end{itemize}
\end{remark}

\subsection{Flatness of the Rank Histogram of the Filtering Distribution}\label{appendix:rank_hist_uniform}

This subsection provides the theoretical justification for the rank-histogram diagnostic and the $\operatorname{RHVar}$ metric introduced in Subsection~\ref{ssec:evaluation_metrics}: Proposition~\ref{prop:flatness_rhvar} shows that an ensemble sampled from the true filtering distribution has a uniform rank statistic, so a flat rank histogram is the ideal calibration behavior.

\begin{proposition}\label{prop:flatness_rhvar}
    The rank statistic of $f(v^\dag_j)$ with respect to an ensemble $\{f(v^{(n)}_j)\}_{n=1}^N$, where $\{v^{(n)}_j\}_{n=1}^N$ are drawn from the true filtering distribution $\mathbb{P}(v^\dag_j | Y^\dag_j)$ (Definition~\ref{def:filtering_distribution}) and $f$ is a scalar function, is uniform over the integers $[0, N]$.
\end{proposition}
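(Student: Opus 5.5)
The plan is to reduce the statement to an exchangeability argument, applied conditionally on the observation history. Fix $j$ and condition on $Y_j^\dag$. The interpretation of the statement is that, given $Y_j^\dag$, the ensemble members $v_j^{(1)},\dots,v_j^{(N)}$ are drawn i.i.d.\ from $\pi_j=\Law(v_j^\dag\mid Y_j^\dag)$ and independently of $v_j^\dag$. By Definition~\ref{def:filtering_distribution}, the truth $v_j^\dag$ conditioned on $Y_j^\dag$ also has law $\pi_j$. Hence, conditionally on $Y_j^\dag$, the $N+1$ random vectors $(v_j^\dag,v_j^{(1)},\dots,v_j^{(N)})$ are i.i.d.\ with law $\pi_j$. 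Applying the measurable scalar map $f$ coordinatewise, the scalars $(f(v_j^\dag),f(v_j^{(1)}),\dots,f(v_j^{(N)}))$ are conditionally i.i.d., and in particular exchangeable.

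The second step uses this exchangeability to compute the law of the rank $R=\sum_{n=1}^N\mathbf 1_{f(v_j^{(n)})<f(v_j^\dag)}$, with random tie-breaking as in \eqref{eq:ranksum}. I will implement tie-breaking by attaching i.i.d.\ $\mathrm{Unif}(0,1)$ marks $u_0,\dots,u_N$, independent of everything else. The ordering is then taken lexicographically on the pairs $(f(\cdot),u)$. The marked pairs remain i.i.d., and with probability one they are all distinct, so they have a strict total order. Let $\Pi$ denote the random position occupied by index $0$ (the truth) in the sorted list of the $N+1$ pairs. Exchangeability implies that every index is equally likely to occupy any given position, so $\bbP(\Pi=k\mid Y_j^\dag)$ is the same for each $k$. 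Since these $N+1$ probabilities sum to one, each equals $1/(N+1)$. The rank $R$ equals $\Pi$ (counting positions from $0$), so $R\mid Y_j^\dag\sim\operatorname{Unif}\{0,\dots,N\}$.

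Finally, the conditional law of $R$ does not depend on $Y_j^\dag$. Integrating over $Y_j^\dag$ via the tower property therefore gives the unconditional uniform law. This also justifies pooling ranks across trajectories and times in \eqref{eq:rank_hist_frequency}, at least as far as each marginal is concerned.

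The main obstacle is the tie-handling step. The map $f$ may push $\pi_j$ forward to a measure with atoms, and the paper also permits Dirac mixtures, so ties in $f$ values can occur with positive probability. The random-marks device is exactly what makes the strict ordering well defined while preserving exchangeability. It must be checked that this device coincides with the paper's tie-breaking convention, i.e.\ uniform random placement among the tied elements. The other point to state explicitly is the independence of the ensemble from $v_j^\dag$ given $Y_j^\dag$; without it the $N+1$ vectors need not be exchangeable.
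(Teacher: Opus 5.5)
Your argument is correct, but it is not how the paper proceeds. The paper's proof is a one-line reduction to Theorem~1 of \cite{talts_validating_2020}, the simulation-based calibration theorem, taking $\theta := v_j$ and $z := Y_j$, so that $(\tilde\theta,\tilde z)\sim\bbP(v_j^\dag,Y_j^\dag)$ and the ensemble is a posterior sample given $\tilde z$. What you have written is, in effect, a self-contained proof of that theorem specialised to the filtering setting: conditional exchangeability of $(v_j^\dag,v_j^{(1)},\dots,v_j^{(N)})$ given $Y_j^\dag$, then the tower property.

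The citation buys brevity. Your route buys three things.
\begin{enumerate}
\item It makes explicit that the ensemble must be conditionally i.i.d.\ from $\pi_j$ and conditionally independent of $v_j^\dag$ given $Y_j^\dag$. The paper leaves this hidden in the phrase ``drawn from''.
\item It shows that only conditional exchangeability of the $N+1$ vectors is used, not the full i.i.d.\ structure. This ties in with the exchangeability viewpoint mentioned in the introduction.
\item It handles ties. The strict-inequality rank in \eqref{eq:ranksum} without tie-breaking is not uniform when $f_\#\pi_j$ has atoms; for example, if $f$ is constant the rank is always $0$. Your i.i.d.\ continuous marks place the truth uniformly among the tied values, so they do match the paper's ``random tie-breaking'' convention.
\end{enumerate}

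One small tightening in your counting step. For a fixed position $k$, exactly one index occupies it, and by exchangeability each of the $N+1$ indices does so with equal probability. This gives $\bbP(\Pi=k\mid Y_j^\dag)=1/(N+1)$ directly. As written, your ``so \ldots\ the same for each $k$'' skips this step, and the subsequent sum over $k$ is unnecessary.
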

\begin{proof}
    The proof is a direct application of Theorem 1 in \cite{talts_validating_2020}, which states that for any joint distribution $\mathbb{P}(\theta, z)$, letting $(\tilde{\theta}, \tilde{z})\sim \mathbb{P}(\theta, z)$, and $\{\theta^{(1)}, \ldots, \theta^{(N)}\}\sim \mathbb{P}(\theta\, |\, \tilde{z})$, implies that the rank statistic of $f(\tilde{\theta})$ with respect to $\{f(\theta^{(1)}), \ldots, f(\theta^{(N)})\}$ is uniformly distributed over integers $[0, N]$, where $f: \Theta\to\mathbb{R}$ is any one-dimensional function and $\Theta$ is the sample space of $\theta$.
    
    Picking $\theta := v_j$ and $z := Y_j$, where $v_j$ and $Y_j$ are generated according to the dynamics--data model \ref{eq:DA_setting}, we have $(\tilde{\theta}, \tilde{z})\sim \mathbb{P}(v^\dagger_j, Y^\dagger_j)$. Then, we have 
    \begin{equation}
        \{\theta^{(1)}, \ldots, \theta^{(N)}\} = \{v_j^{(1)}, \ldots, v_j^{(N)}\}\sim\mathbb{P}(v^\dagger_j|Y^\dagger_j),
    \end{equation}
    i.e., samples drawn from the true filtering distribution. The stated result follows.
\end{proof}

\section{Bootstrap Particle Filter Reference and Evaluation}
\label{app:bpf_reference}

In the low-dimensional experiments for the doubling-angle model in Subsection~\ref{ssec:doublin_angle_model} and the Lorenz~'63 model in Subsection~\ref{ssec:L63}, we use a bootstrap particle filter (BPF) with $10^6$ particles as an accurate numerical proxy for the true filtering distribution. This appendix describes how this reference filter is constructed and how its outputs are stored for later comparison. We first introduce BPF in \ref{app_subsec:bpf}. We then describe the recorded BPF quantities in \ref{app_subsec:bpf_recording} for the comparison metric calculation. Next, we report diagnostic evidence in \ref{app_subsec:bpf_validity} for the two low-dimensional models showing that the BPF is effective. Finally, we define the evaluation metric used when comparing an ensemble-based method against the BPF reference in \ref{app_subsec:sed_metric}.

\subsection{Bootstrap Particle Filter}
\label{app_subsec:bpf}

We consider the dynamics--observation model in \eqref{eq:DA_setting}. At assimilation step $j$, the BPF represents the filtering distribution $\pi_{j-1}$ by an equally weighted particle system of $M$ particles $\pi_{j-1}^M = \frac{1}{M}\sum_{m=1}^M \delta_{v_{j-1}^{(m)}}.$
The forecast particles are obtained by independently propagating each analysis particle through the stochastic dynamics,
\begin{equation}
    \hat v_j^{(m)} = \Psi\bigl(v_{j-1}^{(m)}\bigr) + \xi_{j-1}^{(m)},
    \qquad
    \xi_{j-1}^{(m)} \sim \normal(0,\Sigma),
    \qquad
    m \in [M].
\end{equation}
The empirical predictive distribution is therefore $\hat\pi_j^M = \frac{1}{M}\sum_{m=1}^M \delta_{\hat v_j^{(m)}}.$ With the observation $y^\dagger_j$, since the observation noise is Gaussian, the normalized BPF weights $w_j^{(m)}$ are
\begin{equation}
    \tilde w_j^{(m)}
    =
    \exp\left(
    -\frac{1}{2}
    \left\|y_j^\dagger - h\bigl(\hat v_j^{(m)}\bigr)\right\|_{\Gamma}^2
    \right),
    \quad
    w_j^{(m)}
    =
    \frac{\tilde w_j^{(m)}}{\sum_{r=1}^M \tilde w_j^{(r)}},
    \quad
    m=1,2,\ldots,M.
    \label{eq:bpf_normalized_weight}
\end{equation}
The weighted particle approximation of the filtering distribution is $\bar\pi_j^M = \sum_{m=1}^M w_j^{(m)}\delta_{\hat v_j^{(m)}}.$
To obtain an equally weighted analysis ensemble, we resample $M$ times from the forecast particles according to the normalized weights. Specifically, conditionally on the forecast particles and the observation history, the resampling indices satisfy
\begin{equation}
    \mathbb{P}\left(I_j^{(m)} = r\right) = w_j^{(r)},
    \qquad
    r \in [M],
    \qquad
    m \in [M].
\end{equation}
The resampled analysis particles and the resulting empirical filtering distribution are
\begin{equation}
    v_j^{(m)} = \hat v_j^{\left(I_j^{(m)}\right)},
    \quad
    m \in [M],
    \quad
    \pi_j^M = \frac{1}{M}\sum_{m=1}^M \delta_{v_j^{(m)}}.
\end{equation}

\subsection{Recorded BPF Quantities}
\label{app_subsec:bpf_recording}

At every assimilation step, we record two classes of quantities from the BPF. The first class consists of distributional features used to describe the BPF forecast or analysis ensemble, including means, covariances, principal-component information, and projected quantile functions. The second class consists of pre-resampling weight diagnostics used to assess whether the BPF remains reliable, including the effective sample size, and the weight abundance.

For a state particle ensemble at time $j$, written as $\{v_j^{(m)}\}_{m=1}^M$, the empirical mean and covariance are
\begin{align}
    \bar v_j^M
    =
    \frac{1}{M}\sum_{m=1}^M v_j^{(m)},
    \qquad
    C_j^M
    =
    \frac{1}{M-1}
    \sum_{m=1}^M
    \bigl(v_j^{(m)}-\bar v_j^M\bigr)\otimes
    \bigl(v_j^{(m)}-\bar v_j^M\bigr).
\end{align}
For principal component analysis (PCA), we compute the eigendecomposition
\begin{equation}
    C_j^M = U_j^M\Lambda_j^M\left(U_j^M\right)^\top,
    \qquad
    U_j^M = \left[u_{j,1}^M,\ldots,u_{j,d_v}^M\right],
\end{equation}
where the eigenvalues in $\Lambda_j^M$ are sorted in descending order. The centered PCA score of particle $m$ along the $\ell$-th principal direction is
\begin{equation}
    p_{j,\ell}^{(m)}
    =
    \left(u_{j,\ell}^M\right)^\top
    \bigl(v_j^{(m)}-\bar v_j^M\bigr).
\end{equation}

The projected quantile functions are the main objects used later for evaluation. For any one-dimensional projection direction $e \in \bbR^{d_v}$, we define the scalar projected particle by
\begin{equation}
    x_{j,e}^{(m)} = e^\top v_j^{(m)},
    \qquad
    m \in [M].
\end{equation}
Let $i_1,\ldots,i_M$ be a permutation of $[M]$ such that the projected samples are sorted in nondecreasing order,
\begin{equation}
    x_{j,e}^{(i_1)}
    \leq
    x_{j,e}^{(i_2)}
    \leq
    \cdots
    \leq
    x_{j,e}^{(i_M)}.
\end{equation}
For $\tau \in [0,1]$, the empirical projected quantile function is computed by linear interpolation along the sorted projected samples,
\begin{subequations}\label{eq:bpf_projected_quantile}
\begin{align}
    &Q_{j,e}(\tau)
    =
    x_{j,e}^{(i_{a_\tau})}
    +
    \bigl(r_\tau-a_\tau\bigr)
    \left(
    x_{j,e}^{(i_{b_\tau})}
    -
    x_{j,e}^{(i_{a_\tau})}
    \right),\\
    \text{where }
    &r_\tau = 1+(M-1)\tau,
    \quad
    a_\tau = \lfloor r_\tau \rfloor,
    \quad
    b_\tau = \lceil r_\tau \rceil.
\end{align}
\end{subequations}
When the time index is clear, we write $Q_e(\tau)$ for $Q_{j,e}(\tau)$. We record this quantile function for every coordinate direction and every PCA direction. More precisely, with
\begin{equation}
    \mathcal{E}^{\mathrm{coord}}
    =
    \{e_1,\ldots,e_{d_v}\},
    \quad
    \mathcal{E}^{\mathrm{pc}}_j
    =
    \{u_{j,1}^M,\ldots,u_{j,d_v}^M\},
    \quad
    \mathcal{E}_j
    =
    \mathcal{E}^{\mathrm{coord}}
    \cup
    \mathcal{E}^{\mathrm{pc}}_j,
\end{equation}
we store $Q_{j,e}(\tau_k)$ for all $e \in \mathcal{E}_j$ and all quantile levels $\tau_k = \frac{k-1}{K-1}$ for $k=1,\ldots,K.$
In our implementation, $K=257$. The PCA directions used in evaluation are always computed from the BPF reference, so all ensemble-based methods are compared against the same projected BPF targets.

The weight diagnostics are computed before resampling, using the normalized weights $\{w_j^{(m)}\}_{m=1}^M$. We record two complementary diagnostics:
\begin{subequations}
\begin{align}
    \text{Effective sample size: } \mathrm{ESS}_j
    &=
    \left(
    \sum_{m=1}^M \left(w_j^{(m)}\right)^2
    \right)^{-1},\label{eq:ESS}
    \\
    \text{Weight abundance: } W_{A,j}
    &=
    \exp\left(
    -\sum_{m=1}^M w_j^{(m)}\log w_j^{(m)}
    \right).\label{eq:WA}
\end{align}
\end{subequations}
The effective sample size is a standard diagnostic for weight degeneracy in sequential Monte Carlo methods \citep{liu1998sequential,del2006sequential}. The weight abundance is the exponential of the Shannon entropy of the normalized weights, and can be interpreted as the effective number of active particles on the entropy scale \citep{jost2006entropy}.

\begin{table}[htpb]
\centering
\caption{BPF quantities recorded at each assimilation step. The first group records distributional features of the state particles. The second group records pre-resampling diagnostics for validating the BPF reference.}
\label{tab:bpf_recorded_quantities}
\renewcommand{\arraystretch}{1.6}
\resizebox{\textwidth}{!}{%
\begin{tabular}{lllp{0.46\textwidth}}
\toprule
\textbf{Category} & \textbf{Metric} & \textbf{Notation} & \textbf{Definition} \\
\midrule
Recorded features
& Mean
& $\bar v_j^M$
& $\frac{1}{M}\sum_{m=1}^M v_j^{(m)}$
\\
& Covariance
& $C_j^M$
& $\frac{1}{M-1}\sum_{m=1}^M \bigl(v_j^{(m)}-\bar v_j^M\bigr)\otimes\bigl(v_j^{(m)}-\bar v_j^M\bigr)$
\\
& Principal components
& $U_j^M,\Lambda_j^M$
& $C_j^M=U_j^M\Lambda_j^M\left(U_j^M\right)^\top$
\\
& Projected quantile
& $Q_{j,e}(\tau)$
& Refer to \eqref{eq:bpf_projected_quantile}
\\
\midrule
BPF validity diagnostics
& Effective sample size
& $\mathrm{ESS}_j$
& $\left(\sum_{m=1}^M \left(w_j^{(m)}\right)^2\right)^{-1}$
\\
& Weight abundance
& $W_{A,j}$
& $\exp\left(-\sum_{m=1}^M w_j^{(m)}\log w_j^{(m)}\right)$
\\
\bottomrule
\end{tabular}%
}
\end{table}

The two diagnostics capture related but not identical aspects of the pre-resampling weight distribution. The effective sample size $\mathrm{ESS}_j$ is based on the second moment of the normalized weights. It is highly sensitive to very large weights and therefore provides a conservative indicator of particle degeneracy. In the ideal case where all particles have equal weights, $\mathrm{ESS}_j=M$; if one particle carries all the weight, then $\mathrm{ESS}_j=1$. Thus, a small value of $\mathrm{ESS}_j/M$ indicates that only a small fraction of particles effectively contributes to the likelihood-weighted approximation before resampling.

The weight abundance $W_{A,j}$ measures the effective number of active particles on the entropy scale. Compared with $\mathrm{ESS}_j$, it is less dominated by the largest few weights and instead summarizes how broadly the likelihood mass is distributed across particles. It also satisfies $1\leq W_{A,j}\leq M$, with $W_{A,j}=M$ for uniform weights and $W_{A,j}=1$ for a completely collapsed weight vector. We therefore use $\mathrm{ESS}_j$ and $W_{A,j}$ together: the former detects severe concentration of the weights, while the latter gives an entropy-based measure of the number of particles that carry non-negligible mass. In the low-dimensional experiments, consistently large values of both diagnostics provide evidence that the BPF with $M=10^6$ particles remains a reliable numerical proxy for the filtering distribution.

\subsection{BPF Validity for the Low-Dimensional Models}
\label{app_subsec:bpf_validity}

For the two low-dimensional examples, we use the BPF reference only after checking two conditions: the pre-resampling weights should not collapse, and the resulting BPF posterior statistics should be stable with respect to the Monte Carlo seed. The first condition is assessed using the effective sample size $\mathrm{ESS}_j$ (Equation~\eqref{eq:ESS}) and the weight abundance $W_{A,j}$ (Equation~\eqref{eq:WA}). The second condition is assessed by repeating the same BPF experiment with independent random seeds and measuring the across-seed variability of the posterior mean and covariance.

For a fixed model, observation function, particle number $M$, and test trajectory, let $\bar v_j^{M,s}\in\bbR^{d_v}$ and $C_j^{M,s}\in\bbR^{d_v\times d_v}$ denote the BPF posterior mean and covariance at assimilation step $j$ from seed $s=1,\ldots,R$. In our diagnostics, we use $R=12$ independent seeds. For any seed-indexed scalar quantity $z^s$, write $\langle z\rangle_R=R^{-1}\sum_{s=1}^R z^s$. The across-seed standard error of the $\ell$-th component of the BPF posterior mean is
\begin{equation}
    \widehat{\mathrm{se}}_{\bar v,j,\ell}^{M}
    =
    \left[
    \frac{1}{R(R-1)}
    \sum_{s=1}^R
    \left(
    \bar v_{j,\ell}^{M,s}
    -
    \langle \bar v_{j,\ell}^{M}\rangle_R
    \right)^2
    \right]^{1/2}.
\end{equation}
We define $\mathrm{SE}_{\mathrm{mean}}(M)$ as the average of $\widehat{\mathrm{se}}_{\bar v,j,\ell}^{M}$ over assimilation steps and state components. The covariance diagnostic is defined entrywise in the same way:
\begin{equation}
    \widehat{\mathrm{se}}_{C,j,\ell q}^{M}
    =
    \left[
    \frac{1}{R(R-1)}
    \sum_{s=1}^R
    \left(
    C_{j,\ell q}^{M,s}
    -
    \langle C_{j,\ell q}^{M}\rangle_R
    \right)^2
    \right]^{1/2}.
\end{equation}
We define $\mathrm{SE}_{\mathrm{cov}}(M)$ as the average of $\widehat{\mathrm{se}}_{C,j,\ell q}^{M}$ over assimilation steps and covariance entries. In the experiments, all reported averages are also taken over $B=64$ independent test trajectories; the trajectory index is omitted from the notation above for readability. We use $J=200$ assimilation steps for the doubling-angle model and $J=500$ assimilation steps for the Lorenz~'63 model. We perform this diagnostic for four BPF reference settings: the doubling-angle model with cosine observation in Subsection~\ref{ssec:doublin_angle_model}, and the Lorenz~'63 model in Subsection~\ref{ssec:L63} with identity, square, and arctan observations. For each setting, we repeat the analysis for a range of particle numbers up to $M=10^6$.

Figure~\ref{fig:bpf_validity_low_dim} summarizes the results. The first row reports the time- and trajectory-averaged $\mathrm{ESS}_j$ and $W_{A,j}$ as functions of $M$. In all four settings, both diagnostics scale approximately linearly with $M$, indicating that the likelihood weights retain a large active particle population before resampling. This provides evidence against severe weight collapse in the BPF reference runs. The second row reports $\mathrm{SE}_{\mathrm{mean}}(M)$ and $\mathrm{SE}_{\mathrm{cov}}(M)$. The mean standard error decreases as $M$ increases. The covariance standard error is not strictly monotone over the tested particle numbers, but its magnitude remains small in all four settings. Together, the weight diagnostics and seed-stability diagnostics indicate that the BPF reference is numerically stable for these low-dimensional models. We therefore use the $M=10^6$ BPF output as the numerical reference for evaluating ensemble-based methods in the main experiments.

\begin{figure}[!ht]
    \centering
    \includegraphics[width=\textwidth]{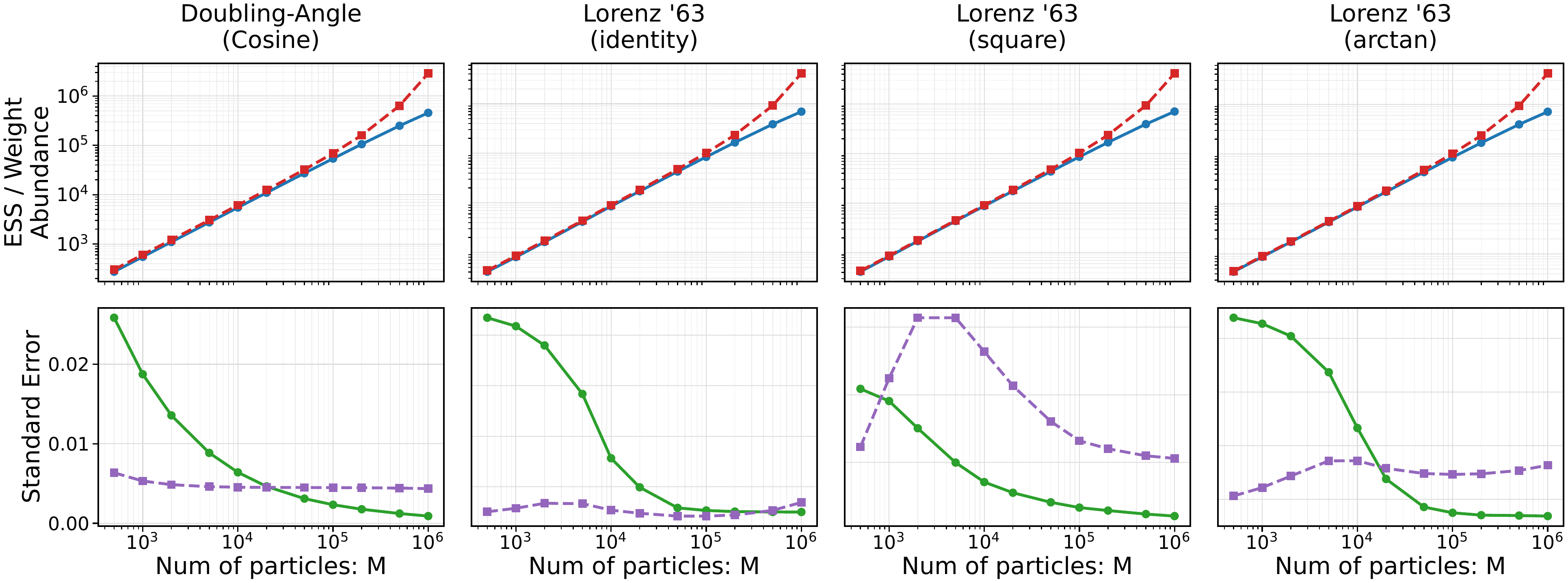}
    \includegraphics[width=0.65\textwidth]{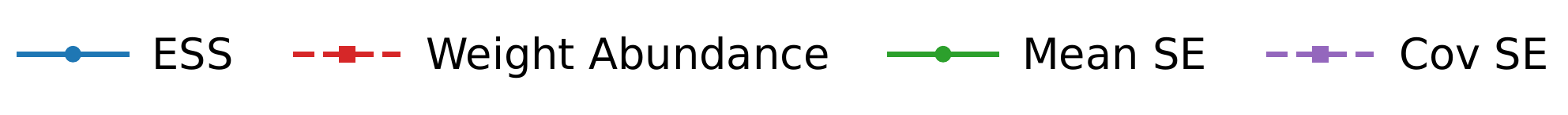}
    \caption{BPF validity diagnostics for the low-dimensional reference experiments. Columns correspond to the doubling-angle model with cosine observation and the Lorenz~'63 model with identity, square, and arctan observations. The first row shows the averaged pre-resampling effective sample size and weight abundance as functions of the particle number $M$. The second row shows the seed standard errors of the BPF posterior mean and covariance, averaged over assimilation steps, test trajectories, and state or covariance entries.}
    \label{fig:bpf_validity_low_dim}
\end{figure}

\subsection{Sliced Energy-Distance Evaluation Metric}
\label{app_subsec:sed_metric}

We now define the metric used to compare an ensemble-based method with the BPF reference. Rather than storing the full BPF particle system for evaluation, we use the projected BPF quantile functions $Q_{j,e}$. Thus, every method is compared with the same one-dimensional BPF targets along the coordinate and BPF-PCA directions $e\in\mathcal{E}_j$.

For an ensemble-based method with analysis ensemble $\{v_j^{(n)}\}_{n=1}^N$, define the projected ensemble samples by $x_{j,e}^{(n)}=e^\top v_j^{(n)}$. The corresponding BPF target quantiles are $q_{j,e,k}=Q_{j,e}(\tau_k)$, where $\tau_k=(k-1)/(K-1)$ for $k=1,\ldots,K$. To approximate integration over the quantile level, we use the trapezoidal weights
\begin{equation}
\begin{aligned}
    \alpha_1
    &=
    \frac{1}{2}(\tau_2-\tau_1),
    &
    \alpha_K
    &=
    \frac{1}{2}(\tau_K-\tau_{K-1}),
    \\
    \alpha_k
    &=
    \frac{1}{2}(\tau_{k+1}-\tau_{k-1}),
    &
    \bar\alpha_k
    &=
    \frac{\alpha_k}{\sum_{\ell=1}^K \alpha_\ell},
\end{aligned}
\label{eq:sed_trapezoidal_weights}
\end{equation}
where the definition of $\alpha_k$ in the second line applies for $2\leq k\leq K-1$. The projected ensemble law is $F_{j,e}^N=N^{-1}\sum_{n=1}^N\delta_{x_{j,e}^{(n)}}$, and the quantile-based BPF approximation is $G_{j,e}^Q=\sum_{k=1}^K\bar\alpha_k\delta_{q_{j,e,k}}$.

For this projection, the empirical one-dimensional energy distance between the projected ensemble and the projected quantile-based BPF approximation, called the sliced energy distance (SED), is
\begin{multline}
    \operatorname{SED}_{j,e}
    =
    \frac{1}{N}\sum_{n=1}^N\sum_{k=1}^K \bar\alpha_k \left|x_{j,e}^{(n)}-q_{j,e,k}\right|
    -
    \frac{1}{2N^2}\sum_{n=1}^N\sum_{n'=1}^N \left|x_{j,e}^{(n)}-x_{j,e}^{(n')}\right|
    \\-
    \frac{1}{2}\sum_{k=1}^K\sum_{\ell=1}^K \bar\alpha_k\bar\alpha_\ell \left|q_{j,e,k}-q_{j,e,\ell}\right|.
    \label{eq:projected_energy_distance}
\end{multline}
The three terms are respectively the cross-distance between the method and the BPF target, the self-distance of the method, and the self-distance of the BPF target. The time-averaged SED is then
\begin{equation}
    \operatorname{SED}
    =
    \frac{1}{J}
    \sum_{j=1}^J
    \frac{1}{|\mathcal{E}_j|}
    \sum_{e\in\mathcal{E}_j}
    \operatorname{SED}_{j,e}.
    \label{eq:sed_score}
\end{equation}
A smaller $\operatorname{SED}$ indicates closer agreement with the BPF reference across the chosen coordinate and BPF-PCA projections. When results are reported over multiple test trajectories, the same score is computed for each trajectory and then averaged; the trajectory index is omitted above for readability.

\section{Implementation Details}
\label{app:implementation_details}
This appendix provides additional implementation details for the numerical experiments in Section~\ref{sec:numerical_exp}. \ref{subsec:appendix_architecture_training} summarizes the neural-network architectures and training hyperparameters used by the learning-based filters. \ref{app_subsec:grid_search} describes the grid-search protocol used to tune the classical filtering baselines. \ref{app_subsec:snr_calibration} explains the observation-noise calibration procedure used to compare different observation maps under matched signal-to-noise ratios. Further implementation details, experiment scripts, and code are available in the project repository\footnote{\url{https://github.com/wispcarey/Proper-Scoring-Ensemble-Filter}}.

\subsection{Neural Network Architecture and Training Hyperparameters}
\label{subsec:appendix_architecture_training}

This subsection summarizes the neural-network configurations and training hyperparameters used for the learning-based filters in the numerical experiments. The two architecture families are those introduced in Subsection~\ref{ssec:methods}: the EnKF correction-terms architecture \citep{bach_learning_2026}, denoted by CorrTerms, and the end-to-end filtering architecture, denoted by EtE. For each model, the same neural architecture is used for the ES-trained and NL2-trained variants; only the training objective is changed. In the linear--Gaussian diagnostic experiment in Subsection~\ref{ssec:linear_gaussian_model}, the reported objective comparison uses the EtE architecture with the ES, L2, and NL2 losses.

Table~\ref{tab:nn_architecture_settings} shows the neural network architecture details. The latent dimension controls the internal set representation, and the output feature dimension gives the size of the set-transformer feature representation passed to the subsequent filtering map. The seed column gives the number of learned seed vectors used by the set transformer. The attention-block column reports the total number of attention blocks, consisting of the self-attention blocks, and one bottleneck cross-attention block.

\begin{table}[htbp]
\centering
\caption{Neural-network architecture settings for the learning-based filters.}
\label{tab:nn_architecture_settings}
\renewcommand{\arraystretch}{1.15}
\small
\resizebox{\textwidth}{!}{%
\begin{tabular}{llrrrrrrr}
    \toprule
    \textbf{Model}
    & \textbf{Architecture}
    & \begin{tabular}[c]{@{}c@{}}\textbf{Total}\\\textbf{params}\end{tabular}
    & \begin{tabular}[c]{@{}c@{}}\textbf{Set-transformer}\\\textbf{params}\end{tabular}
    & \begin{tabular}[c]{@{}c@{}}\textbf{Latent}\\\textbf{dim}\end{tabular}
    & \begin{tabular}[c]{@{}c@{}}\textbf{Output feature}\\\textbf{dim}\end{tabular}
    & \begin{tabular}[c]{@{}c@{}}\textbf{Seed}\\\textbf{vectors}\end{tabular}
    & \begin{tabular}[c]{@{}c@{}}\textbf{Attention}\\\textbf{blocks}\end{tabular}
    & \begin{tabular}[c]{@{}c@{}}\textbf{Attention}\\\textbf{heads}\end{tabular} \\
    \midrule
    \multirow{2}{*}{Linear--Gaussian}
    & CorrTerms & 108845 & \multirow{2}{*}{47152}  & \multirow{2}{*}{16} & \multirow{2}{*}{128} & \multirow{2}{*}{16} & \multirow{2}{*}{7} & \multirow{2}{*}{8} \\
    & EtE       & 75908  &                         &                     &                      &                     &                    &                    \\
    \midrule
    \multirow{2}{*}{Doubling-angle}
    & CorrTerms & 172260 & \multirow{2}{*}{117856} & \multirow{2}{*}{32} & \multirow{2}{*}{128} & \multirow{2}{*}{16} & \multirow{2}{*}{7} & \multirow{2}{*}{8} \\
    & EtE       & 143009 &                         &                     &                      &                     &                    &                    \\
    \midrule
    \multirow{2}{*}{Lorenz~'63}
    & CorrTerms & 389001 & \multirow{2}{*}{334016} & \multirow{2}{*}{64} & \multirow{2}{*}{128} & \multirow{2}{*}{16} & \multirow{2}{*}{7} & \multirow{2}{*}{8} \\
    & EtE       & 359427 &                         &                     &                      &                     &                    &                    \\
    \midrule
    \multirow{2}{*}{Lorenz~'96}
    & CorrTerms & 404463 & \multirow{2}{*}{336960} & \multirow{2}{*}{64} & \multirow{2}{*}{128} & \multirow{2}{*}{16} & \multirow{2}{*}{7} & \multirow{2}{*}{8} \\
    & EtE       & 368296 &                         &                     &                      &                     &                    &                    \\
    \bottomrule
\end{tabular}%
}
\end{table}

The training and evaluation datasets are generated independently for each model setting described in Subsections~\ref{ssec:linear_gaussian_model}--\ref{ssec:L96}. Table~\ref{tab:training_hyperparameters} reports the number of simulated trajectories, trajectory lengths, learning rates, batch sizes, training clamp values, and weight-decay values. All configurations use zero weight decay except the CorrTerms architecture for the Lorenz~'96 experiment, where the weight decay is set to $0.01$.

\begin{table}[htbp]
\centering
\caption{Training and evaluation hyperparameters for the learning-based filters.}
\label{tab:training_hyperparameters}
\renewcommand{\arraystretch}{1.15}
\small
\resizebox{\textwidth}{!}{%
\begin{tabular}{llrrrrrrrr}
    \toprule
    \textbf{Model}
    & \textbf{Architecture}
    & \begin{tabular}[c]{@{}c@{}}\textbf{Training}\\\textbf{trajectories}\end{tabular}
    & \begin{tabular}[c]{@{}c@{}}\textbf{Training}\\\textbf{length}\end{tabular}
    & \begin{tabular}[c]{@{}c@{}}\textbf{Test}\\\textbf{trajectories}\end{tabular}
    & \begin{tabular}[c]{@{}c@{}}\textbf{Test}\\\textbf{length}\end{tabular}
    & \begin{tabular}[c]{@{}c@{}}\textbf{Learning}\\\textbf{rate}\end{tabular}
    & \begin{tabular}[c]{@{}c@{}}\textbf{Batch}\\\textbf{size}\end{tabular}
    & \begin{tabular}[c]{@{}c@{}}\textbf{Training}\\\textbf{clamp}\end{tabular}
    & \begin{tabular}[c]{@{}c@{}}\textbf{Weight}\\\textbf{decay}\end{tabular} \\
    \midrule
    \multirow{2}{*}{Linear--Gaussian}
    & CorrTerms & \multirow{2}{*}{8192} & \multirow{2}{*}{60} & \multirow{2}{*}{64} & \multirow{2}{*}{100}  & \multirow{2}{*}{$1\times 10^{-3}$} & \multirow{2}{*}{1024} & \multirow{2}{*}{--} & \multirow{2}{*}{0} \\
    & EtE       &                       &                     &                    &                       &                                  &                       &                     &                   \\
    \midrule
    \multirow{2}{*}{Doubling-angle}
    & CorrTerms & \multirow{2}{*}{8192} & \multirow{2}{*}{60} & \multirow{2}{*}{64} & \multirow{2}{*}{200}  & \multirow{2}{*}{$3\times 10^{-4}$} & \multirow{2}{*}{1024} & \multirow{2}{*}{2} & \multirow{2}{*}{0} \\
    & EtE       &                       &                     &                    &                       &                                  &                       &                   &                   \\
    \midrule
    \multirow{2}{*}{Lorenz~'63}
    & CorrTerms & \multirow{2}{*}{8192} & \multirow{2}{*}{60} & \multirow{2}{*}{64} & \multirow{2}{*}{500}  & \multirow{2}{*}{$1\times 10^{-3}$} & \multirow{2}{*}{512} & \multirow{2}{*}{60} & \multirow{2}{*}{0} \\
    & EtE       &                       &                     &                    &                       &                                  &                      &                    &                   \\
    \midrule
    \multirow{2}{*}{Lorenz~'96}
    & CorrTerms & \multirow{2}{*}{8192} & \multirow{2}{*}{60} & \multirow{2}{*}{64} & \multirow{2}{*}{1500} & \multirow{2}{*}{$1\times 10^{-3}$} & \multirow{2}{*}{512} & \multirow{2}{*}{20} & $0.01$ \\
    & EtE       &                       &                     &                    &                       &                                  &                      &                    & 0 \\
    \bottomrule
\end{tabular}%
}
\end{table}

The different test lengths are chosen according to model complexity and reference-filter cost. For the linear--Gaussian model, the dynamics and filtering distribution are relatively simple, and the exact Kalman filter is available as a reference; using a much longer test trajectory provides limited additional information, so we use test length $100$. For the doubling-angle and Lorenz~'63 models, the reference filtering distribution is approximated by a particle filter with $10^6$ particles. Since this reference computation becomes expensive for long trajectories, we use test lengths $200$ and $500$, respectively. For the Lorenz~'96 model, the state dimension is $d_v=40$, making a particle filter reference computationally infeasible. We therefore use a longer test trajectory of length $1500$ to obtain more stable test averages for the evaluation metrics.

The training clamp is used only during training. After each analysis step, if the absolute value of a state component exceeds the listed threshold, that component is clipped to the threshold with the same sign. This clipping is introduced to improve stability during the early stage of training and is not used during testing. A dash in the training-clamp column indicates that no clamp is used.

The ensemble size used during training follows the setting specified in the corresponding numerical experiment. The doubling-angle model in Subsection~\ref{ssec:doublin_angle_model} is trained at ensemble size $N=30$ and evaluated at other ensemble sizes without fine-tuning. The Lorenz~'63 experiments in Subsection~\ref{ssec:L63} are trained at $N=10$ and evaluated directly across the tested ensemble sizes. For the Lorenz~'96 experiment in Subsection~\ref{ssec:L96}, the initial training is performed at ensemble size $N=10$. When evaluating at a different ensemble size $N'\neq N$, we freeze the set-transformer parameters during fine-tuning, corresponding to approximately $83.3\%$ of the CorrTerms parameters and $91.5\%$ of the EtE parameters.

\subsection{Grid Search for Classical Filtering Baselines}
\label{app_subsec:grid_search}

This subsection describes how the hyperparameters of the classical filtering baselines are selected before producing the reported test results. All classical benchmarks are intentionally tuned by grid search on the test dataset itself. That is, for each dynamical model, observation map, ensemble size, and filtering method, the classical filter is run over the candidate hyperparameter grid on the same test trajectories used for the reported metrics, and the best grid point is selected according to the corresponding test-set objective. This test-set grid search gives the classical benchmarks an additional advantage, since their inflation and localization parameters are optimized directly for the evaluation data. Nevertheless, as shown in the main numerical results, the learning-based methods, especially EtE+ES, still outperform these classical benchmarks.

\begin{figure}[!t]
    \centering
    \includegraphics[width=\textwidth]{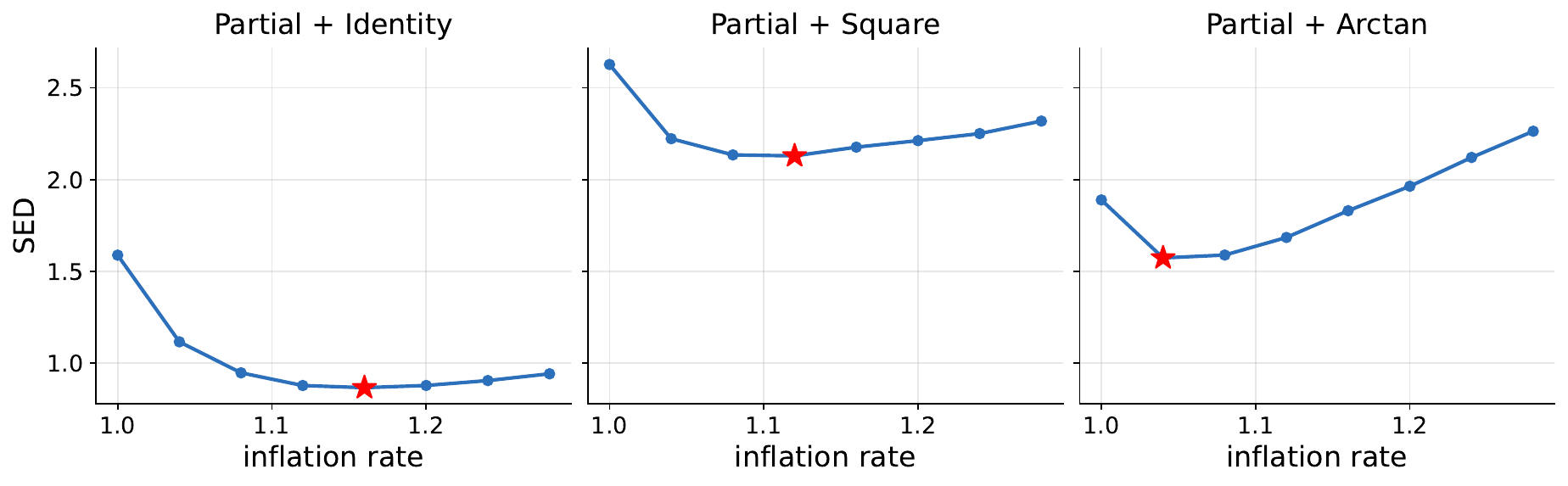}
    \caption{Representative Lorenz~'63 grid search for the EnKF with ensemble size $N=10$. The three panels correspond to the partial identity, partial square, and partial arctan observation settings. Since Lorenz~'63 is low-dimensional, no localization is used, and the search is performed over the inflation factor only. The objective is SED, with smaller values indicating better agreement with the BPF reference filtering distribution. The red star marks the selected inflation factor for each observation setting.}
    \label{fig:lorenz63_grid_search}

    \vspace{1em}

    \includegraphics[width=\textwidth]{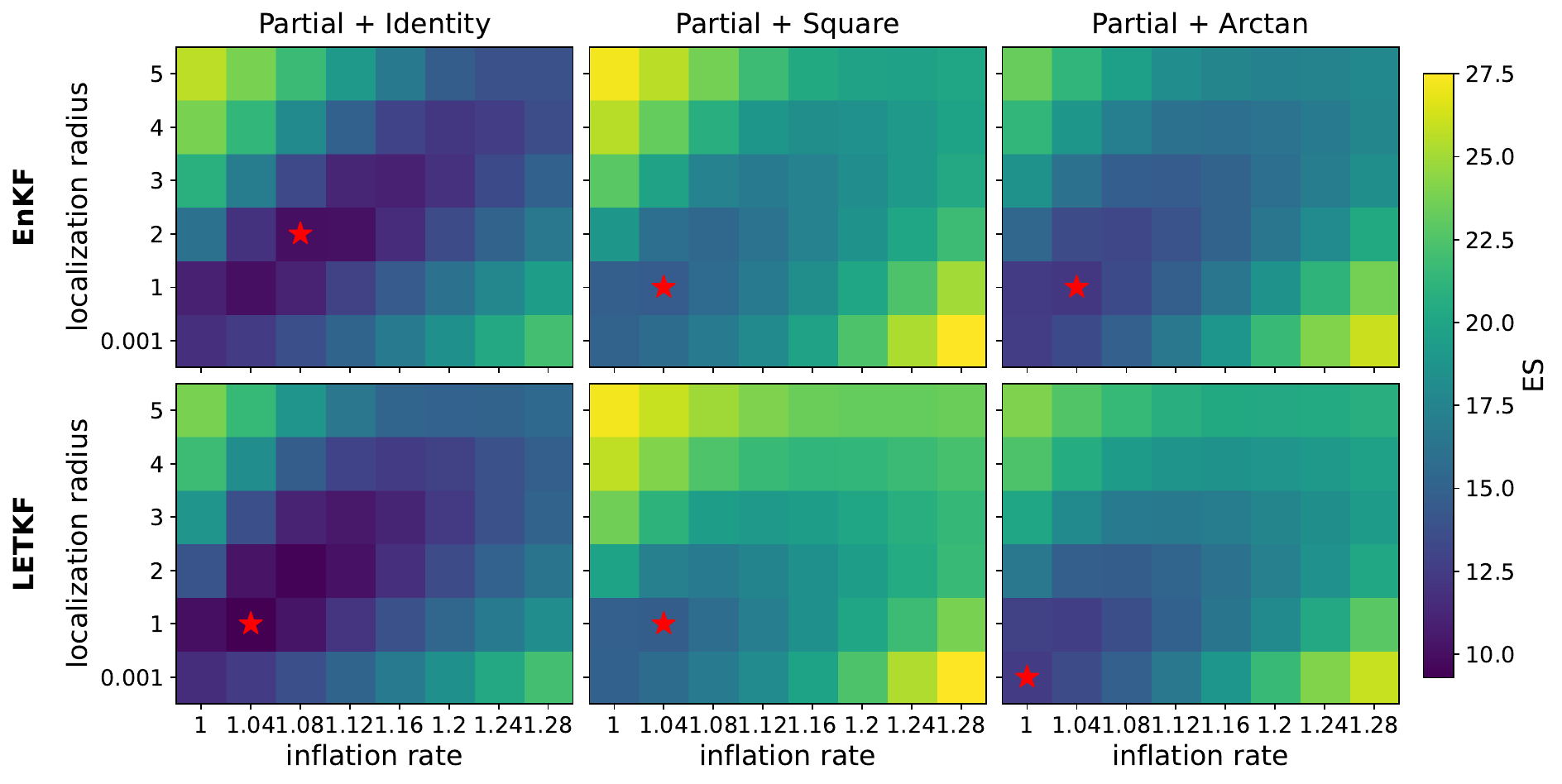}
    \caption{Representative Lorenz~'96 grid search for localized EnKF and LETKF with ensemble size $N=10$. The columns correspond to the partial identity, partial square, and partial arctan observation settings, while the rows correspond to EnKF and LETKF. For each method and observation setting, the grid search is performed jointly over the inflation factor and the Gaspari--Cohn localization radius. The objective is RES, the relative energy score, with smaller values indicating better probabilistic filtering performance. The red star marks the selected pair of inflation factor and localization radius.}
    \label{fig:lorenz96_grid_search}
\end{figure}

All classical filters use post-analysis multiplicative inflation. We denote the inflation factor by $\alpha \geq 1$. When spatial localization is used, we denote the localization radius by $r$. For each ensemble size $N$, we use a finite inflation grid
\begin{equation}
    \mathcal{A}_{N}
    =
    \{1,1+\Delta^\alpha_N,1+2\Delta^\alpha_N,\ldots,1+(K_\alpha-1)\Delta^\alpha_N\}.
    \label{eq:grid_search_inflation_grid}
\end{equation}
The step size $\Delta^\alpha_N$ and the number of candidates $K_\alpha$ are chosen before running the grid search. Smaller ensembles are searched over a wider inflation range, while larger ensembles use milder inflation grids. For localized experiments, we also use a finite localization grid
\begin{equation}
    \mathcal{R}_{N}
    =
    \{\epsilon,\Delta^r_N,2\Delta^r_N,\ldots,(K_r-1)\Delta^r_N\},
    \label{eq:grid_search_localization_grid}
\end{equation}
where $\epsilon$ is a near-zero localization radius. Smaller ensembles use stronger localization, corresponding to smaller localization radii, while larger ensembles are allowed to use larger localization radii. For the representative $N=10$ searches shown in Figures~\ref{fig:lorenz63_grid_search} and \ref{fig:lorenz96_grid_search}, the grids are
\begin{equation}
\begin{aligned}
    \mathcal{A}_{10} &= \{1,1.04,1.08,1.12,1.16,1.20,1.24,1.28\}, &
    \mathcal{R}_{10} &= \{0.001,1,2,3,4,5\}.
\end{aligned}
    \label{eq:grid_search_example_grids}
\end{equation}

For the linear--Gaussian diagnostic experiment, where the Kalman filter gives an exact reference filtering distribution, localized ensemble baselines are tuned by minimizing the Wasserstein error:
\begin{equation}
    (\alpha^\star,r^\star)
    =
    \argmin_{\alpha \in \mathcal{A}_{N}, r \in \mathcal{R}_{N}}
    \operatorname{W}_{2}(\alpha,r).
    \label{eq:grid_search_linear_w2}
\end{equation}
Here $\operatorname{W}_{2}(\alpha,r)$ denotes the Wasserstein error obtained by running the corresponding classical filter with inflation factor $\alpha$ and localization radius $r$.

For the low-dimensional nonlinear systems, namely the doubling-angle and Lorenz~'63, no localization is used. Therefore, the grid search reduces to a one-dimensional search over the inflation factor:
\begin{equation}
    \alpha^\star
    =
    \argmin_{\alpha \in \mathcal{A}_{N}}
    \operatorname{SED}(\alpha).
    \label{eq:grid_search_low_dimensional_sed}
\end{equation}
The same procedure is applied separately for each observation map, ensemble size, and classical filtering method. In these experiments, SED is computed against the BPF reference filtering distribution. Thus, the selected inflation factor is the one whose filtering ensemble is closest to the BPF reference in SED.

For Lorenz~'96, localization is important for stable high-dimensional filtering. For localized baselines, we therefore perform a two-dimensional search over inflation and localization radius:
\begin{equation}
    (\alpha^\star,r^\star)
    =
    \argmin_{\alpha \in \mathcal{A}_{N}, r \in \mathcal{R}_{N}}
    \operatorname{RES}(\alpha,r).
    \label{eq:grid_search_l96_res}
\end{equation}
Here $\operatorname{RES}$ denotes the relative energy score. The localized EnKF, LETKF, and IEnKF use the Gaspari--Cohn localization function \citep{gaspari1999construction} and are tuned by the joint search in \eqref{eq:grid_search_l96_res}. If a grid-search run diverges or produces numerically invalid filtering outputs, it is excluded from selection by assigning it an infinite objective value.

Figures~\ref{fig:lorenz63_grid_search} and \ref{fig:lorenz96_grid_search} show representative grid searches for Lorenz~'63 and Lorenz~'96. The red stars mark the selected test-set optima. These examples illustrate the tuning procedure used for the classical benchmarks; the complete grid-search configurations, including all ensemble sizes and observation settings, are provided in the project repository linked at the beginning of this appendix.

The grid search is carried out independently for each classical filtering method rather than sharing hyperparameters across methods. This is important because stochastic, deterministic, localized, and iterative ensemble filters can have substantially different stability and calibration behavior under the same inflation factor. Since the grid search is performed on the test dataset, the reported classical benchmark results should be interpreted as favorably tuned test-set benchmark results rather than validation-selected results. This convention gives the classical methods a stronger comparison point, but it does not change the main conclusion: the proposed learning-based proper-scoring filter achieves better performance than the classical benchmarks even under this favorable tuning protocol.

\subsection{Observation-Noise Calibration by Signal-to-Noise Ratio}
\label{app_subsec:snr_calibration}

This subsection describes how we calibrate the observation-noise standard deviation $\sigma_y$ when comparing different observation maps. The signal-to-noise ratio (SNR) defined here is used only to set the observation-noise level in the experiments. It is not used as an evaluation metric.

The observation maps $h(\cdot)$ are constructed in two steps. We first choose the observed state dimensions, and then apply a scalar observation operation componentwise on the selected state values. For a scalar observed state value $v^o$, we have the default observation maps
\begin{subequations}
\begin{align}
    &\text{Default observation of doubling-angle:} &&h(v^o) = \cos(2\pi v^o).\label{eq:snr_doubling_default_obs}
    \\
    &\text{Default observation of Lorenz '63 \& '96:} &&h(v^o) =v^o.\label{eq:snr_lorenz_default_obs}
\end{align}
\end{subequations}
Since the doubling-angle model is one-dimensional, this is a full observation. For Lorenz~'63 and Lorenz~'96, the default observation is the identity operation on each observed dimension. The nonlinear observation operations considered for Lorenz~'63 and Lorenz~'96 are
\begin{subequations}
\begin{align}
    &\text{Square Observation: }&&h(v^o) = (v^o)^2, \label{eq:snr_square_obs}\\
    &\text{Arctan Observation: }&&h(v^o) = \arctan(v^o). \label{eq:snr_arctan_obs}
\end{align}
\end{subequations}
For vector-valued observations, \eqref{eq:snr_lorenz_default_obs}--\eqref{eq:snr_arctan_obs} are applied componentwise to the observed dimension.

For a fixed observation map $h$, consider the observation model
\begin{equation}\label{eq:snr_observation_model}
    y_{j+1}^\dagger = h(v_{j+1}^\dagger)+\eta_{j+1}^\dagger, \quad \eta_{j+1}^\dagger \sim \normal(0,\sigma_y^2 I_{d_y}), \quad j\in\bbZ_+,
\end{equation}
where $v_j^\dagger$ is the truth state and $d_y$ is the observation dimension. Since different observation maps can substantially change the empirical scale of the clean observation $h(v_j^\dagger)$, the same nominal value of $\sigma_y$ may correspond to very different effective observation-noise levels. This is particularly important when comparing identity, square, and arctan observations.

Given a representative truth trajectory of length $J$, we compute the time average of the clean observations and the empirical signal strength as
\begin{align}
    S_h = \frac{1}{J}\sum_{j=1}^{J} \left\|h(v_j^\dagger)-\bar h\right\|_2^2,\qquad \bar h = \frac{1}{J}\sum_{j=1}^{J} h(v_j^\dagger) \label{eq:snr_signal_variance}
\end{align}
Because the observation noise covariance is $\sigma_y^2 I_{d_y}$, the total observation-noise variance is $S_\eta = d_y\sigma_y^2.$
We define the empirical variance-ratio SNR by
\begin{equation}\label{eq:snr_variance_ratio}
    \operatorname{SNR}(\sigma_y) = \frac{S_h}{S_\eta} = \frac{1}{d_y\sigma_y^2}\frac{1}{J}\sum_{j=1}^{J} \left\|h(v_j^\dagger)-\bar h\right\|_2^2.
\end{equation}
This is a variance ratio rather than a standard-deviation ratio. Therefore, the SNR is inversely proportional to $\sigma_y^2$. In practice, we start from a default noise level $\hat{\sigma}_y$ and measure the corresponding default SNR for each observation map $\hat R = \operatorname{SNR}(\hat{\sigma}_y) = \frac{S_h}{d_y\hat{\sigma}_y^2}.$
If the target SNR is $R$, then the noise level that achieves this target is
\begin{equation}\label{eq:snr_sigma_from_default}
    \sigma_{y,\mathrm{target}} = \hat{\sigma}_y\left(\hat{R}/R\right)^{1/2}.
\end{equation}

For Lorenz~'63 and Lorenz~'96, we use the default-observation SNR as the target SNR for the square and arctan observation experiments. The calibrated value $\sigma_y(h)$ is therefore chosen separately for each dataset and observation map. Table~\ref{tab:snr_default_settings} reports the SNR calibration settings. The left table gives the empirical variance-ratio SNR values at the baseline observation-noise level $\sigma_y$. The right table gives the calibrated values of $\sigma_y(h)$ used for the square and arctan observation experiments on Lorenz~'63 and Lorenz~'96. All numerical values are rounded to two decimal places.

\begin{table}[h]
\centering
\caption{SNR calibration settings. Left: empirical variance-ratio SNR values at the baseline observation-noise level $\sigma_y$. Right: calibrated observation-noise standard deviations used to match the default-observation SNR. Empty entries indicate observation maps that are not used for that dataset.}
\label{tab:snr_default_settings}
\renewcommand{\arraystretch}{1.2}
\resizebox{\textwidth}{!}{%
\begin{tabular}[t]{lcccc}
    \multicolumn{5}{c}{\textbf{Baseline SNR}} \\
    \toprule
    \textbf{Dataset} & $\boldsymbol{\sigma_y}$ & \textbf{Default} & \textbf{Square} & \textbf{Arctan} \\
    \midrule
    Doubling-angle & $0.20$ & $12.43$ & -- & -- \\
    Lorenz~'63 & $2.00$ & $15.14$ & $1090.63$ & $0.41$ \\
    Lorenz~'96 & $1.00$ & $13.28$ & $594.67$ & $1.01$ \\
    \bottomrule
\end{tabular}
\qquad
\begin{tabular}[t]{lcc}
    \multicolumn{3}{c}{\textbf{Calibrated $\boldsymbol{\sigma_y}(h)$}} \\
    \toprule
    \textbf{Dataset} & \textbf{Square} & \textbf{Arctan} \\
    \midrule
    Lorenz~'63 & $16.98$ & $0.33$ \\
    Lorenz~'96 & $6.69$ & $0.28$ \\
    \bottomrule
\end{tabular}%
}
\end{table}

For all experiments, filters compared under the same dataset and observation map use the same calibrated value of $\sigma_y(h)$. Therefore, the SNR calibration controls the observation-noise setting, while filtering accuracy and ensemble spread are evaluated separately using the metrics described in Subsection~\ref{ssec:evaluation_metrics}.

\section{Additional Experimental Results}
\label{app:additional_experimental_results}

This appendix presents additional experimental results that complement the numerical experiments in Section~\ref{sec:numerical_exp}. These results further illustrate the behavior of the proposed method and the benchmark filters on representative test trajectories, with particular emphasis on settings where the filtering distributions are strongly non-Gaussian or multimodal.

\subsection{Doubling-Angle Model}
\label{app:doubling_angle_additional_results}

We first provide additional visualizations for the doubling-angle model studied in Subsection~\ref{ssec:doublin_angle_model}. Figures~\ref{fig:doubling1d_prior_filtering_step150} and~\ref{fig:doubling1d_prior_filtering_step200} compare the predictive and filtering densities at assimilation steps $J=150$ and $J=200$, respectively. 

At both timesteps, the proposed EtE+ES method remains the closest to the bootstrap particle filter (BPF) reference. In particular, it captures the non-Gaussian and multimodal structure of both the predictive and filtering densities much more accurately than the competing methods. By contrast, the alternative machine learning models and the classical benchmark filters often fail to reproduce the correct modal structure or place sufficient mass in the relevant regions of the state space.

\begin{figure}[p]
    \centering
    \DoublingAngleGrid{figures/doubling1d_step150}{0.07\textwidth}
    \includegraphics[width=0.7\textwidth, trim=0cm 1.5cm 0cm 1.5cm, clip]{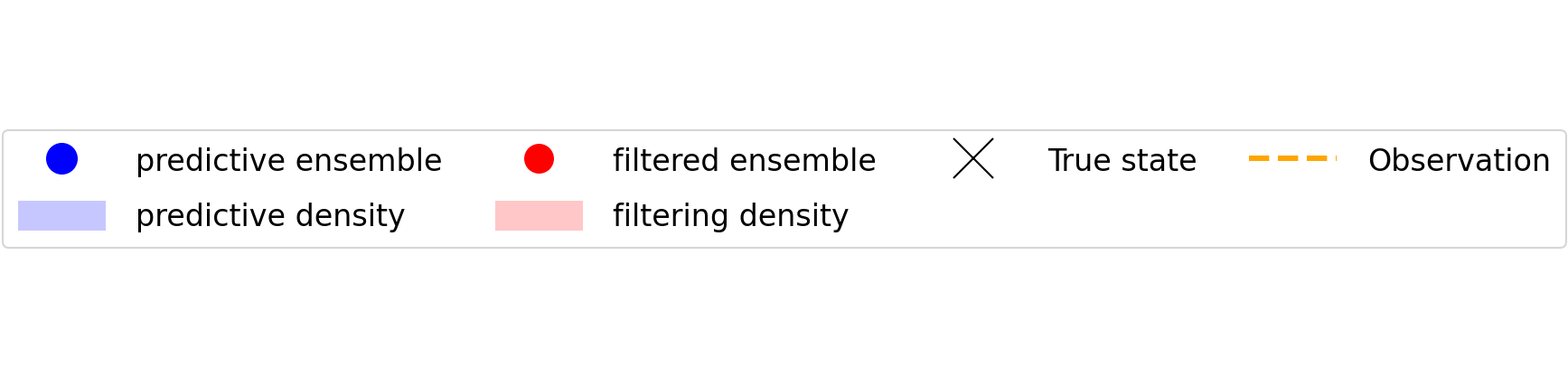}
    \vspace{-0.5cm}
    \caption{Comparison of predictive and filtering densities at step $J=150$ of a test trajectory under the doubling-angle dynamics. The modulo-one state is mapped to the unit circle after multiplication by $2\pi$. Predictive densities are shown in blue, and filtering densities are shown in red. The vertical line indicates the observation, which corresponds to the $x$-coordinate of the point on the unit circle. The BPF reference uses $10^6$ particles. Machine-learning-based methods are evaluated with $N \in \{10,30,100\}$, with $N=30$ used during training. Classical benchmark filters are evaluated with $N \in \{300,1000,3000\}$. IEnKF results for $N=3000$ are omitted due to computational inefficiency.}
    \label{fig:doubling1d_prior_filtering_step150}
\end{figure}

\begin{figure}[p]
    \centering
    \DoublingAngleGrid{figures/doubling1d_step200}{0.07\textwidth}
    \includegraphics[width=0.7\textwidth, trim=0cm 1.5cm 0cm 1.5cm, clip]{figures/doubling1d_step200/prior_post_density_combined_legend.png}
    \vspace{-0.5cm}
    \caption{Comparison of predictive and filtering densities at step $J=200$ of a test trajectory under the doubling-angle dynamics. The modulo-one state is mapped to the unit circle after multiplication by $2\pi$. Predictive densities are shown in blue, and filtering densities are shown in red. The vertical line indicates the observation, which corresponds to the $x$-coordinate of the point on the unit circle. The BPF reference uses $10^6$ particles. Machine-learning-based methods are evaluated with $N \in \{10,30,100\}$, with $N=30$ used during training. Classical benchmark filters are evaluated with $N \in \{300,1000,3000\}$. IEnKF results for $N=3000$ are omitted due to computational inefficiency.}
    \label{fig:doubling1d_prior_filtering_step200}
\end{figure}

\subsection{Lorenz '63}
\label{app:lorenz63_additional_results}

We next provide additional visualizations of the filtering distributions for the Lorenz '63 model studied in Subsection~\ref{ssec:L63}. The results are reported separately for the partial identity and partial arctan observation settings, which improves readability and allows each observation model to be discussed independently.

Figure~\ref{fig:lorenz63_grid_identity_additional} shows the two-dimensional projected filtering ensembles at timesteps $J=200$ and $J=300$ under the partial identity observation setting. Under this observation model, EtE+ES provides the closest visual agreement with the BPF reference. CorrTerms+ES also captures part of the non-Gaussian structure, but its approximation is generally less accurate than that of EtE+ES. The methods trained with the normalized $L^2$ loss, namely EtE+NL2 and CorrTerms+NL2, as well as the classical benchmark filters, show more substantial deviations from the target filtering distributions.

\begin{figure}[p]
    \centering
    \setlength{\tabcolsep}{1.5pt}
    \renewcommand{\arraystretch}{1.05}
    \setlength{\fboxsep}{0pt}
    \setlength{\fboxrule}{0.1pt}

    \resizebox{\textwidth}{!}{%
        \begin{tabular}{>{\centering\arraybackslash}m{0.080\textwidth} *{8}{>{\centering\arraybackslash}m{0.105\textwidth}}}
        \toprule
        & Truth & EtE & CorrTerms & EtE & CorrTerms & \multirow{2}{*}{ESRF} & \multirow{2}{*}{EnKF} & \multirow{2}{*}{IEnKF} \\
        & BPF & + \textbf{ES} & + \textbf{ES} & + NL2 & + NL2 & & & \\
        \midrule

        \multicolumn{9}{c}{\textbf{Partial identity observation}} \\
        \midrule
        \LorenzGridRows{200}{identity}{0} \\
        \midrule
        \LorenzGridRows{300}{identity}{0} \\

        \bottomrule
        \multicolumn{9}{c}{%
            \includegraphics[width=0.35\textwidth, trim=0.3cm 0.1cm 0.3cm 0.1cm, clip]{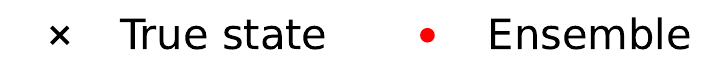}
        }
        \end{tabular}%
    }

    \caption{Visualization of filtering distributions for a test trajectory of the Lorenz '63 problem under the partial identity observation setting. The figure shows two-dimensional projected ensembles at timesteps $J=200$ and $J=300$. The layout and evaluation protocol match those in Figure~\ref{fig:lorenz63_grid}. All machine learning methods and classical benchmark filters are evaluated at ensemble size $N=100$. EtE+ES provides the closest agreement with the BPF reference, while CorrTerms+ES gives the second-best approximation among the learning-based methods.}
    \label{fig:lorenz63_grid_identity_additional}
\end{figure}

Figure~\ref{fig:lorenz63_grid_arctan_additional} reports the corresponding results under the partial arctan observation setting. Because this observation model introduces an additional nonlinear transformation, the resulting filtering problem is more challenging. Even in this case, EtE+ES remains the most accurate among the tested methods and reproduces the overall shape of the BPF reference more faithfully than the alternatives. CorrTerms+ES again captures some non-Gaussian structure, whereas the normalized-$L^2$ variants and the classical benchmark filters exhibit more pronounced discrepancies.

\begin{figure}[p]
    \centering
    \setlength{\tabcolsep}{1.5pt}
    \renewcommand{\arraystretch}{1.05}
    \setlength{\fboxsep}{0pt}
    \setlength{\fboxrule}{0.1pt}

    \resizebox{\textwidth}{!}{%
        \begin{tabular}{>{\centering\arraybackslash}m{0.080\textwidth} *{8}{>{\centering\arraybackslash}m{0.105\textwidth}}}
        \toprule
        & Truth & EtE & CorrTerms & EtE & CorrTerms & \multirow{2}{*}{ESRF} & \multirow{2}{*}{EnKF} & \multirow{2}{*}{IEnKF} \\
        & BPF & + \textbf{ES} & + \textbf{ES} & + NL2 & + NL2 & & & \\
        \midrule

        \multicolumn{9}{c}{\textbf{Partial arctan observation}} \\
        \midrule
        \LorenzGridRows{200}{arctan}{0} \\
        \midrule
        \LorenzGridRows{300}{arctan}{0} \\

        \bottomrule
        \multicolumn{9}{c}{%
            \includegraphics[width=0.35\textwidth, trim=0.3cm 0.1cm 0.3cm 0.1cm, clip]{figures/lorenz63_grid/grid_identity_post_adaptive_transposed_legend.pdf}
        }
        \end{tabular}%
    }

    \caption{Visualization of filtering distributions for a test trajectory of the Lorenz '63 problem under the partial arctan observation setting. The figure shows two-dimensional projected ensembles at timesteps $J=200$ and $J=300$. The layout and evaluation protocol match those in Figure~\ref{fig:lorenz63_grid}. All machine learning methods and classical benchmark filters are evaluated at ensemble size $N=100$. The proposed EtE+ES method provides the closest visual agreement with the BPF reference and better preserves the non-Gaussian filtering structure than the competing methods.}
    \label{fig:lorenz63_grid_arctan_additional}
\end{figure}